\newtheorem{Definition}{Definition}
\newtheorem{Theorem}{Theorem}
\newtheorem{Corollary}{Corollary}
\definecolor{snsblue}{rgb}{0.12156862745098039, 0.4666666666666667, 0.7058823529411765}
\definecolor{snsorange}{rgb}{1.0, 0.4980392156862745, 0.054901960784313725}
\definecolor{snsgreen}{rgb}{0.17254901960784313, 0.6274509803921569, 0.17254901960784313}
\definecolor{snsred}{rgb}{0.8392156862745098, 0.15294117647058825, 0.1568627450980392}
\definecolor{snspurple}{rgb}{0.5803921568627451, 0.403921568627451, 0.7411764705882353}
\definecolor{snsbrown}{rgb}{0.5490196078431373, 0.33725490196078434, 0.29411764705882354}
\definecolor{lightblack}{rgb}{0.4, 0.4, 0.4}
\definecolor{snspink}{rgb}{0.8, 0.47058823529411764, 0.7372549019607844}
\newcommand\Tstrut{\rule[2ex]{0pt}{3pt}}    
\newcommand\Bstrut{\rule[-1.5ex]{0pt}{3pt}}   
\protected\def\xvcenter{%
  \hbox\bgroup$\everyvbox{\everyvbox{}\aftergroup\m@th\aftergroup$\aftergroup\egroup}%
  \vcenter
}
\DeclareRobustCommand{\midscript}[1]{
  \mathchoice{\mid@script\scriptstyle{#1}}
    {\mid@script\scriptstyle{#1}}
    {\mid@script\scriptscriptstyle{#1}}
    {\mid@script\scriptscriptstyle{#1}}
}
\newcommand{\mid@script}[2]{
  \vcenter{\hbox{$\m@th#1#2$}}
}
\newcommand{\textsub}[1]{\textsc{\tiny #1}} 
\newcommand{\colorlabel}[1]{%
	\global\tag@true%
	\nonumber%
	\refstepcounter{equation}%
	\gdef\df@tag{\maketag@@@{{\color{#1}(\theequation)}}\def\@currentlabel{\theequation}}}
\newcommand{\mytilde}[0]{\mathds{\raise.17ex\hbox{$\scriptstyle\sim$}}} 
\newcommand{\spacedmid}{\mspace{2mu} | \mspace{2mu}} 
\newcommand{\evmid}{\mspace{3mu}\ifnum\currentgrouptype=16 \middle\fi|\mspace{3mu}} 
\newcommand{\realspace}{\mathbb R}
\newcommand{\bigO}{\mathcal{O}}
\newcommand{\prob}{\textup{Pr}}
\providecommand{\jointstatespace}{\mathcal S}
\providecommand{\jointactionspace}{\mathcal A}
\providecommand{\rewardmodel}{\mathcal{R}}
\providecommand{\probmodel}{\mathcal{P}}
\providecommand{\rewardundefined}{\bot}
\providecommand{\envstatespace}{\mathcal S^\textsc{e}}
\providecommand{\envactionspace}{\mathcal A^\textsc{e}}
\providecommand{\envrewardmodel}{\mathcal{R}^\textsc{e}}
\providecommand{\envprobmodel}{\mathcal{P}^\textsc{e}}
\providecommand{\monstatespace}{\mathcal S^\textsc{m}}
\providecommand{\monactionspace}{\mathcal A^\textsc{m}}
\providecommand{\monrewardmodel}{\mathcal{R}^\textsc{m}}
\providecommand{\monprobmodel}{\mathcal{P}^\textsc{m}}
\providecommand{\monitormodel}{\mathcal{M}}
\providecommand{\rmon}{r^\textsc{m}}
\providecommand{\amon}{a^\textsc{m}}
\providecommand{\smon}{s^\textsc{m}}
\providecommand{\rprox}{\hat{r}^\textsc{e}}
\providecommand{\renv}{r^\textsc{e}}
\providecommand{\aenv}{a^\textsc{e}}
\providecommand{\senv}{s^\textsc{e}}
\providecommand{\goal}{\textsc{g}}
\providecommand{\sgoal}{s^{\goal}}
\providecommand{\agoal}{a^{\goal}}
\providecommand{\saij}{\textsc{$s_i a_j$}}
\providecommand{\sagoal}{{s^{\goal} a^{\goal}}}
\providecommand{\qvisit}{S}
\providecommand{\horizon}{\infty}
\title{\centering \hfill Beyond Optimism: \hfill \phantom{} \newline Exploration With Partially Observable Rewards}
\author{%
  Simone Parisi
  \\
  University of Alberta; Amii\\
  \texttt{parisi@ualberta.ca} \\
  \And
  Alireza Kazemipour \\
  University of Alberta \\
  \texttt{kazemipo@ualberta.ca} \\
  \And
  Michael Bowling \\
  University of Alberta; Amii \\
  \texttt{mbowling@ualberta.ca} \\
}
\begin{document}

\maketitle

\begin{abstract}
Exploration in reinforcement learning (RL) remains an open challenge.
RL algorithms rely on observing rewards to train the agent, and if informative rewards are sparse the agent learns slowly or may not learn at all. 
To improve exploration and reward discovery, popular algorithms rely on 
optimism. 
But what if sometimes rewards are \textit{unobservable}, e.g., situations of partial monitoring in bandits and the recent formalism of monitored Markov decision process? 
In this case, optimism can lead to suboptimal behavior that does not explore further to collapse uncertainty.
With this paper, we present a novel exploration strategy that overcomes the limitations of existing methods and guarantees convergence to an optimal policy even when rewards are not always observable. 
We further propose a collection of tabular environments for benchmarking exploration in RL (with and without unobservable rewards) and show that our method outperforms existing ones. 
\end{abstract}

\section{Introduction}
\label{sec:intro}

Reinforcement learning (RL) has developed into a powerful paradigm where agents can tackle a variety of tasks, including games~\citep{schrittwieser2020mastering}, robotics~\citep{kober2013reinforcement}, and medical applications~\citep{yu2021reinforcement}.
Agents trained with RL learn by trial-and-error interactions with an environment: the agent tries different actions, the environment returns a numerical reward, and the agent adapts its behavior to maximize future rewards. 
This setting poses a well-known dilemma: how and for how long should the agent explore, i.e., try new actions and visit new states in search for rewards? If the agent does not explore enough it may miss important rewards. However, prolonged exploration can be expensive, especially in real-world problems where instrumentation (e.g., cameras or sensors) or a human expert may be needed to provide rewards. 
Classic dithering exploration~\citep{mnih2013playing, lillicrap2015continuous} is known to be inefficient~\citep{ladosz2022exploration}, especially when informative rewards are sparse~\citep{raileanu2020ride, parisi2022long}. 
%
Among the many exploration strategies that have been proposed to improve RL efficiency, perhaps the most well-known and grounded is optimism. 
With optimism, the agent assigns to each state-action pair an optimistically biased estimate of future value and selects the action with the highest estimate~\citep{osband2017posterior}.
This way, the agent is encouraged to explore unvisited states and perform different actions, where either its optimistic expectation holds --- the observed reward matches the optimistic estimate --- or not. If not, the agent adjusts its estimate and looks for rewards elsewhere. 
Optimistic algorithms range from count-based exploration bonuses~\citep{auer2002finite, jaksch2010near, jin2018iq, dong2020q}, to model-based planning~\citep{auer2007logarithmic, hester2013texplore, henaff2019explicit}, bootstrapping~\citep{osband2019deep, deramo2019exploiting}, and posterior sampling~\citep{osband2017posterior}.
Nonetheless, optimism can fail when the agent has limited observability of the outcome of its actions. 
Consider the example in Figure~\ref{fig:walk_unobs}, where the agent observes rewards only under some condition and upon paying a cost. 
To fully explore the environment and learn an optimal policy 
the agent must first take a \textit{suboptimal} action (to push the button and pay a cost) to 
learn about the rewards. 
However, optimistic algorithms would never select suboptimal actions, thus never pushing the button and never learning how to accumulate positive rewards~\citep{lattimore2017end}.
While alternatives to optimism exist, such as intrinsic motivation~\citep{schmidhuber2010formal}, they often lack guarantees of convergence and their efficacy strongly depends on the intrinsic reward used, the environment, and the task~\citep{barto2013intrinsic}.

\begin{figure}[t]
\centering
\begin{subfigure}[t]{0.42\textwidth}
    \includegraphics[width=0.98\columnwidth]{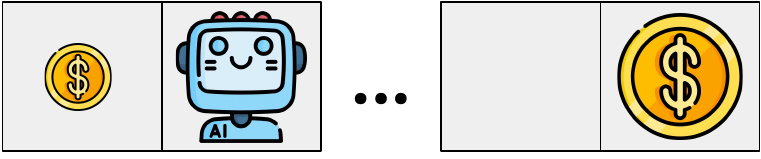}
    \caption{\label{fig:walk_obs}Rewards given for collecting a coin are always observable.}
\end{subfigure}
\hfill
\begin{subfigure}[t]{0.42\columnwidth}
    \includegraphics[width=0.98\textwidth]{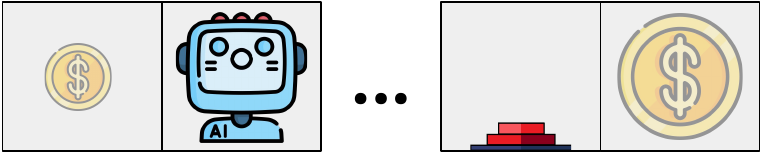}
    \caption{\label{fig:walk_unobs}The agent must first push a button and pay a cost to observe coin rewards.}
\end{subfigure}
\caption{\label{fig:walk}\textbf{When optimism is not enough.} The agent starts in the second leftmost cell of a corridor-like gridworld and can move $\texttt{LEFT}$ or $\texttt{RIGHT}$. The coin in the leftmost cell gives a small reward, the one in the rightmost cell gives a large reward, while all other cells give zero rewards. 
In~\subref{fig:walk_unobs}, pushing the button is costly (negative reward) but is needed to \textit{observe} coin rewards --- if the agent collects a coin without pushing it first, \textit{the environment returns the reward but the agent cannot observe it}.
Given a sufficiently large discount factor, the optimal policy is to collect the large coin without pushing the button. 
Consider a purely optimistic agent, i.e., an agent that selects action greedily with respect to their value estimate, and these estimates are initialized to the same optimistically high value~\citep{even2001convergence}. In~\subref{fig:walk_obs} all rewards are always observable, therefore: the agent visits a cell; its optimistic estimate decreases according to the reward; the agent tries another cell. At the end, it will visit all states and learn the optimal policy. 
In~\subref{fig:walk_unobs}, however, when the agent visits a coin cell \textit{without pushing the button first, it will not observe the reward and will not validate or prove wrong its optimistic estimate}. Thus, the estimate for both coin-cells will stay equally optimistic, and between the two the agent will prefer to go to the leftmost cell because it is closer to the start.
Optimistic model-based algorithms~\citep{auer2007logarithmic, jaksch2010near} have the same problem. If all value estimates are optimistic and the agent knows that pushing the button is costly, between (1) push the button and collect the right coin, (2) do not push the button and collect the right coin, and (3) do not push the button and collect the left coin, the agent will always chose the third: the optimistic value is the same, but it does not incur in the button cost and the left coin is closer. Yet, this gives no information to the agent, because it cannot observe the reward and therefore cannot update its optimistic value.
Note that the agent could replace the unobserved reward with an estimate from a model updated as rewards are observed. But how can this model be accurate if the agent cannot explore properly and observe rewards in the first place?
}
\end{figure}

\textit{How can we efficiently explore and learn to act optimally when rewards are partially observable, without relying on optimism and yet still have guarantees of convergence?}
In this paper, we present a novel exploration strategy based on the successor representation to tackle this question. 
%
Note that~\citet{lattimore2017end} already argued against optimism in partial monitoring~\citep{bartok2014partial}, a generalization of the bandit framework where the agent cannot observe the
exact payoffs.
Similarly,~\citet{schulze2018active} and~\citet{krueger2020active} discussed the shortcomings of optimism in active RL, where rewards are observable upon paying a cost. 
While we follow the same line of research, we consider the more recent and general framework of Monitored Markov Decision Processes (Mon-MDPs)~\citep{parisi2024monitored}, and we validate the efficacy of the proposed exploration on a collection of both MDPs and Mon-MDPs.

\section{Problem Formulation}
\label{sec:formulation}

\begin{wrapfigure}{l}{0.33\textwidth}
\vspace{-1.2em}
\begin{center}
\includegraphics[width=0.7\linewidth]{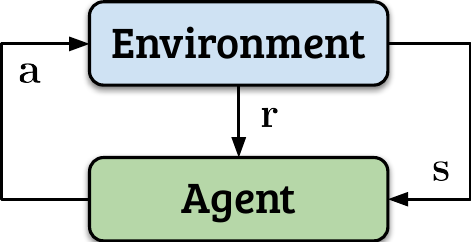}
\end{center}
\vspace{-0.5em}
\caption{\label{fig:mdp}\textbf{The MDP framework.}}
\vspace{-1.2em}
\end{wrapfigure}

A Markov Decision Process (MDP) (Figure~\ref{fig:mdp}) is a mathematical framework for sequential decision-making, defined by the tuple $\langle \jointstatespace, \jointactionspace, \rewardmodel, \probmodel, \upgamma\rangle$. An agent interacts with an environment by repeatedly observing a state $s_t \in \jointstatespace$, taking an action $a_t \in \jointactionspace$, and observing a bounded reward $r_t \in \realspace$. 
These interactions are governed by the Markovian transition function $\probmodel(s_{t+1} \spacedmid a_t,s_{t})$ and the reward function $r_t \sim \rewardmodel(s_t,a_t)$, both unknown to the agent. The agent's goal is to act to maximize the sum of discounted rewards ${\footnotesize\sum}_{t=1}^\horizon \upgamma^{t-1} r_t$, 
where $\upgamma \in [0,1)$ is the discount factor describing the trade-off between immediate and future rewards.

How the agent acts is determined by the \textit{policy} $\pi(a \spacedmid s)$, a function denoting the probability of executing action $a$ in state $s$. A policy is optimal when it maximizes the Q-function, i.e.,
{%
\setlength{\abovedisplayskip}{2pt}
\setlength{\belowdisplayskip}{2pt}
\begin{equation}
\pi^* \coloneqq \arg\max_\pi \: Q^\pi(s,a), \label{eq:max_pi} \qquad
\textrm{where} \:\:
Q^\pi(s_t,a_t) \coloneqq \mathbb{E}\Bigl[\sum\nolimits_{k=t}^\horizon \upgamma^{k-t} r_{k} \evmid \pi, \probmodel, s_t, a_t\Bigr]. 
\end{equation}%
}%
The existence of at least one optimal policy and its optimal Q-function $\smash{Q^{*}}$ is guaranteed under the following standard assumptions: finite state and action spaces, stationary reward and transition functions, and bounded rewards~\citep{puterman1994markov}.
Q-Learning~\citep{watkins1992q} is one of the most common algorithms to learn $\smash{Q^{*}}$ by iteratively updating a function $\smash{\widehat Q}$ as follows, 
{%
\setlength{\abovedisplayskip}{2pt}
\setlength{\belowdisplayskip}{2pt}
\vspace*{-10pt}
\begin{equation}
    \widehat Q(s_t, a_t) \leftarrow (1 - \upalpha_t) \widehat Q(s_t, a_t) + \upalpha_t (r_t + \upgamma \overbrace{\max_a \widehat Q(s_{t+1}, a)}^{\text{greedy policy}}), \label{eq:qlearning} 
\end{equation}%
}%
where $\upalpha_t$ is the learning rate. $\smash{\widehat Q}$ is guaranteed to converge to $Q^*$ under an appropriate learning rate schedule $\upalpha_t$ and if every state-action pair is visited infinitely often~\citep{dayan1992convergence}. 
Thus, how the agent explores plays a crucial role.
For example, a random policy eventually visits all states but is clearly inefficient, as states that are hard to reach will be visited less frequently (e.g., because they can be reached only by performing a sequence of specific actions). 
How frequently the policy explores is also crucial --- to behave optimally we also need the policy to act optimally eventually.
That is, at some point, the agent should stop exploring and act greedy instead. 
For these reasons, we are interested in \textit{greedy in the limit with infinite exploration (GLIE)} policies~\citep{singh2000convergence}, i.e., policies that guarantee to explore all state-action pairs enough for $\smash{\widehat Q}$ to converge to $Q^*$, and that eventually act greedy as in Eq.~\eqref{eq:max_pi}. 
%
While a large body of work in RL literature has been devoted to developing 
efficient GLIE policies in MDPs, an important problem still remains largely unaddressed: \textit{the agent explores in search of rewards, but what if rewards are not always observable?} 
MDPs, in fact, assume rewards are always observable. Thus, we consider the more general framework of {Monitored MDPs (Mon-MDPs)}~\citep{parisi2024monitored}. 


\subsection{Monitored MDPs: When Rewards Are Not Always Observable}
\label{subsec:mon_mdps}

\begin{wrapfigure}{l}{0.4\textwidth}
\vspace{-1.2em}
\begin{center}
\includegraphics[width=0.8\linewidth]{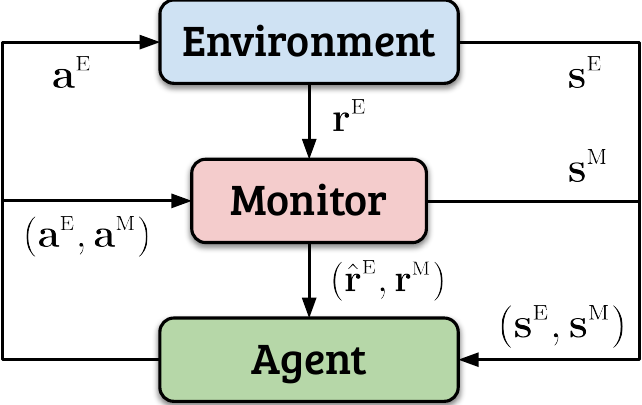}
\end{center}
\vspace{-0.5em}
\caption{\label{fig:mmdp}\textbf{The Mon-MDP framework.}}
\vspace{-1.1em}
\end{wrapfigure}

Mon-MDPs (Figure~\ref{fig:mmdp}) are defined by the tuple
$
\langle 
\envstatespace, \allowbreak
\envactionspace, \allowbreak
\envprobmodel, \allowbreak
\envrewardmodel, \allowbreak
\monitormodel, \allowbreak
\monstatespace, \allowbreak
\monactionspace, \allowbreak
\monprobmodel, \allowbreak
\monrewardmodel, \allowbreak
\upgamma 
\rangle,
$
where $\langle \envstatespace, \allowbreak \envactionspace, \allowbreak \envprobmodel, \allowbreak \envrewardmodel, \allowbreak \upgamma \rangle$ is the same as classic MDPs and the superscript $\textsc{e}$ stands for ``environment'', while 
$\langle \monitormodel, \allowbreak \monstatespace, \allowbreak \monactionspace, \allowbreak \monprobmodel, \allowbreak \monrewardmodel \rangle$ is {the ``monitor''}, a separate process with its own states, actions, rewards, and transition. The monitor works similarly to the environment --- performing a monitor action $\amon_t \in \monactionspace$ affects its state $\smon_t \in \monstatespace$ and yields a bounded reward $\rmon_t \sim \monrewardmodel(\smon_t, \amon_t)$. However, there are two crucial differences. First, its Markovian transition depends on the environment as well, i.e., $\monprobmodel(\smon_{t+1} \spacedmid \smon_t, \senv_t, \amon_t, \aenv_t)$. 
Second, the \textit{monitor function} $\monitormodel$ stands in between the agent and the environment, dictating what the agent sees about the environment reward --- rather than directly observing the \textit{environment reward} $\renv_t \sim \envrewardmodel(\senv_t, \aenv_t)$, the agent observes a \textit{proxy reward} $\rprox_t \sim \monitormodel(\renv_t, \smon_{t}, \amon_t)$. Most importantly, the monitor may not always show a numerical reward, and the agent may sometime receive $\rprox_t = \rewardundefined$, i.e., \textit{``unobservable reward''}.


In Mon-MDPs, the agent repeatedly executes a joint action $a_t \coloneqq (\aenv_t, \amon_t)$ according to the joint state $s_t \coloneqq (\senv_t, \smon_t)$. In turn, the environment and monitor states change and produce a joint reward $(\renv_t, \rmon_t)$, but the agent observes $(\rprox_t, \rmon_t)$. 
The agent's goal is to select joint actions to maximize ${\footnotesize\sum}_{t=1}^\horizon \upgamma^{t-1} \left(\renv_t + \rmon_t\right)$ \emph{even though it observes $\rprox_t$ instead of $\renv_t$}. 
For example, in Figure~\ref{fig:walk_unobs} the agent can turn monitoring on/off by pushing the button: if monitoring is off, the agent cannot observe the environment rewards; if on, it observes them but receives negative monitor rewards. Formally, $\monstatespace = \{\texttt{ON}, \texttt{OFF}\}$, $\monactionspace = \{\texttt{PUSH}, \texttt{NO-OP}\}$, $\monrewardmodel(\texttt{ON}, \cdot) = -1$, $\monitormodel(\renv_t, \texttt{ON}, \cdot) = \renv_t$, $\monitormodel(\renv_t, \texttt{OFF}, \cdot) = \rewardundefined$. The optimal policy is to move to the rightmost cell without turning monitoring on --- the agent still receives the positive reward even though it does not observe it.\footnote{This example is just illustrative. In general, the monitor states and actions can be more heterogeneous. For example, the agent could observe rewards by collecting and using a device or asking a human expert, and the monitor state could include the battery of the device or the location of the expert. 
Similarly, the monitor reward is not constrained to be negative and its design depends on the agent's desired behavior.}

The presence of the monitor and the partial observability of rewards highlight two fundamental differences between Mon-MDPs and existing MDPs variations. First, \textit{the observability of the reward is dictated by a ``Markovian entity'' (the monitor)}, thus actions can have either immediate or long-term effects on the reward observability. For example, the agent may immediately observe the reward upon explicitly asking for it as in active RL~\citep{schulze2018active}, e.g., with a dedicated action $\amon = \texttt{ASK}$. Or, rewards may become observable only after changing the monitor state through a sequence of actions, e.g., as in Figure~\ref{fig:walk_unobs} by reaching and pushing the button. 
Second, \textit{reward unobservability goes beyond reward sparsity}. In sparse-reward MDPs~\citep{ladosz2022exploration}, rewards are \textit{always} observable even though informative non-zero rewards are sparse. Mon-MDPs with dense informative observable rewards may still be challenging even if a few rewards are partially observable. 
Consider the Mon-MDP in Figure~\ref{fig:walk_unobs}, but this time zero-rewards of empty tiles are replaced with negative rewards proportional to the distance to the large coin (dense and informative). Because coin rewards are still observable only after pressing the button, an optimistic agent will still collect the small coin: the optimistic value of the small coin is as high as the optimistic value of the large coin, but collecting the small coin takes fewer steps, thus ending the series of dense negative rewards sooner. \textit{The agent must first push the button (a suboptimal action) to learn the optimal policy regardless of the presence of dense observable rewards.}



Mon-MDPs provide a comprehensive framework encompassing existing areas of research, such as {active RL}~\citep{schulze2018active, krueger2020active} and {partial monitoring}~\citep{littlestone1994weighted, bartok2014partial, auer2002nonstochastic, lattimore2019information}. In their introductory work,~\citet{parisi2024monitored} showed the existence of an optimal policy and the convergence of a variant of Q-Learning under some assumptions.\footnote{\label{foot:mon_mdp_assumptions}The assumptions are: \textit{monitor and environment ergodicity}, i.e., any joint state-action pair can be reached by any other pair given infinite exploration; \textit{monitor ergodicity}, i.e., for every environment reward there exists at least one joint state-action pair for which the environment reward is observable given infinite exploration; \textit{truthful monitor}, i.e., either the proxy reward is the environment reward ($\rprox_t = \renv_t$) or is not observable ($\rprox_t =\rewardundefined$).} 
Yet, many research questions remain open, most prominently how to explore efficiently in Mon-MDPs. 
In the next section, we review exploration strategies in RL and discuss their shortcomings, especially when rewards are partially observable as in Mon-MDPs.
For the sake of simplicity, in the remainder of the paper we use the notation $(s, a)$ to refer to both the environment state-action pairs (in MDPs) and the \textit{joint} environment-monitor state-action pairs (in Mon-MPDs). All the related work on exploration for MDPs can be applied to Mon-MDPs as well (not albeit with limited efficacy, as we discuss below). Similarly, the exploration strategy we propose in Section \ref{sec:direct_explore} can be applied to both MDPs and Mon-MDPs.

\subsection{Related Work}
\label{subsec:related}

\textbf{Optimism.}
In tabular MDPs, many provably efficient algorithms are based on {optimism in the face of uncertainty} (OFU)~\citep{lai1985asymptotically}. In OFU, the agent acts greedily with respect to an optimistic estimate composed of the action-value estimate (e.g., $\smash{\widehat Q}$) and a bonus term that is proportional to the uncertainty of the estimate. After executing an optimistic action, the reward either validates the agent's optimism or proves it wrong, thus discouraging the agent from trying it again. RL literature provides many variations of these algorithms, each with different convergence bounds and assumptions. 
Perhaps the best known OFU algorithms are based on the upper confidence bound (UCB) algorithm~\citep{auer2002finite}, where the optimistic value is computed from visitation counts (the lower the count, the higher the bonus). 
In model-free RL, \citet{jin2018iq} proved convergence bounds for episodic MDPs, and later \citet{dong2020q} extended the proof to infinite horizon MDPs, but both are often intractable. 
In model-based RL, algorithms like R-MAX~\citep{brafman2002r} and UCRL~\citep{auer2007logarithmic, jaksch2010near} consider a set of plausible MDP models (i.e., plausible reward and transition functions) and optimistically select the best action over what they believe to be the MDP yielding the highest return. Unfortunately, this optimistic selection can fail when rewards are not always unobservable~\citep{lattimore2017end, schulze2018active, krueger2020active}, as the agent may have to select actions that are \textit{known} to be suboptimal to discover truly optimal actions. 
For example, in Figure~\ref{fig:walk_unobs} pushing the button is suboptimal (the agent pays a cost) but is the only way to discover the large positive reward.
\\[5pt]
\textbf{Posterior sampling for RL (PSRL).} 
PSRL adapts Thompson sampling~\citep{thompson1933likelihood} to RL. 
The agent is given a prior distribution over a set of plausible MDPs, and rather than sampling the MDP optimistically as in OFU, PSRL samples it from a distribution representing how likely an MDP is to yield the highest return. Only then, the agent acts optimally for the sampled MDP.
PSRL can be orders of magnitude more efficient than UCRL~\citep{osband2017posterior}, but its efficiency strongly depends on the choice of the prior~\citep{strens2000bayesian, wang2005bayesian, poupart2006analytic, kolter2009near}. 
Furthermore, the computation of the posterior (by Bayes' theorem) can be intractable even for small tasks, thus algorithms usually approximate it empirically with an ensemble of randomized Q-functions~\citep{osband2013more, osband2019deep, deramo2019exploiting}. 
Nonetheless, if rewards are partially observable PSRL suffers from the same problem as OFU: the agent may never discover rewards if actions needed to observe them (and further identify the MDP) are suboptimal for all MDPs with posterior support~\citep{lattimore2017end}.
\\[5pt]
\textbf{Information gain.} These algorithms combine PSRL and UCB principles. On one hand, they rely on a posterior distribution over plausible MDPs, on the other hand they augment the current value estimate with a bound that comes in terms of an information gain quantity, e.g., the difference between the entropy of the posterior after an update~\citep{srinivas2009gaussian, russo2014alearning, kirschner2020information, lindner2021information}. 
This allows to sample suboptimal but informative actions, thus overcoming some limitations of OFU and PSRL. However, similar to PSRL, performing explicit belief inference can be intractable even for small problems~\citep{russo2014alearning, kirschner2023linear, aubret2023information}. 
\\[5pt]
\textbf{Intrinsic motivation.} While many of above strategies are often computationally intractable, 
they inspired more practical algorithms where exploration is conducted by adding an \textit{intrinsic reward} to the environment. This is often referred to as intrinsic motivation~\citep{schmidhuber2010formal, ryan2000intrinsic}. For example, the intrinsic reward can depend on the visitation count~\citep{bellemare2016unifying}, the impact of actions on the environment~\citep{raileanu2020ride}, interest or surprise~\citep{achiam2017surprise, parisi2021interesting}, information gain~\citep{houthooft2016vime}, or can be computed from prediction errors of some quantity~\citep{stadie2015incentivizing, pathak2017curiosity, burda2018exploration}. 
However, intrinsic rewards introduce non-stationarity to the Q-function (they change over time), are often myopic (they are often computed only on the immediate transition), and are sensitive to noise~\citep{pathak2017curiosity}. Furthermore, if they decay too quickly exploration may vanish prematurely, but if they outweigh the environment reward the agent may explore for too long~\citep{burda2018exploration}.

It should be clear that learning with partially observable rewards is still an open challenge. 
In the next section, we present a practical algorithm that (1) performs deep exploration --- it can reason long-term, e.g., it tries to visit states far away, (2) does not rely on optimism --- addressing the challenges of partially observable rewards, (3) explicitly decouples exploration and exploitation --- thus exploration does not depend on the accuracy of $\smash{\widehat Q}$, which is known to be problematic (e.g., overestimation bias) for off-policy algorithms (even more if rewards are partially observable).
After proving its asymptotic convergence we empirically validate it on a collection of tabular MDPs and Mon-MDPs.


\section{Directed Exploration-Exploitation With The Successor Representation}
\label{sec:direct_explore}

{\vspace*{-9pt}%
\begin{algorithm}[h]
    \DontPrintSemicolon 
	\caption{\label{alg:explore_exploit}Directed Exploration-Exploitation}
	\setstretch{1.0}
    $(\sgoal, \agoal) = \arg\min_{s,a}N_t(s,a)$ \tcp*[f]{tie-break by deterministic ordering}
    \\
    $\beta_t = \sfrac{\log t}{N_t(\sgoal, \agoal)}$
    \\
    \lIf{$\beta_t > \bar\beta$}{\Return $\rho(a \spacedmid s_t, \sgoal, \agoal)$ \tcp*[f]{explore}}
    \lElse{\Return $\arg\max_a \widehat Q(s_t, a)$ \tcp*[f]{exploit}}
\end{algorithm}%
\vspace*{-9pt}%
}

Algorithm~\ref{alg:explore_exploit} outlines our idea for step-wise exploration.\footnote{Recall that $s$ and $a$ denote either the classic state and action in MDPs, or the joint state $s \coloneqq (\senv, \smon)$ and action $a \coloneqq (\aenv, \amon)$ in Mon-MDPs.} 
$N(s,a)$ is the state-action visitation count and, at every timestep, the agent selects the least-visited pair as ``goal'' $(\sgoal, \agoal)$. 
The ratio $\beta_t$ decides if the agent explores (goes to the goal) or exploits according to a threshold $\bar \beta > 0$. 
If two or more pairs have the same lowest count, ties are broken consistently according to some deterministic ordering. 
For example, if actions are $\{\texttt{LEFT}, \texttt{RIGHT}, \texttt{UP}, \texttt{DOWN}\}$, $\texttt{LEFT}$ is the first and will be selected if the count of all actions is the same. This ensures that once the agent starts exploring to reach a goal, it will continue to do so until the goal is visited --- once its count is minimal under the deterministic tie-break, it will continue to be minimal until the goal is visited and its count is incremented.
Exploration is driven by $\rho(a \spacedmid s_t, \sgoal, \agoal)$, a \textit{goal-conditioned policy} that selects the action according to the current state $s_t$ and goal $(\sgoal, \agoal)$.
Intuitively, as the agent explores, if every state-action pair is visited infinitely often, $\log t$ will grow at a slower rate than $N_t(s,a)$ and the agent will increasingly often exploit (formal proof below).  Furthermore, infinite exploration ensures $\smash{\widehat Q}$ converges to $Q^*$, i.e., that its greedy actions are optimal.
Intuitively, the role of $\beta_t$ is similar to the one of $\epsilon_t$ in $\epsilon$-greedy policies~\citep{sutton2018reinforcement}. However, rather than manually tuning its decay, $\beta_t$ naturally decays as the agent explores. 

The goal-conditioned policy $\rho$ is the core of Algorithm~\ref{alg:explore_exploit}. 
For efficient exploration we want $\rho$ to move the agent as quickly as possible to the goal, i.e., we want to minimize the \textit{goal-relative diameter} under $\rho$. Intuitively, this diameter is a bound on the expected number of steps the policy would take to visit the goal from any state (formal definition below). Moreover, we want to untie $\rho$ from the Q-function, the rewards, and their observability. 
In the next section we propose a policy $\rho$ that satisfies these criteria, but first we prove that Algorithm~\ref{alg:explore_exploit} is a GLIE policy under some assumptions on $\rho$. 

\begin{Definition}[\citet{singh2000convergence}]
An exploration policy is greedy in the limit (GLIE) if (1) each action is executed infinitely often in every state that is visited infinitely often, and (2) in the limit, the learning policy is greedy with respect to the Q-function with probability 1.
\end{Definition}

\begin{Definition}
Let $\rho$ be a goal-conditioned policy $\rho(a \spacedmid s, \sgoal, \agoal)$.  Let $T(\sgoal, \agoal \spacedmid \rho, s)$ be the first timestep $t$ in which $(s_t, a_t) = (\sgoal, \agoal)$ given $s_0 = s$ and $a_t \sim \rho(a \spacedmid s_t, \sgoal, \agoal)$.  The goal-relative diameter of $\rho$ with respect to $(\sgoal, \agoal)$, if it exists, is
\begin{equation}
\setlength{\belowdisplayskip}{1pt}
D^\rho(\sgoal, \agoal) = \max_s \mathbb{E}[T(\sgoal, \agoal \spacedmid \rho, s) \evmid \probmodel, \rho],
\end{equation}
i.e., the maximum expected number of steps to reach $(\sgoal, \agoal)$ from any state in the MDP. We say the goal-relative diameter is bounded by $\bar D$ if $\bar D \ge \max_{\sgoal, \agoal} D^\rho(\sgoal, \agoal)$.
\end{Definition}%

There exist goal-conditioned policies with bounded goal-relative diameter if and only if the MDP is communicating (i.e., for any two states there is a policy under which there is non-zero probability to move between the states in finite time; see \citet{puterman1994markov} and \citet{jaksch2010near}).  While not all goal-conditioned policies have bounded goal-relative diameter, one such policy is the random policy or similarly an $\epsilon$-greedy policy for $\epsilon > 0$~\citep{singh2000convergence}.  Different goal-conditioned policies will have different bounds on their goal-relative diameter $D^\rho(\sgoal, \agoal)$.\footnote{\citet{jaksch2010near} defined a diameter that is a property of the MDP irrespective of any (goal-directed) policy.  Their MDP diameter can be thought of as the smallest possible bound on the goal-relative diameter for any $\rho$.}

\begin{Theorem}
\vspace{0.3em}
If the goal-relative diameter of $\rho$ is bounded by $\bar D$ then Algorithm~\ref{alg:explore_exploit} is a GLIE policy. 
\label{th:main}
\vspace{-2pt}
\end{Theorem}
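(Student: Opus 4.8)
The plan is to verify conditions~(1) and~(2) of the GLIE definition above in turn. For condition~(1) I will actually prove the stronger claim that, almost surely, \emph{every} state--action pair is visited infinitely often, which trivially implies that each action is executed infinitely often in every infinitely-often-visited state. Argue by contradiction: on the event that this fails there is a nonempty finite set $\bar{\mathcal C}$ of pairs visited only finitely often; let $t_0$ be the last time any pair of $\bar{\mathcal C}$ is visited. For $t>t_0$ the counts of pairs in $\bar{\mathcal C}$ are frozen at values bounded by some $n_{\max}$, while every other count diverges, so there is $t_1>t_0$ after which $\arg\min_{s,a}N_t(s,a)$ always lies in $\bar{\mathcal C}$ and hence $\beta_t=\log t/N_t(\sgoal,\agoal)\ge\log t/n_{\max}\to\infty$. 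Choosing $t_2\ge t_1$ with $\log t_2>\bar\beta\,n_{\max}$, at time $t_2$ the agent takes the explore branch toward some goal $g\in\bar{\mathcal C}$.

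The heart of this direction is the ``sticky goal'' property sketched before the theorem: once the agent heads to $g$ it does so until $g$ is reached. Counts are nondecreasing, so the set of minimizers of $N_t$ only shrinks, and as ties are broken by a fixed total order, $g$ remains the selected goal until $N_t(g)$ is incremented, i.e.\ until $(s_t,a_t)=g$; meanwhile $\beta_t=\log t/N_t(g)$ stays above $\bar\beta$ because $\log t$ grows while $N_t(g)$ is frozen, so the explore branch keeps firing. Thus from $t_2$ onward the agent follows $\rho(\cdot\mid\cdot,g)$ under $\probmodel$ until $(s_t,a_t)=g$, which takes $T(g\mid\rho,s_{t_2})$ steps, with $\mathbb{E}[T(g\mid\rho,s_{t_2})]\le D^\rho(g)\le\bar D<\infty$ by the definition of the goal-relative diameter; hence $g$ is reached at a finite time $\ge t_2>t_0$, contradicting $g\in\bar{\mathcal C}$. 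So $\bar{\mathcal C}=\emptyset$ a.s. Combined with the cited convergence of Q-Learning under infinite visitation, this also gives $\widehat Q\to Q^*$ a.s., so the exploit action $\arg\max_a\widehat Q(s_t,a)$ is optimal for all large $t$.

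For condition~(2) I would show that the exploration steps have vanishing density, i.e.\ $E_t:=\#\{u\le t:\beta_u>\bar\beta\}=o(t)$ a.s. By the sticky-goal property every explore step belongs to an episode heading to a fixed goal $g$ with $N(g)$ equal to the current minimum count $m_t:=\min_{s,a}N_t(s,a)$; such an episode ends within $\le\bar D$ expected steps by visiting $g$, raising $N(g)$ to $m_t+1$, and after at most $|\statespace||\actionspace|$ episodes every pair of minimum count has been incremented, so $m_t$ increases by at least one (exploitation can only increase counts further). Hence the number of explore steps needed to raise $m_t$ by one is $O(|\statespace||\actionspace|\,\bar D)$ in expectation, and --- upgrading the conditional-mean bound $\le\bar D$ to an almost-sure one by a standard argument --- eventually a.s. Since the agent explores at time $u$ only when $m_u<\log u/\bar\beta$, the exploration-driven increase of $m$ over $[0,t]$ is at most $\log t/\bar\beta+O(1)$; combining the two bounds gives $E_t=O\!\big(|\statespace||\actionspace|\,\bar D\,\log t/\bar\beta\big)=o(t)$. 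Hence the fraction of greedy steps tends to one, and since those steps act according to $\widehat Q\to Q^*$ they are optimal, establishing condition~(2).

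The main obstacle is condition~(2). Two points require care: (i) turning ``expected hitting time $\le\bar D$'' into the quantitative, \emph{almost-sure} control of the growth of $m_t$ --- one must rule out sample paths on which exploration episodes are repeatedly abnormally long, which holds because in a finite communicating MDP the hitting times have exponential tails; and (ii) interpreting ``greedy in the limit with probability~$1$'' correctly --- because the explore branch fires infinitely often (greedy exploitation eventually cycles within a recurrent subclass and cannot by itself keep $m_t$ growing), the statement has to be read as the density of non-greedy steps tending to $0$, in the same sense in which $\epsilon$-greedy with $\epsilon_t\to0$ counts as greedy in the limit while still taking non-greedy actions infinitely often. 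Condition~(1), in contrast, follows almost immediately once the sticky-goal property is combined with the bounded goal-relative diameter.
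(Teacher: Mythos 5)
Your proof is correct and it is built from the same three ingredients as the paper's own argument: the sticky-goal property, the observation that at any time an exploration episode starts the goal's count is below $\log t/\bar\beta$ (so only $\bigO(\log t/\bar\beta)$ episodes can occur per pair --- your bookkeeping via increments of the minimum count $m_t$ gives the same order as the paper's per-pair bound $I_t(s,a)\le N_{t'}(s,a)+1<\log t/\bar\beta+1$), and the diameter bound $\bar D$ on the expected length of each episode. Where you genuinely diverge is in how the two GLIE conditions are discharged. For condition (1) the paper contents itself with a one-line remark --- if the exploration frequency vanishes then $\beta_t\in\bigO(1)$, forcing $\min_{s,a}N_t(s,a)\to\infty$ --- whereas you prove infinite visitation directly by contradiction from stickiness plus a.s.\ finiteness of the hitting time; your version is more careful and self-contained, and it buys you $\widehat Q\to Q^*$ without circularity. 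For condition (2) the paper stops at the in-expectation bound $\mathbb{E}[X_t]<\frac{|\statespace||\actionspace|\bar D}{t}\bigl(\frac{\log t}{\bar\beta}+1\bigr)$ on the exploration fraction and converts it via Markov's inequality with threshold $1/\sqrt t$ into $\prob[X_t\ge\varepsilon]\to0$, i.e.\ convergence in probability; you aim for the stronger almost-sure statement that the density of explore steps is $o(t)$, which hinges on the concentration step you only sketch --- upgrading conditional expected episode lengths $\le\bar D$ to almost-sure control of their cumulative sum, e.g.\ via the geometric tails $\prob[T>2k\bar D]\le2^{-k}$ obtained by iterating Markov's inequality, plus a Borel--Cantelli or SLLN argument. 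That step is standard and your appeal to exponential tails is the right tool, but it is the one piece you would actually have to write out; if you prefer to avoid it, the paper's Markov-inequality shortcut delivers the same qualitative conclusion under the same reading of ``greedy in the limit'' (vanishing density of non-greedy steps), which both you and the paper adopt.
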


\begin{proof}
\vspace{-\topsep}
Let $Z_t(s,a)$ be the number of timesteps before time $t$ that the agent is in an exploration step with $(\sgoal, \agoal) = (s,a)$.  Let $X_t = \smash{\sfrac{\sum_{s,a} Z_t(s,a)}{t}}$ be the fraction of time up to time $t$ that the agent has spent exploring.  We want to show that $\forall \varepsilon > 0\, \lim_{t\rightarrow\infty} \prob[X_t > \varepsilon] = 0$, i.e., the probability that the agent explores as frequently as any positive $\varepsilon$ approaches 0.  Hence, the policy is greedy in the limit.  Note that, if the agent's frequency of exploring approaches zero this also must mean that $\beta_t \in \bigO(1)$, implying $N_t(s,a) \rightarrow \infty \; \forall (s, a)$, i.e., all state-action pairs are visited infinitely often.
\\[2pt]
Let us focus on $\mathbb{E}[Z_t(s,a)]$.  Once the agent starts exploring to visit $(s, a)$, it will do so until it actually visits $(s,a)$.
Let $I_t(s,a)$ be the number of times the agent started exploring to visit $(s,a)$ before time $t$.  We know that $I_t(s,a) \le N_t(s,a) + 1$, as the agent must visit $(s,a)$ before it starts exploring to visit it again.  Let $t' \le t$ be the last time it started exploring to visit $(s,a)$.  We have
\begin{equation}
\setlength{\abovedisplayskip}{2pt}
\setlength{\belowdisplayskip}{2pt}
I_t(s,a) = I_{t'}(s,a) \le N_{t'}(s,a) + 1 <
\frac{\log t'}{\bar\beta} + 1 \le
\frac{\log t}{\bar\beta} + 1,
\end{equation}
where the strict inequality is due to $\beta_{t'} > \bar\beta$ (condition to explore). Since the agent cannot have started exploration $\sfrac{\log t}{\bar\beta} + 1$ times, and the goal-relative diameter of $\rho$ is at most $\bar D$, then
\begin{equation}
\setlength{\abovedisplayskip}{2pt}
\setlength{\belowdisplayskip}{2pt}
\mathbb{E}[Z_t(s,a)] < \bar D \left(\frac{\log t}{\bar\beta} + 1\right), \quad \mbox{and thus} \quad
\mathbb{E}[X_t] < \frac{|\jointstatespace||\jointactionspace| \bar D}{t} \left(\frac{\log t}{\bar\beta} + 1\right).
\end{equation}
We can now apply Markov's inequality with threshold $\sfrac{1}{\sqrt{t}}$,
\begin{align}
\setlength{\abovedisplayskip}{2pt}
\setlength{\belowdisplayskip}{2pt}
\prob\left[X_t \ge \frac{1}{\sqrt{t}}\right] &\le \sqrt{t}\,\mathbb{E}[X_t]
< \frac{|\jointstatespace||\jointactionspace| \bar D}{\sqrt{t}} \left(\frac{\log t}{\bar\beta} + 1\right).
\end{align}
Since $\sfrac{1}{\sqrt{t}} < \varepsilon$ for sufficiently large $t$, 
\begin{equation}
\setlength{\abovedisplayskip}{2pt}
\setlength{\belowdisplayskip}{2pt}
\lim_{t\rightarrow\infty} \prob[X_t \ge \varepsilon] \le
\lim_{t\rightarrow\infty} \prob\left[X_t \ge \frac{1}{\sqrt{t}} \right] \le
\lim_{t\rightarrow\infty} \frac{|\jointstatespace||\jointactionspace| \bar D}{\sqrt{t}} \left(\frac{\log t}{\bar\beta} + 1\right) = 0,
\label{eq:thm-bound}
\end{equation}
hence the Algorithm~\ref{alg:explore_exploit} is a GLIE policy. 
\end{proof}

\begin{Corollary}
As a consequence of Theorem \ref{th:main}, $\smash{\widehat Q}$ converges to $Q^*$ (infinite exploration) and therefore the algorithm's behavior converges to the optimal policy (greedy in the limit).\footnote{For Mon-MDPs, convergence is guaranteed under the assumptions discussed in Footnote \footref{foot:mon_mdp_assumptions}.}
\vspace*{-2pt}
\end{Corollary}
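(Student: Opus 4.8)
The plan is to deduce the Corollary from the classical convergence guarantee of $Q$-learning under a GLIE exploration policy, being careful about one gap between what the GLIE \emph{definition} literally asserts and what $Q$-learning actually needs. The first step is to extract from the proof of Theorem~\ref{th:main} the stronger fact that the vanishing of the exploration frequency forces $\beta_t \in \bigO(1)$, and hence $N_t(s,a) \to \infty$ for \emph{every} pair $(s,a)$, not merely for pairs inside states that are visited infinitely often. Consequently, with probability one the algorithm visits every state-action pair infinitely often.

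The second step invokes the standard stochastic-approximation analysis of $Q$-learning (\citet{watkins1992q, dayan1992convergence}; see also \citet{singh2000convergence} for the GLIE-tailored statement): since $Q$-learning is off-policy, its update Eq.~\eqref{eq:qlearning} converges to $Q^*$ almost surely whenever every state-action pair is visited infinitely often and the learning rates satisfy the Robbins--Monro conditions $\sum_t \upalpha_t(s,a) = \infty$, $\sum_t \upalpha_t(s,a)^2 < \infty$ (sums over visits to $(s,a)$). Combined with the first step and the standing assumptions guaranteeing the existence of $Q^*$, this yields the ``infinite exploration'' half: $\widehat{Q} \to Q^*$. Note that the decoupling of exploration from exploitation is what makes this clean --- the convergence of $\widehat{Q}$ does not depend on the accuracy of the exploitation branch of Algorithm~\ref{alg:explore_exploit}.

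The third step handles ``greedy in the limit''. By part~(2) of the GLIE definition, which Theorem~\ref{th:main} certifies, with probability one the learning policy is eventually $\arg\max_a \widehat{Q}(s_t,a)$; it remains to check that a greedy-in-$\widehat{Q}$ action is eventually optimal. Fix a state $s$; because the action set is finite, any suboptimal action $a$ has $\max_{a'} Q^*(s,a') - Q^*(s,a) \ge \delta$ for some $\delta > 0$, and since $\widehat{Q}(s,\cdot) \to Q^*(s,\cdot)$ pointwise, for all large $t$ such an $a$ cannot attain $\max_{a'}\widehat{Q}(s,a')$. As the state set is finite this holds uniformly, so eventually every greedy action is optimal, and the behavior policy coincides with an optimal policy in the limit.

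Finally, for Mon-MDPs the same three steps go through verbatim once phrased over joint state-action pairs: Theorem~\ref{th:main} only uses that the (joint) MDP is communicating, which monitor/environment ergodicity implies, while the existence of $Q^*$ and the convergence of the Mon-MDP variant of $Q$-learning are supplied by \citet{parisi2024monitored} under their ergodicity and truthful-monitor assumptions. I expect the only real subtlety to be the first step --- resisting the temptation to cite the GLIE definition as if it already gave ``all pairs infinitely often'', and instead reusing the $\beta_t \in \bigO(1) \Rightarrow N_t \to \infty$ observation from the proof of Theorem~\ref{th:main} to justify it.
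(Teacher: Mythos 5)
Your proposal is correct and follows essentially the same route the paper intends: the Corollary is treated as immediate from Theorem~\ref{th:main} together with the standard Q-Learning convergence results of \citet{watkins1992q, dayan1992convergence, singh2000convergence}, with the Mon-MDP case delegated to \citet{parisi2024monitored} under the footnoted assumptions. Your extra care in step one --- reusing the $\beta_t \in \bigO(1) \Rightarrow N_t(s,a) \to \infty$ observation from the theorem's proof rather than the literal GLIE definition --- is a faithful reading of what the paper's argument actually relies on, just spelled out in more detail than the paper provides.
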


While we have noted that the random policy is a sufficient choice for $\rho$ to meet the criteria of Theorem~\ref{th:main}, the bound in Equation~\ref{eq:thm-bound} shows a direct dependence on the goal-relative diameter $D^\rho(\sgoal, \agoal)$.  Thus, the optimal $\rho$ to explore efficiently in Algorithm~\ref{alg:explore_exploit} is $\rho^*(a \spacedmid s, \sgoal, \agoal) \coloneqq \arg\min_\rho D^\rho(\sgoal, \agoal)$, i.e., we want to reach the goal from any state as quickly as possible in expectation.  
Furthermore, we want to untie $\rho$ from the learned Q-function and the reward observability. 
However, learning such a policy can be challenging because the diameter of the MDP is usually unknown. In the next section, we present a suitable alternative based on the successor representation~\citep{dayan1993improving}.

\subsection{Successor-Function: An Exploration Compass}
\label{subsec:q_visit}
\textit{The successor representation (SR)}~\citep{dayan1993improving} is a generalization of the value function, and represents the cumulative discounted occurrence of a state $s_i$ under a policy $\pi$, i.e., $\mathbb{E}[{\footnotesize\sum}_{k=t}^{\infty}\gamma^{k - t}\mathbbm{1}{\scriptstyle{\{s_k = s_i\}}} \evmid \pi, \probmodel, s_t]$,
where $\mathbbm{1}\scriptstyle{\{s_k = s_i\}}$ is the indicator function returning 1 if $s_k = s_i$ and 0 otherwise. 
The SR does not depend on the reward function and can be learned with temporal-difference in a model-free fashion, e.g., with Q-Learning. 
Despite their popularity in transfer RL~\citep{barreto2016successor, hansen2020fast, lehnert2020successor, reinke2023successor}, to the best of our knowledge, only~\citet{machado2020count} and~\citet{jin2020reward} used the SR to enhance exploration 
by using it as an intrinsic reward.
Here, we follow the idea of the SR --- to predict state occurrences under a policy --- to learn a value function that the goal-conditioned policy in Algorithm~\ref{alg:explore_exploit} can use to guide the agent towards desired state-action pairs. We call this the 
\textit{state-action successor-function} (or S-function)
and we formalize it as a general value function~\citep{sutton2011horde} representing the cumulative discounted occurrence of a state-action pair $(s_i, a_j)$ under a policy $\pi$, i.e.,
\begin{equation}
\setlength{\abovedisplayskip}{2pt}
\setlength{\belowdisplayskip}{2pt}
    \qvisit^{\pi}_\saij(s_t, a_t) = \mathbb{E}\Bigl[\sum\nolimits_{k=t}^{\infty}\gamma^{k - t}\mathbbm{1}{\scriptstyle{\{s_k = s_i, a_k = a_j\}}} \evmid \pi, \probmodel, s_t, a_t\Bigr], \label{eq:q_visit}
\end{equation}
where $\mathbbm{1}\scriptstyle{\{s_k = s_i, a_k = a_j\}}$ is the indicator function returning 1 if $s_k = s_i$ and $a_k = a_j$, and 0 otherwise. 

Prior work considered the SR relative to either an $\epsilon$-greedy exploration policy~\citep{machado2020count} or a random policy~\citep{machado2023temporal, lehnert2020successor, reinke2023successor}.
Instead, we consider a different ``successor policy'' for every state-action pair $(s_i, a_j)$, and define the optimal successor policy as the one maximizing the respective S-function, i.e., 
\begin{align}
\setlength{\abovedisplayskip}{2pt}
\setlength{\belowdisplayskip}{2pt}
\rho_{\saij}^* & \coloneqq \arg\max_\rho \: \qvisit_{\saij}^\rho(s,a). \label{eq:max_pivisit}
\end{align}
The switch from $\pi$ to $\rho$ is intentional: to maximize the S-function means to maximize the occurrence of $(s_i, a_j)$ under $\gamma$ discounting, which can been seen as visiting $(s_i, a_j)$ as fast as possible from any other state.\footnote{In deterministic MDPs, maximizing discounted occurrences is equivalent to minimizing the (expected) time-to-visit. However, this is not always true in stochastic MDPs, with discounted occurrences tending to ignore long time-to-visit outliers, making the agent more risk-seeing than expected time-to-visit. Nonetheless, in both cases, goal-conditioned policies have bounded goal-relative diameter as long as the MDP is communicating.} Thus, Eq.~\eqref{eq:max_pivisit} is a suitable goal-conditioned policy $\rho^*$ discussed in Algorithm~\ref{alg:explore_exploit}. 
Similar to the Q-function, we can learn an approximation of the S-functions using Q-Learning, i.e.,
\begin{equation}
\setlength{\abovedisplayskip}{2pt}
\setlength{\belowdisplayskip}{2pt}
    \widehat \qvisit_\saij(s_t, a_t) \leftarrow (1 - \upalpha_t) \widehat \qvisit_\saij(s_t, a_t) + \upalpha_t (\mathbbm{1}{\scriptstyle\{s_t = s_i, a_t = a_j\}} + \upgamma {\max_a \widehat \qvisit_\saij(s_{t+1}, a)}). \label{eq:qlearning_visit} 
\end{equation}
Because the initial estimates of $\widehat \qvisit_\saij$ can be inaccurate, in practice we let $\rho$ to be $\epsilon$-greedy over $\smash{\widehat \qvisit_\sagoal}$. That is, at every time step, the agent selects the action $\smash{a = \arg\max_a \widehat\qvisit_\sagoal(s_t, a)}$ with probability $1\! -\! \epsilon$, or a random action otherwise.
As discussed in the previous section, any $\epsilon$-greedy policy with $\epsilon > 0$ is sufficient to meet the criteria of Theorem~\ref{th:main}.
%

\subsection{Directed Exploration via Successor-Functions: A Summary}
\label{subsec:direct_explore_with_q_visit}

Our exploration strategy can be applied to any off-policy algorithm where the temporal-difference update of Eq.~\eqref{eq:qlearning} and~\eqref{eq:qlearning_visit} is guaranteed to convergence under the GLIE assumption (pseudocode for Q-Learning in Appendix~\ref{app:alg_details}).
Most importantly, our exploration does not suffer from the limitations of optimism discussed in Section~\ref{subsec:related} --- \textit{the agent will eventually explore all state-action pairs even when rewards are partially observable, because exploration is not influenced by the Q-function (and, thus, by rewards), but fully driven by the S-function}.
Consider the example in Figure~\ref{fig:walk_unobs} again, and an optimistic agent (i.e., with high initial $\smash{\widehat Q}$) that learns a reward model and uses its estimate when rewards are not observable. 
As discussed, classic exploration strategies will likely fail --- their exploration is driven by $\smash{\widehat Q}$ that can be highly inaccurate, either because of its optimistic estimate or because the reward model queried for Q-updates is inaccurate (and will stay so without proper exploration). This can easily lead to premature convergence to a suboptimal policy (to collect the left coin). On the contrary, if the agent follows our exploration strategy it will always push the button, discover the large coin, and eventually learn the optimal policy --- it does not matter if $\smash{\widehat Q}$ is initialized optimistically high and would (wrongly) believe that collecting the left coin is optimal, because the agent follows $\smash{\widehat \qvisit}$. 
And even if $\smash{\widehat \qvisit}$ is initialized optimistically (or even randomly), the indicator reward $\mathbbm{1}$ of Eq.~\eqref{eq:qlearning_visit} \textit{is always observable}, thus the agent can always update $\smash{\widehat \qvisit}$ at every step. 
As $\smash{\widehat \qvisit}$ becomes more accurate, the agent will eventually explore optimally, i.e., maximizing visitation occurrences --- it will be able to visit states more efficiently, to observe rewards more frequently and update its reward model properly, and thus to update $\smash{\widehat Q}$ estimates with accurate rewards.
To summarize, the explicitly decoupling of exploration ($\smash{\widehat \qvisit}$) and exploitation ($\smash{\widehat Q}$) together with the use of SR (whose reward is always observable) is the key of our algorithm.
Experiments in the next section empirically validate both the failure of classic exploration strategies and the success of ours. In Appendix~\ref{app:heatmaps} we further report a deeper analysis on a benchmark similar to the example of Figure~\ref{fig:walk}.

\textbf{Related Work.} In reward-free RL, \citet{jin2020reward} proposed an algorithm where the agent explores by maximizing SR rewards. After collecting a sufficient amount of data, the agent no longer interacts with the environment and uses the data to learn policies maximizing different reward functions. 
In model-based RL, the agent of \citet{hu2023planning} alternates between exploration and exploitation at every training episode. During exploration, the agent follows a goal-conditioned policy randomly sampled from a replay buffer with the goal of minimizing the number of actions needed to reach a goal.
Finally, in sparse-reward RL, \citet{parisi2022long} learn two value functions --- one using extrinsic sparse rewards, one using dense visitation-count rewards --- and combine them to derive a UCB-like exploration policy.
While the use of SR of \citet{jin2020reward}, the goal-conditioned policies of \citet{hu2023planning}, and the interaction of two separate value functions of \citet{parisi2022long} are related to our work, there are significant differences to what we presented in this paper. 
First, our directed exploration does not strictly separate exploration and exploitation into two phases~\citep{jin2020reward} or between episodes~\citep{hu2023planning}, but rather interleaves them step-by-step using either the Q-function or the S-functions. However, the two value functions are never combined together~\citep{parisi2022long}, as the agent follows either one or the other according to the coefficient $\beta_t$. 
Second, our goal-conditioned policy is chosen according to the least-visited state-action pair, rather than randomly sampled from a set~\citep{jin2020reward} or from a replay buffer~\citep{hu2023planning}. 
Third, none of them have considered the setting of Mon-MDPs and partially observable rewards.


\section{Experiments}
\label{sec:exp}

\begin{wrapfigure}{l}{0.585\textwidth}
\begin{minipage}{0.58\textwidth}
    \vspace{-1.2em}
    \begin{center}
        \begin{minipage}[c]{\textwidth}
            \centering
            \begin{subfigure}[b]{0.325\linewidth}
                \centering
                {\fontfamily{qbk}\scriptsize\textbf{Empty}}\\[1pt]
                \includegraphics[width=\linewidth]{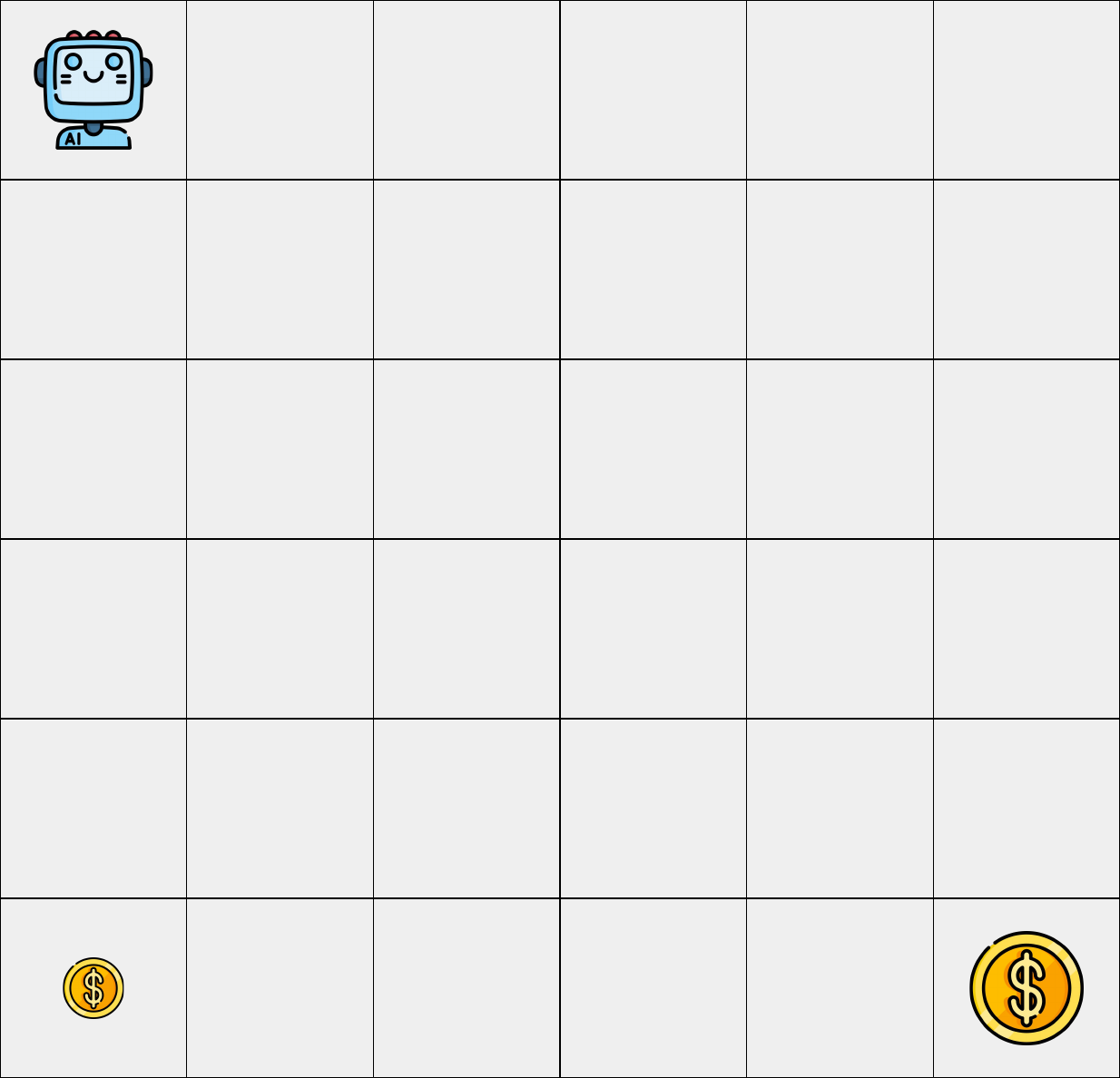}
            \end{subfigure}%
            \hfill
            \begin{subfigure}[b]{0.325\linewidth}
                \centering
                {\fontfamily{qbk}\scriptsize\textbf{Hazard}}\\[1pt]
                \includegraphics[width=\linewidth]{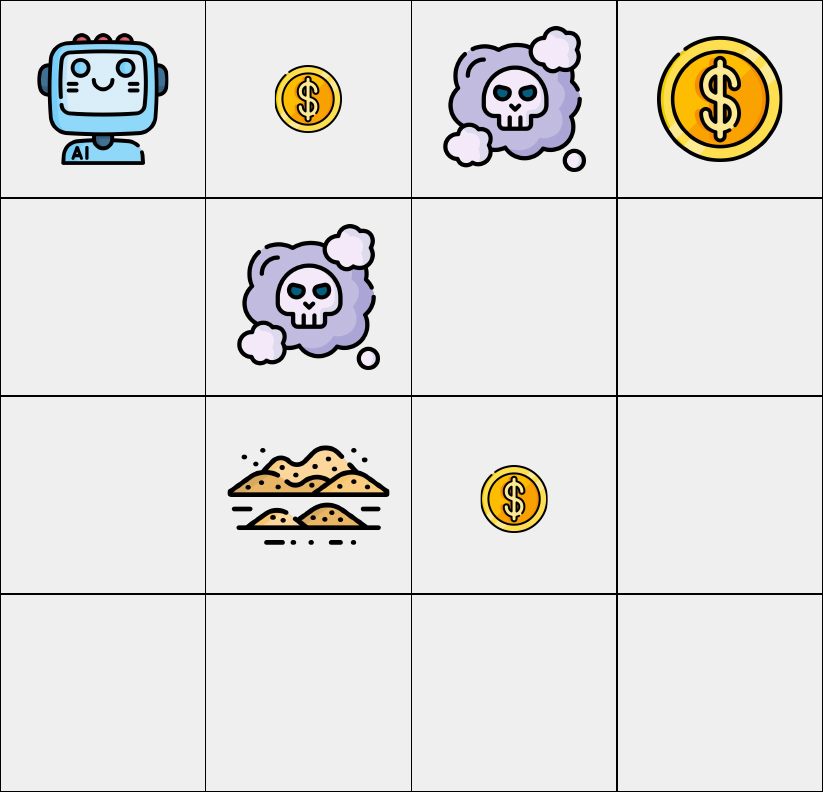}
            \end{subfigure}%
            \hfill
            \begin{minipage}[b]{0.29\textwidth}
                \begin{subfigure}[b]{\linewidth}
                    \centering
                    {\fontfamily{qbk}\scriptsize\textbf{One-Way}}\\[1pt]
                    \includegraphics[width=\linewidth]{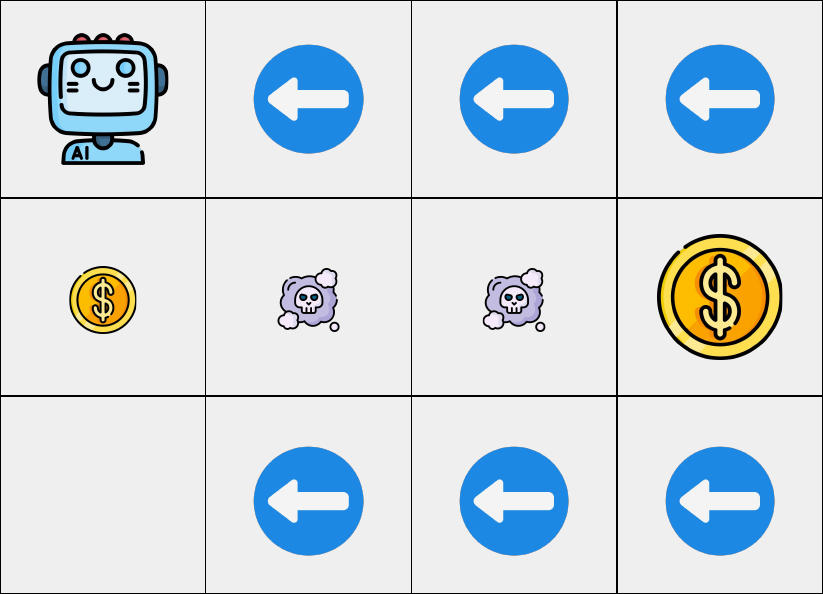}
                \end{subfigure}%
                \\[5pt]
                \begin{subfigure}[b]{\linewidth}
                    \centering
                    {\fontfamily{qbk}\scriptsize\textbf{River Swim}}\\[1pt]
                    \includegraphics[width=\linewidth]{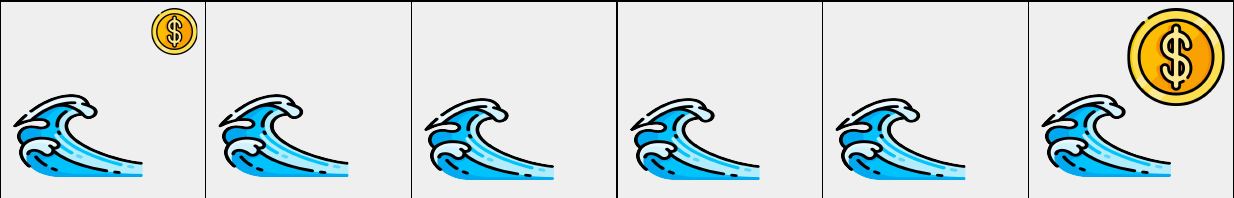}
                \end{subfigure}
            \end{minipage}
        \end{minipage}
    \end{center}
    \vspace{-0.4em}
    \caption{\label{fig:envs}\textbf{Environments.} The goal is to collect the large coin ($\renv_t = 1$) instead of small ``distracting'' coins ($\renv_t = 0.1$). In Hazard, the agent must avoid quicksand (it prevents any movement) and toxic clouds ($\renv_t = -10$). In One-Way, the agent must walk over toxic clouds ($\renv_t = -0.1$) to get the large coin. In River Swim, the stochastic transition pushes the agent to the left. More details in Appendix~\ref{app:envs}.}
\vspace{-1.1em}
\end{minipage}
\end{wrapfigure}

We validate our exploration strategy on tabular MDPs (Figure~\ref{fig:envs}) characterized by different challenges, e.g., sparse rewards, distracting rewards, stochastic transitions. 
For each MDP, we propose the following Mon-MDP versions of increasing difficulty. The first has a simple \textbf{random monitor}, where positive and negative rewards are unobservable ($\rprox_t = \rewardundefined$) with probability $50\%$ and observable otherwise ($\rprox_t = \renv_t$), while zero-rewards are always observable. In the second, the agent can \textbf{ask to observe} the current reward at a cost ($\rmon_t = -0.2$). In the third, the agent can turn monitoring $\texttt{ON/OFF}$ by pushing a \textbf{button} in the top-left cell, and if $\smon_t = \texttt{ON}$ the agent pays a cost ($\rmon_t = -0.2$) and observes $\rprox_t = \renv_t$. In the fourth, there are \textbf{many experts} the agent can ask for rewards from at a cost ($\rmon_t = -0.2$), but only a random one is available at the time. In the fifth and last, the agent has to \textbf{level up} the monitor state by doing a correct sequence of costly ($\rmon_t = -0.2$) monitor actions to observe rewards (a wrong action resets the level). 
In all Mon-MDPs, the optimal policy does not need monitoring ($\rmon_t = -0.2$ to observe $\rprox_t = \renv_t$). However, prematurely doing so would prevent observing rewards and, thus, learning $\smash{Q^*}$. 
More experiments on more environments are presented in Appendix~\ref{app:extra_results}.

\begin{figure}[ht]
    \setlength{\belowcaptionskip}{0pt}
    \setlength{\abovecaptionskip}{6pt}
    \centering
    \includegraphics[width=0.75\linewidth,trim={1pt 3pt 1pt 3pt},clip]{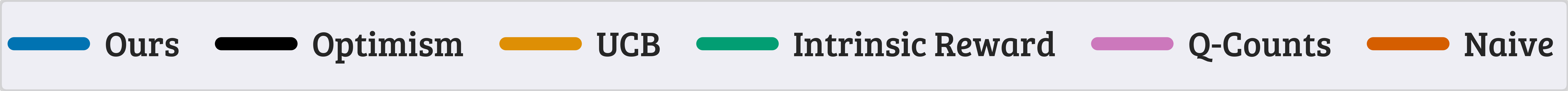}
    \\[1.4pt]
    \makebox[\linewidth][c]{%
    \raisebox{22pt}{\rotatebox[origin=t]{90}{\fontfamily{qbk}\tiny\textbf{Empty}}}
    \hfill
    \begin{subfigure}[b]{0.165\linewidth}
        \centering
        {\fontfamily{qbk}\tiny\textbf{Full Observ.}}\\[1.4pt]
        \includegraphics[width=\linewidth]{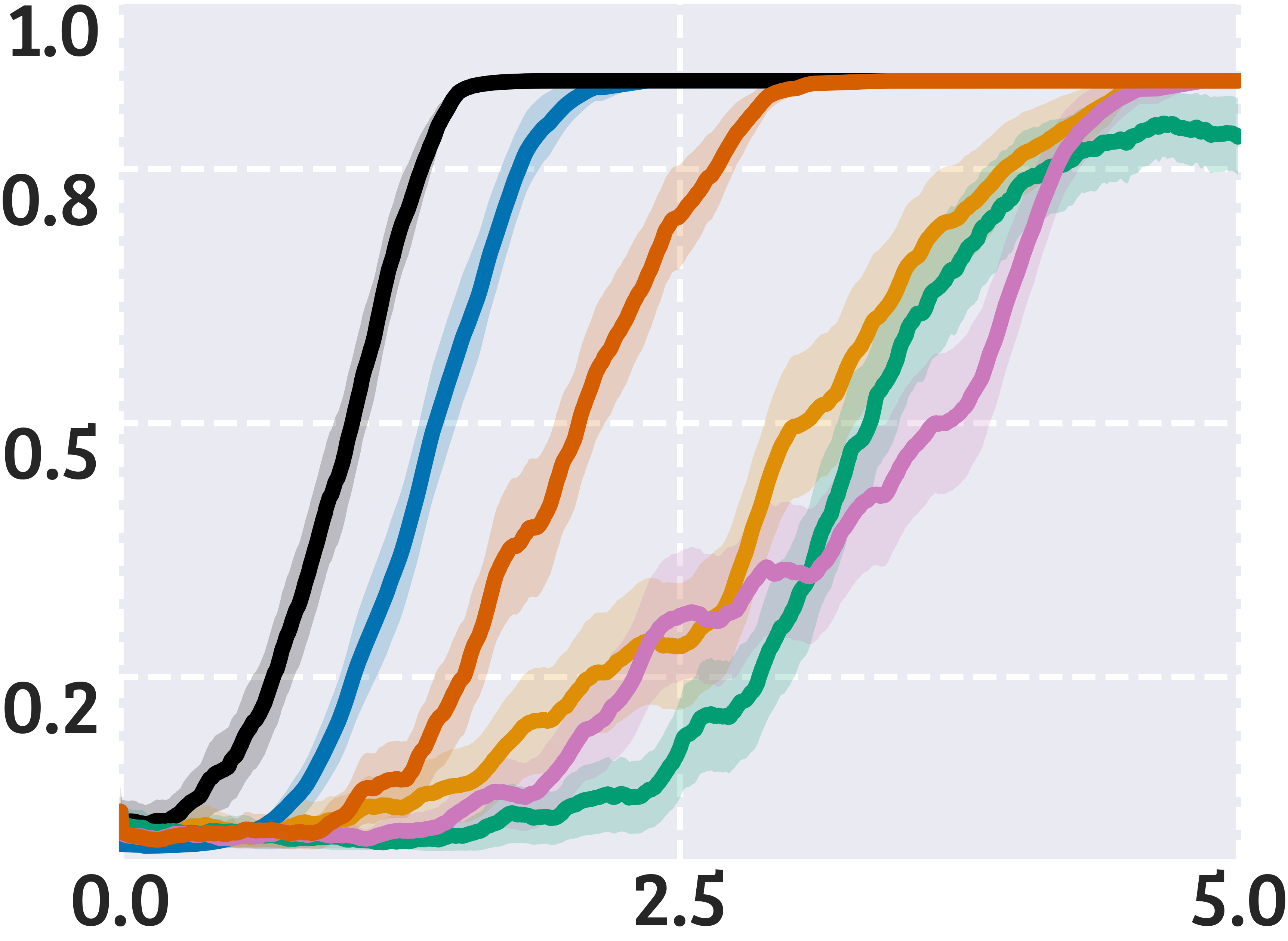}
    \end{subfigure}
    \hfill
    \begin{subfigure}[b]{0.155\linewidth}
        \centering
        {\fontfamily{qbk}\tiny\textbf{Random Observ.}}\\[1.4pt]
        \includegraphics[width=\linewidth]{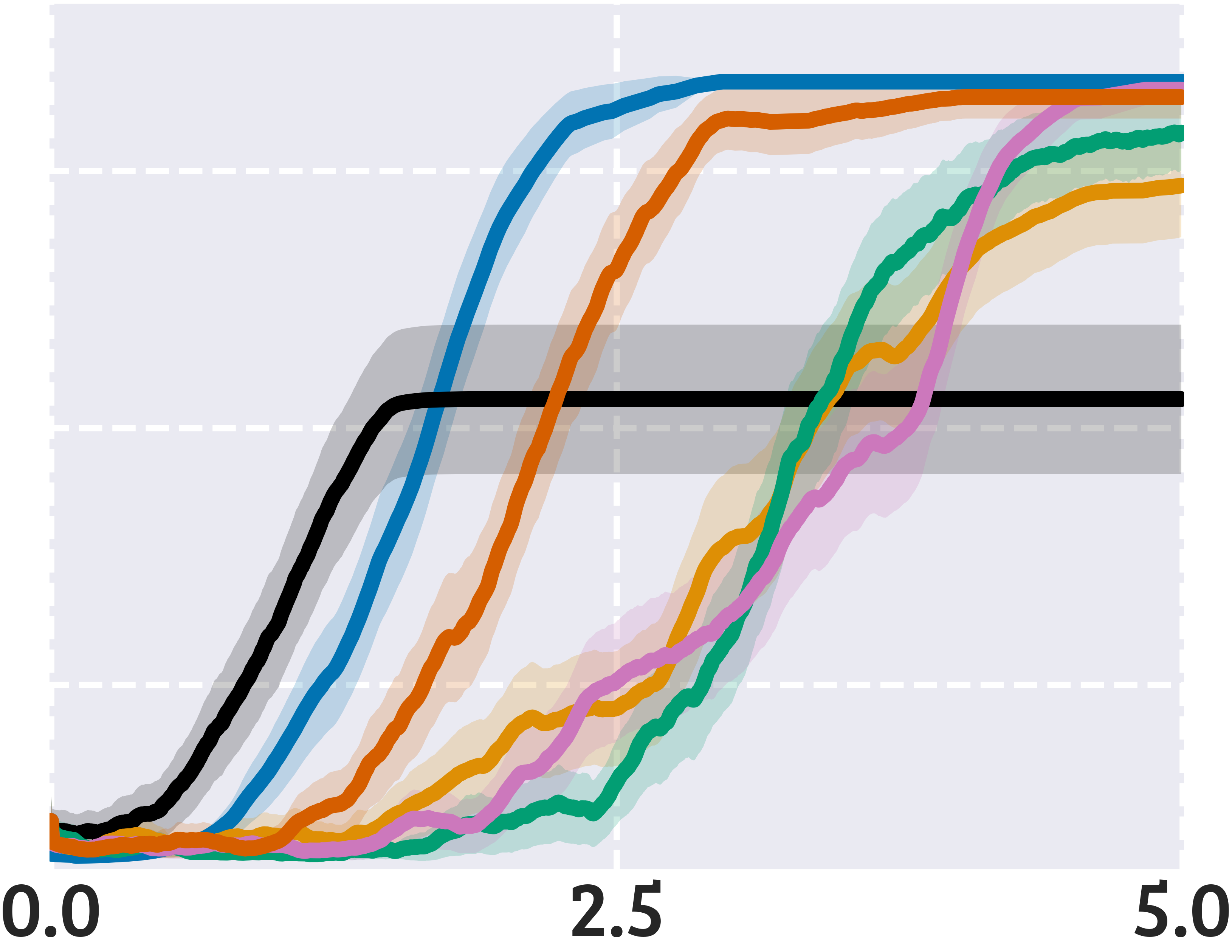}
    \end{subfigure}
    \hfill
    \begin{subfigure}[b]{0.155\linewidth}
        \centering
        {\fontfamily{qbk}\tiny\textbf{Ask}}\\[1.4pt]
        \includegraphics[width=\linewidth]{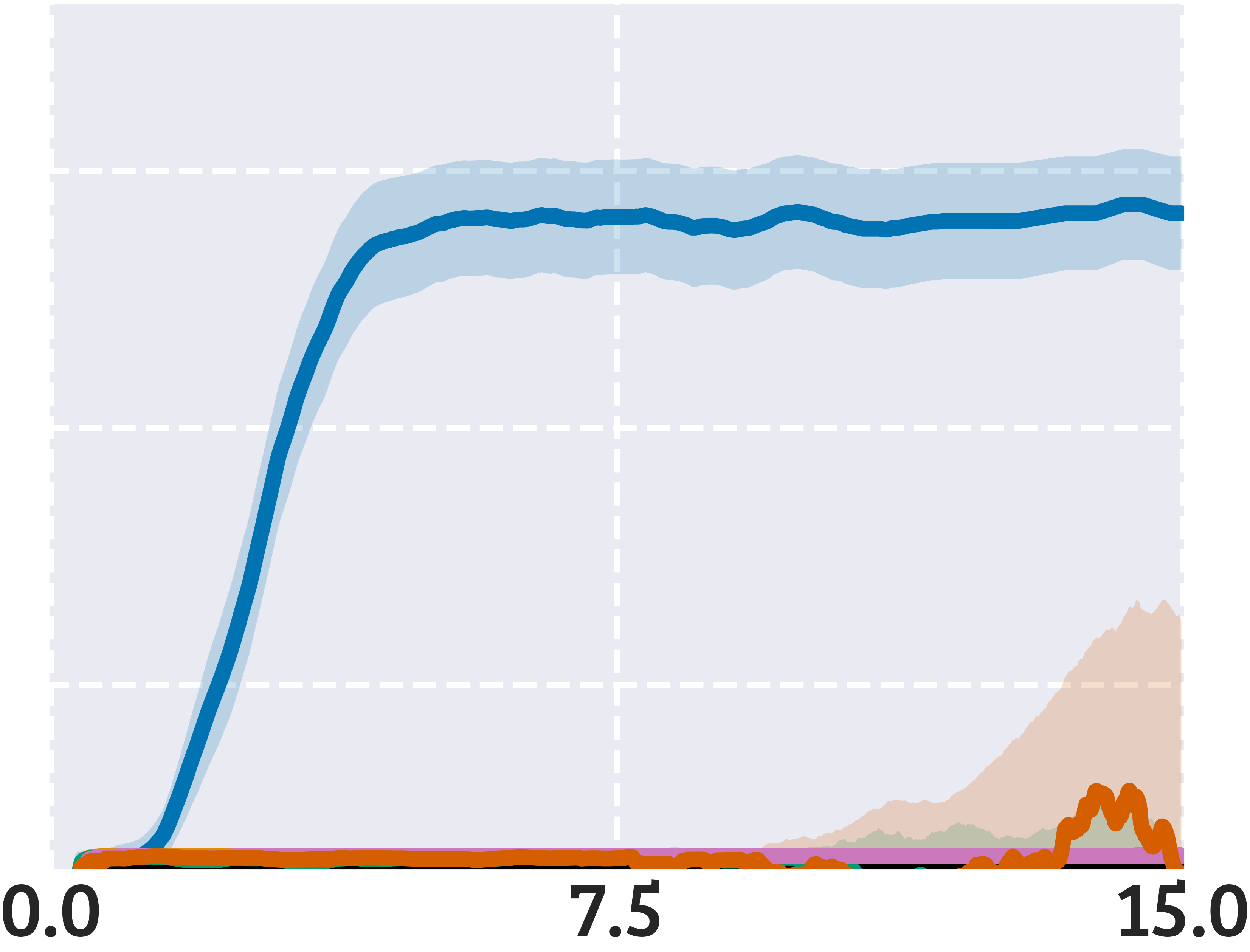}
    \end{subfigure}
    \hfill
    \begin{subfigure}[b]{0.155\linewidth}
        \centering
        {\fontfamily{qbk}\tiny\textbf{Button}}\\[1.4pt]
        \includegraphics[width=\linewidth]{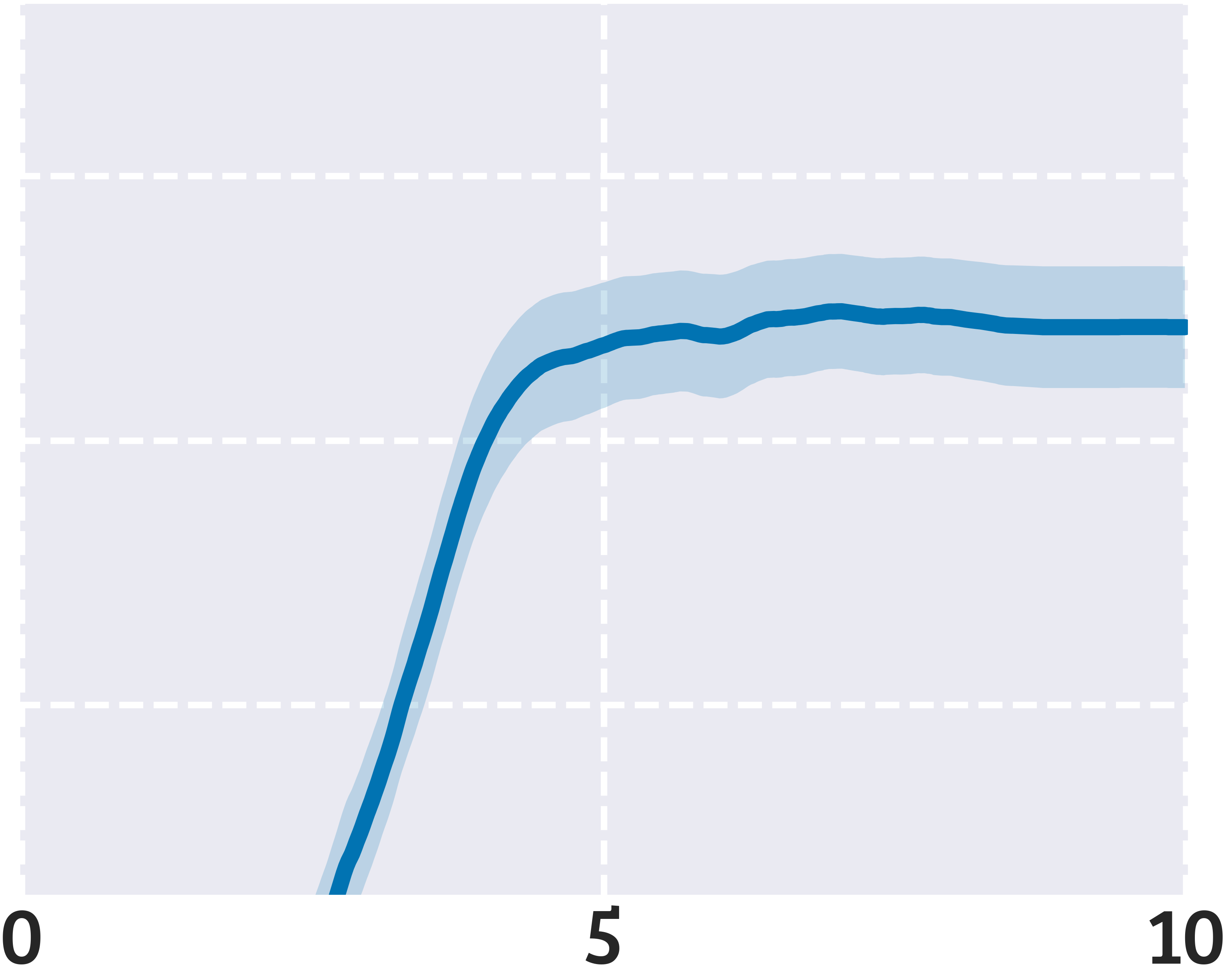}
    \end{subfigure}
    \hfill
    \begin{subfigure}[b]{0.155\linewidth}
        \centering
        {\fontfamily{qbk}\tiny\textbf{Random Experts}}\\[1.4pt]
        \includegraphics[width=\linewidth]{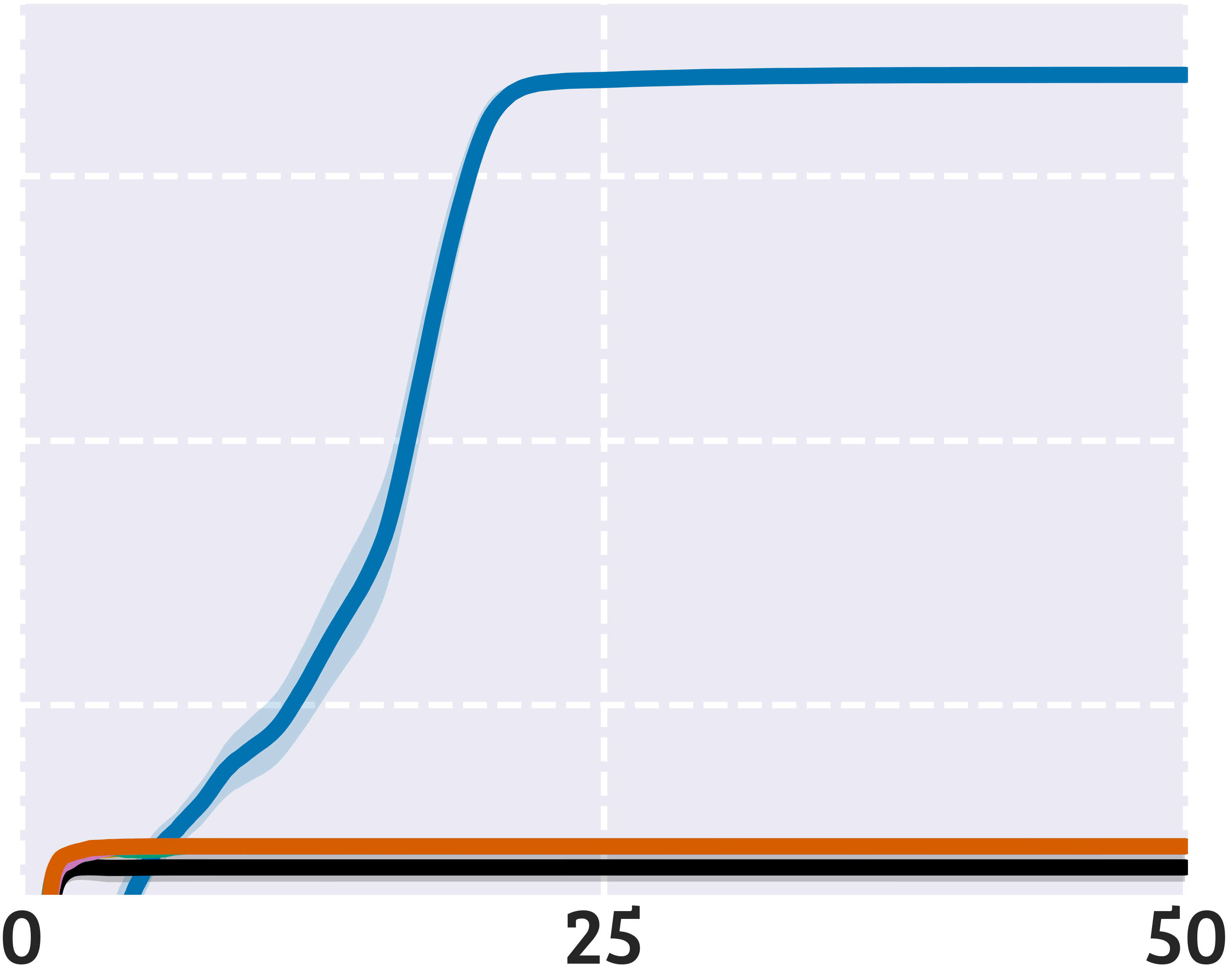}
    \end{subfigure}
    \hfill
    \begin{subfigure}[b]{0.155\linewidth}
        \centering
        {\fontfamily{qbk}\tiny\textbf{Level Up}}\\[1.4pt]
        \includegraphics[width=\linewidth]{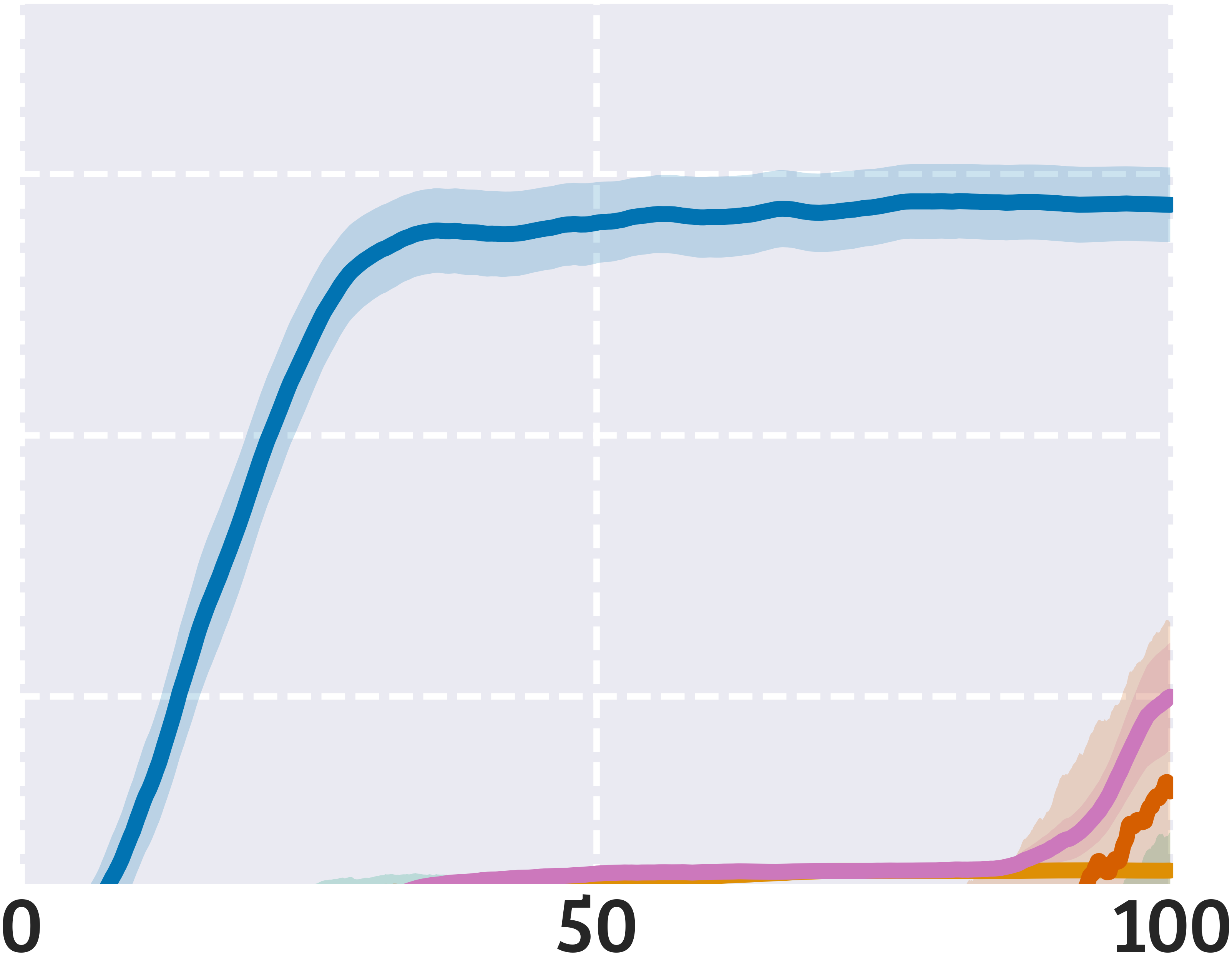}
    \end{subfigure}%
    }
    \\[1.4pt]
    \makebox[\linewidth][c]{%
    \raisebox{22pt}{\rotatebox[origin=t]{90}{\fontfamily{qbk}\tiny\textbf{Hazard}}}
    \hfill
    \begin{subfigure}[b]{0.165\linewidth}
        \centering
        \includegraphics[width=\linewidth]{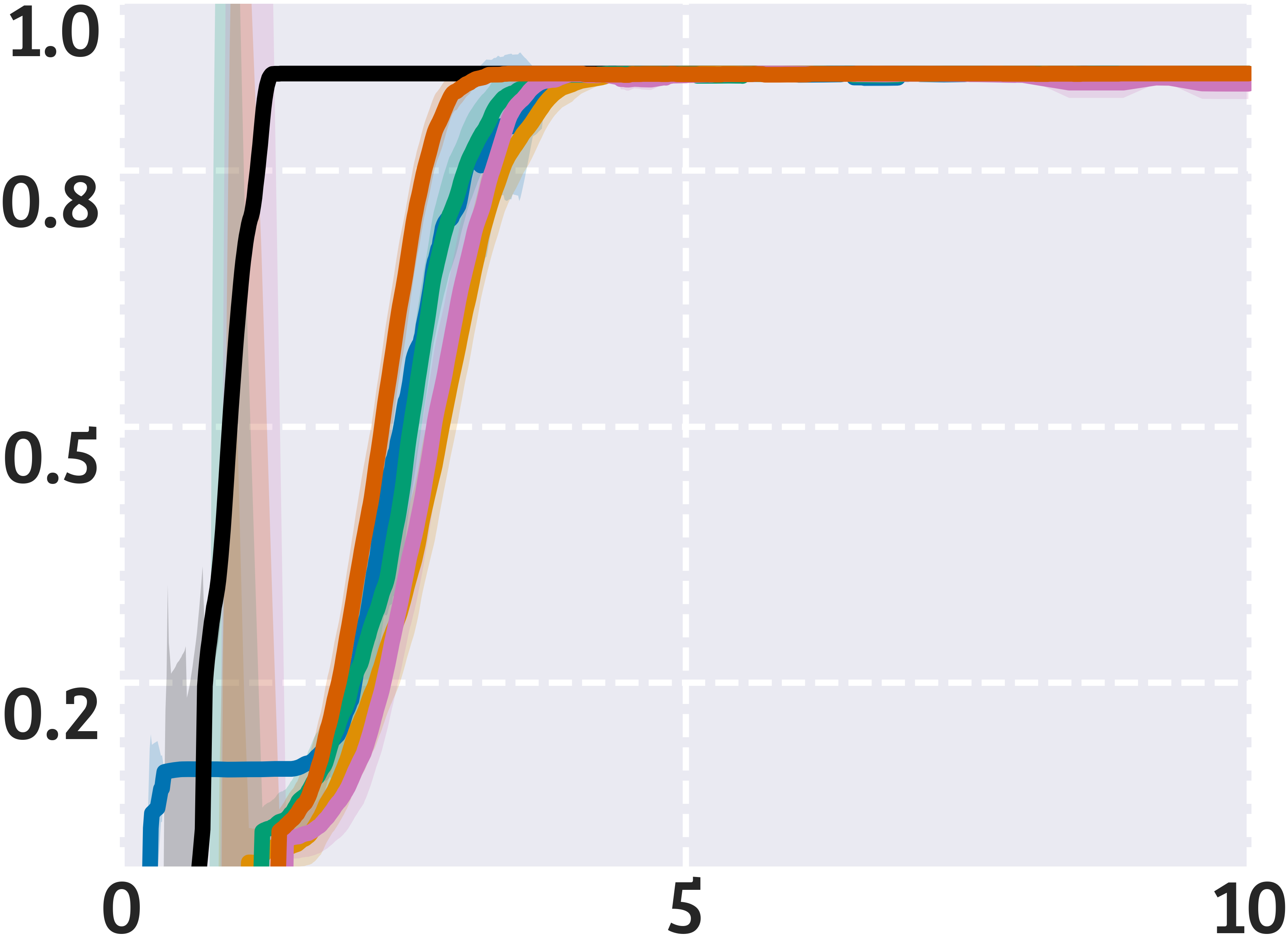}
    \end{subfigure}
    \hfill
    \begin{subfigure}[b]{0.155\linewidth}
        \centering
        \includegraphics[width=\linewidth]{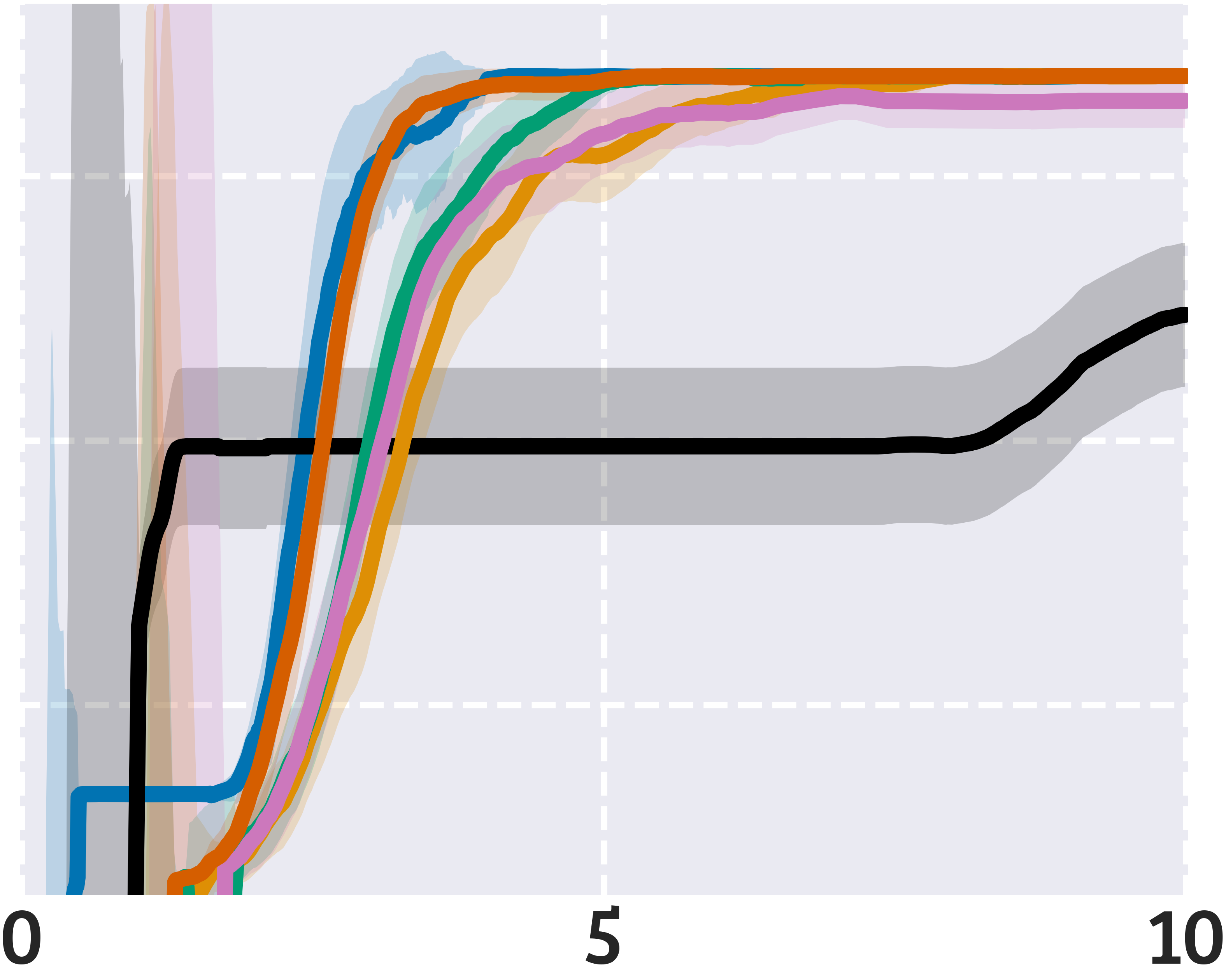}
    \end{subfigure}
    \hfill
    \begin{subfigure}[b]{0.155\linewidth}
        \centering
        \includegraphics[width=\linewidth]{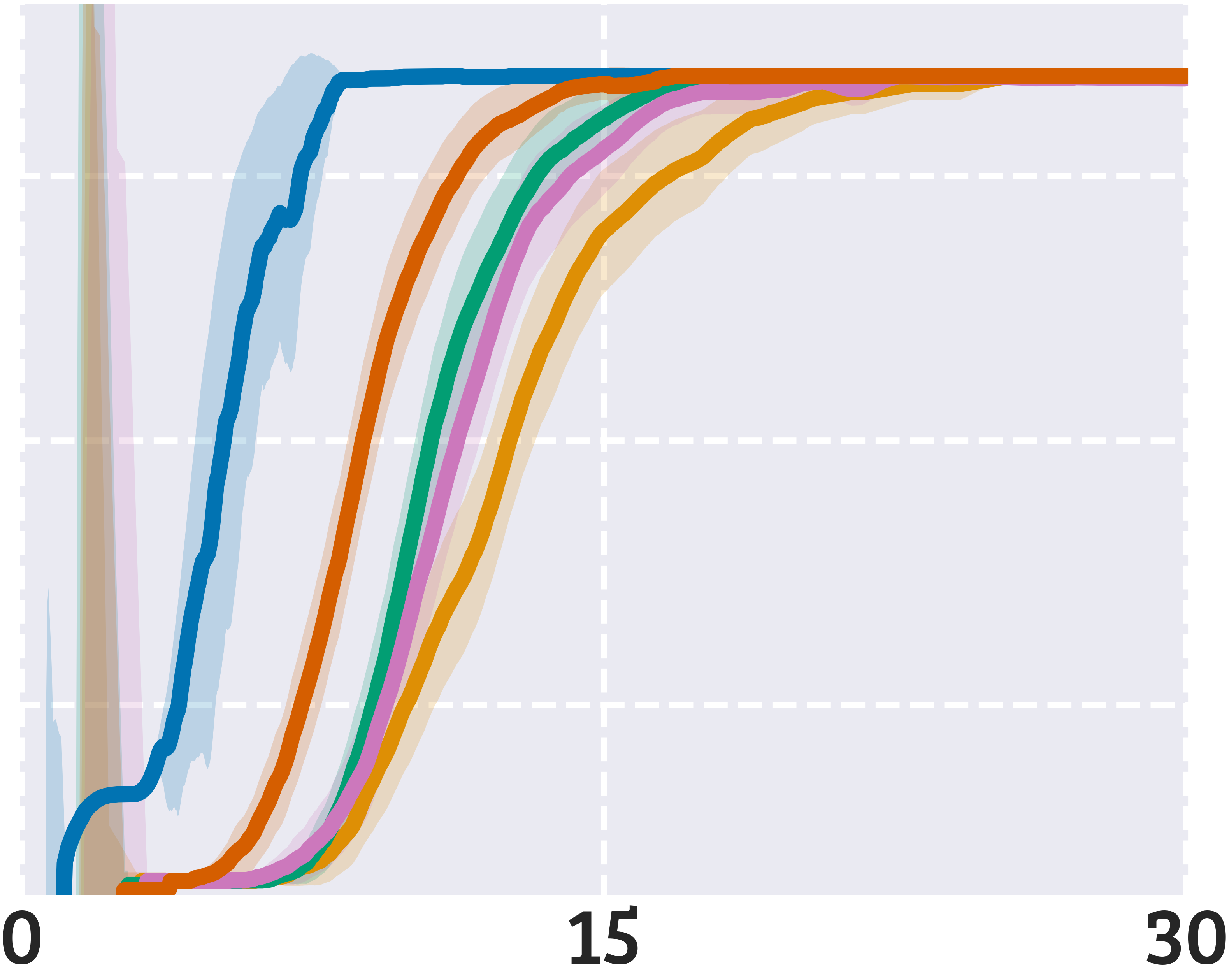}
    \end{subfigure}
    \hfill
    \begin{subfigure}[b]{0.155\linewidth}
        \centering
        \includegraphics[width=\linewidth]{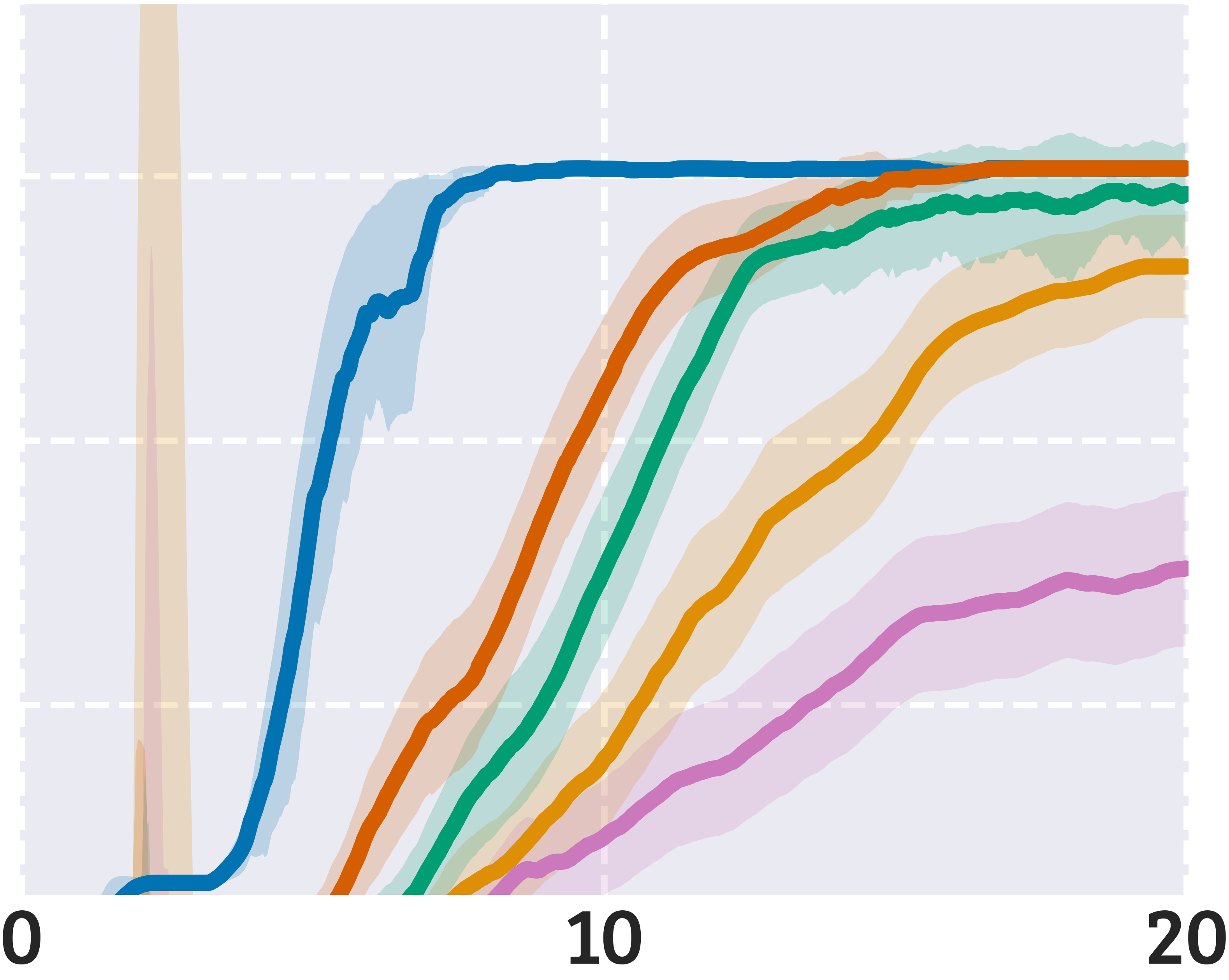}
    \end{subfigure}
    \hfill
    \begin{subfigure}[b]{0.155\linewidth}
        \centering
        \includegraphics[width=\linewidth]{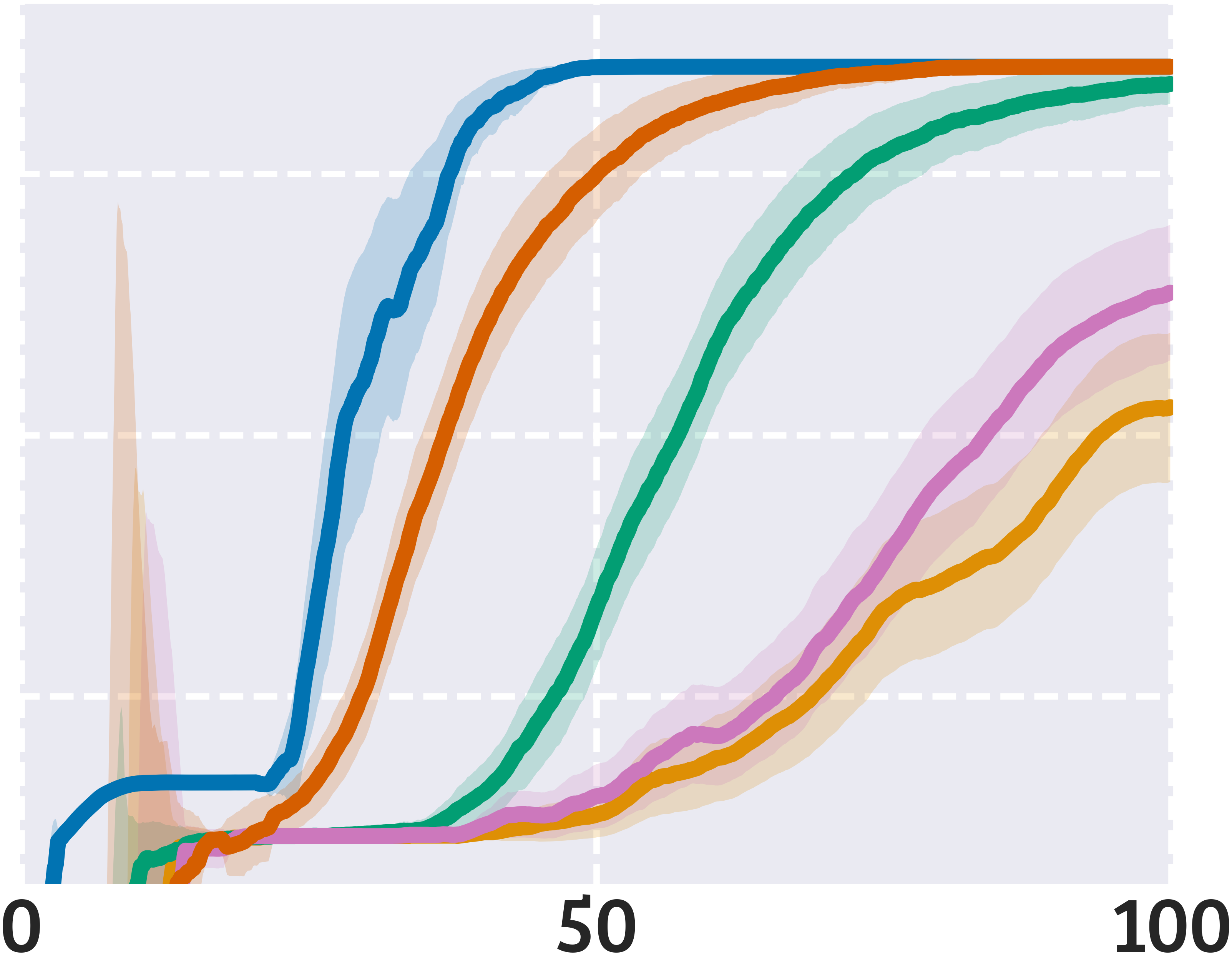}
    \end{subfigure}
    \hfill
    \begin{subfigure}[b]{0.155\linewidth}
        \centering
        \includegraphics[width=\linewidth]{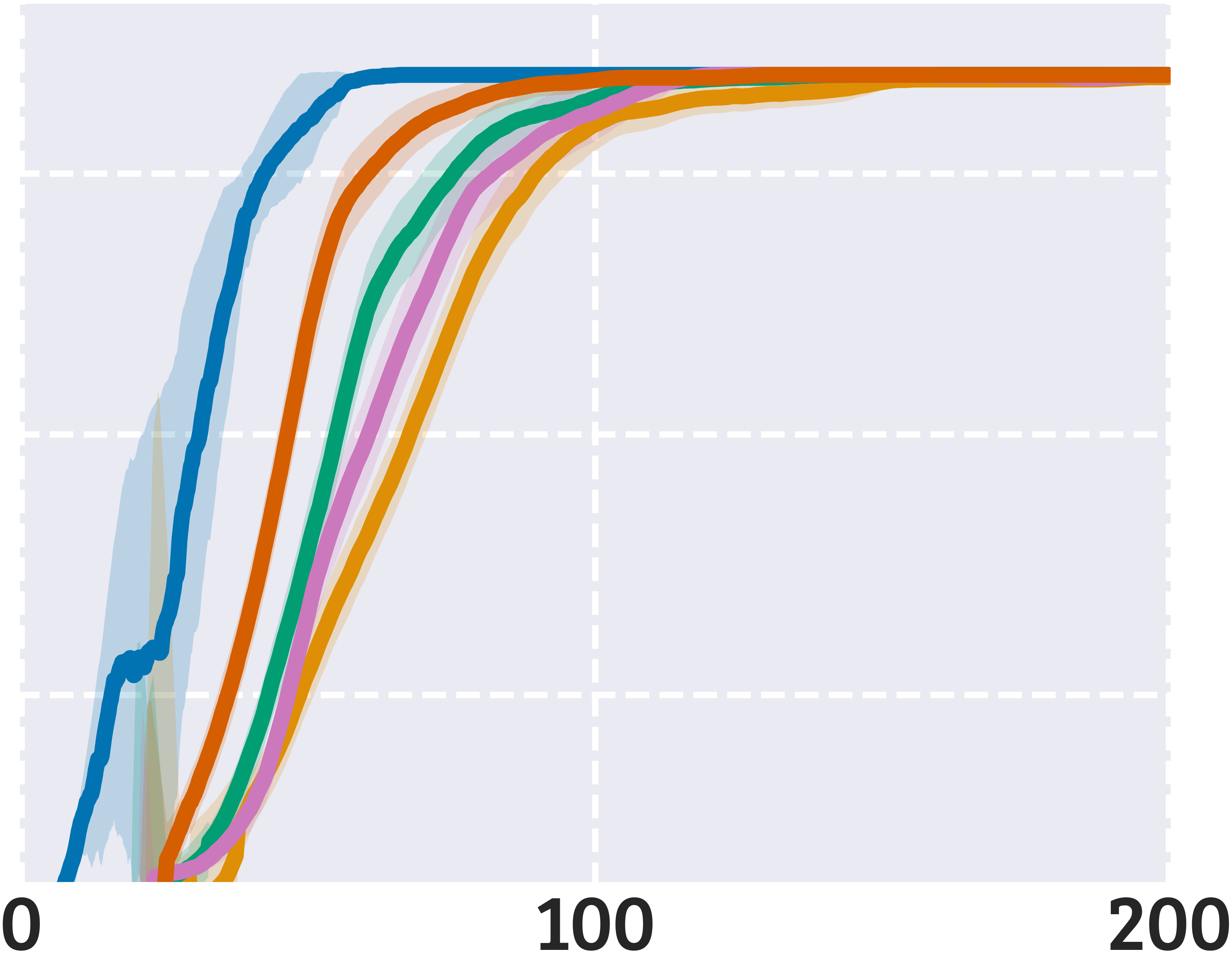}
    \end{subfigure}%
    }
    \\[1.4pt]
    \makebox[\linewidth][c]{%
    \raisebox{22pt}{\rotatebox[origin=t]{90}{\fontfamily{qbk}\tiny\textbf{One-Way}}}
    \hfill
    \begin{subfigure}[b]{0.165\linewidth}
        \centering
        \includegraphics[width=\linewidth]{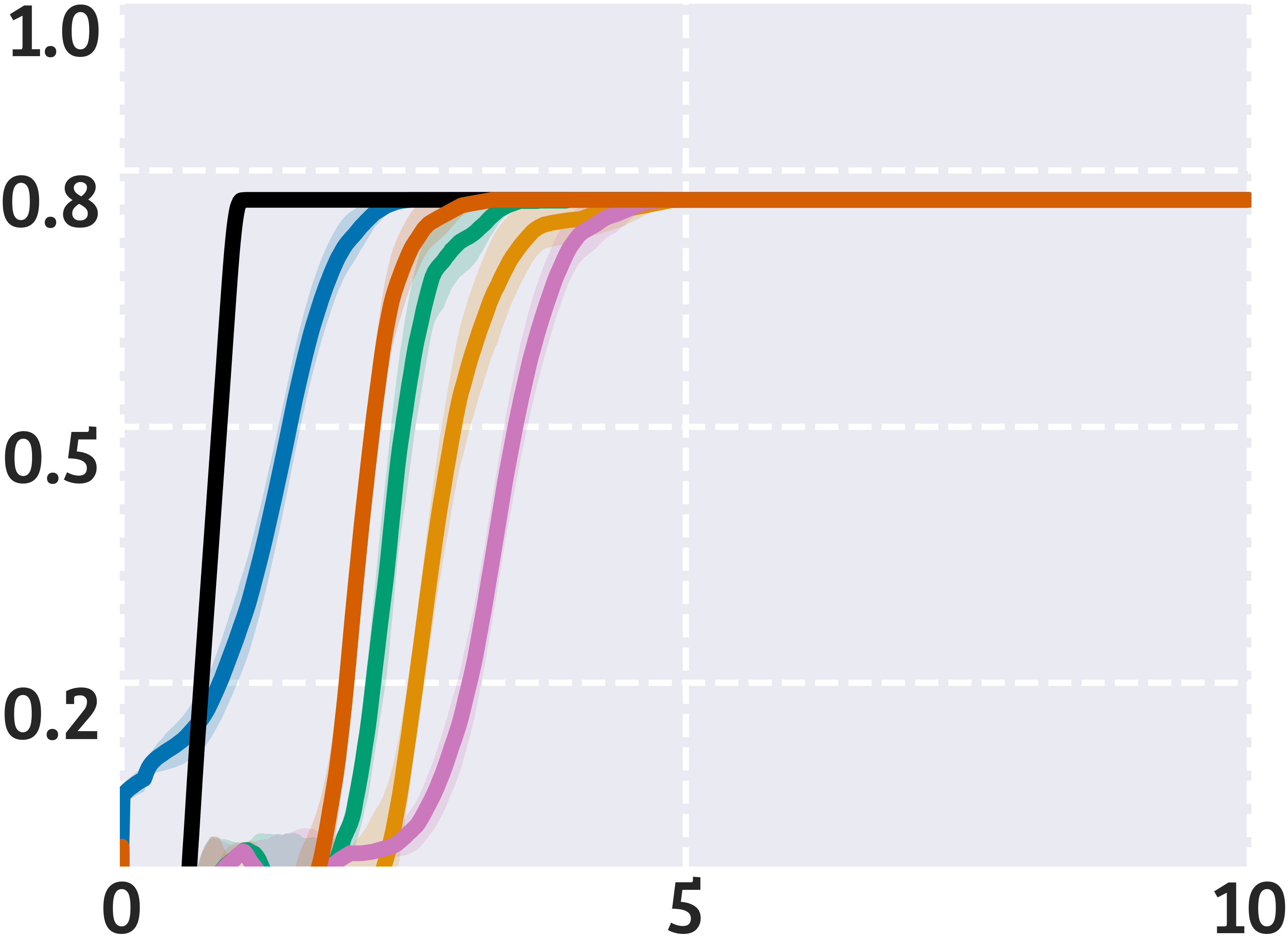}
    \end{subfigure}
    \hfill
    \begin{subfigure}[b]{0.155\linewidth}
        \centering
        \includegraphics[width=\linewidth]{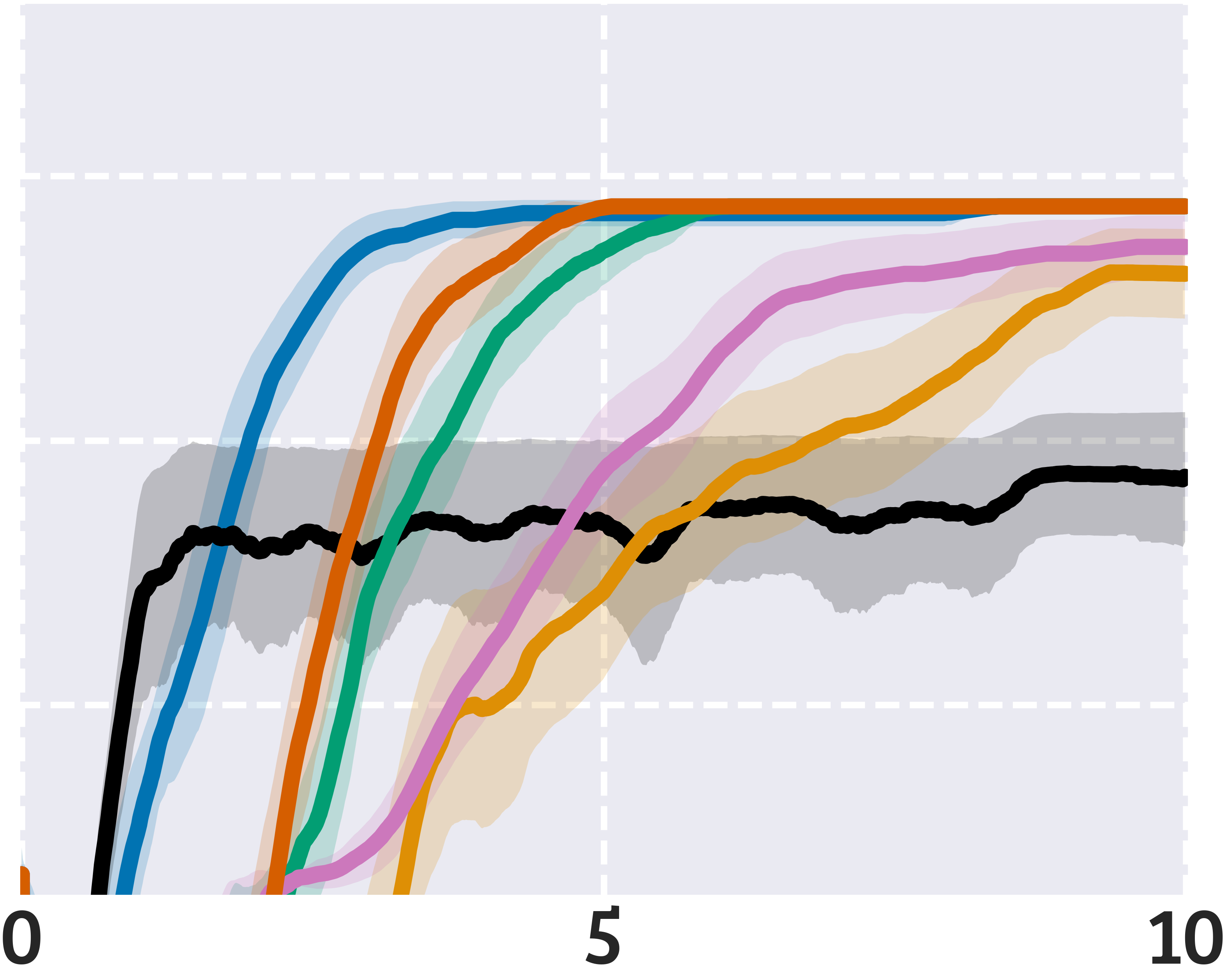}
    \end{subfigure}
    \hfill
    \begin{subfigure}[b]{0.155\linewidth}
        \centering
        \includegraphics[width=\linewidth]{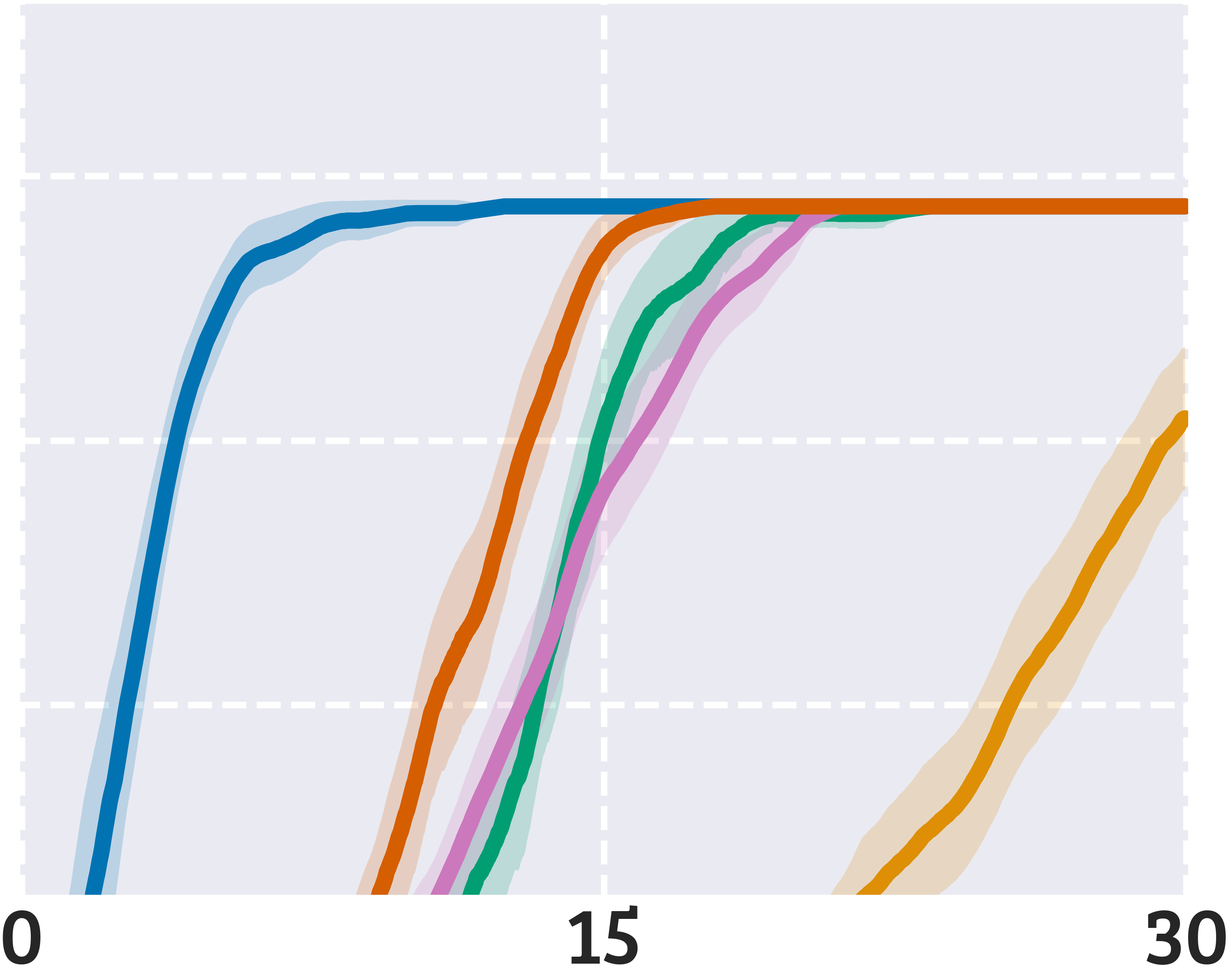}
    \end{subfigure}
    \hfill
    \begin{subfigure}[b]{0.155\linewidth}
        \centering
        \includegraphics[width=\linewidth]{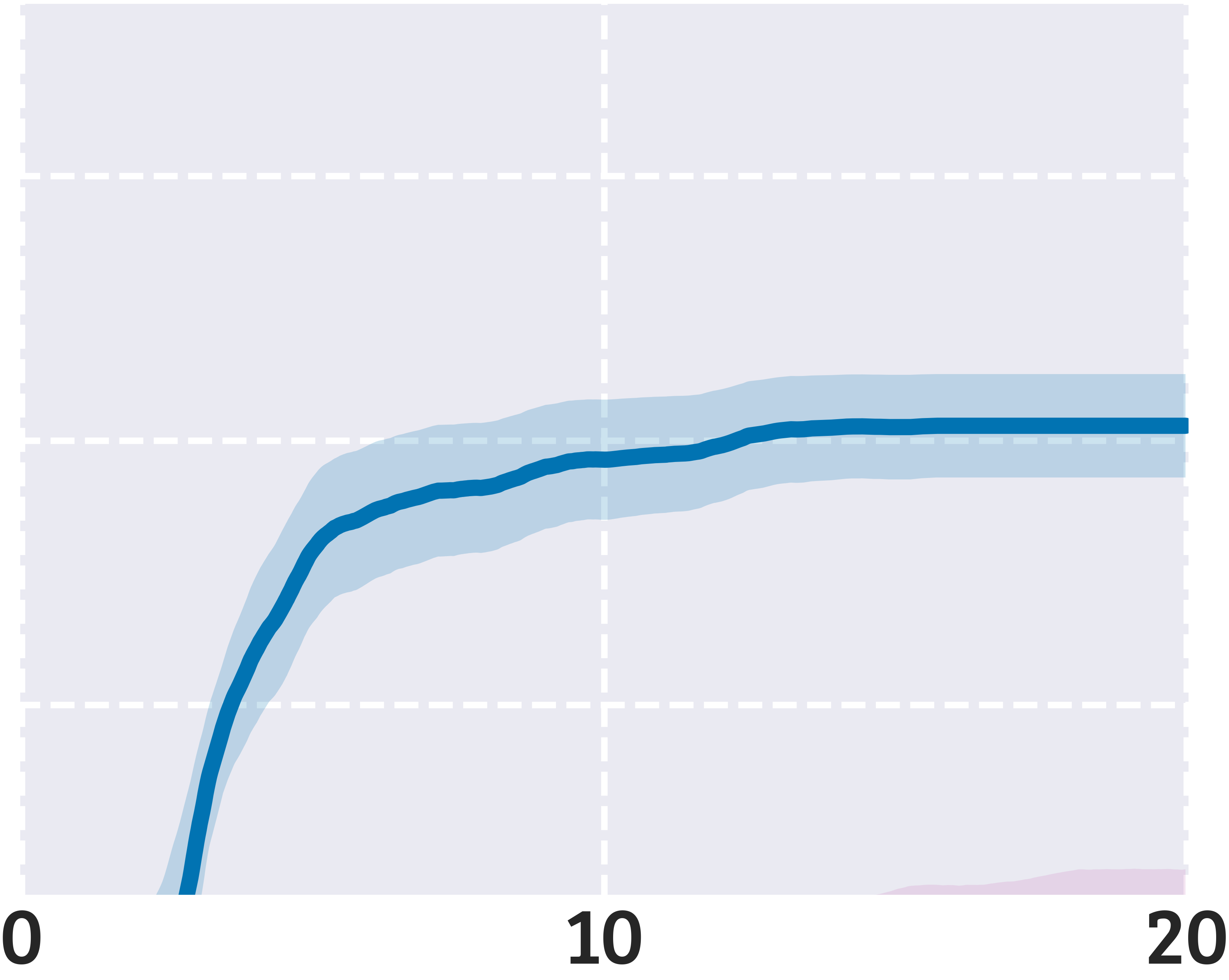}
    \end{subfigure}
    \hfill
    \begin{subfigure}[b]{0.155\linewidth}
        \centering
        \includegraphics[width=\linewidth]{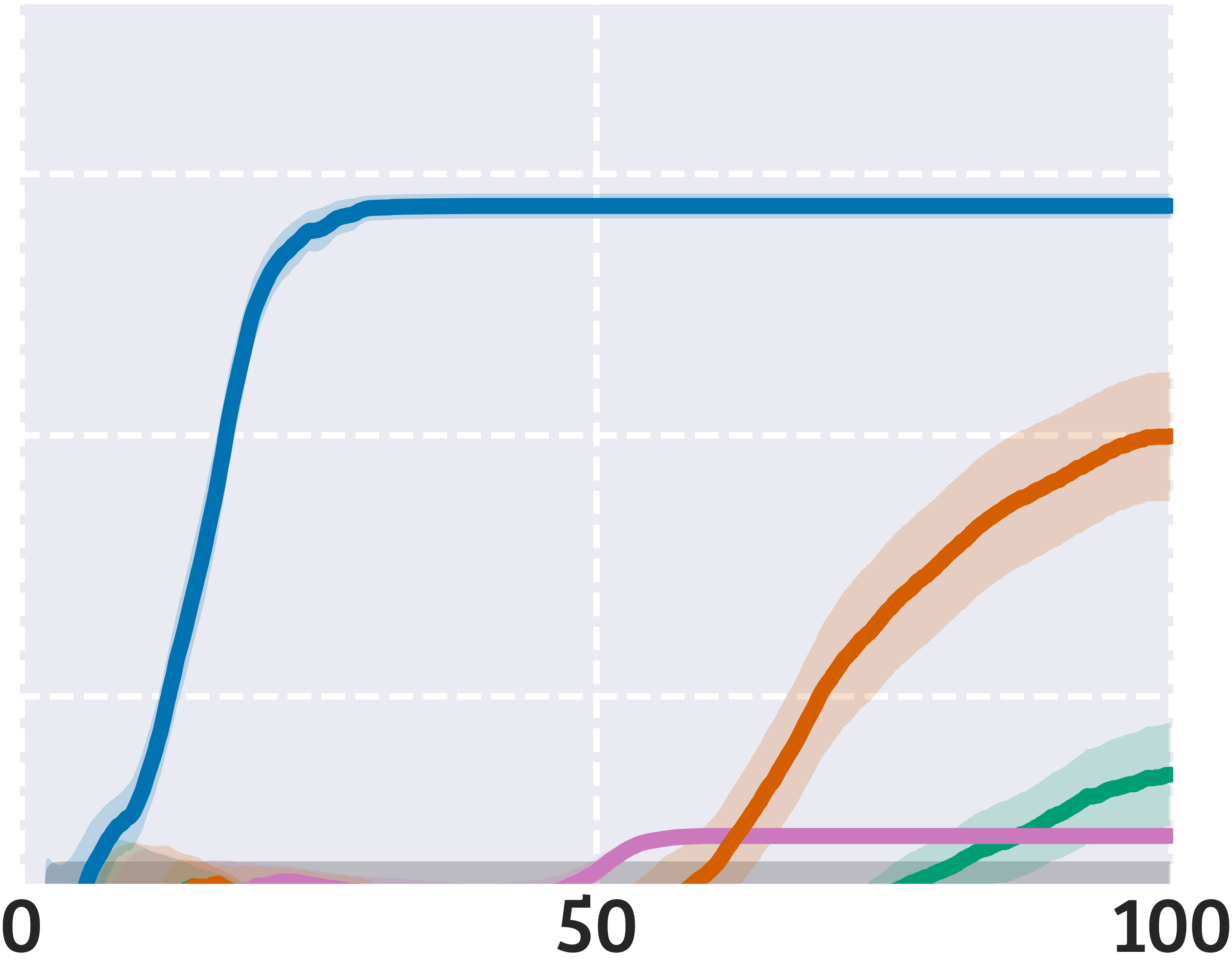}
    \end{subfigure}
    \hfill
    \begin{subfigure}[b]{0.155\linewidth}
        \centering
        \includegraphics[width=\linewidth]{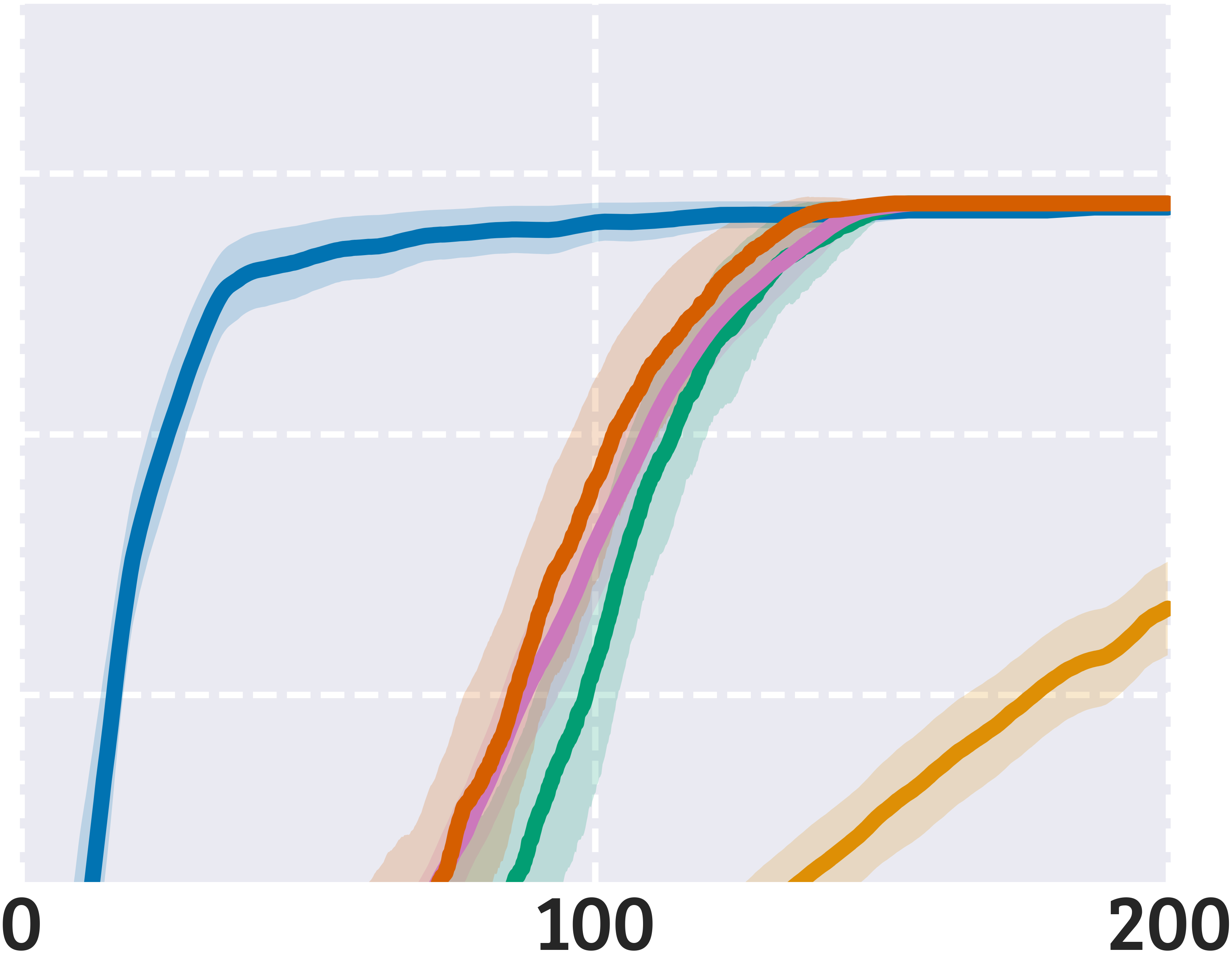}
    \end{subfigure}%
    }
    \\[1.4pt]
    \makebox[\linewidth][c]{%
    \raisebox{22pt}{\rotatebox[origin=t]{90}{\fontfamily{qbk}\tiny\textbf{River Swim}}}
    \hfill
    \begin{subfigure}[b]{0.165\linewidth}
        \centering
        \includegraphics[width=\linewidth]{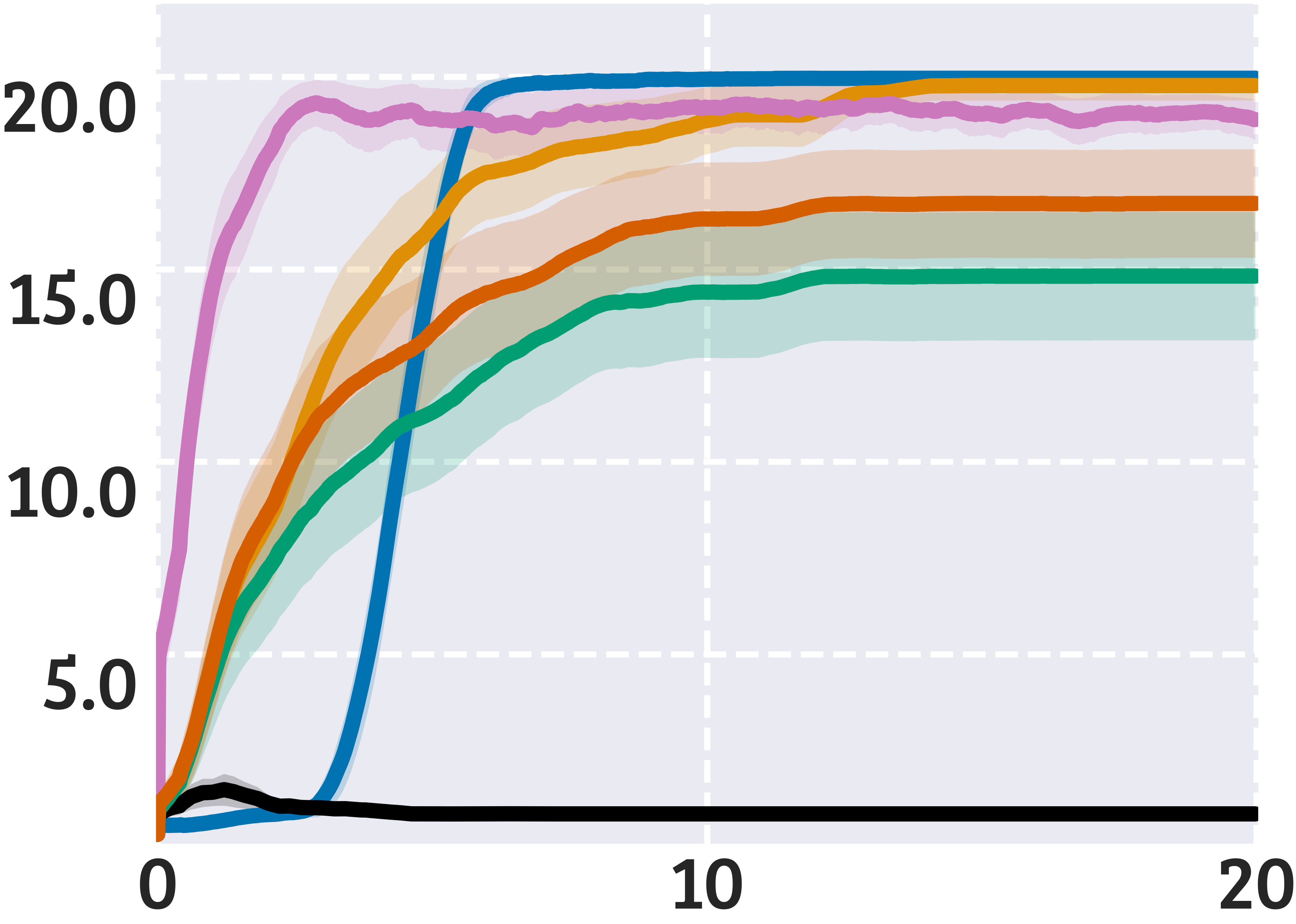}
    \end{subfigure}
    \hfill
    \begin{subfigure}[b]{0.155\linewidth}
        \centering
        \includegraphics[width=\linewidth]{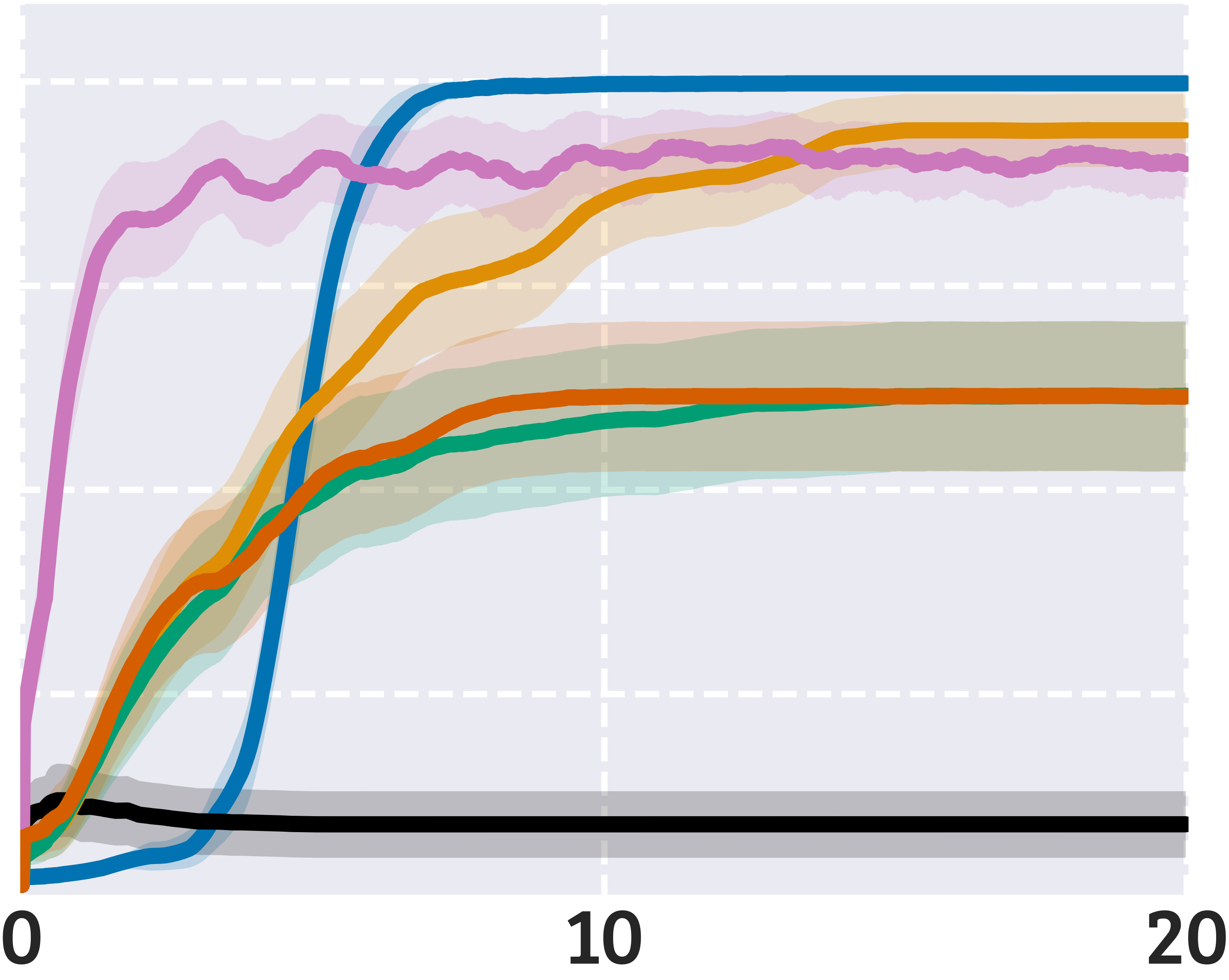}
    \end{subfigure}
    \hfill
    \begin{subfigure}[b]{0.155\linewidth}
        \centering
        \includegraphics[width=\linewidth]{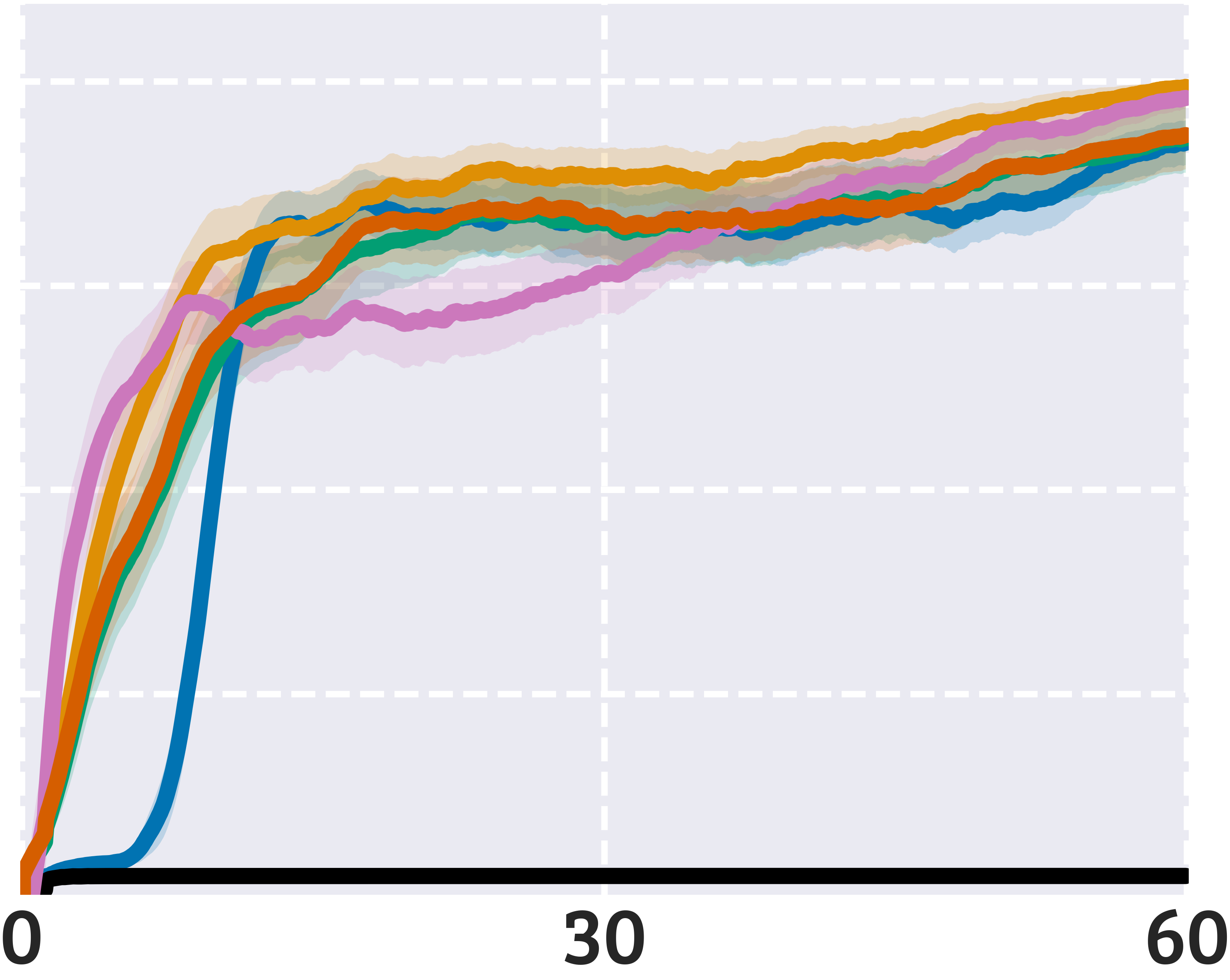}
    \end{subfigure}
    \hfill
    \begin{subfigure}[b]{0.155\linewidth}
        \centering
        \includegraphics[width=\linewidth]{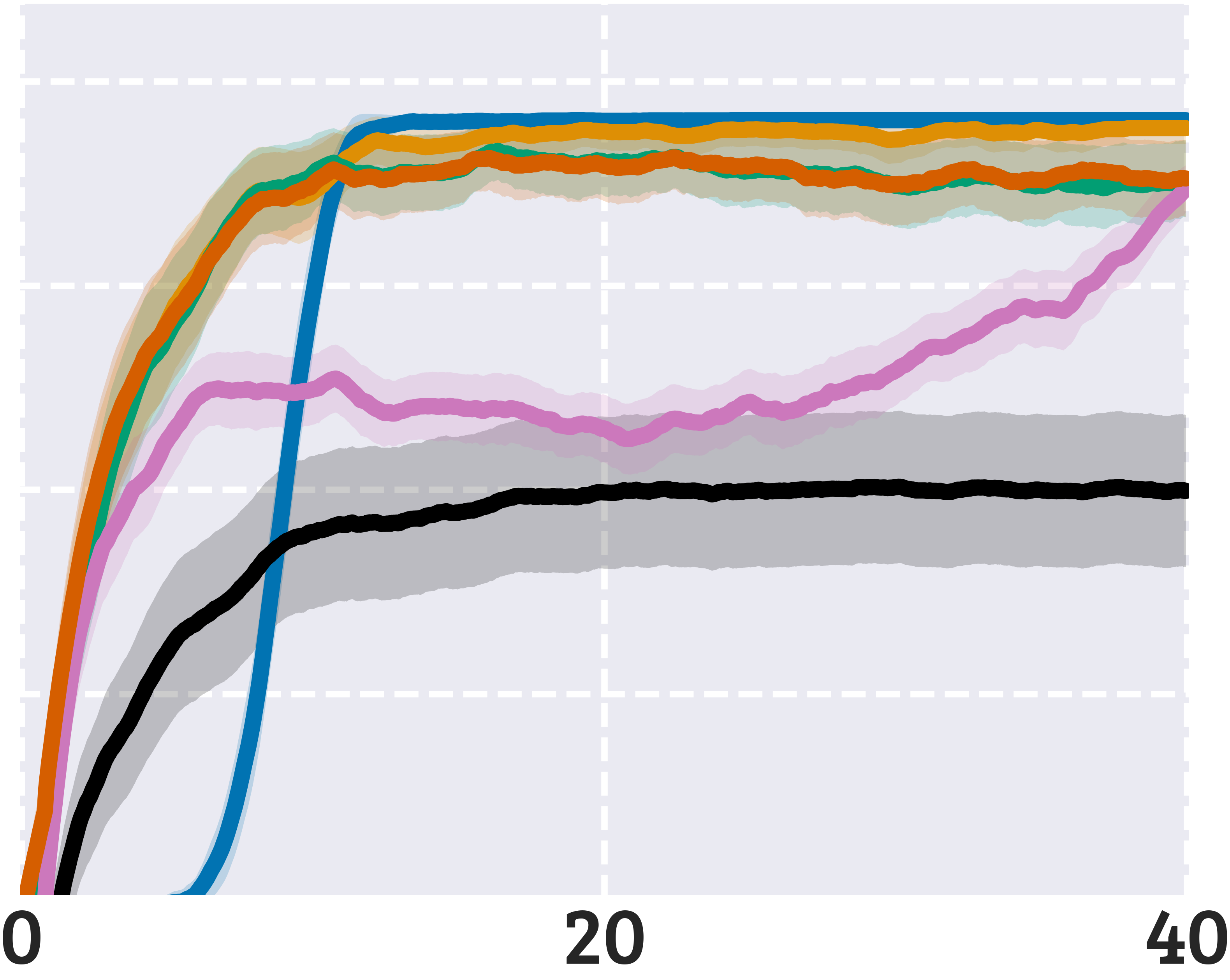}
    \end{subfigure}
    \hfill
    \begin{subfigure}[b]{0.155\linewidth}
        \centering
        \includegraphics[width=\linewidth]{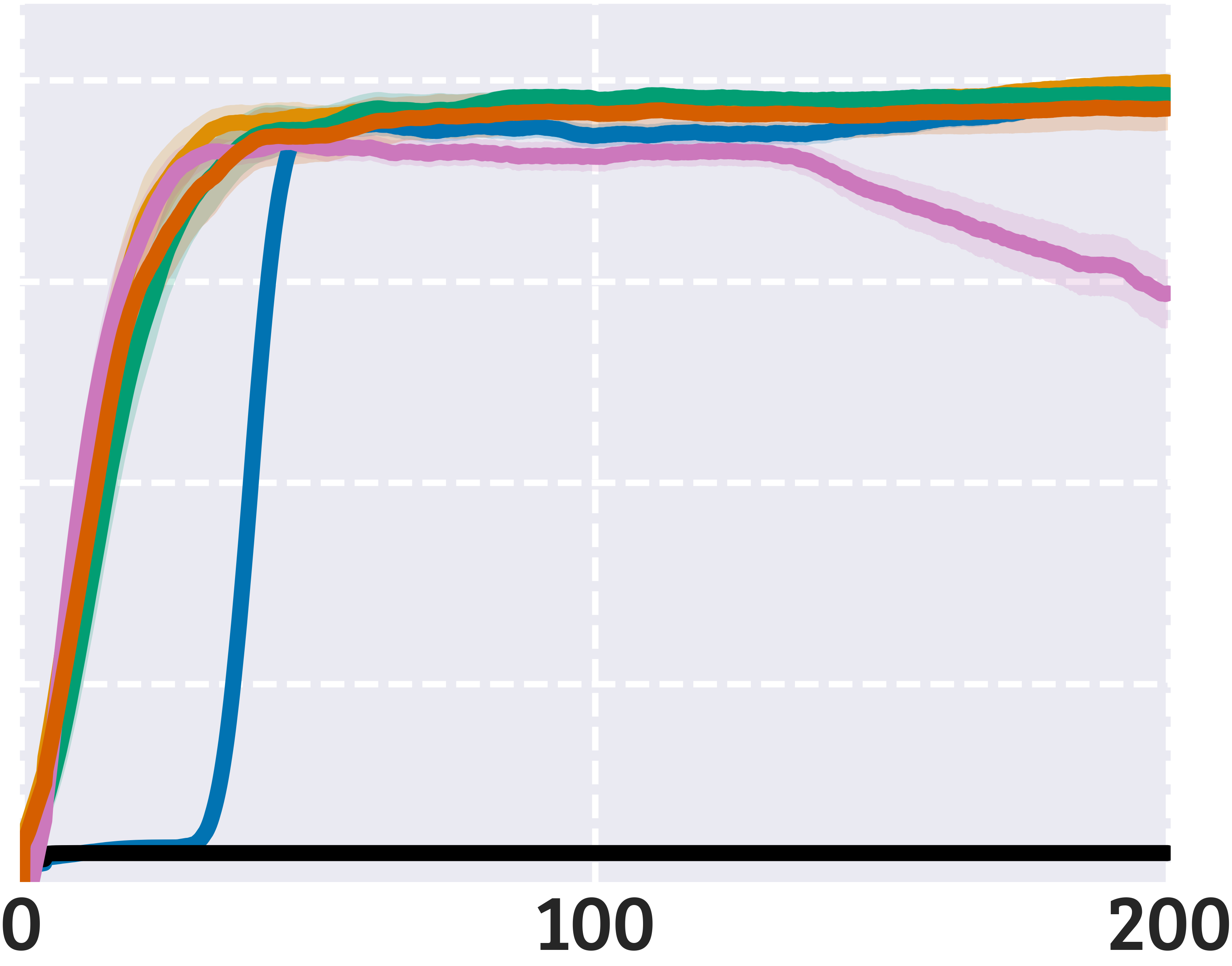}
    \end{subfigure}
    \hfill
    \begin{subfigure}[b]{0.155\linewidth}
        \centering
        \includegraphics[width=\linewidth]{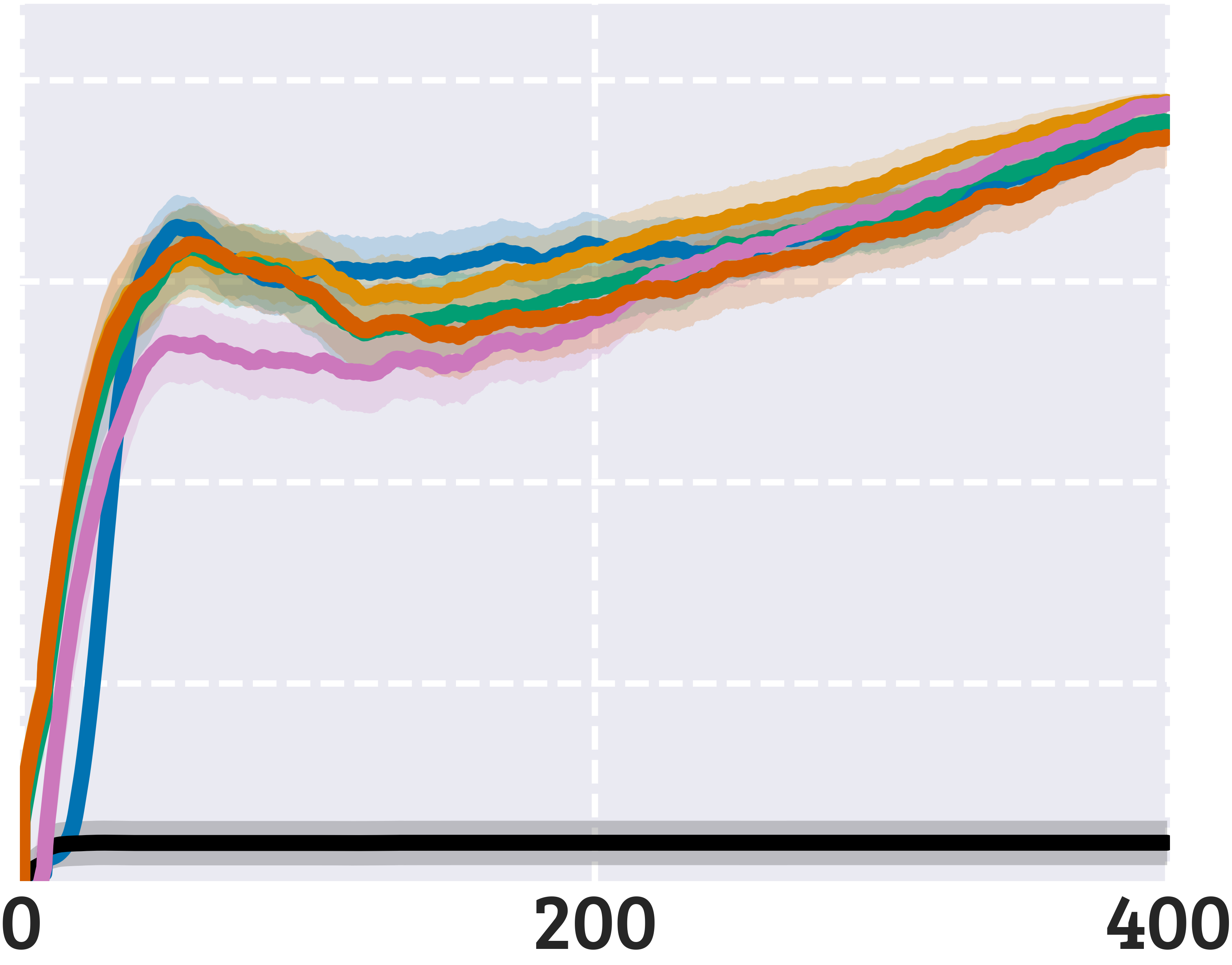}
    \end{subfigure}%
    }
    \\[-1.5pt]
    {\fontfamily{qbk}\tiny\textbf{Training Steps (1e\textsuperscript{3})}}
    \caption{\label{fig:returns_main}\textbf{Episode return $\smash{{\footnotesize\sum}(\renv_t + \rmon_t)}$ of greedy policies averaged over 100 seeds (shades denote 95\% confidence interval).} Our exploration clearly outperforms all baselines, as it is the only one learning in all Mon-MDPs. Indeed, while all baselines learn relatively quickly when rewards are fully observable (first column), their performance drastically decreases with rewards partial observability.}
\end{figure}

\textbf{Baselines.}
\label{subsec:algos}
We evaluate Q-Learning with the following exploration policies (details in Appendix~\ref{app:alg_details}): (\texttt{1}) \textcolor{snsblue}{\textbf{ours}}; (\texttt{2}) {greedy with \textcolor{lightblack}{\textbf{optimistic}} initialization}; (\texttt{3}) \textcolor{snsred}{\textbf{naive}} {$\epsilon$-greedy}; (\texttt{4}) $\epsilon$-greedy with count-based \textcolor{snsgreen}{\textbf{intrinsic reward}}~\citep{bellemare2016unifying}; (\texttt{5}) {$\epsilon$-greedy with \textcolor{snsorange}{\textbf{UCB}} bonus}~\citep{auer2002finite}; (\texttt{6}) $\epsilon$-greedy with \textcolor{snspink}{\textbf{Q-Counts}}~\citep{parisi2022long}.
Note that if rewards and transitions are deterministic and fully observable, then (\texttt{2}) is very efficient~\citep{sutton2018reinforcement, szita2008many, even2001convergence}, thus serves as best-case scenario baseline.
For all algorithms, $\upgamma = 0.99$ and $\epsilon_t$ starts at 1 and linearly decays to 0. 
Because in Mon-MDPs environment rewards are always unobservable for some monitor state-action pairs (e.g., when the monitor is $\texttt{OFF}$), all algorithms learn a reward model to replace $\rprox_t = \rewardundefined$, initialized to random values in $[-0.1, 0.1]$ and updated when $\rprox_t \neq \rewardundefined$.\footnote{Because for every environment reward there is a monitor state-action pair for which the reward is observable (i.e., $\smon_t = \texttt{ON}$), Q-Learning is guaranteed to converge given infinite exploration~\citep{parisi2024monitored}.} 

\textbf{Why Is This Hard?}
Q-function updates rely on the reward model, but the model is inaccurate at the beginning. To learn the reward model and produce accurate updates the agent must perform suboptimal actions and observe rewards. This results in a vicious cycle in exploration strategies that rely on the Q-function: the agent must explore to learn the reward model, but the Q-function will mislead the agent and provide unreliable exploration (especially if optimistic).

\textbf{Results.}
\label{subsec:results}
For each baseline, we test the \textit{greedy policy} at regular intervals (full details in Appendix~\ref{app:exp}). 
Figure~\ref{fig:returns_main} shows that \textcolor{snsblue}{\textbf{Our}} exploration outperforms all the baselines, being the only one converging to the optimal policy in the vast majority of the seeds for all environments and monitors. 
When the environment is deterministic and rewards are always observable (first column), pure \textcolor{lightblack}{\textbf{Optimism}} is optimal (as expected). However, if the environment is stochastic (River Swim) or rewards are unobservable, \textcolor{lightblack}{\textbf{Optimism}} fails. 
Even just random observability of the reward (second column) is enough to cause convergence to suboptimal policies --- without observing rewards the agent must rely on its model estimates, but these are initially wrong and mislead $\smash{\widehat Q}$ updates, preventing exploration. 
Even though mitigated by the $\epsilon$-randomness, exploration with \textcolor{snspink}{\textbf{Q-Counts}}, \textcolor{snsorange}{\textbf{UCB}}, \textcolor{snsgreen}{\textbf{Intrinsic Reward}}, and \textcolor{snsred}{\textbf{Naive}} is suboptimal as well, as these baselines learn suboptimal policies in most of the seeds, especially in harder Mon-MDPs.
On the contrary, \textcolor{snsblue}{\textbf{Our}} exploration is only slightly affected by the rewards unobservability because it relies on $\smash{\widehat S}$ rather than $\smash{\widehat Q}$ --- the agent visits all state-action pairs as uniformly as possible, observes rewards frequently, and ultimately learns accurate $\smash{\widehat Q}$ that make the policy optimal. 
Figure~\ref{fig:visited_r_sum_main} strengthens these results, showing that indeed \textcolor{snsblue}{\textbf{Our}} exploration observes significantly more rewards than the baselines in all Mon-MDPs --- because we decouple exploration and exploitation, and exploration does not depend on $\smash{\widehat Q}$, \textcolor{snsblue}{\textbf{Our}} agent does not avoid suboptimal actions and discovers more rewards.\footnote{Note that \textbf{\textcolor{snsblue}{Our}} agent keeps exploring and observing rewards (Figure~\ref{fig:visited_r_sum_main}) because $\beta_t$ did not reach the exploit threshold $\bar\beta$ (but still decays quickly, see Appendix~\ref{app:beta}), even if its greedy policy is already optimal (Figure~\ref{fig:returns_main}).} 
More plots and discussion in Appendix~\ref{app:extra_results}. 
Source code at \url{https://github.com/AmiiThinks/mon_mdp_neurips24}.

\begin{figure}[ht]
    \setlength{\belowcaptionskip}{0pt}
    \setlength{\abovecaptionskip}{6pt}
    \centering
    \includegraphics[width=0.75\linewidth,trim={1pt 3pt 1pt 3pt},clip]{fig/plots/legend.png}
    \\[3pt]
    \makebox[\linewidth][c]{%
    \raisebox{22pt}{\rotatebox[origin=t]{90}{\fontfamily{qbk}\tiny\textbf{Empty}}}
    \hfill
    \begin{subfigure}[b]{0.16\linewidth}
        \centering
        {\fontfamily{qbk}\tiny\textbf{Full Observ.}}\\[1.4pt]
        \includegraphics[width=\linewidth]{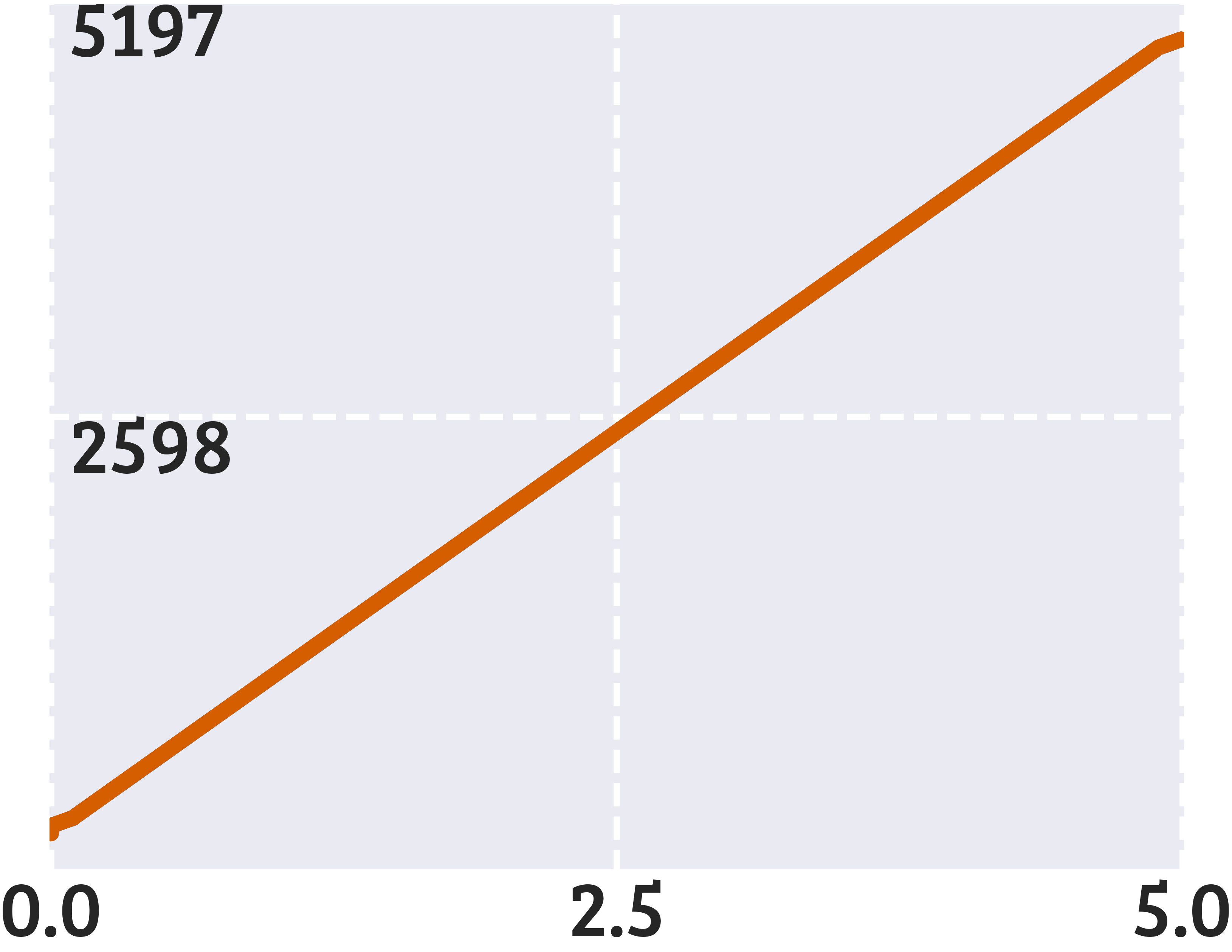}
    \end{subfigure}
    \hfill
    \begin{subfigure}[b]{0.16\linewidth}
        \centering
        {\fontfamily{qbk}\tiny\textbf{Random Observ.}}\\[1.4pt]
        \includegraphics[width=\linewidth]{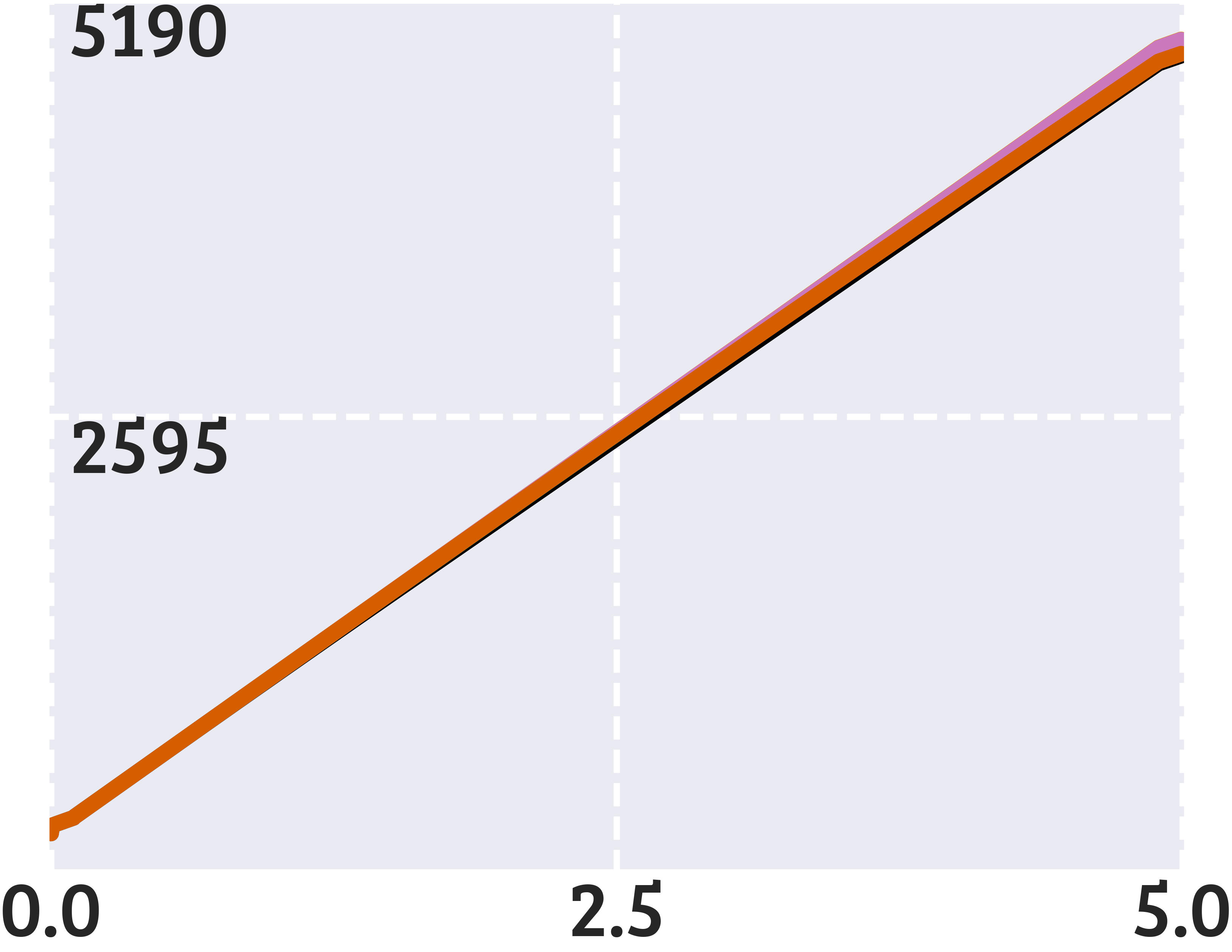}
    \end{subfigure}
    \hfill
    \begin{subfigure}[b]{0.16\linewidth}
        \centering
        {\fontfamily{qbk}\tiny\textbf{Ask}}\\[1.4pt]
        \includegraphics[width=\linewidth]{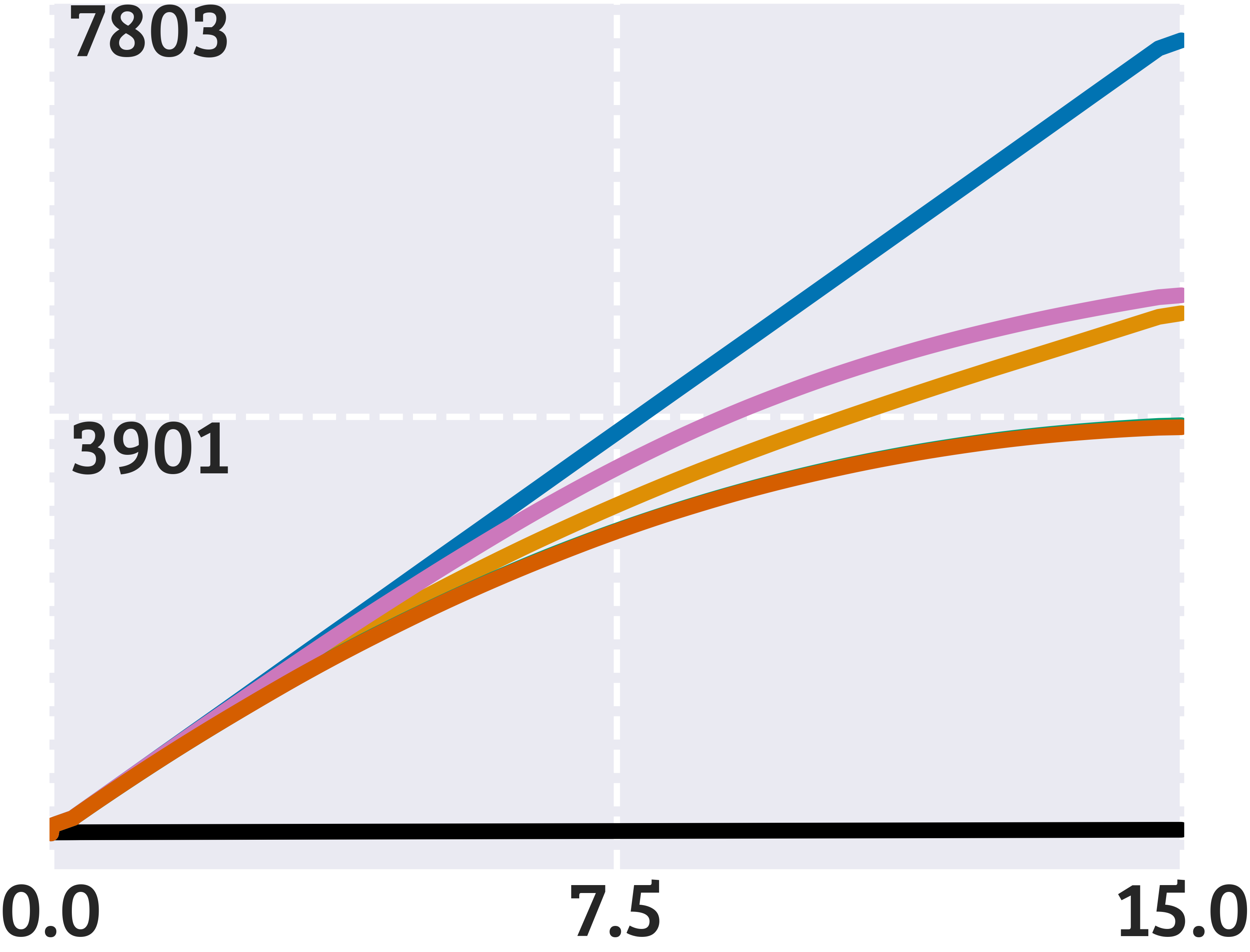}
    \end{subfigure}
    \hfill
    \begin{subfigure}[b]{0.16\linewidth}
        \centering
        {\fontfamily{qbk}\tiny\textbf{Button}}\\[1.4pt]
        \includegraphics[width=\linewidth]{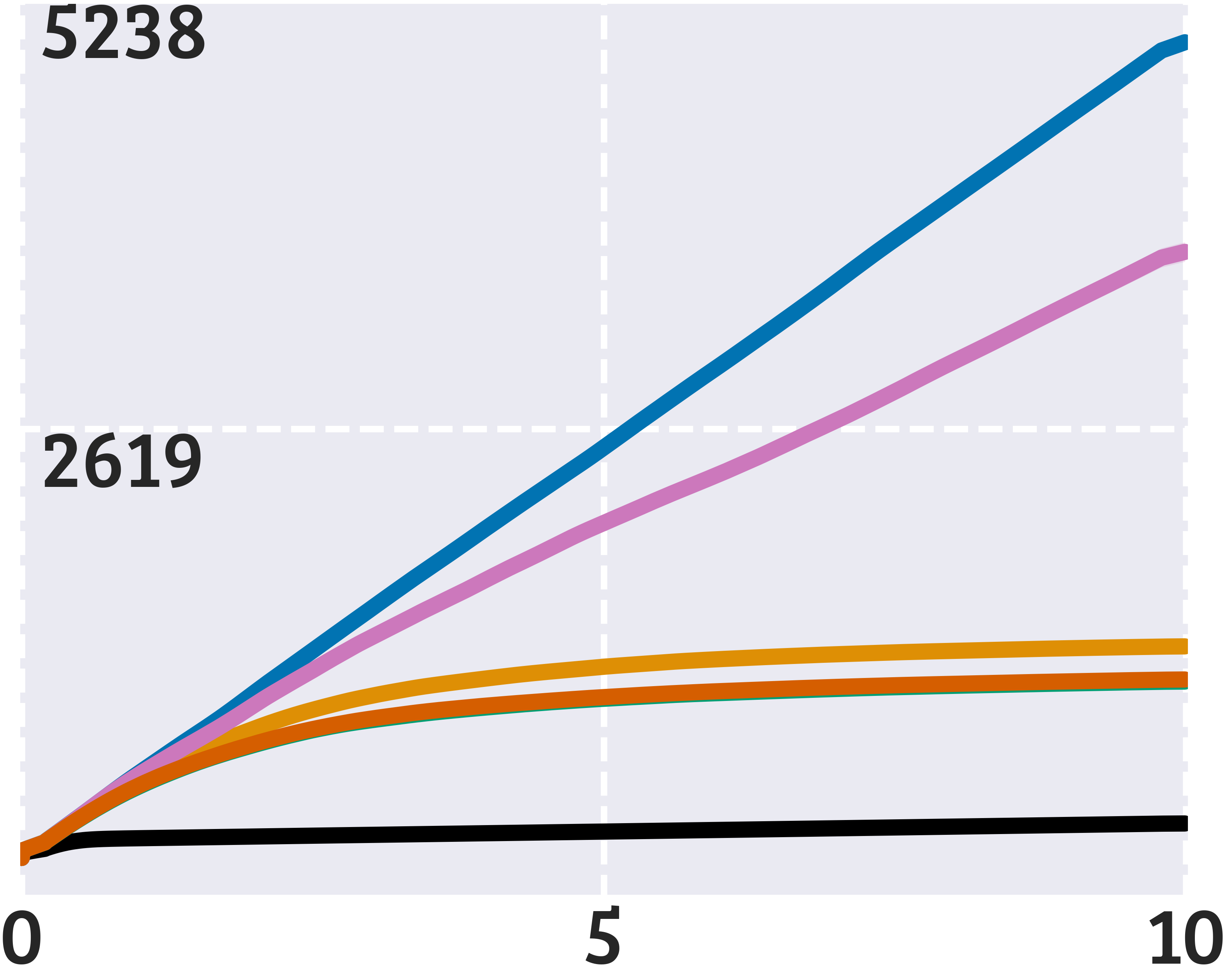}
    \end{subfigure}
    \hfill
    \begin{subfigure}[b]{0.16\linewidth}
        \centering
        {\fontfamily{qbk}\tiny\textbf{Random Experts}}\\[1.4pt]
        \includegraphics[width=\linewidth]{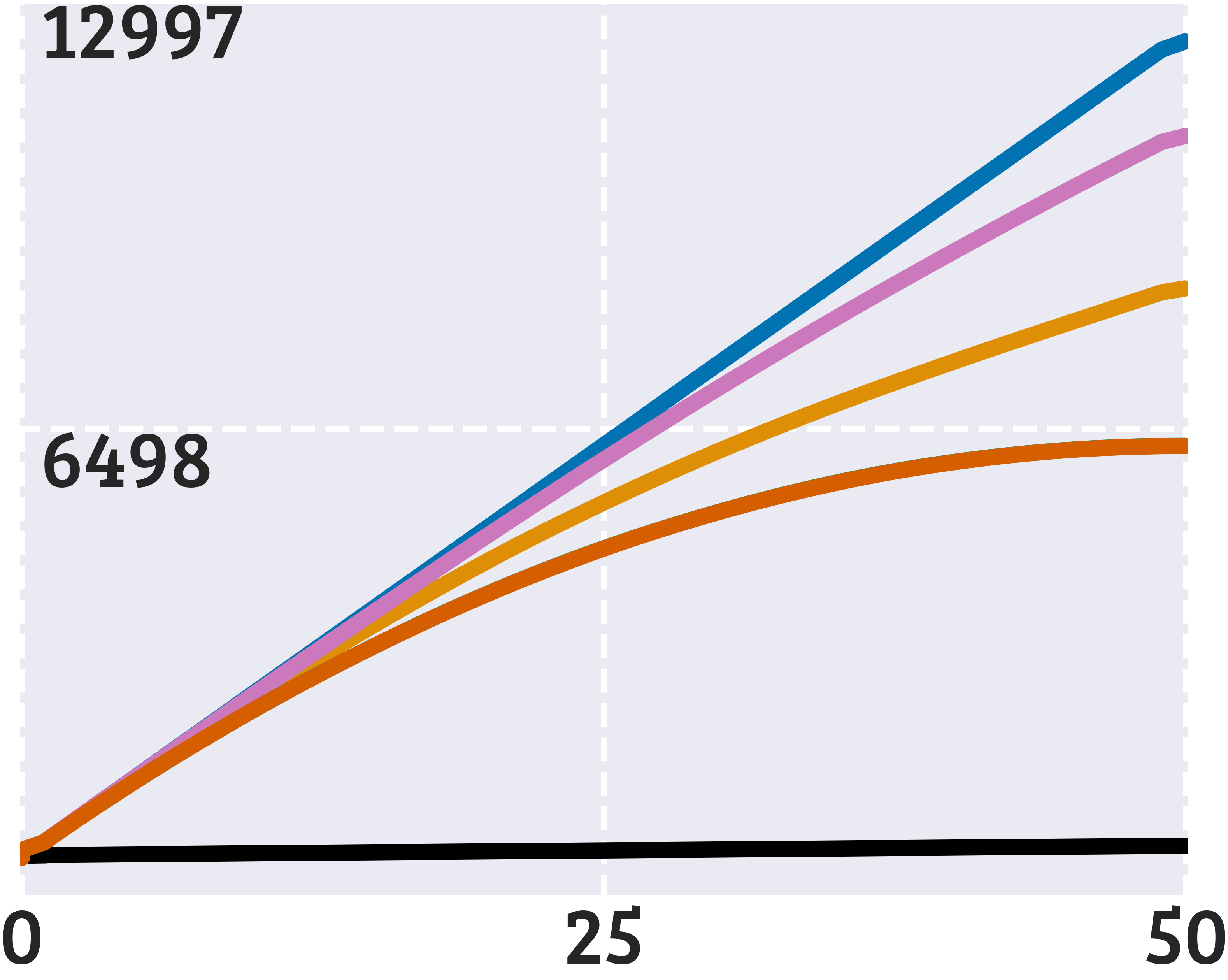}
    \end{subfigure}
    \hfill
    \begin{subfigure}[b]{0.16\linewidth}
        \centering
        {\fontfamily{qbk}\tiny\textbf{Level Up}}\\[1.4pt]
        \includegraphics[width=\linewidth]{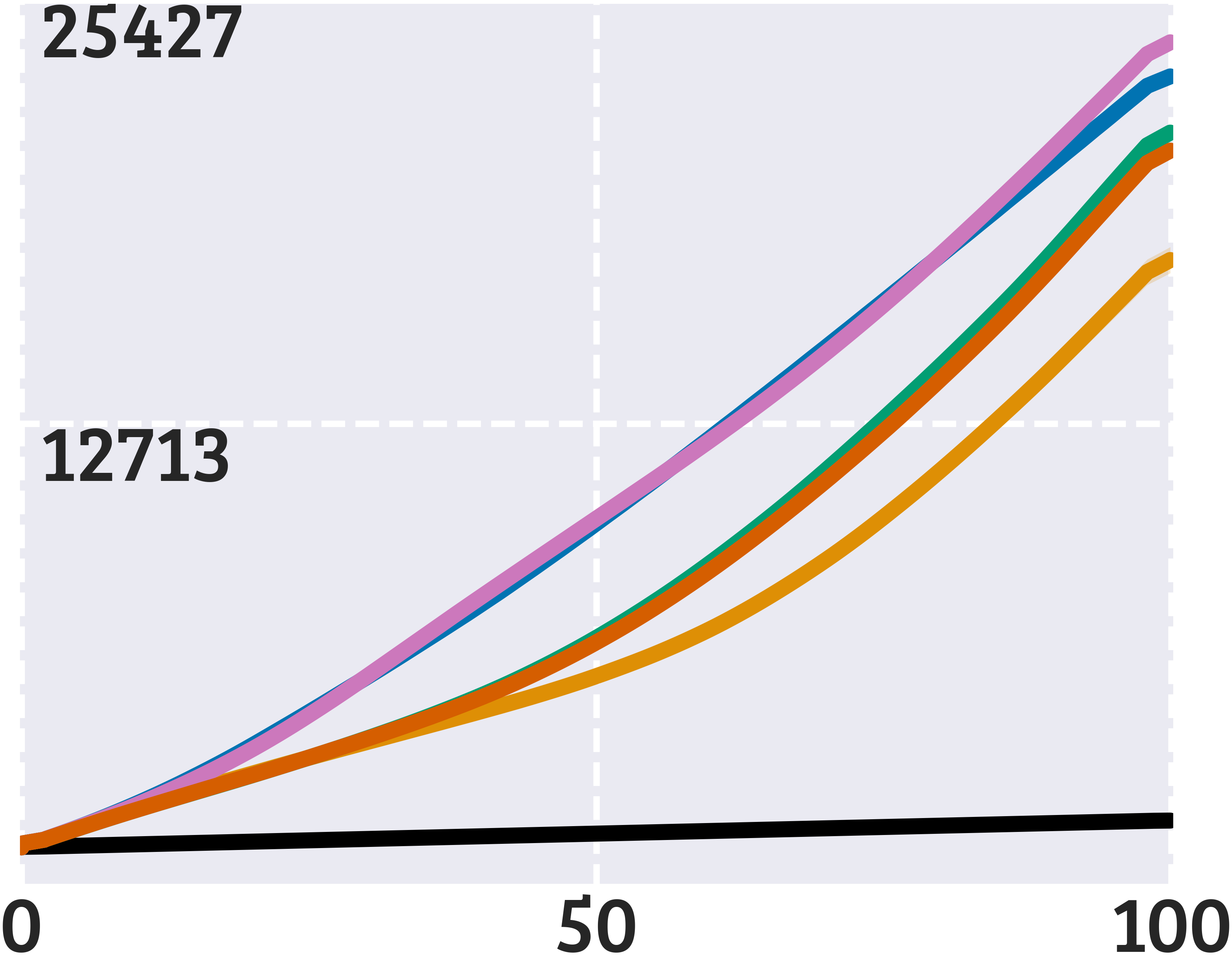}
    \end{subfigure}%
    }
    \\[0pt]
    \makebox[\linewidth][c]{%
    \raisebox{22pt}{\rotatebox[origin=t]{90}{\fontfamily{qbk}\tiny\textbf{Hazard}}}
    \hfill
    \begin{subfigure}[b]{0.16\linewidth}
        \centering
        \includegraphics[width=\linewidth]{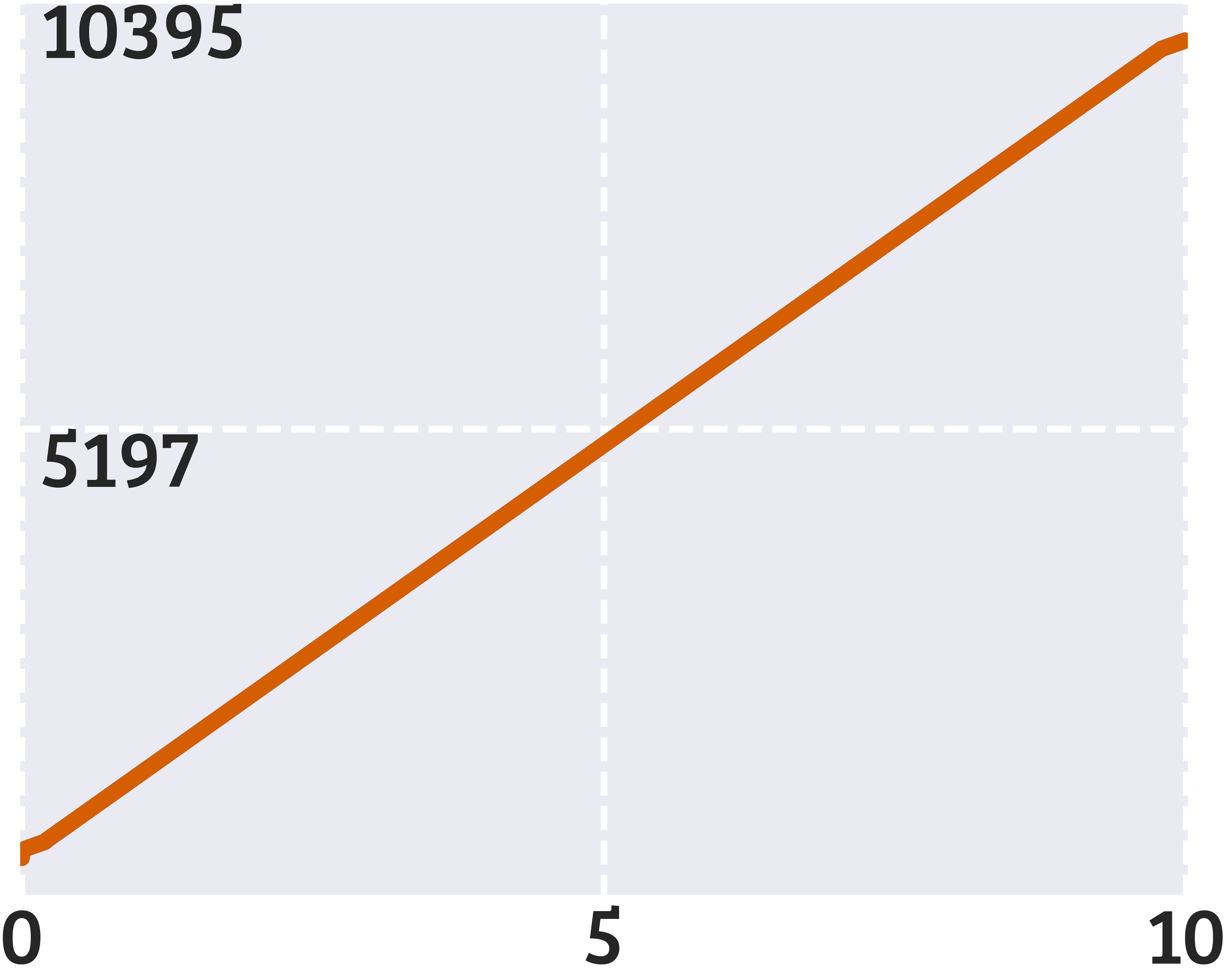}
    \end{subfigure}
    \hfill
    \begin{subfigure}[b]{0.16\linewidth}
        \centering
        \includegraphics[width=\linewidth]{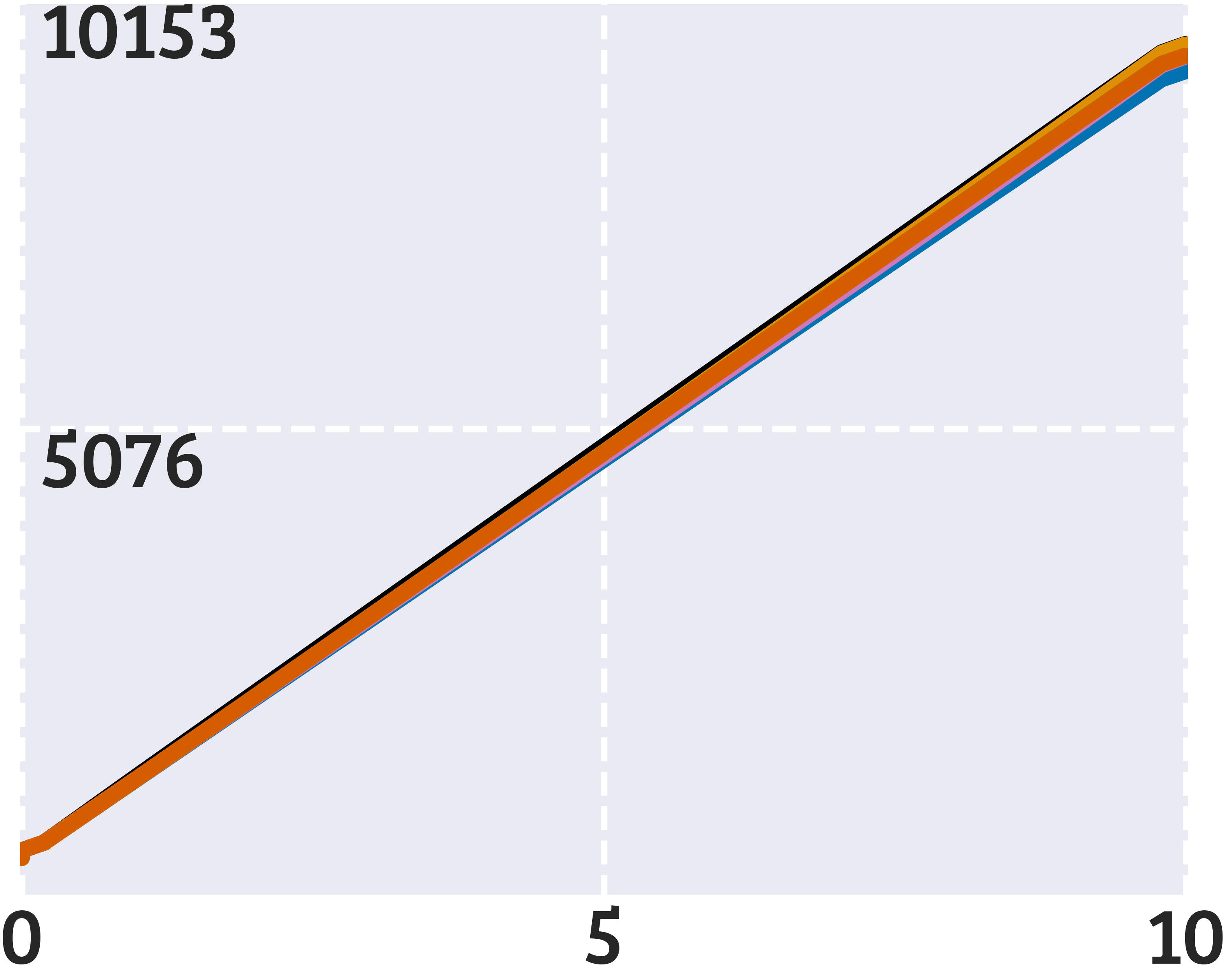}
    \end{subfigure}
    \hfill
    \begin{subfigure}[b]{0.16\linewidth}
        \centering
        \includegraphics[width=\linewidth]{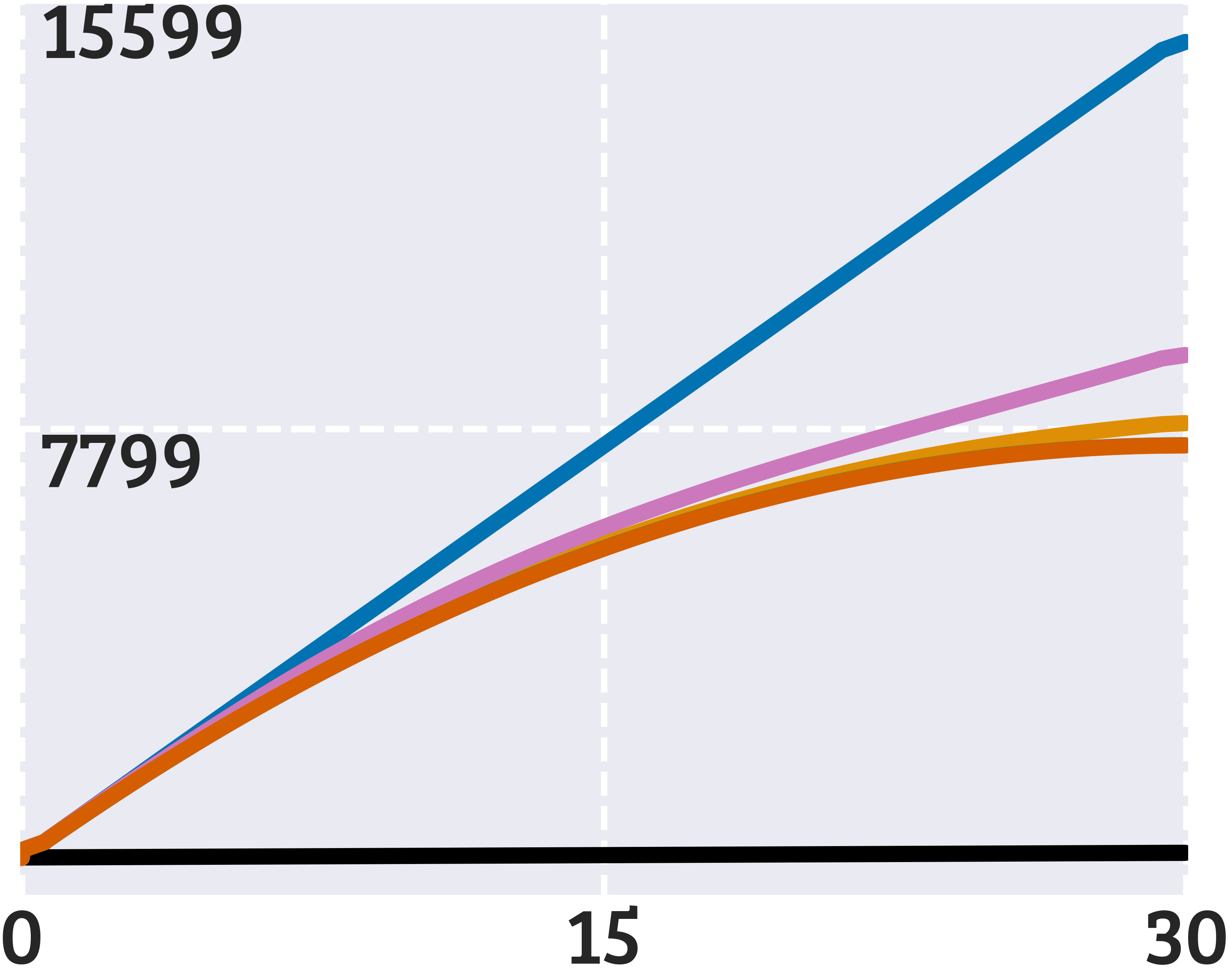}
    \end{subfigure}
    \hfill
    \begin{subfigure}[b]{0.16\linewidth}
        \centering
        \includegraphics[width=\linewidth]{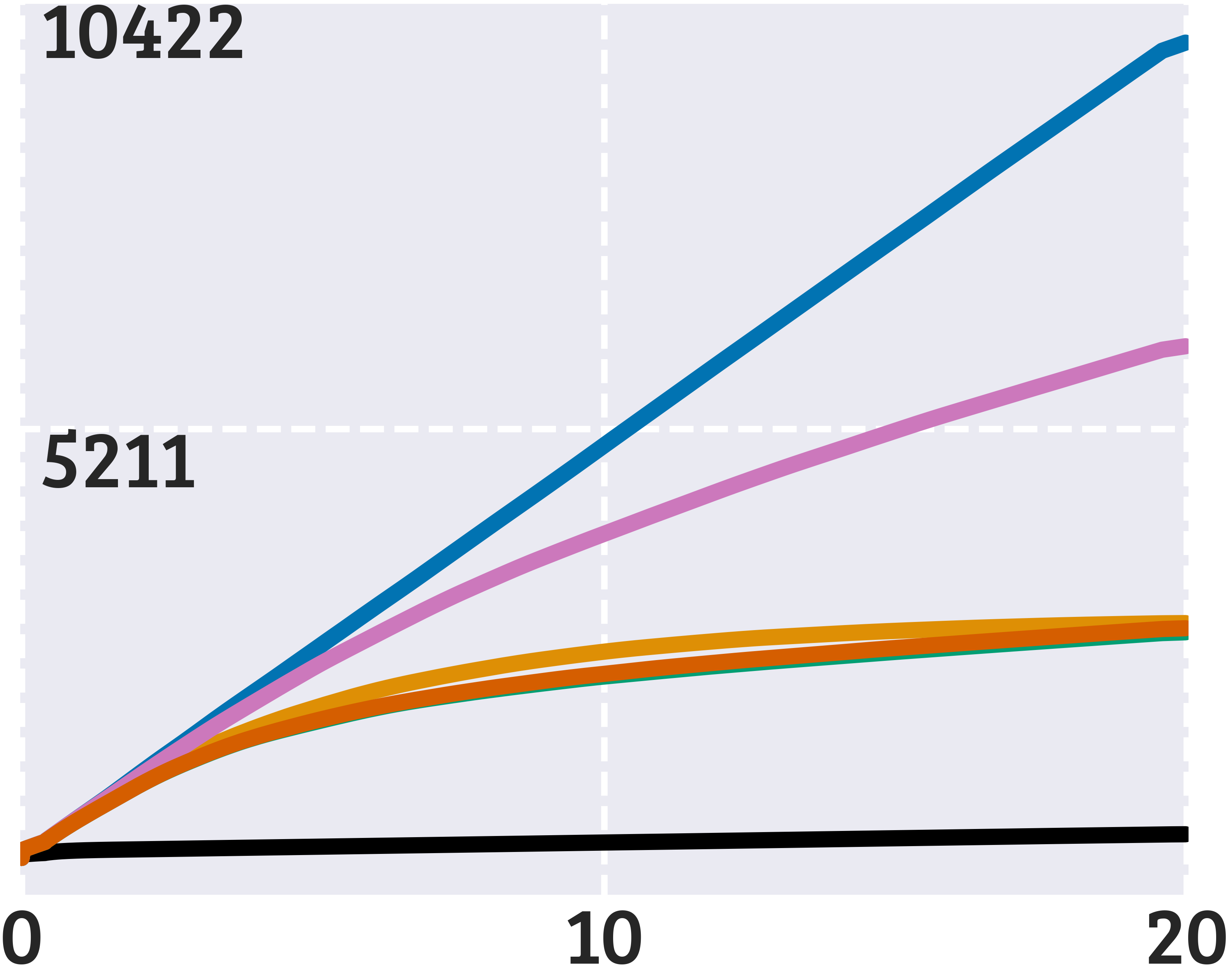}
    \end{subfigure}
    \hfill
    \begin{subfigure}[b]{0.16\linewidth}
        \centering
        \includegraphics[width=\linewidth]{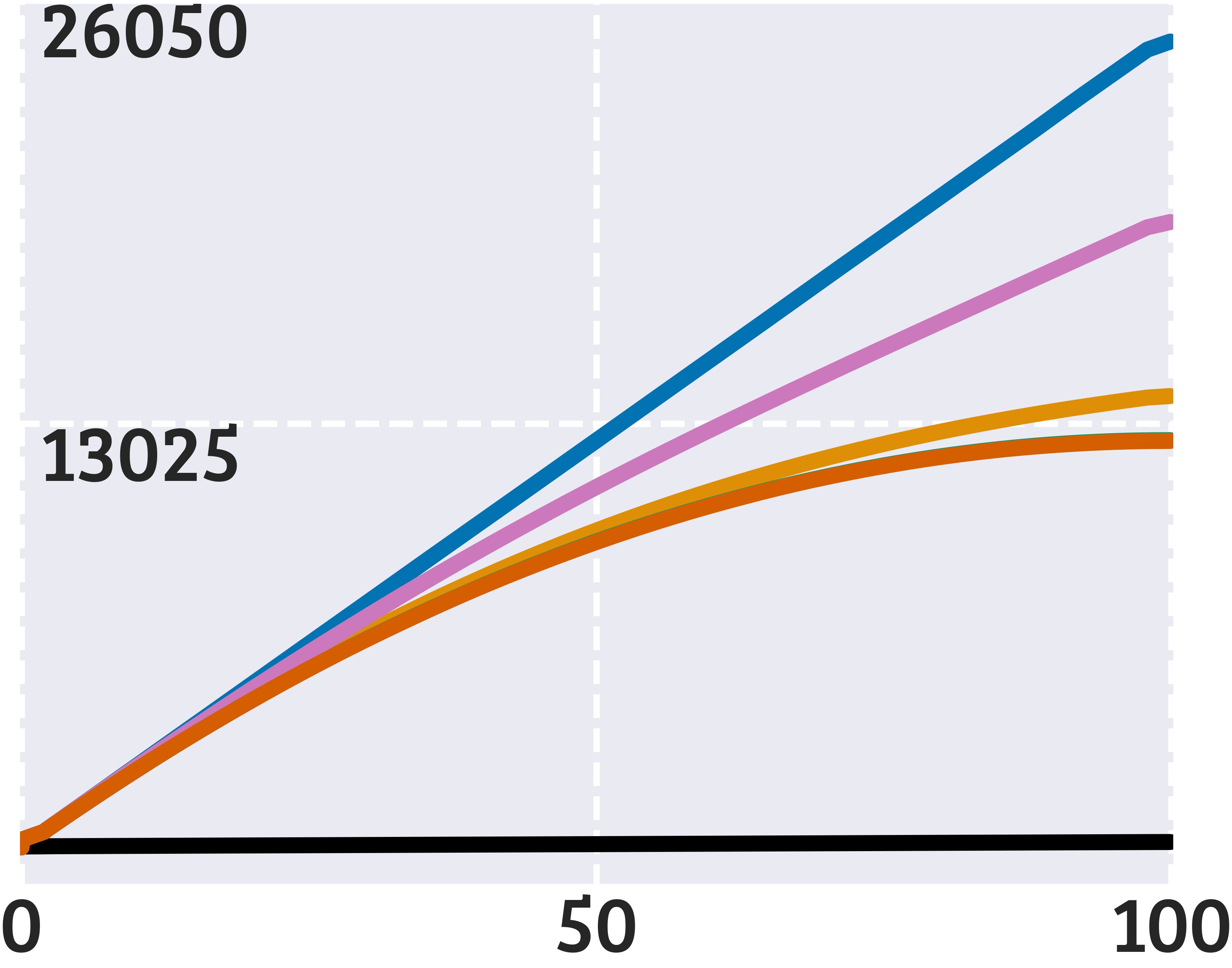}
    \end{subfigure}
    \hfill
    \begin{subfigure}[b]{0.16\linewidth}
        \centering
        \includegraphics[width=\linewidth]{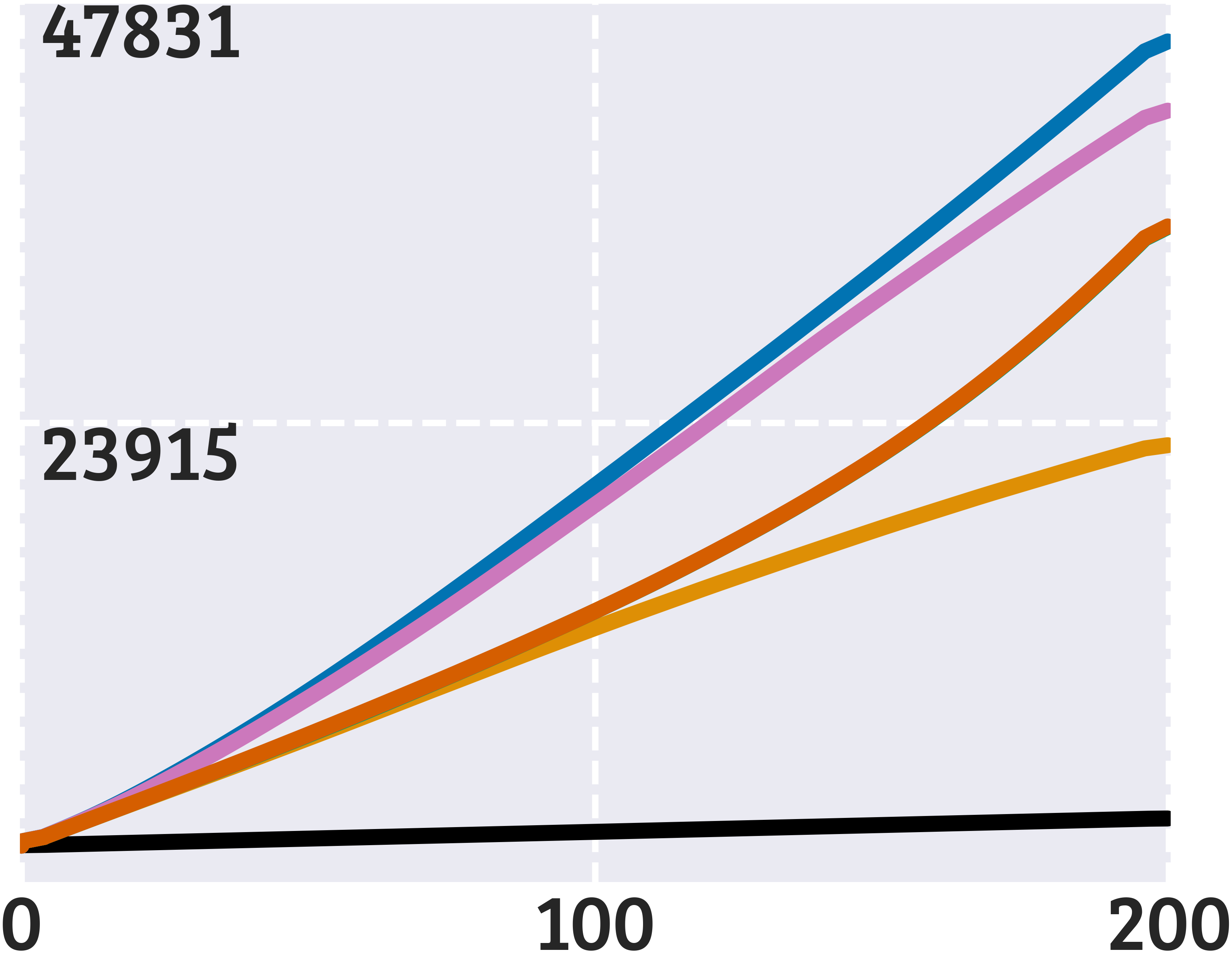}
    \end{subfigure}%
    }
    \\[0pt]
    \makebox[\linewidth][c]{%
    \raisebox{22pt}{\rotatebox[origin=t]{90}{\fontfamily{qbk}\tiny\textbf{One-Way}}}
    \hfill
    \begin{subfigure}[b]{0.16\linewidth}
        \centering
        \includegraphics[width=\linewidth]{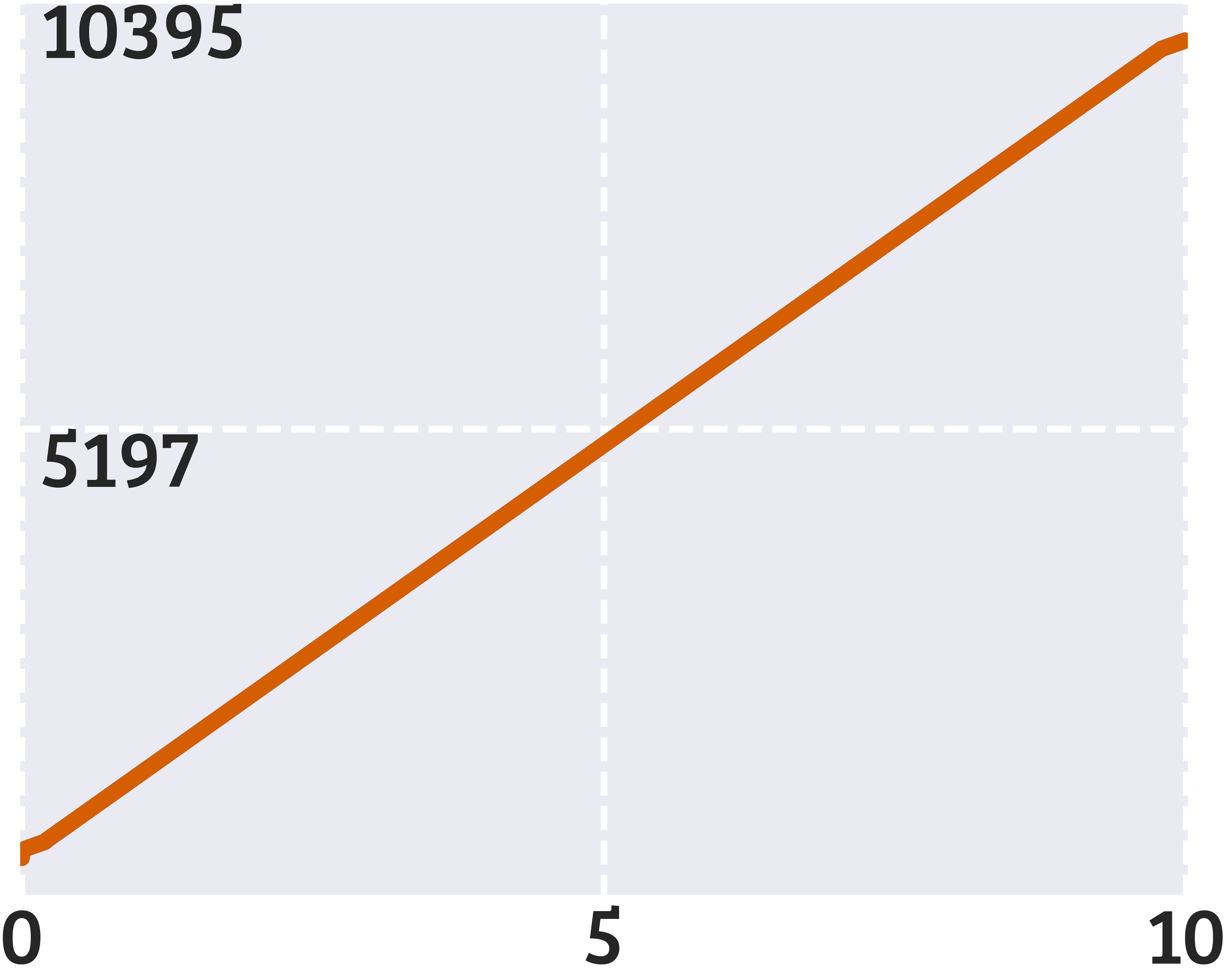}
    \end{subfigure}
    \hfill
    \begin{subfigure}[b]{0.16\linewidth}
        \centering
        \includegraphics[width=\linewidth]{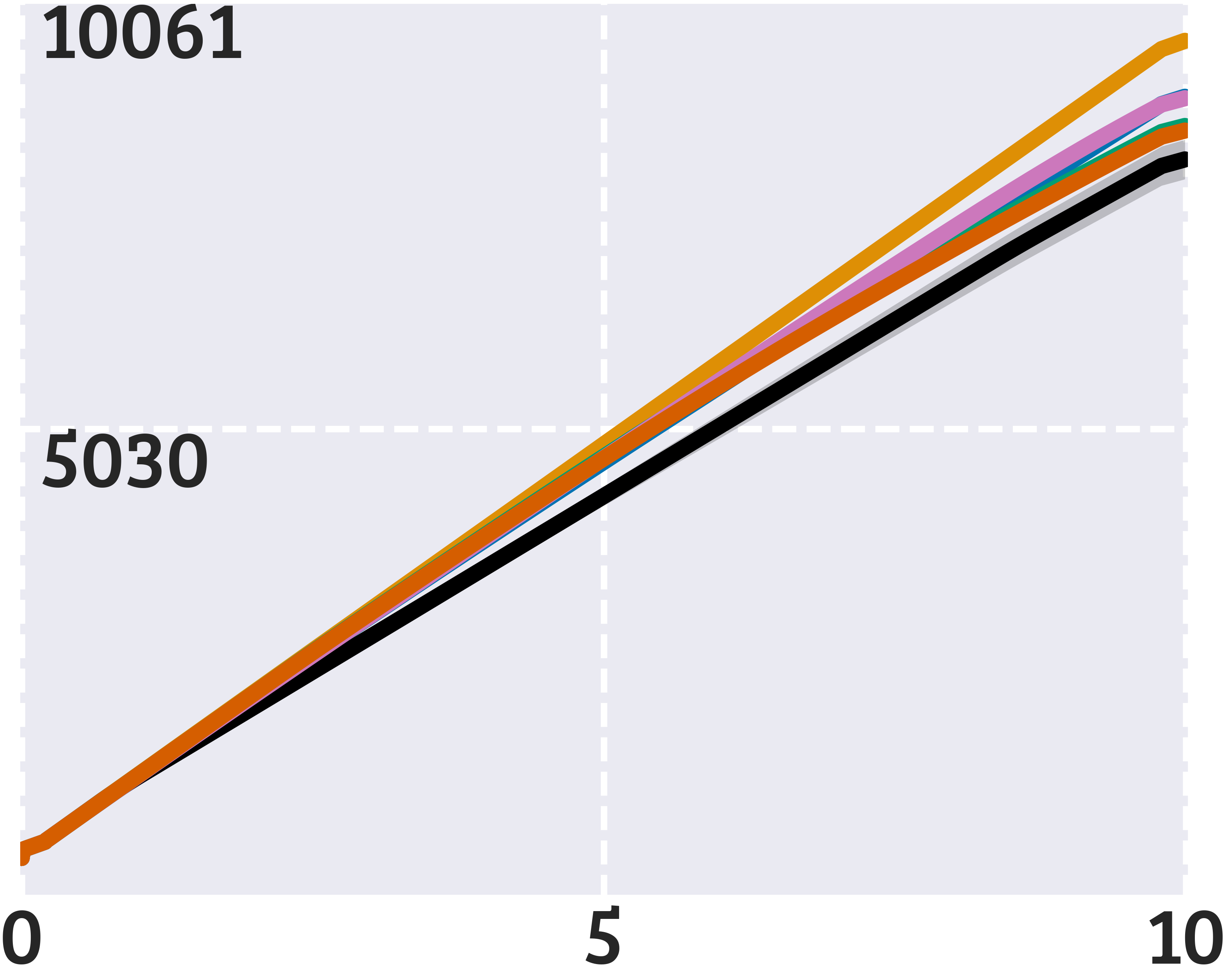}
    \end{subfigure}
    \hfill
    \begin{subfigure}[b]{0.16\linewidth}
        \centering
        \includegraphics[width=\linewidth]{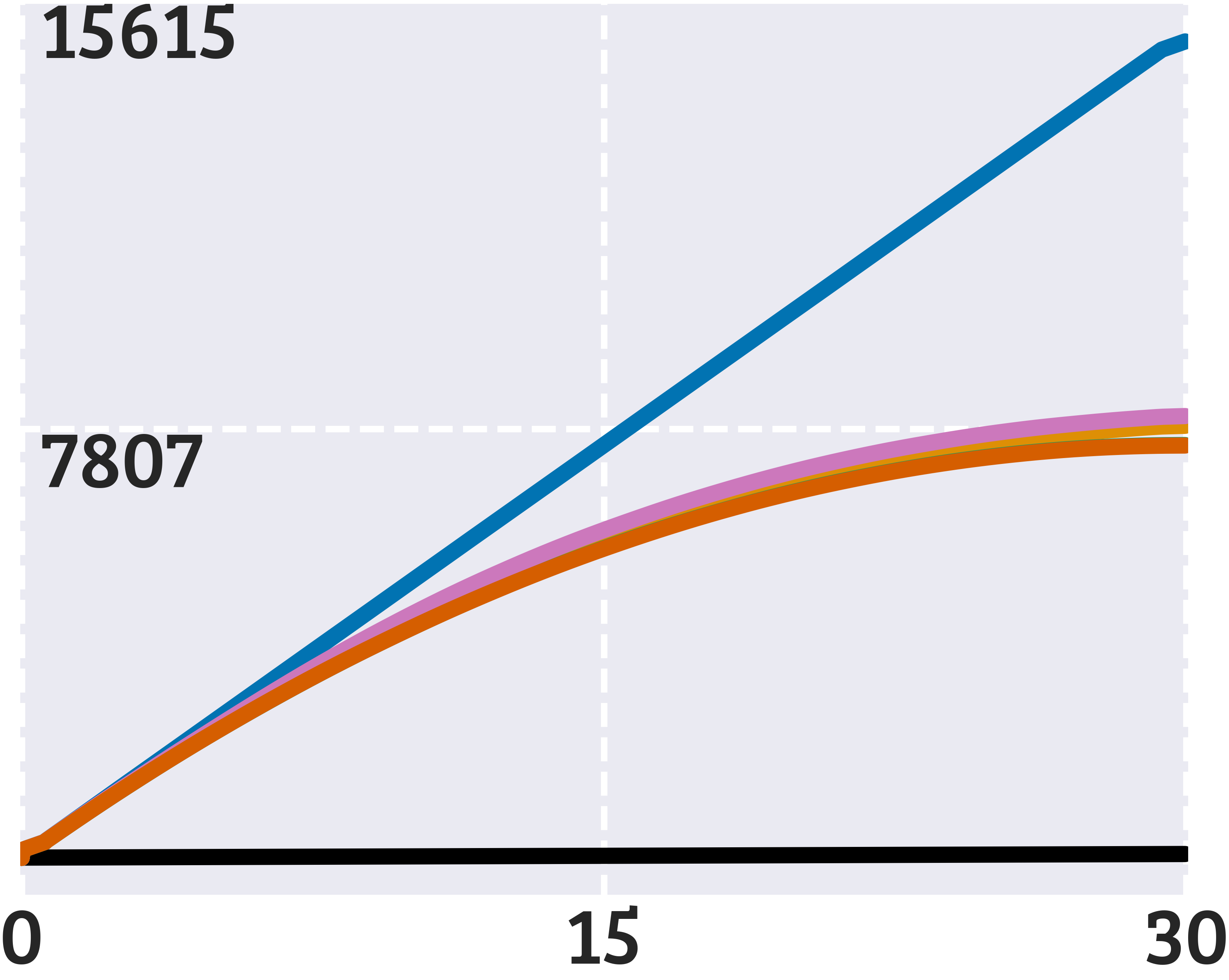}
    \end{subfigure}
    \hfill
    \begin{subfigure}[b]{0.16\linewidth}
        \centering
        \includegraphics[width=\linewidth]{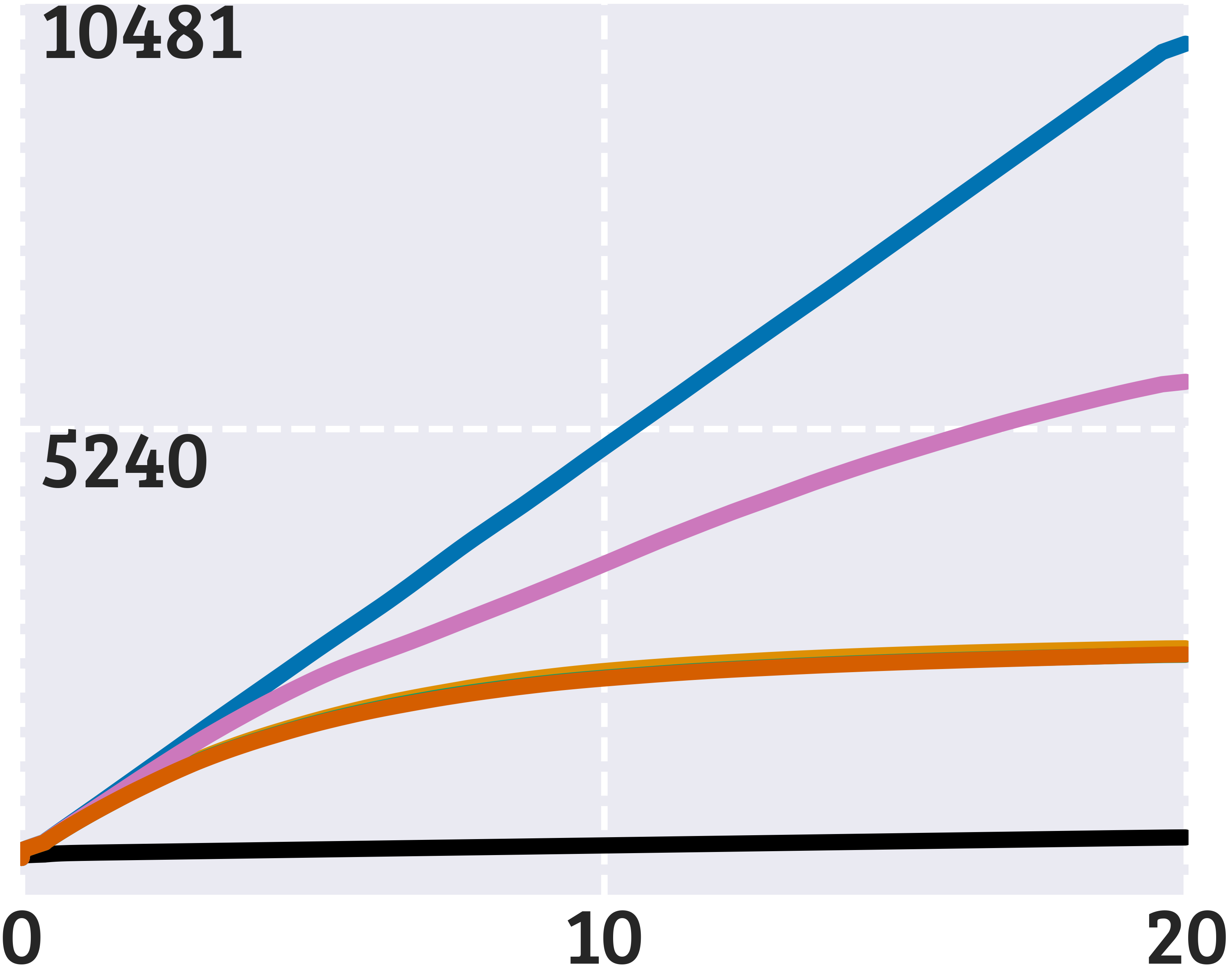}
    \end{subfigure}
    \hfill
    \begin{subfigure}[b]{0.16\linewidth}
        \centering
        \includegraphics[width=\linewidth]{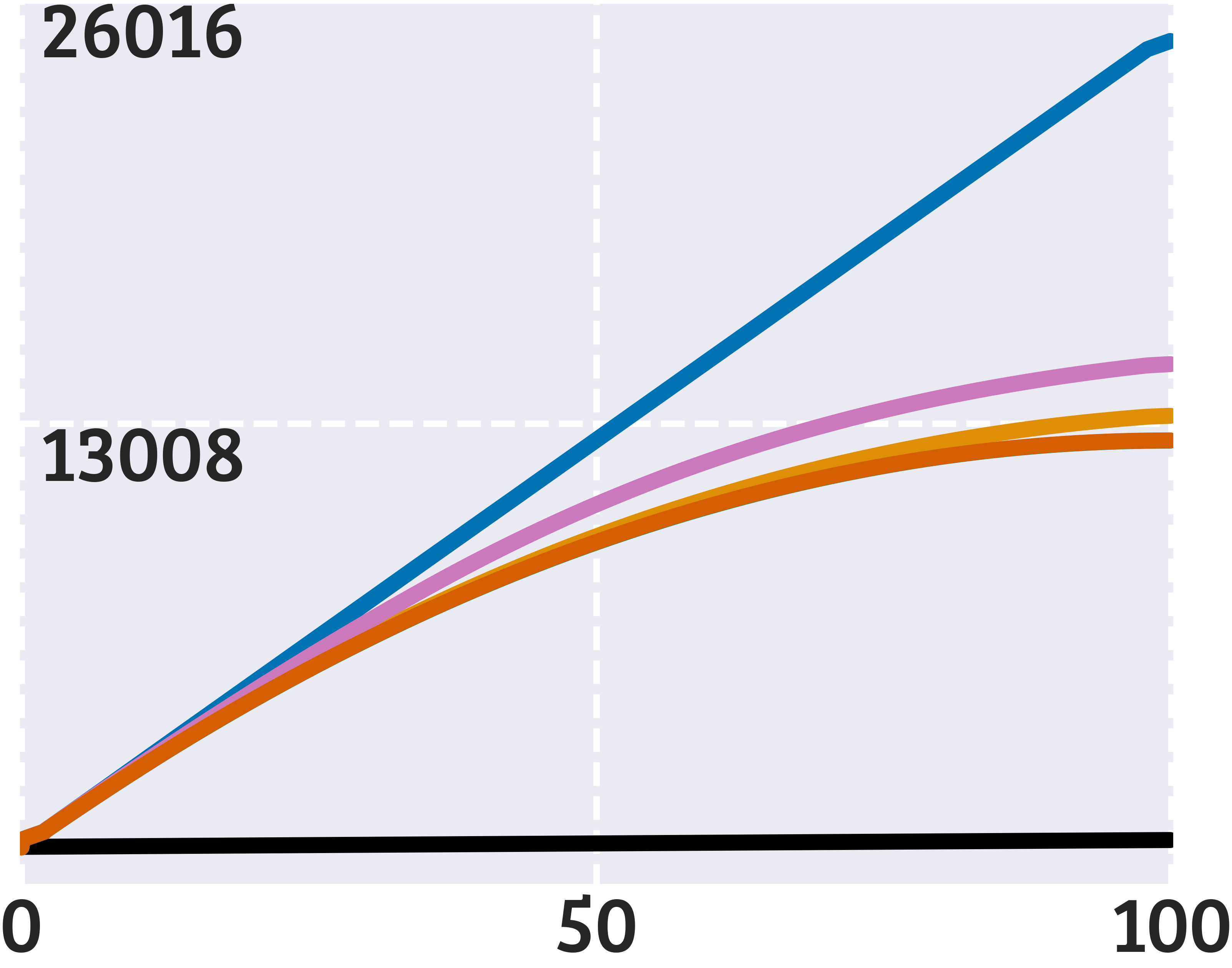}
    \end{subfigure}
    \hfill
    \begin{subfigure}[b]{0.16\linewidth}
        \centering
        \includegraphics[width=\linewidth]{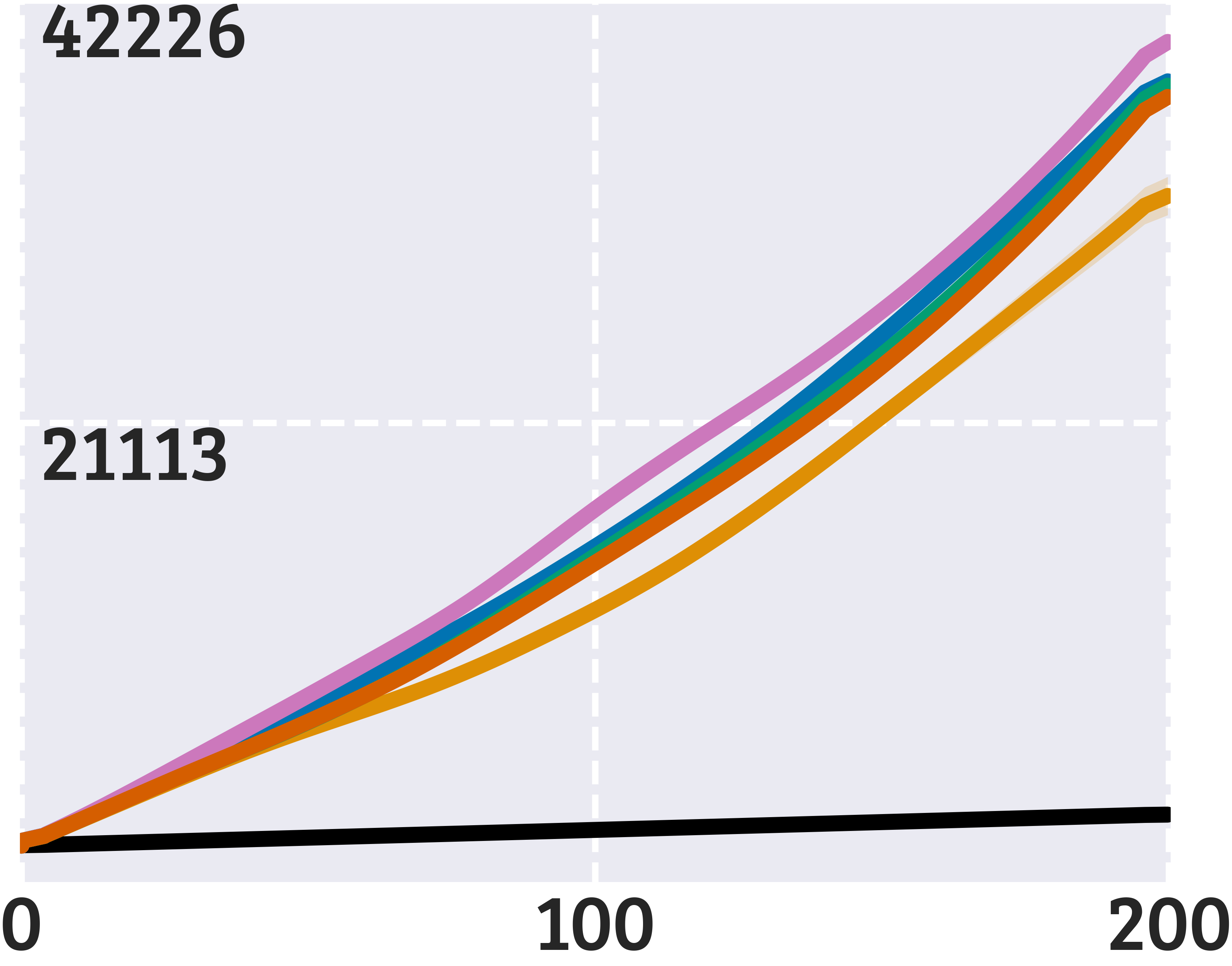}
    \end{subfigure}%
    }
    \\[0pt]
    \makebox[\linewidth][c]{%
    \raisebox{22pt}{\rotatebox[origin=t]{90}{\fontfamily{qbk}\tiny\textbf{River Swim}}}
    \hfill
    \begin{subfigure}[b]{0.16\linewidth}
        \centering
        \includegraphics[width=\linewidth]{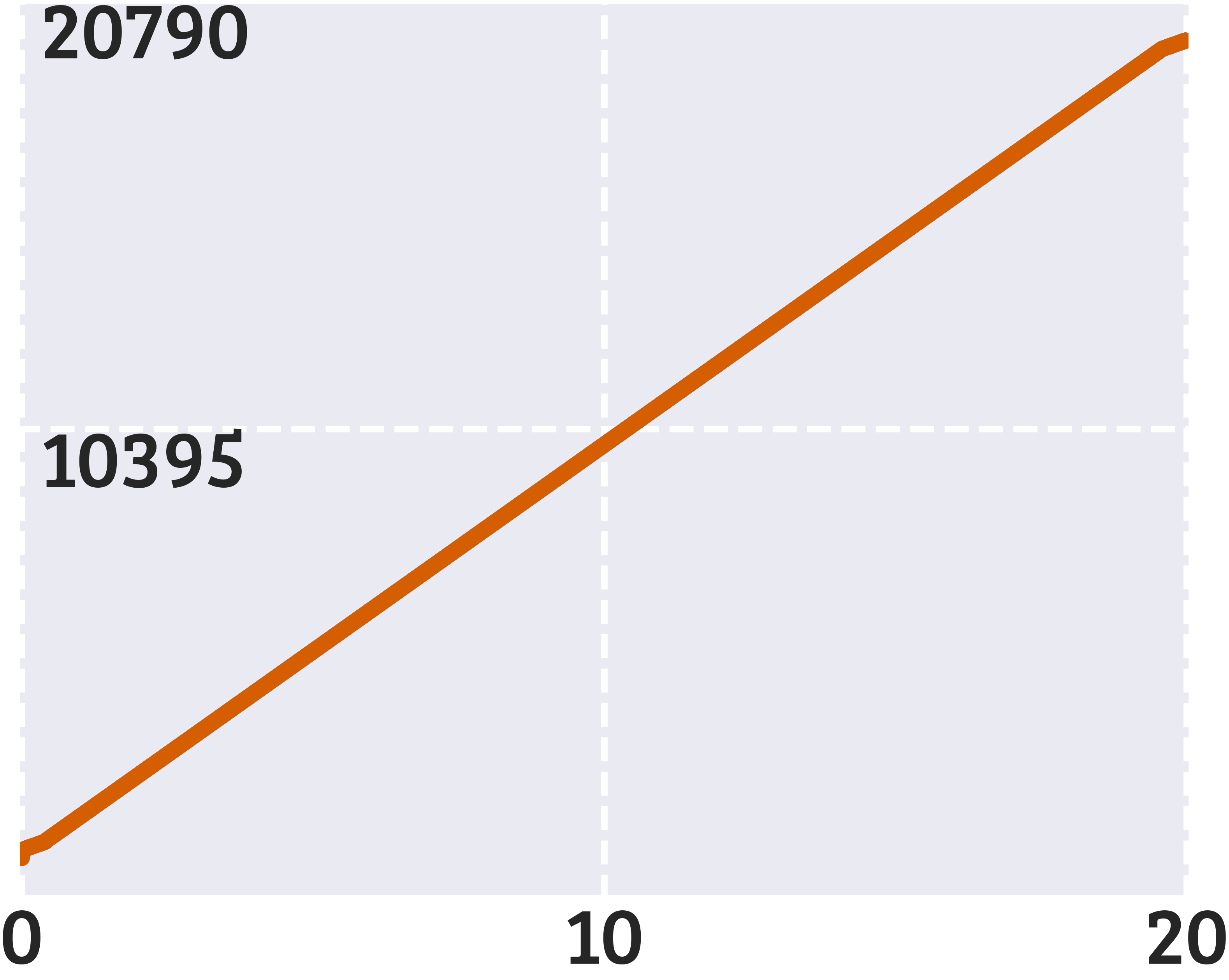}
    \end{subfigure}
    \hfill
    \begin{subfigure}[b]{0.16\linewidth}
        \centering
        \includegraphics[width=\linewidth]{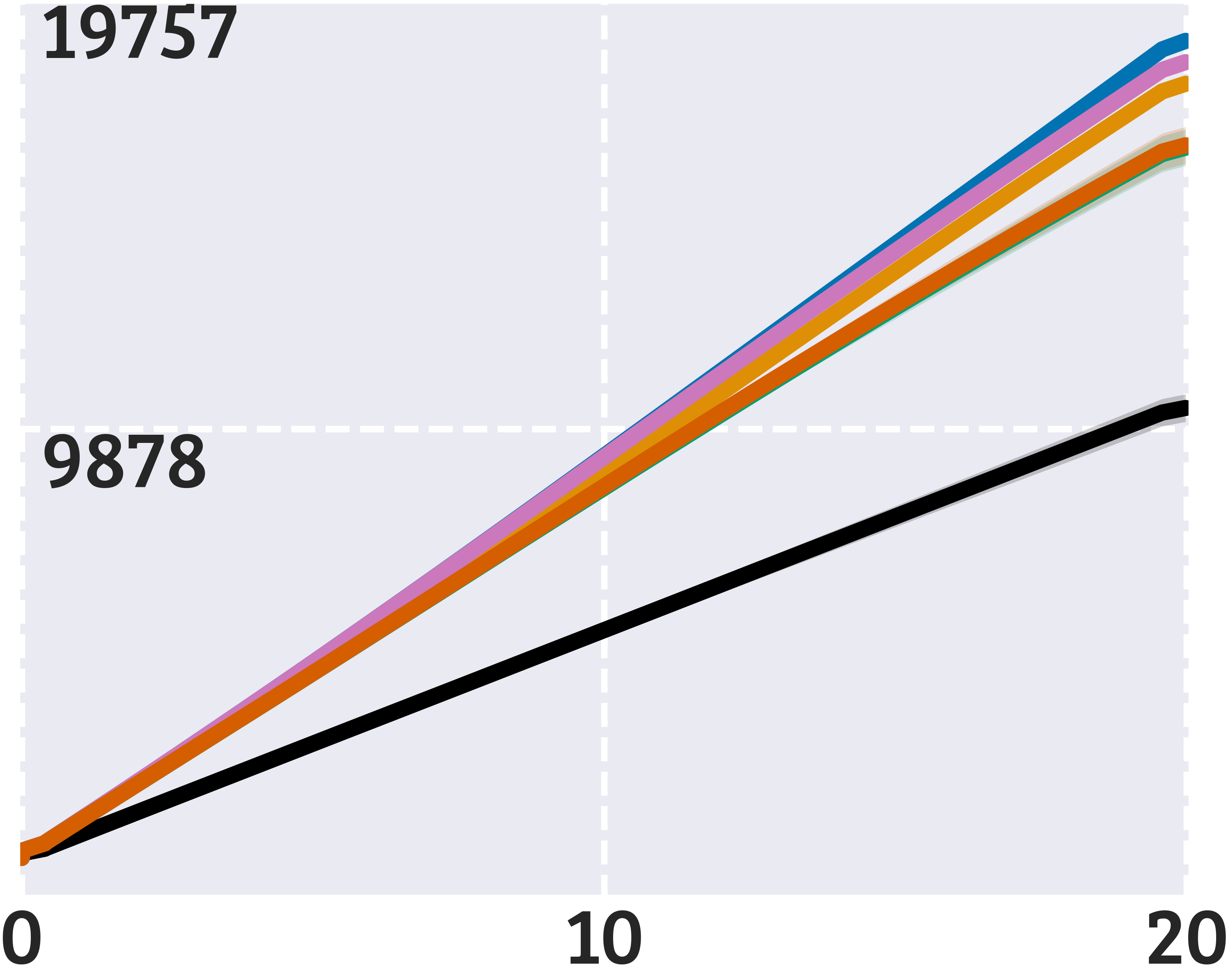}
    \end{subfigure}
    \hfill
    \begin{subfigure}[b]{0.16\linewidth}
        \centering
        \includegraphics[width=\linewidth]{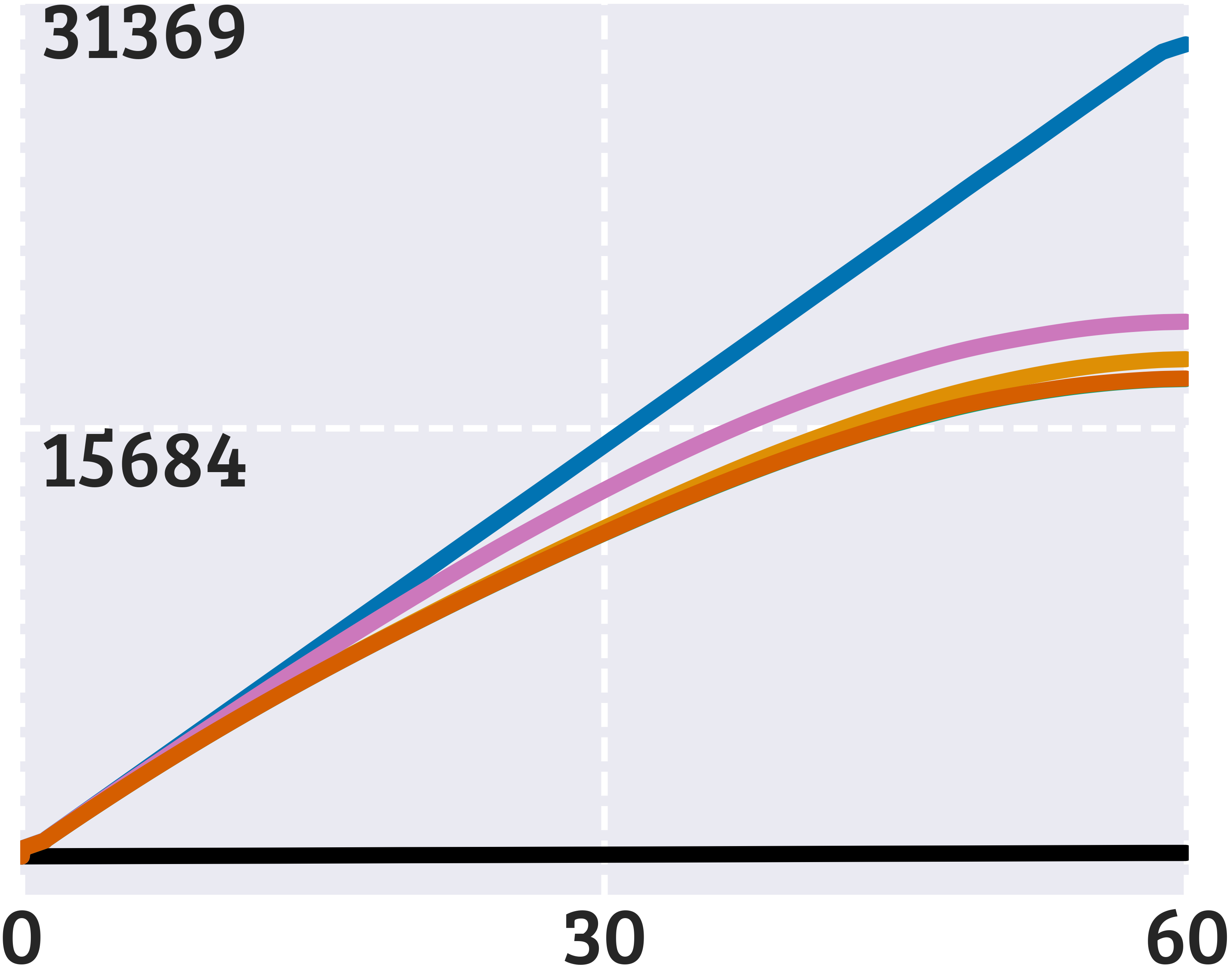}
    \end{subfigure}
    \hfill
    \begin{subfigure}[b]{0.16\linewidth}
        \centering
        \includegraphics[width=\linewidth]{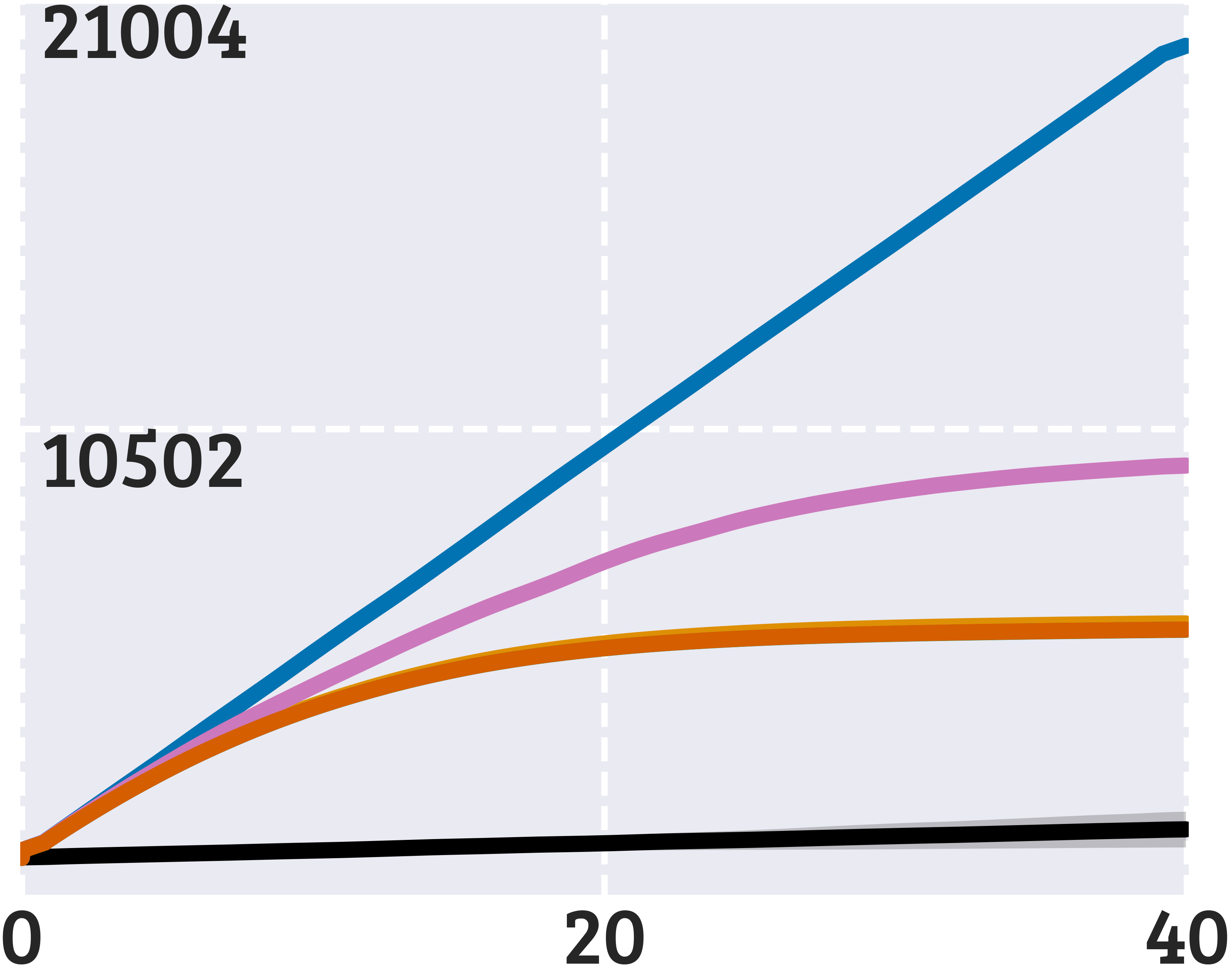}
    \end{subfigure}
    \hfill
    \begin{subfigure}[b]{0.16\linewidth}
        \centering
        \includegraphics[width=\linewidth]{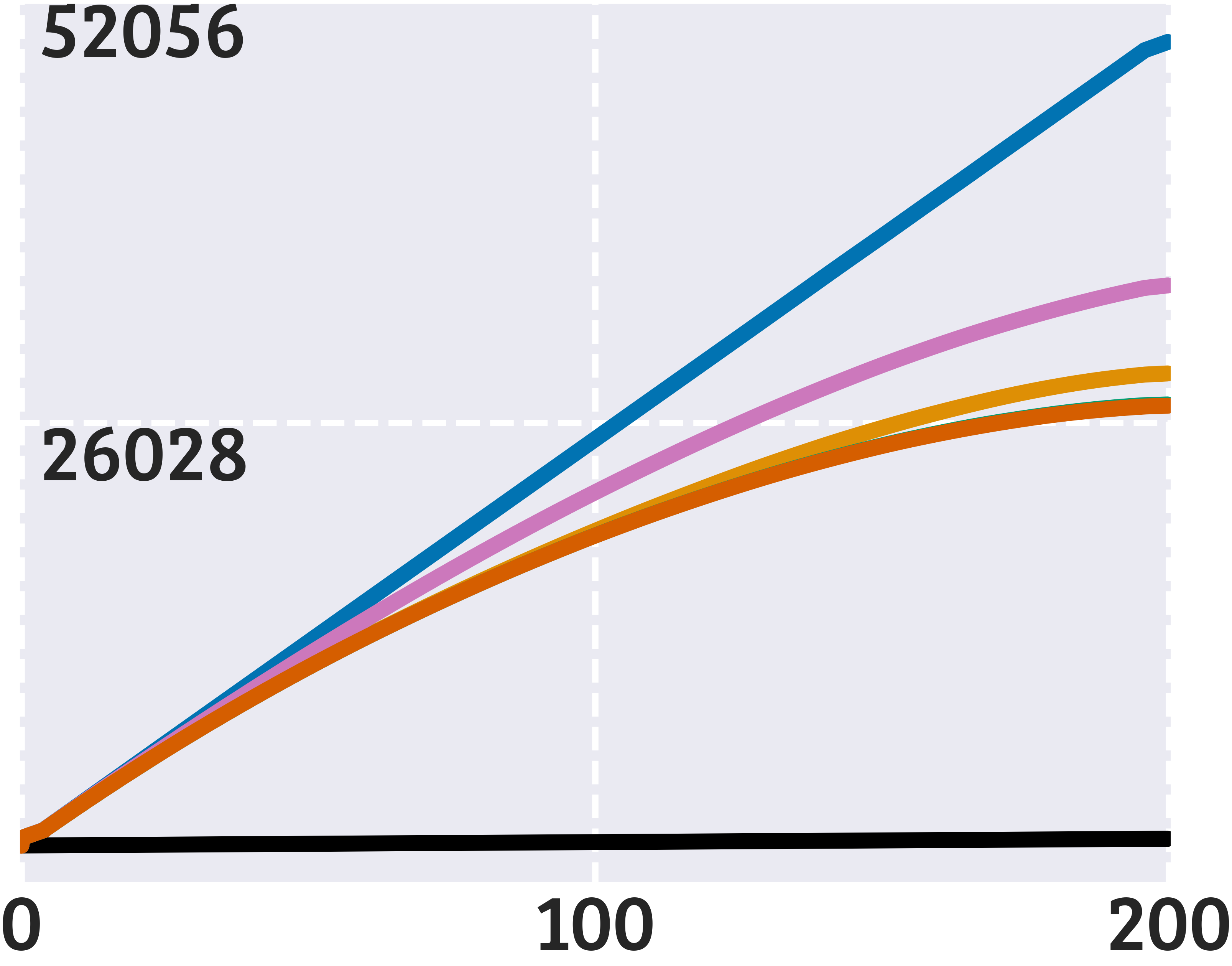}
    \end{subfigure}
    \hfill
    \begin{subfigure}[b]{0.16\linewidth}
        \centering
        \includegraphics[width=\linewidth]{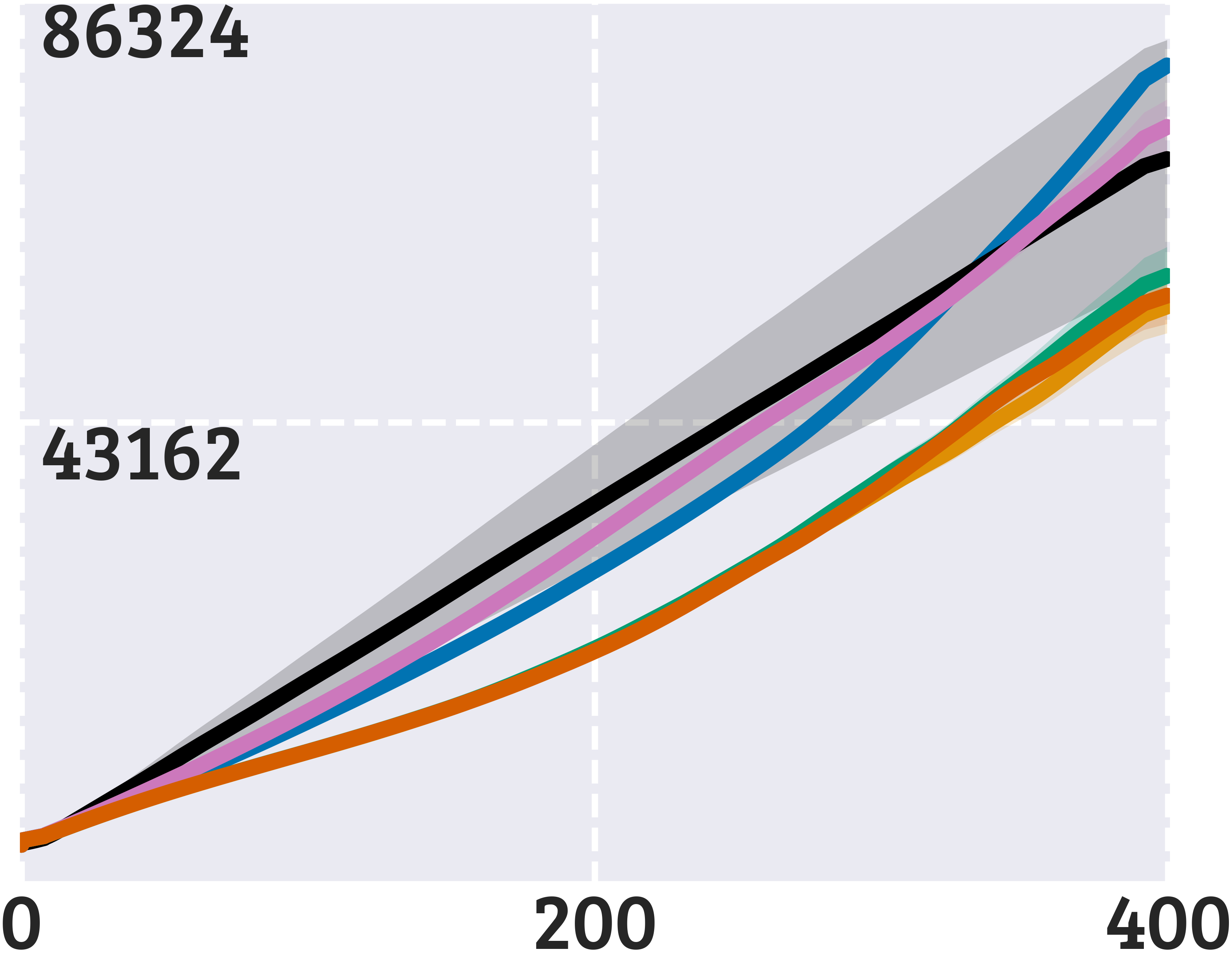}
    \end{subfigure}%
    }
    \\[-2pt]
    {\fontfamily{qbk}\tiny\textbf{Training Steps (1e\textsuperscript{3})}}
    \caption{\label{fig:visited_r_sum_main}\textbf{Number of rewards observed ($\smash{\rprox_t \neq \rewardundefined}$) during training by the exploration policies.} In Mon-MDPs, only \textcolor{snsblue}{\textbf{Ours}} observes enough rewards because it does not avoid suboptimal actions. 
    }
\end{figure}

\section{Discussion}
\label{sec:conclusion}
In this paper, we discussed the challenges of exploration when the agent cannot observe the outcome of its actions, and highlighted the limitations of existing approaches. While partial monitoring is a well-known and studied framework for bandits, prior work in MDPs with unobservable rewards is still limited.
To fill this gap, we proposed a paradigm change in the exploration-exploitation trade-off and investigated its efficacy on Mon-MDPs, a general framework where the reward observability is governed by a separate process the agent interacts with.
Rather than relying on optimism, intrinsic motivation, or confidence bounds, our approach explicitly decouples exploration and exploitation through a separate goal-conditioned policy that is fully in charge of exploration.
We proved the convergence of this paradigm under some assumptions, presented a suitable goal-conditioned policy based on the successor representation, and validated it on a collection of MDPs and Mon-MDPs.
\\[3pt]
\textbf{Advantages.} 
Directed exploration via the S-functions benefits from implicitly estimating the dynamics of the environment and being independent of the reward observability. 
First, the focus on the environment (states, actions, and dynamics) rather than on rewards or optimism (e.g., optimistic value functions) 
disentangles efficient exploration from imperfect or absent feedback --- not observing the rewards may compromise exploratory strategies that rely on them.
Second, the goal-conditioned policy learns to explore the environment regardless of the reward, i.e., of a task. This has potential application in transfer RL, in the same spirit of prior use of the successor representation~\citep{barreto2016successor, hansen2020fast, lehnert2020successor, reinke2023successor}. 
Third, the goal-conditioned policy can guide the agent to any desired goal state-action pair, and by choosing the least-visited pair as the goal the agent implicitly explores as uniformly as possible.
\\[3pt]
\textbf{Limitations and Future Work.}
First, we considered tabular MDPs, thus we plan to follow up on continuous MDPs. This will require extending the S-function and the visitation count to continuous spaces, e.g., by following prior work on universal value function approximators~\citep{schaul2015universal}, successor features~\citep{machado2020count, reinke2023successor}, and pseudocounts~\citep{tang2017exploration, martin2017count, lobel2023flipping}.  
For example, the finite set of S-functions $\smash{\widehat \qvisit_\saij(s_t, a_t)}$ could be replaced by a neural network $\smash{\widehat{S}(s_t, a_t, s_i, a_j)}$. For discrete actions, the network would take the (current state, goal state) pair and output the value for all (action, goal action) pairs, similarly to how deep Q-networks~\citep{mnih2015human} work. 
Extending visitation counts to continuous spaces is more challenging, because Algorithm~\ref{alg:explore_exploit} relies on the $\min$ operator to select the goal. To make it tractable, one solution would be to store samples into a replay buffer and prioritize sampling according to pseudocounts, similarly to what prioritized experience replay does with TD errors~\citep{schaul2015prioritized}. 
\\[2pt]
Second, our exploration policy requires some stochasticity to explore as we learn the S-functions. 
In our experiments, we used $\epsilon$-greedy noise for a fair comparison against the baselines, but there may be better alternatives (e.g., ``balanced wandering''~\citep{kearns2002near}) that could improve our exploration.
\\[2pt]
Third, we assumed that the MDP has a bounded goal-relative diameter but this may not be true, e.g., in weakly-communicating MDPs~\citep{fruit2018near, qian2019exploration}. Thus, we will devote future work developing algorithms for less-constrained MDPs, following recent advances in convergence of reward-free algorithms~\citep{chen2022statistical}.
\\[2pt]
Finally, we focused on model-free RL (i.e., Q-Learning) but we believe that our exploration strategy can benefit from model-based approaches. For example, we could use algorithms like UCRL~\citep{auer2007logarithmic, jaksch2010near} \textit{to learn the S-functions} (not the Q-function), whose indicator reward is always observable. And, by learning the model of the monitor, the agent could plan \textit{what rewards to observe} (e.g., the least-observed) rather than what state-action pair to visit. Furthermore, our strategy could be combined with model-based methods where exploration is driven by uncertainty~\citep{mania2020active, sukhija2023optimistic} rather than visits. That would allow the agent to plan visits to states where rewards are more likely to be observed.

\clearpage

\begin{ack}
This research was supported by grants from the Alberta Machine Intelligence Institute (Amii); a Canada CIFAR AI Chair, Amii; Digital Research Alliance of Canada; and NSERC.
\end{ack}




\setlength{\bibsep}{5pt}
\bibliography{rl_bib, cl_bib}
\bibliographystyle{abbrvnat}


\clearpage

\appendix

\begin{appendix}

\section{Experiments Details}
\label{app:exp}

\subsection{Source Code and Compute Details}
\label{app:compute}
Source code at \url{https://github.com/AmiiThinks/mon_mdp_neurips24}.
\\
We ran our experiments on a SLURM-based cluster, using 32 Intel E5-2683 v4 Broadwell @ 2.1GHz CPUs. One single run took between 3 to 45 minutes on a single core, depending on the size of the environment and the number of training steps. Runs were parallelized whenever possible. 

\subsection{Environments}
\label{app:envs}

\begin{figure}[t]
    \setlength{\belowcaptionskip}{0pt}
    \setlength{\abovecaptionskip}{8pt}
    \centering
    \begin{subfigure}[b]{0.17\linewidth}
        \centering
        {\fontfamily{qbk}\small\textbf{Empty (6$\times$6)}}\\[1.5pt]
        \includegraphics[width=\linewidth]{fig/envs/medium}
    \end{subfigure}
    \hfill
    \begin{subfigure}[b]{0.17\linewidth}
        \centering
        {\fontfamily{qbk}\small\textbf{Loop (3$\times$3)}}\\[1.5pt]
        \includegraphics[width=\linewidth]{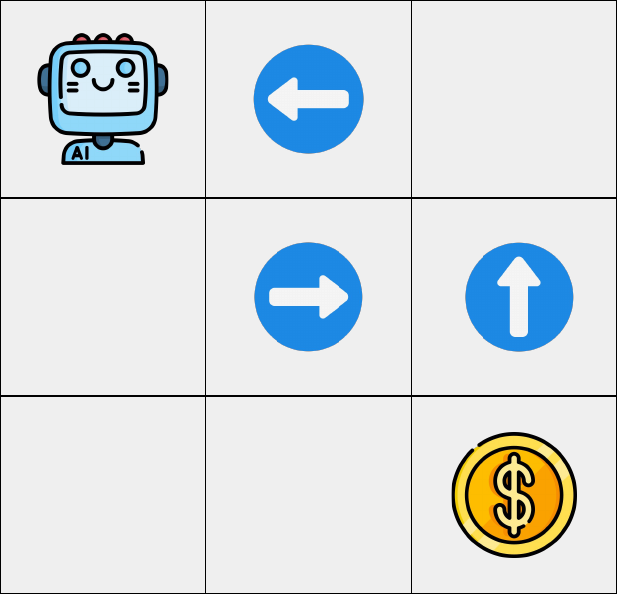}
    \end{subfigure}%
    \hfill
    \begin{subfigure}[b]{0.17\linewidth}
        \centering
        {\fontfamily{qbk}\small\textbf{Hazard (4$\times$4)}}\\[1.5pt]
        \includegraphics[width=\linewidth]{fig/envs/quicksand}
    \end{subfigure}%
    \hfill
    \begin{subfigure}[b]{0.227\linewidth}
        \centering
        {\fontfamily{qbk}\small\textbf{One-Way (3$\times$4)}}\\[1.5pt]
        \includegraphics[width=\linewidth]{fig/envs/corridor}
    \end{subfigure}
    \hfill
    \begin{subfigure}[b]{0.195\linewidth}
        \centering
        {\fontfamily{qbk}\small\textbf{Two-Room (3$\times$5)}}\\[1.5pt]
        \includegraphics[width=\linewidth]{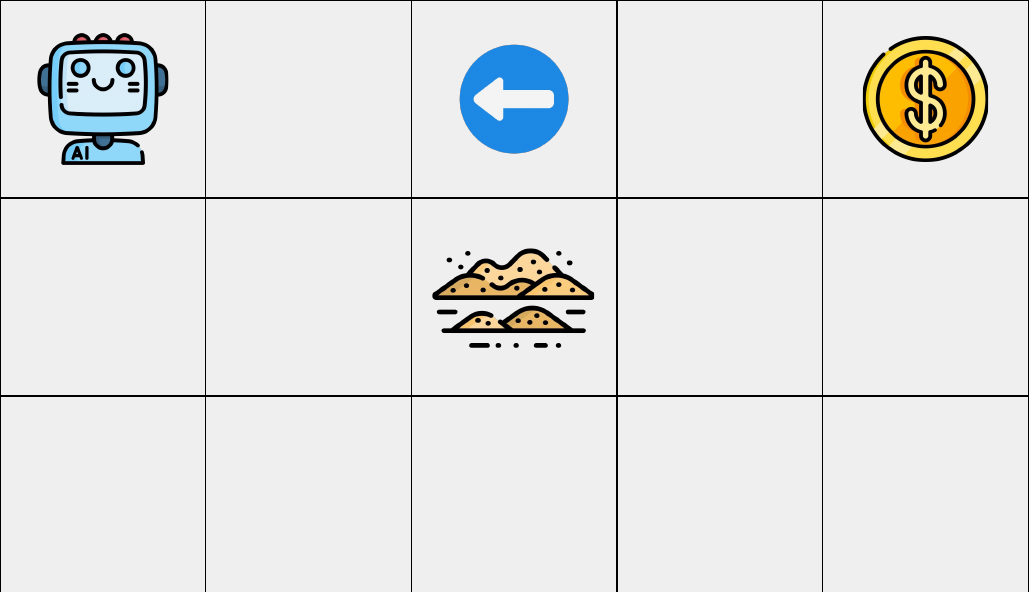}
    \end{subfigure}
    \\[0.8em]
    \begin{subfigure}[b]{0.3\linewidth}
        \centering
        {\fontfamily{qbk}\small\textbf{Two-Room (2$\times$11)}}\\[1.5pt]
        \includegraphics[width=\linewidth]{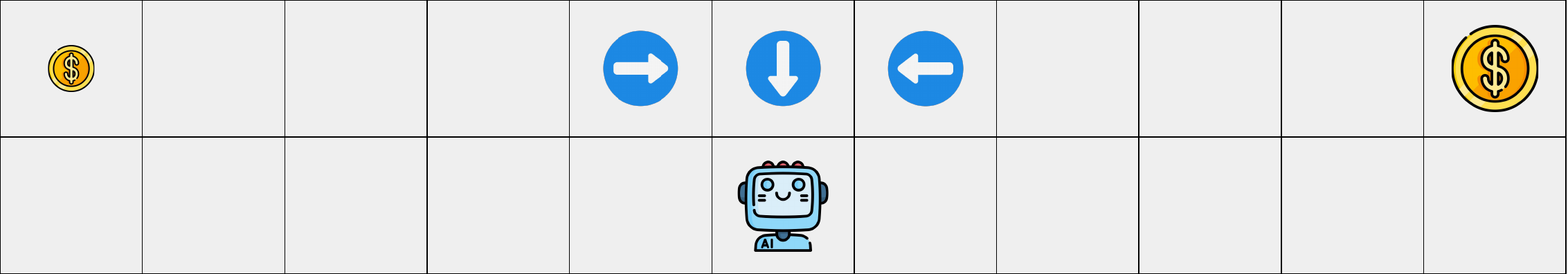}
    \end{subfigure}
    \hfill
    \begin{subfigure}[b]{0.345\linewidth}
        \centering
        {\fontfamily{qbk}\small\textbf{Corridor (20)}}\\[1.5pt]
        \includegraphics[width=\linewidth]{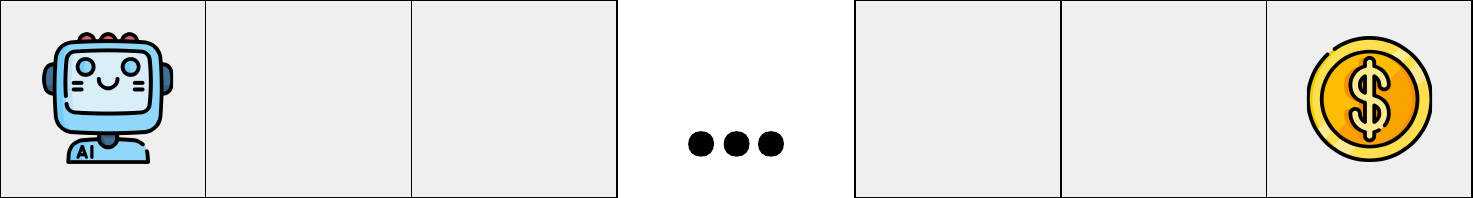}
    \end{subfigure}
    \hfill
    \begin{subfigure}[b]{0.295\linewidth}
        \centering
        {\fontfamily{qbk}\small\textbf{River Swim (6)}}\\[1.5pt]
        \includegraphics[width=\linewidth]{fig/envs/river}
    \end{subfigure}
    \caption{\label{fig:envs_full}\textbf{Full set of the tabular environments used for our experiments.} Loop (3$\times$3), Two-Room (3$\times$5), Two-Room (2$\times$11), and Corridor (20) were not included in Section~\ref{sec:exp} due to page limits.}
\end{figure}

In all environments except River Swim, the environment actions are $\aenv \in \{\texttt{LEFT}, \allowbreak \texttt{RIGHT}, \allowbreak \texttt{UP}, \allowbreak \texttt{DOWN}, \allowbreak \texttt{STAY}\}$, and the agent starts in the position pictured in Figure~\ref{fig:envs_full}.
The environment reward depends on the current state and action as follows.
\begin{itemize}[leftmargin=1em, itemsep=-1pt, topsep=-2pt]
\item Coin cells: \texttt{STAY} gives a positive reward ($\renv_t = 0.1$ for small coins, $\renv_t = 1$ for large ones) and end the episode.
\item Toxic cloud cells: any action gives a negative reward ($\renv_t = -0.1$ for small coins, $\renv_t = -10$ for large ones).
\item Any other cell: $\renv_t = 0$.
\end{itemize}
The agent can move away from its current cell or stay, but some cells have special dynamics.
\begin{itemize}[leftmargin=1em, itemsep=-1pt, topsep=-2pt]
\item In quicksand cells, any action can fail with 90\% probability. These cells can easily prevent exploration, because the agent can get stuck and waste time. 
\item In arrowed cells, only the corresponding action succeeds. For example, if the agent is in a cell pointing to the left, the only action to move away from it is \texttt{LEFT}. In practice, they force the agent to take a specific path to reach the goal (e.g., in One-Way), or divide the environment in rooms (e.g., in Two-Rooms).
\end{itemize}
In all environments, the goal is to follow the shortest path to the large coin and \texttt{STAY} there. Below we discuss the characteristics and challenges of the first seven environments. Episodes end when the agent collects a coin or after a step limit (in brackets after the environment name and its grid size).
\begin{itemize}[leftmargin=1em, itemsep=0pt, topsep=0pt]
\item {\textbf{Empty (6$\times$6; 50 steps).}} This is a classic benchmark for exploration in RL with sparse rewards. There are no intermediate rewards between the initial position (top-left) and the large reward (bottom-right). Furthermore, the small coin in the bottom-left can ``distract'' the agent and cause convergence to a suboptimal policy. 
\item {\textbf{Loop (3$\times$3; 50 steps).}} The large coin can be reached easily, but if the agent wants to visit the top-right cell is forced to follow the arrowed ``loop``. Algorithms that explore inefficiently or for too long can end up visiting those states too often.  
\item {\textbf{Hazard (4$\times$4; 50).}} Because of the toxic clouds, the optimal path is to walk following the edges of the grid. Along the way, however, the agent may find the a small coin and converge to a suboptimal policie, or get stuck in the quicksand cell. 
\item {\textbf{One-Way (3$\times$4; 50 steps).}} The only way to reach the large coin is to walk past small toxic clouds. Because of their negative reward, shallow exploration strategies can prematurely decide not to explore further and never discover the large coin. 
The presence of an easy-to-reach small coin can also cause convergence to a suboptimal policy. 
\item {\textbf{Corridor (20; 200 steps).}} This is a flat variant of the environment presented by \citet{klissarov2023deep}. Because of the length of the corridor, the agent requires persistent deep exploration from the start (leftmost cell) to the large coin (rightmost cell) to learn the optimal policy. Naive dithering exploration strategies like $\epsilon$-greedy are known to be extremely inefficient for this. 
\item {\textbf{Two-Room (2$\times$11; 200 steps)}.} The arrowed cells divide the grid into two rooms, one with a small coin and one with a large coin. If exploration relies too much on randomness or value estimates, the agent may converge to the suboptimal policy collecting the small coin.
\item {\textbf{Two-Room (3$\times$5; 50 steps)}.} The arrowed cell divides the grid into two rooms, and moving from one room to another may be hard if the agent gets stuck in the quicksand cell.
\end{itemize}

River Swim is different from the other seven environments because the agent can only move \texttt{LEFT} and \texttt{RIGHT}, and there are no terminal states. This environment was originally presented by \citet{strehl2008analysis} and later became a common benchmark for exploration in RL, although often with different transition and reward functions. Here, we implement the version in Figure~\ref{fig:river-dynamics}. 
There is a small positive reward for doing \texttt{LEFT} in the leftmost cell (acting as local optima) and a large one for doing \texttt{RIGHT} in the rightmost cell. The transition is stochastic and \texttt{RIGHT} has a high chance of failing in every cell, either not moving the agent or pushing it back to the left. 
The initial position is either state \texttt{1} or \texttt{2}. 
Although this is an infinite-horizon MDP, i.e., no state is terminal, we reset episodes after 200 steps.
The optimal policy always does \texttt{RIGHT}.

\begin{figure}[h]
    \centering
    \includegraphics[width=\linewidth]{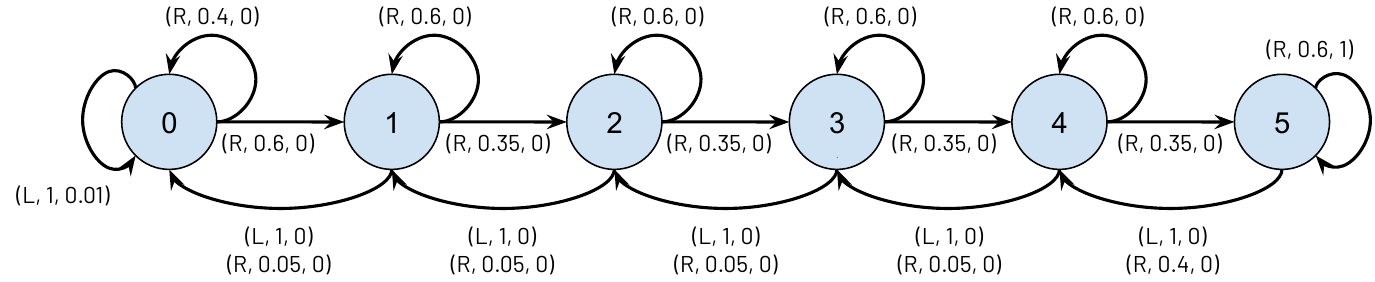}
    \caption{\label{fig:river-dynamics}\textbf{River Swim reward and transition functions.} Each tuple represents (action, probability, reward) and arrows specify the corresponding transition. For example, doing \texttt{RIGHT} in state \texttt{0} has 60\% chance of moving the agent to \texttt{1} and 40\% chance of keeping it in \texttt{0}. In both cases, the reward is 0. In state \texttt{5} the transition probability is the same, but if \texttt{RIGHT} succeeds the reward is 1.}
\end{figure}


\subsection{Monitored-MDPs}
\label{app:monitors}

\setlength{\abovedisplayskip}{2pt}
\setlength{\belowdisplayskip}{2pt}

\begin{itemize}[leftmargin=1em, itemsep=0pt, topsep=0pt]
\item \textbf{Random Monitor.} There is no monitor state, action, nor reward. Positive (coin) and negative (toxic cloud) environment rewards are unobservable with probability $50\%$ ($\rprox_t = \rewardundefined$) and fully observable otherwise ($\rprox_t = \renv_t$). Zero-rewards in other cells are always observable. 
\item \textbf{Ask Monitor.} The monitor is always \texttt{OFF}. The agent can observe the current environment reward with an explicit monitor action at a cost. 
\begin{align*}
\monstatespace & \coloneqq \{\texttt{OFF}\} \qquad
\monactionspace \coloneqq \{\texttt{ASK, NO-OP}\} \qquad
\smon_{t+1} = \texttt{OFF}, \, t \geq 0
\\
\rprox_t & = \begin{cases} 
    \renv_t & \text{if $\amon_t = \texttt{ASK}$}
    \\
    \rewardundefined & \text{otherwise} 
\end{cases} \qquad
\rmon_t = \begin{cases} 
    -0.2 & \text{if $\amon_t = \texttt{ASK}$} 
    \\ 
    0 & \text{otherwise} 
\end{cases}
\end{align*}

\item \textbf{Button Monitor.} The monitor is turned \texttt{ON}/\texttt{OFF} by pushing a button in the starting cell (except for River Swim, where the button is in state \texttt{0}). The agent can do that with $\aenv = \texttt{LEFT}$, i.e., the agent has no dedicated monitor action. Having the monitor \texttt{ON} is costly, and leaving it \texttt{ON} when collecting a coin results in an extra cost. Because the initial monitor state is random, the optimal policy always turns it \texttt{OFF} before collecting the large coin. 

\begingroup
\allowdisplaybreaks
\begin{align*}
\monstatespace & \coloneqq \{\texttt{ON, OFF}\} \qquad
\monactionspace \coloneqq \{\texttt{NO-OP}\} \qquad
\smon_1 = \text{random uniform in $\monstatespace$}
\\
\smon_{t+1} & = \begin{cases} 
    \texttt{ON} & \text{if $\smon_t = \texttt{OFF}$ and $\senv_t = \texttt{BUTTON-CELL}$ and $\aenv_t = \texttt{LEFT}$} 
    \\ 
    \texttt{OFF} & \text{if $\smon_t = \texttt{ON}$ and $\senv_t = \texttt{BUTTON-CELL}$ and $\aenv_t = \texttt{LEFT}$} 
    \\ 
    \smon_t & \text{otherwise}%
\end{cases}%
\\[-7pt]
\rprox_t &= \begin{cases} 
    \renv_t & \text{if $\smon_{t} = \texttt{ON}$} 
    \\ 
    \rewardundefined & \text{otherwise} 
\end{cases} \qquad
\rmon_t = \begin{cases} 
    -0.2 & \text{if $\smon_{t} = \texttt{ON}$} 
    \\ 
    -2 & \text{if $\smon_{t} = \texttt{ON}$ and $\senv_{t}$ is terminal} 
    \\ 
    0 & \text{otherwise} 
\end{cases}
\end{align*}
\endgroup

\item \textbf{Random Experts Monitor.} At every step, the monitor state is random. If the agent's monitor action matches the state, it observes $\rprox_t = \renv_t$ but receives a negative monitor reward. Otherwise, it receives $\rprox_t = \rewardundefined$ but receives a smaller positive monitor reward.
\begin{align*}
\monstatespace & \coloneqq \{1, \ldots, n\} \qquad
\monactionspace \coloneqq \{1, \ldots, n\} \qquad
\smon_{t+1} = \text{random uniform in $\monstatespace$}, \, t \geq 0
\\
\rprox_t & = \begin{cases} 
        \renv_t & \text{if $\amon_t = \smon_t$}
        \\
        \rewardundefined & \text{otherwise} 
    \end{cases} \qquad
\rmon_t = \begin{cases} 
        -0.2 & \text{if $\amon_t = \smon_t$} 
        \\
        0.001 & \text{otherwise} 
    \end{cases}
\end{align*}
In our experiments, $n = 4$. 
An optimal policy collects the large coin while never matching monitor states and actions, in order to receive small positive monitor rewards along the way. However, prematurely deciding to be greedy with respect to monitor rewards will result in never observing environment rewards, thus not learning how to collect coins. 
We also note that this monitor has larger state and action spaces than the others, and a stochastic transition.

\item \textbf{Level Up Monitor.} The monitor has $n$ states, each representing a ``level'', and $n + 1$ actions. The first $n$ actions are costly and needed to ``level up'' the state, while the last is \texttt{NO-OP}. The initial level is random. To level up, the agent's action must match the state, e.g., if $\smon_t = 1$ and $\amon_t = 1$, then $\smon_{t+1} = 2$. However, if the agent executes the wrong action, the level resets to $\smon_{t+1} = 1$. Environment rewards will become visible only when the monitor level is $n$. Leveling up the monitor is costly, and only $\amon_t = \texttt{NO-OP}$ is cost-free.
\begin{align*}
\monstatespace & \coloneqq \{1, \ldots, n\} \qquad \monactionspace \coloneqq \{1, \ldots, n, \texttt{NO-OP}\} \\
\smon_{t+1} &= \begin{cases}
    \max(\smon_t + 1, n) & \text{if $\smon_t = \amon_t$}
    \\
    \smon_t  & \text{if $\amon_t$ = \texttt{NO-OP}}
    \\
    1 & \text{otherwise}
    \end{cases}
    \\
    \rprox_t & = \begin{cases} 
        \renv_t & \text{if $\smon_t = n$} 
        \\
        \rewardundefined & \text{otherwise} 
    \end{cases} \qquad
    \rmon_t = \begin{cases} 
        0 & \text{if $\amon_t = \texttt{NO-OP}$} 
        \\
        -0.2 & \text{otherwise} 
    \end{cases}
\end{align*}
The optimal policy always does $\amon_t = \texttt{NO-OP}$ to avoid negative monitor rewards. Yet, without leveling up the monitor, the agent cannot observe rewards and learn about environment rewards in the first place. 
Furthermore, the need to do a correct sequence of actions to observe rewards elicits deep exploration.
\end{itemize}

\subsection{Training Steps and Testing Episodes}
\label{app:steps_and_episodes}
All algorithms are trained over the same amount of steps, but this depends on the environment and the monitor as reported in Table~\ref{tab:steps_and_episodes}. 
As the agent explores and learns, we test the greedy policies at regular intervals for a total of 1,000 testing points. For example, in River Swim with the Button Monitor the agent learns for 40,000 steps, therefore we test the greedy policy every 40 training steps. Every testing point is an evaluation of the greedy policy over 1 episode (for fully deterministic environments and monitors) or 100 (for environments and monitors with stochastic transition or initial state, i.e., Hazard, Two-Room (3$\times$5), River Swim; Button, Random Experts, Level Up).

\begin{table}[h]
  \centering
  \begin{minipage}[c]{0.33\textwidth}
    \centering
    \caption{\label{tab:steps_and_episodes}\textbf{Number of training steps.} For each environment, we decided a default number of training steps. Then, we multiplied this number for a constant depending on the difficulty of the monitor. For example, agents are trained for 30,000 steps in the Corridor environment without any monitor, 60,000 with the Button Monitor, and 600,000 with the Level Up Monitor.}
  \end{minipage}
  \hfill
  \begin{minipage}[t]{0.34\textwidth}
    \centering
    \setlength\tabcolsep{2pt}
    \renewcommand{\arraystretch}{1.1}
    \centering
    \begin{tabular}{@{\extracolsep{\fill}}|l|c|}
        \hline
        \textbf{Environment} & \textbf{Default Steps}
        \\
        \hline
        \Tstrut
        Empty (6$\times$6) & 5,000 
        \\
        Loop (3$\times$3) & 5,000 
        \\
        Hazard (4$\times$4) & 10,000 
        \\
        One-Way (3$\times$4) & 10,000 
        \\
        Corridor (20) & 30,000 
        \\
        Two-Room (2$\times$11) & 5,000 
        \\
        Two-Room (3$\times$5) & 10,000 
        \\
        River Swim (6) & 20,000 
        \Bstrut
        \\
        \hline
    \end{tabular}
    \end{minipage}
    \hfill 
    \begin{minipage}[t]{0.28\textwidth}
    \centering
    \setlength\tabcolsep{2pt}
    \renewcommand{\arraystretch}{1.1}
    \centering
    \begin{tabular}{@{\extracolsep{\fill}}|l|c|}
        \hline
        \textbf{Monitor} & \textbf{Multiplier}
        \\
        \hline
        \Tstrut
        Full Obs. & $\times 1$ 
        \\
        Random & $\times 1$ 
        \\
        Ask & $\times 3$ 
        \\
        Button & $\times 2$
        \\
        Rnd. Experts & $\times 10$
        \\
        Level Up & $\times 20$
        \Bstrut
        \\
        \hline
    \end{tabular}
    \end{minipage}
\end{table}

\clearpage

\begin{algorithm}[h]
    \DontPrintSemicolon
    \caption{\label{alg:q_update_with_rwd_model}Q-Update With Reward Model for Mon-MDPs}
    \setstretch{1.4}
    \Input{$\{\senv_t, \aenv_t, \rprox_t, \senv_{t + 1}, \smon_t, \amon_t, \rmon_t, \smon_{t + 1}\}, \upalpha_t, N^r, \widehat{R}, \widehat{Q}, \gamma$
    }
    \If{$\rprox_t \neq \rewardundefined$} 
    {
    $N^r(\senv_t, \aenv_t) \gets  N^r(\senv_t, \aenv_t) + 1$
    \\
    $\widehat{R}(\senv_t, \aenv_t) \gets \frac{\left(N^r(\senv_t, \aenv_t)  \, - \, 1\right)\widehat{R}(\senv_t, \aenv_t)  \, + \, \rprox_t}{N^r(\senv_t, \aenv_t)}$
    }
    $s_t \gets (\senv_t, \smon_t)$
    \\
    $a_t \gets (\aenv_t, \amon_t)$
    \\
    $r_t \gets \widehat R(\senv_t, \aenv_t) + \rmon_t$
    \\
    \Return $\widehat{Q}(s_t, a_t) \gets  (1 - \upalpha_t) \widehat{Q}(s_t, a_t) + \upalpha_t \big(r_t + \upgamma {\max_a \widehat{Q}(s_{t+1}, a)}\big)$\;
\end{algorithm}%

\subsection{Baselines}
\label{app:alg_details}
All baselines evaluated in Section~\ref{sec:exp} use Q-Learning with reward model to compensate for unobservable rewards in Mon-MDPs ($\rprox_t = \rewardundefined)$, as outlined in Algorithm~\ref{alg:q_update_with_rwd_model}. 
Because there is always a monitor state-action pair for which all environment rewards are observable, Q-Learning with reward model will converge given enough exploration~\citep{parisi2024monitored}. 
The reward model is a table ${\widehat R(\senv, \renv)}$ initialized to random values in $[-0.1, 0.1]$ that keeps track of the running average of observed proxy reward --- every time a reward is observed, a count $N^r(\senv, \senv)$ is increased and the entry ${\widehat R(\senv, \renv)}$ is updated. 

The differences between the baselines lie in their exploration policies, as described below and outlined in Algorithms~\ref{alg:generic}, ~\ref{alg:ours} and~\ref{alg:q_counts}.
\begin{itemize}[leftmargin=1em, itemsep=0pt, topsep=0pt]
\item \textbf{\textcolor{snsblue}{Ours}.} As described in Algorithm~\ref{alg:explore_exploit}, with $\bar\beta = 0.01$. The goal-conditioned policy $\rho$ is $\epsilon$-greedy over ${\widehat S_\sagoal(s_t, a)}$. The S-function is initialized optimistically to ${\widehat S_{\saij}(s, a) = 1 \,\, \forall (s, a, s_i, a_j)}$.
\item \textcolor{lightblack}{\textbf{Optimism.}} Exploration is pure greedy, i.e., ${a_t = \arg\max_a \widehat Q(s_t, a)}$.
\item \textbf{\textcolor{snsorange}{UCB}.} Exploration is $\epsilon$-greedy over the UCB estimate ${\widehat Q(s_t, a) + \sqrt{\sfrac{2\log\sum_a N(s_t, a)}{N(s_t, a)}}}$~\citep{auer2002finite}.
\item \textbf{\textcolor{snspink}{Q-Counts}.} This baseline learns a separate Q-function ${\widehat{Q}_\textsub{c}(s,a)}$ using $N(s_t,a_t)$ as rewards, i.e.,
\begin{equation}
    \widehat{Q}_\textsub{c}(s_t, a_t) \leftarrow (1 - \upalpha_t) \widehat Q_\textsub{c}(s_t, a_t) + \upalpha_t \left(N(s_t,a_t) + \upgamma {\min_a \widehat Q_\textsub{c}(s_{t+1}, a)}\right). \label{eq:qlearning_count} 
\end{equation}
This function is initialized to ${\widehat Q_\textsub{c}(s, a) = 0 \,\, \forall (s, a)}$, and intuitively estimates and minimizes ``cumulative visitation counts''~\citep{parisi2022long}. The exploration policy then replaces $N(s,a)$ with ${\widehat{Q}_\textsub{c}(s,a)}$ in the UCB estimate, favoring actions that result in low cumulative counts rather than low immediate counts. Formally, the exploration is $\epsilon$-greedy over ${\widehat Q(s_t, a) + \sqrt{\sfrac{2\log\sum_a \widehat{Q}_\textsub{c}(s_t, a)}{\widehat{Q}_\textsub{c}(s_t, a)}}}$.
\item \textcolor{snsgreen}{\textbf{Intrinsic}.} Exploration is $\epsilon$-greedy over $\widehat Q(s_t,a)$. The Q-function is trained with intrinsic reward ${\sfrac{0.01}{\sqrt{{N(s_t, a_t)}}}}$~\citep{bellemare2016unifying}.
\item \textcolor{snsred}{\textbf{Naive}.} Exploration is $\epsilon$-greedy over $\widehat Q(s_t,a)$.
\end{itemize}

When two or more actions have the same value estimate --- whether ${\widehat Q(s_t, a)}$, ${\widehat S_\sagoal(s_t, a)}$, or ${\widehat Q(s_t, a)}$ with UCB bonuses) --- ties are broken randomly. 

Below are the learning hyperparameters.
\begin{itemize}[leftmargin=1em, itemsep=0pt, topsep=0pt]
\item The schedule $\epsilon_t$ starts at 1 and linearly decays to 0 throughout all training steps. For all algorithms, this performed better than the following schedules: linear decay from 0.5 to 0, linear decay from 0.2 to 0, constant 0.2, constant 0.1. 
\item The schedule $\upalpha_t$ depends on the environment: constant 0.5 for Hazard and Two-Room (3$\times$5) (because of the quicksand cell), linear decay from 0.5 to 0.05 in River Swim (because of the stochastic transition), and constant 1 otherwise. 
For the Random Experts Monitor we linearly decay the learning rate to 0.1 in all environments (0.05 in River Swim) because of the random monitor state.
\item Discount factor $\upgamma = 0.99$.
\end{itemize}

\begin{algorithm}[H]
\DontPrintSemicolon 
	\caption{\label{alg:generic}Q-Learning Baselines}
	\setstretch{1.15}
    \Input{$\upalpha, N, \widehat Q$}
    $s_0 \gets \texttt{environment.start()}$
    \\
    \For{$t=0$ \KwTo $t_\textsub{max}$}{
        $a_t = \texttt{explore(}s_t, \widehat Q, N\texttt{)}$
        \\
        $N(s_t, a_t) \gets N(s_t, a_t) + 1$
        \\
        $r_t, s_{t+1} \gets \texttt{environment.step(}a_t\texttt{)}$
        \\
        \If{$s_{t}$ \textup{is terminal}}{$s_{t+1} \gets \texttt{environment.start()}$}
        $\texttt{q\_update(}s_t, a_t, r_t, s_{t+1}, \upalpha\texttt{)}$ \tcp*[f]{Eq.~\eqref{eq:qlearning}}
    }
\end{algorithm}

\begin{algorithm}[H]
\DontPrintSemicolon 
	\caption{\label{alg:ours}Our Q-Learning with S-functions}
	\setstretch{1.15}
    \Input{$\upalpha, N, \widehat Q, \textcolor{red}{\widehat \qvisit, \bar\beta}$}
    $s_0 \gets \texttt{environment.start()}$
    \\
    \For{$t=0$ \KwTo $t_\textsub{max}$}{
        $a_t = \texttt{explore(}s_t, \widehat Q, N, \textcolor{red}{\widehat S, \bar\beta}\texttt{)}$ 
        \\
        $N(s_t, a_t) \gets N(s_t, a_t) + 1$
        \\
        $r_t, s_{t+1} \gets \texttt{environment.step(}a_t\texttt{)}$
        \\
        \If{$s_{t}$ \textup{is terminal}}{$s_{t+1} \gets \texttt{environment.start()}$}
        $\texttt{q\_update(}s_t, a_t, r_t, s_{t+1}, \upalpha\texttt{)}$ \tcp*[f]{Eq.~\eqref{eq:qlearning}}
        \\
        \textcolor{red}{$\texttt{s\_update(}s_t, a_t, s_{t+1}, \upalpha\texttt{)}$ \tcp*[f]{Eq.~\eqref{eq:qlearning_visit}}}
    }
\end{algorithm}

\begin{algorithm}[H]
\DontPrintSemicolon 
	\caption{\label{alg:q_counts}Q-Learning with Q-Counts \citep{parisi2022long}}
	\setstretch{1.15}
    \Input{$\upalpha, N, \widehat Q, \textcolor{blue}{\widehat{Q}_\textsub{c}}$}
    $s_0 \gets \texttt{environment.start()}$
    \\
    \For{$t=0$ \KwTo $t_\textsub{max}$}{
        $a_t = \texttt{explore(}s_t, \widehat Q, \textcolor{blue}{\widehat{Q}_\textsub{c}}\texttt{)}$ 
        \\
        $N(s_t, a_t) \gets N(s_t, a_t) + 1$
        \\
        $r_t, s_{t+1} \gets \texttt{environment.step(}a_t\texttt{)}$
        \\
        \If{$s_{t}$ \textup{is terminal}}{$s_{t+1} \gets \texttt{environment.start()}$}
        $\texttt{q\_update(}s_t, a_t, r_t, s_{t+1}, \upalpha\texttt{)}$ \tcp*[f]{Eq.~\eqref{eq:qlearning}}
        \\
        \textcolor{blue}{$\texttt{q\_count\_update(}s_t, a_t, s_{t+1}, N, \upalpha\texttt{)}$ \tcp*[f]{Eq.~\eqref{eq:qlearning_count}}}
    }
\end{algorithm}

\vspace*{3pt}

Note that the discount factor $\upgamma$ and the learning rate $\upalpha_t$ in Eq.~\eqref{eq:qlearning}, ~\eqref{eq:qlearning_visit}, and~\eqref{eq:qlearning_count} can be different, e.g., a higher learning rate to update ${\widehat Q}$ and a smaller for ${\widehat S}$. In our experiment, we use the same $\upgamma$ and $\upalpha_t$ for both ${\widehat Q}$, ${\widehat S}$, and ${\widehat Q_\textsub{c}}$.

The Q-function is initialized optimistically to ${\widehat Q(s, a) = 1 \, \forall (s,a)}$ (first seven environments) and ${\widehat Q(s, a) = 50 \, \forall (s,a)}$ (River Swim) for all algorithms except \textcolor{snsblue}{\textbf{Ours}}, where it is initialized pessimistically to ${\widehat Q(s, a) = -10 \, \forall (s,a)}$ (all environments). This performed better than optimistic initialization because it mitigates overestimation bias, that can be more prominent with optimism~\citep{moskovitz2021tactical}. From our experiments, this is especially true in Mon-MDPs, where the reward model used to compensate for unobservable rewards can be inaccurate and lead to overly optimistic value estimates. 
We also tried the same pessimistic ${\widehat Q}$ initialization for \textcolor{snsorange}{\textbf{UCB}}, \textcolor{snspink}{\textbf{Q-Counts}}, \textcolor{snsgreen}{\textbf{Intrinsic}}, and \textcolor{snsred}{\textbf{Naive}}, but their performance was significantly worse. Most of the time, in fact, the ``distracting'' small coins were causing premature convergence to suboptimal policies.


\clearpage

\section{Additional Results}
\label{app:extra_results}

\subsection{Greedy Policy Return}
\label{app:returns}
Figure~\ref{fig:returns_appendix} is an extended version of Figure~\ref{fig:returns_main} with all eight environments (we showed only four in the main paper due to page limit). Results on Loop (3$\times$3), Two-Room (3$\times$5), Two-Room (2$\times$11), and Corridor (20) are aligned with the remainder, showing that \textbf{\textcolor{snsblue}{Our}} exploration strategy outperforms all the baselines. 
Note that only in River Swim, \textbf{\textcolor{snsblue}{Ours}} did not \textit{significantly} outperformed the baselines. Its highly stochastic transition, indeed, makes learning harder and is the reason why \textbf{\textcolor{lightblack}{Optimism}} fails even in the MDP version (in spite of fully observable rewards).
Nonetheless, \textbf{\textcolor{snsblue}{Ours}} did not perform worse than any baseline in any version of River Swim (MDP and Mon-MDPs).

\begin{figure}[ht]
    \setlength{\belowcaptionskip}{0pt}
    \setlength{\abovecaptionskip}{8pt}
    \centering
    \includegraphics[width=0.75\linewidth]{fig/plots/legend.png}
    \\[3pt]
    \makebox[\linewidth][c]{%
    \raisebox{22pt}{\rotatebox[origin=t]{90}{\fontfamily{qbk}\tiny\textbf{Empty}}}
    \hfill
    \begin{subfigure}[b]{0.165\linewidth}
        \centering
        {\fontfamily{qbk}\tiny\textbf{Full Observ.}}\\[2pt]
        \includegraphics[width=\linewidth]{fig/plots/return/iGym-Gridworlds_Empty-Distract-6x6-v0_iFullMonitor_.png}
    \end{subfigure}
    \hfill
    \begin{subfigure}[b]{0.155\linewidth}
        \centering
        {\fontfamily{qbk}\tiny\textbf{Random Observ.}}\\[3pt]
        \includegraphics[width=\linewidth]{fig/plots/return/iGym-Gridworlds_Empty-Distract-6x6-v0_iRandomNonZeroMonitor_.png}
    \end{subfigure}
    \hfill
    \begin{subfigure}[b]{0.155\linewidth}
        \centering
        {\fontfamily{qbk}\tiny\textbf{Ask}}\\[3pt]
        \includegraphics[width=\linewidth]{fig/plots/return/iGym-Gridworlds_Empty-Distract-6x6-v0_iStatelessBinaryMonitor_.png}
    \end{subfigure}
    \hfill
    \begin{subfigure}[b]{0.155\linewidth}
        \centering
        {\fontfamily{qbk}\tiny\textbf{Button}}\\[3pt]
        \includegraphics[width=\linewidth]{fig/plots/return/iGym-Gridworlds_Empty-Distract-6x6-v0_iButtonMonitor_.png}
    \end{subfigure}
    \hfill
    \begin{subfigure}[b]{0.155\linewidth}
        \centering
        {\fontfamily{qbk}\tiny\textbf{Random Experts}}\\[3pt]
        \includegraphics[width=\linewidth]{fig/plots/return/iGym-Gridworlds_Empty-Distract-6x6-v0_iNMonitor_nm4_.png}
    \end{subfigure}
    \hfill
    \begin{subfigure}[b]{0.155\linewidth}
        \centering
        {\fontfamily{qbk}\tiny\textbf{Level Up}}\\[3pt]
        \includegraphics[width=\linewidth]{fig/plots/return/iGym-Gridworlds_Empty-Distract-6x6-v0_iLevelMonitor_nl3_.png}
    \end{subfigure}%
    }
    \\[3pt]
    \makebox[\linewidth][c]{%
    \raisebox{22pt}{\rotatebox[origin=t]{90}{\fontfamily{qbk}\tiny\textbf{Loop}}}
    \hfill
    \begin{subfigure}[b]{0.165\linewidth}
        \centering
        \includegraphics[width=\linewidth]{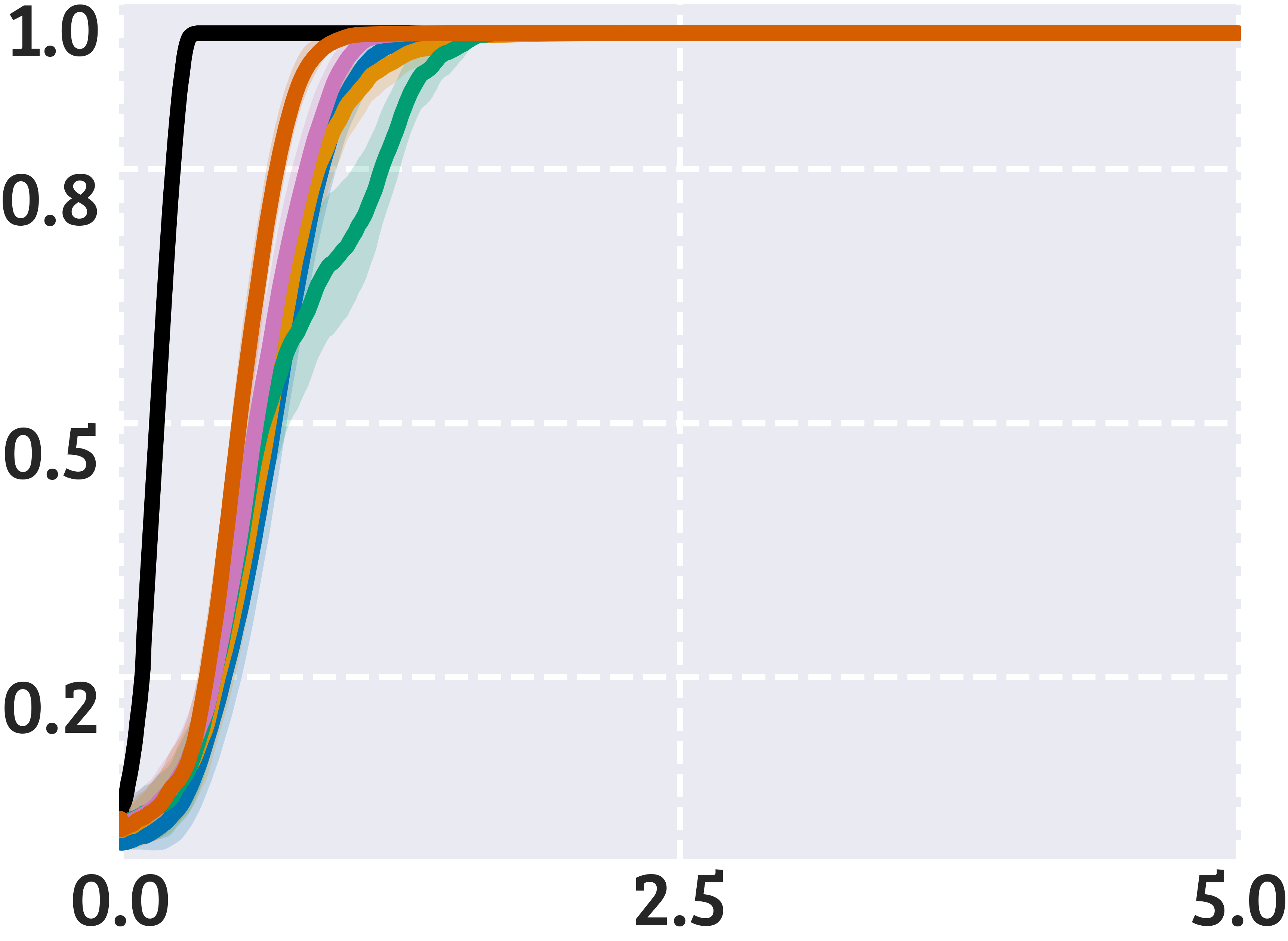}
    \end{subfigure}
    \hfill
    \begin{subfigure}[b]{0.155\linewidth}
        \centering
        \includegraphics[width=\linewidth]{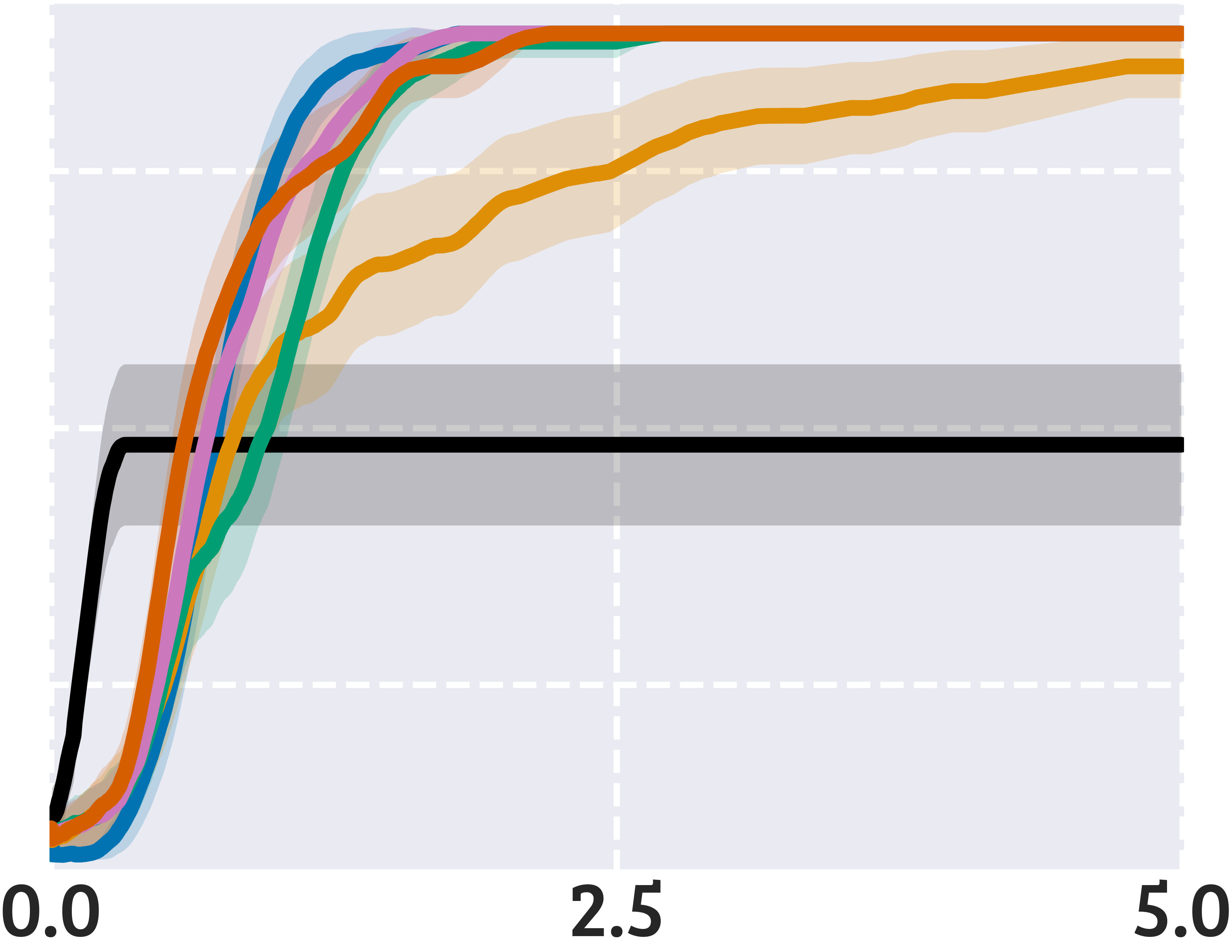}
    \end{subfigure}
    \hfill
    \begin{subfigure}[b]{0.155\linewidth}
        \centering
        \includegraphics[width=\linewidth]{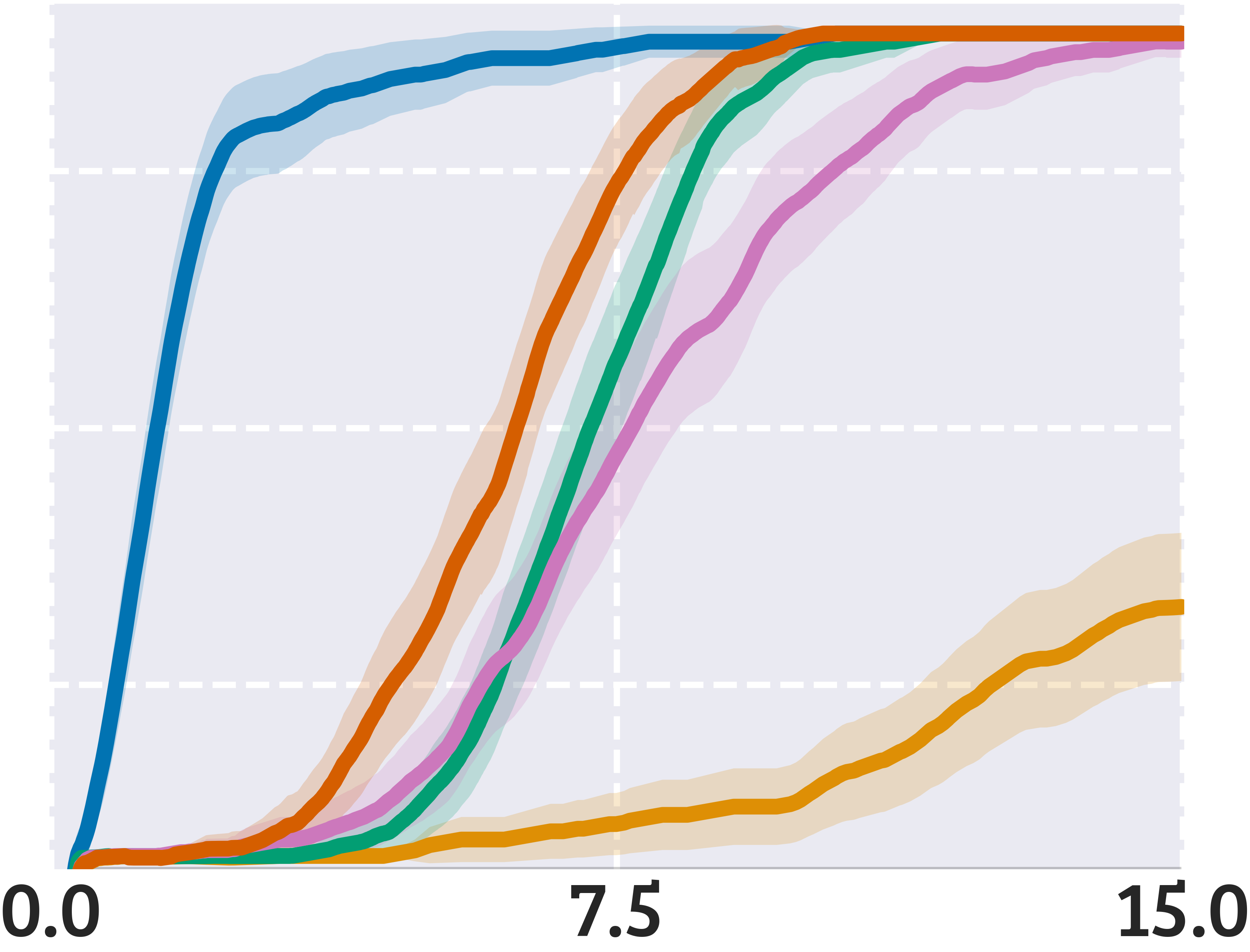}
    \end{subfigure}
    \hfill
    \begin{subfigure}[b]{0.155\linewidth}
        \centering
        \includegraphics[width=\linewidth]{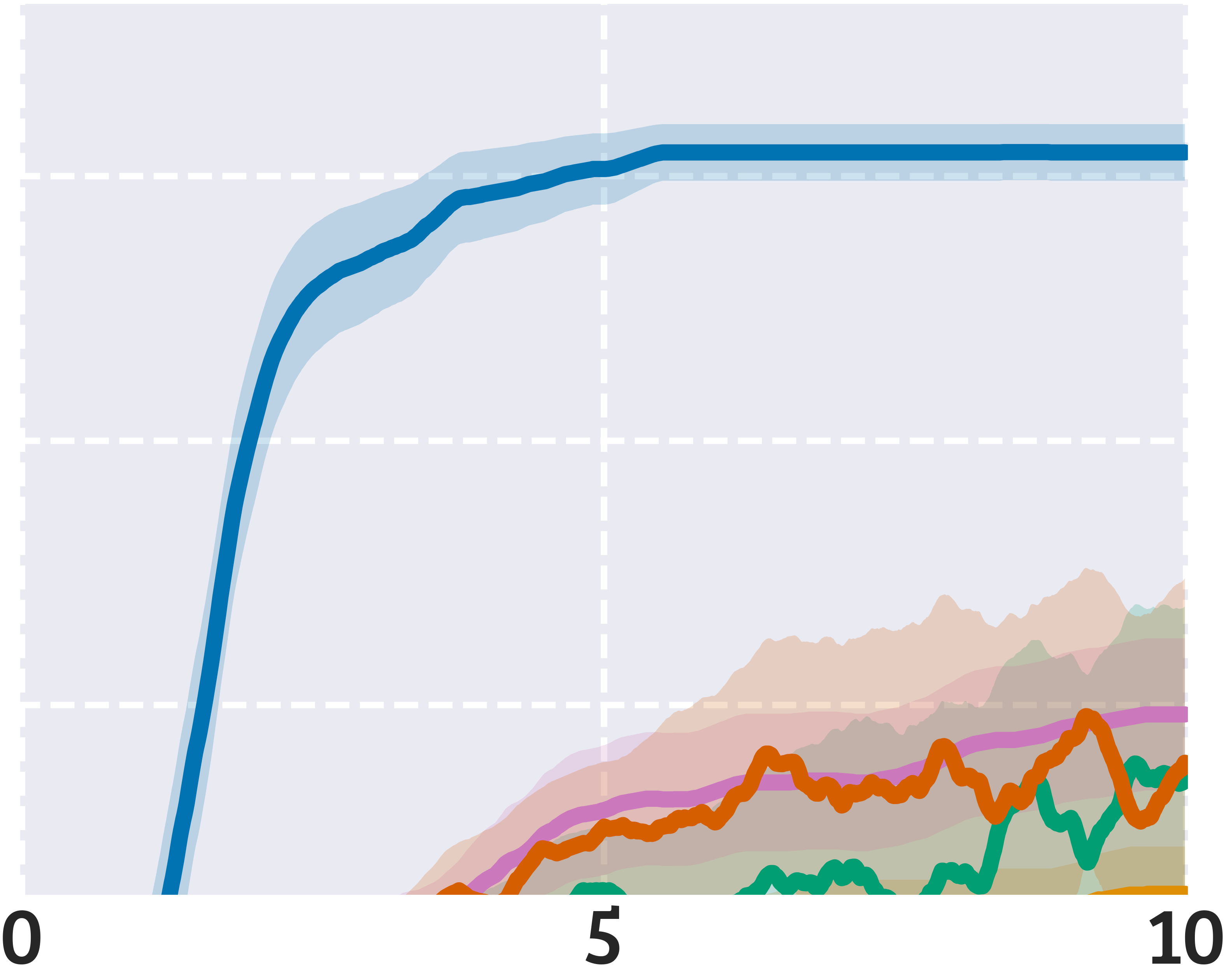}
    \end{subfigure}
    \hfill
    \begin{subfigure}[b]{0.155\linewidth}
        \centering
        \includegraphics[width=\linewidth]{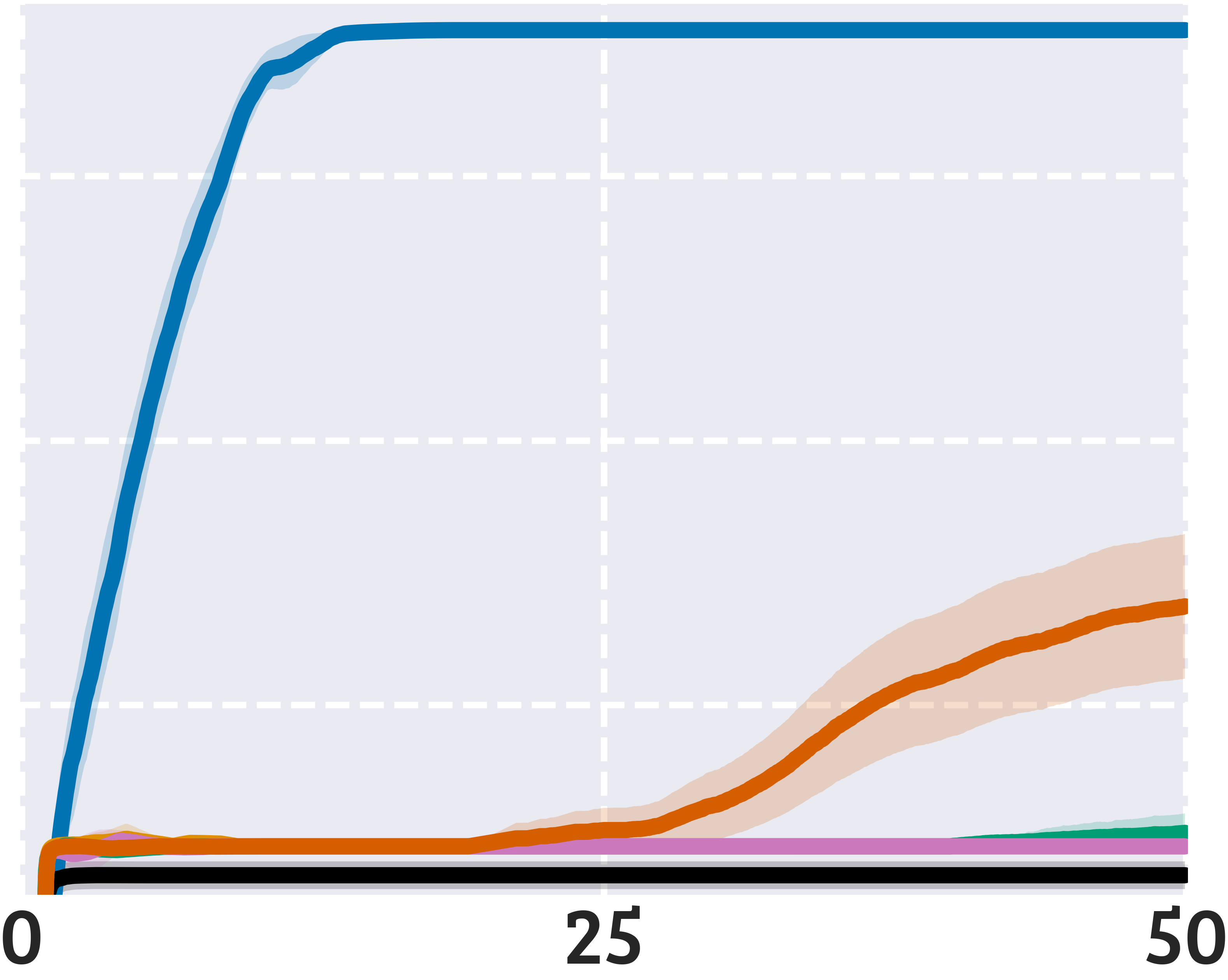}
    \end{subfigure}
    \hfill
    \begin{subfigure}[b]{0.155\linewidth}
        \centering
        \includegraphics[width=\linewidth]{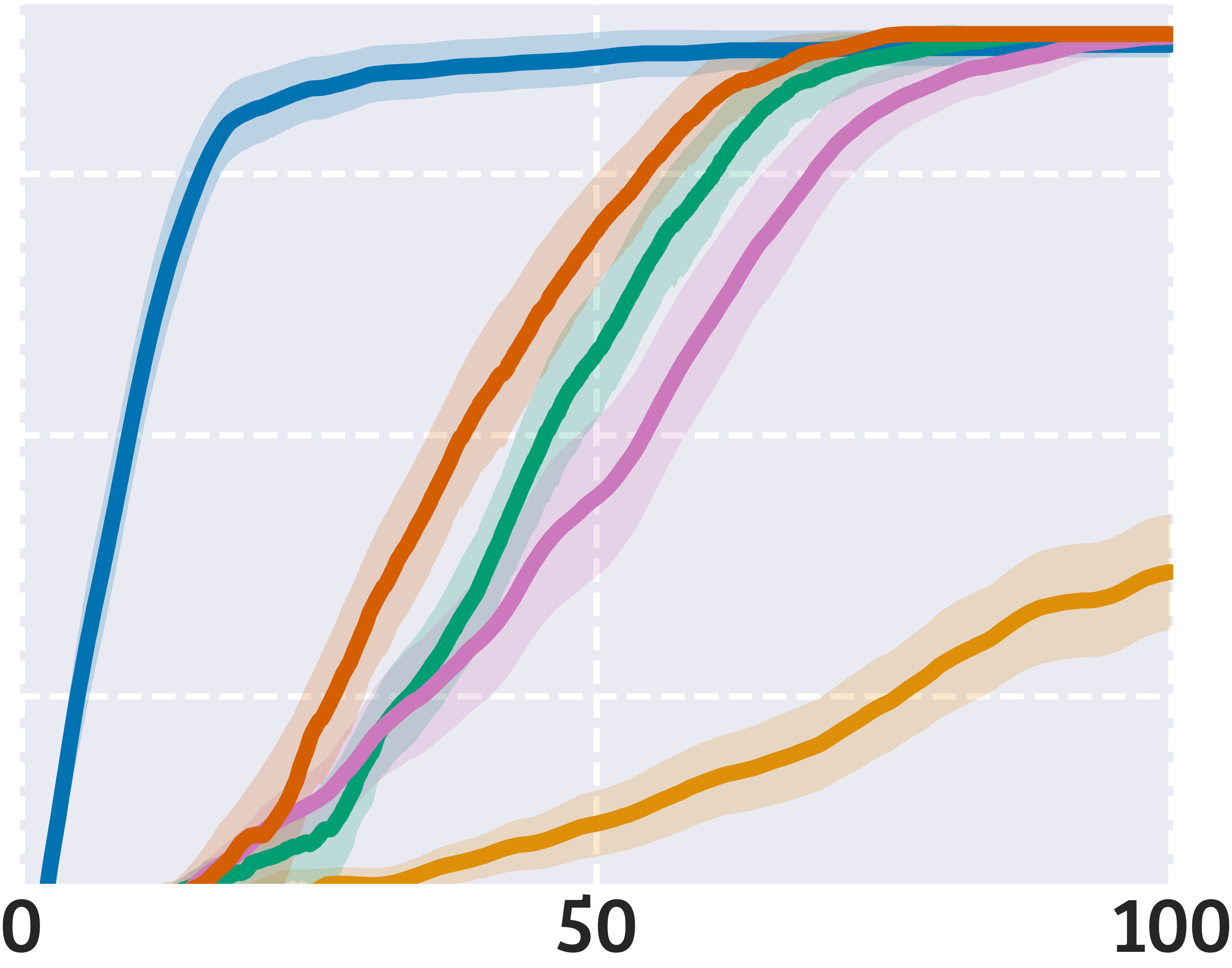}
    \end{subfigure}%
    }
    \\[3pt]
    \makebox[\linewidth][c]{%
    \raisebox{22pt}{\rotatebox[origin=t]{90}{\fontfamily{qbk}\tiny\textbf{Hazard}}}
    \hfill
    \begin{subfigure}[b]{0.165\linewidth}
        \centering
        \includegraphics[width=\linewidth]{fig/plots/return/iGym-Gridworlds_Quicksand-Distract-4x4-v0_iFullMonitor_.png}
    \end{subfigure}
    \hfill
    \begin{subfigure}[b]{0.155\linewidth}
        \centering
        \includegraphics[width=\linewidth]{fig/plots/return/iGym-Gridworlds_Quicksand-Distract-4x4-v0_iRandomNonZeroMonitor_.png}
    \end{subfigure}
    \hfill
    \begin{subfigure}[b]{0.155\linewidth}
        \centering
        \includegraphics[width=\linewidth]{fig/plots/return/iGym-Gridworlds_Quicksand-Distract-4x4-v0_iStatelessBinaryMonitor_.png}
    \end{subfigure}
    \hfill
    \begin{subfigure}[b]{0.155\linewidth}
        \centering
        \includegraphics[width=\linewidth]{fig/plots/return/iGym-Gridworlds_Quicksand-Distract-4x4-v0_iButtonMonitor_.png}
    \end{subfigure}
    \hfill
    \begin{subfigure}[b]{0.155\linewidth}
        \centering
        \includegraphics[width=\linewidth]{fig/plots/return/iGym-Gridworlds_Quicksand-Distract-4x4-v0_iNMonitor_nm4_.png}
    \end{subfigure}
    \hfill
    \begin{subfigure}[b]{0.155\linewidth}
        \centering
        \includegraphics[width=\linewidth]{fig/plots/return/iGym-Gridworlds_Quicksand-Distract-4x4-v0_iLevelMonitor_nl3_.png}
    \end{subfigure}%
    }
    \\[3pt]
    \makebox[\linewidth][c]{%
    \raisebox{22pt}{\rotatebox[origin=t]{90}{\fontfamily{qbk}\tiny\textbf{One-Way}}}
    \hfill
    \begin{subfigure}[b]{0.165\linewidth}
        \centering
        \includegraphics[width=\linewidth]{fig/plots/return/iGym-Gridworlds_Corridor-3x4-v0_iFullMonitor_.png}
    \end{subfigure}
    \hfill
    \begin{subfigure}[b]{0.155\linewidth}
        \centering
        \includegraphics[width=\linewidth]{fig/plots/return/iGym-Gridworlds_Corridor-3x4-v0_iRandomNonZeroMonitor_.png}
    \end{subfigure}
    \hfill
    \begin{subfigure}[b]{0.155\linewidth}
        \centering
        \includegraphics[width=\linewidth]{fig/plots/return/iGym-Gridworlds_Corridor-3x4-v0_iStatelessBinaryMonitor_.png}
    \end{subfigure}
    \hfill
    \begin{subfigure}[b]{0.155\linewidth}
        \centering
        \includegraphics[width=\linewidth]{fig/plots/return/iGym-Gridworlds_Corridor-3x4-v0_iButtonMonitor_.png}
    \end{subfigure}
    \hfill
    \begin{subfigure}[b]{0.155\linewidth}
        \centering
        \includegraphics[width=\linewidth]{fig/plots/return/iGym-Gridworlds_Corridor-3x4-v0_iNMonitor_nm4_.png}
    \end{subfigure}
    \hfill
    \begin{subfigure}[b]{0.155\linewidth}
        \centering
        \includegraphics[width=\linewidth]{fig/plots/return/iGym-Gridworlds_Corridor-3x4-v0_iLevelMonitor_nl3_.png}
    \end{subfigure}%
    }
    \\[3pt]
    \makebox[\linewidth][c]{%
    \raisebox{22pt}{\rotatebox[origin=t]{90}{\fontfamily{qbk}\tiny\textbf{Corridor}}}
    \hfill
    \begin{subfigure}[b]{0.165\linewidth}
        \centering
        \includegraphics[width=\linewidth]{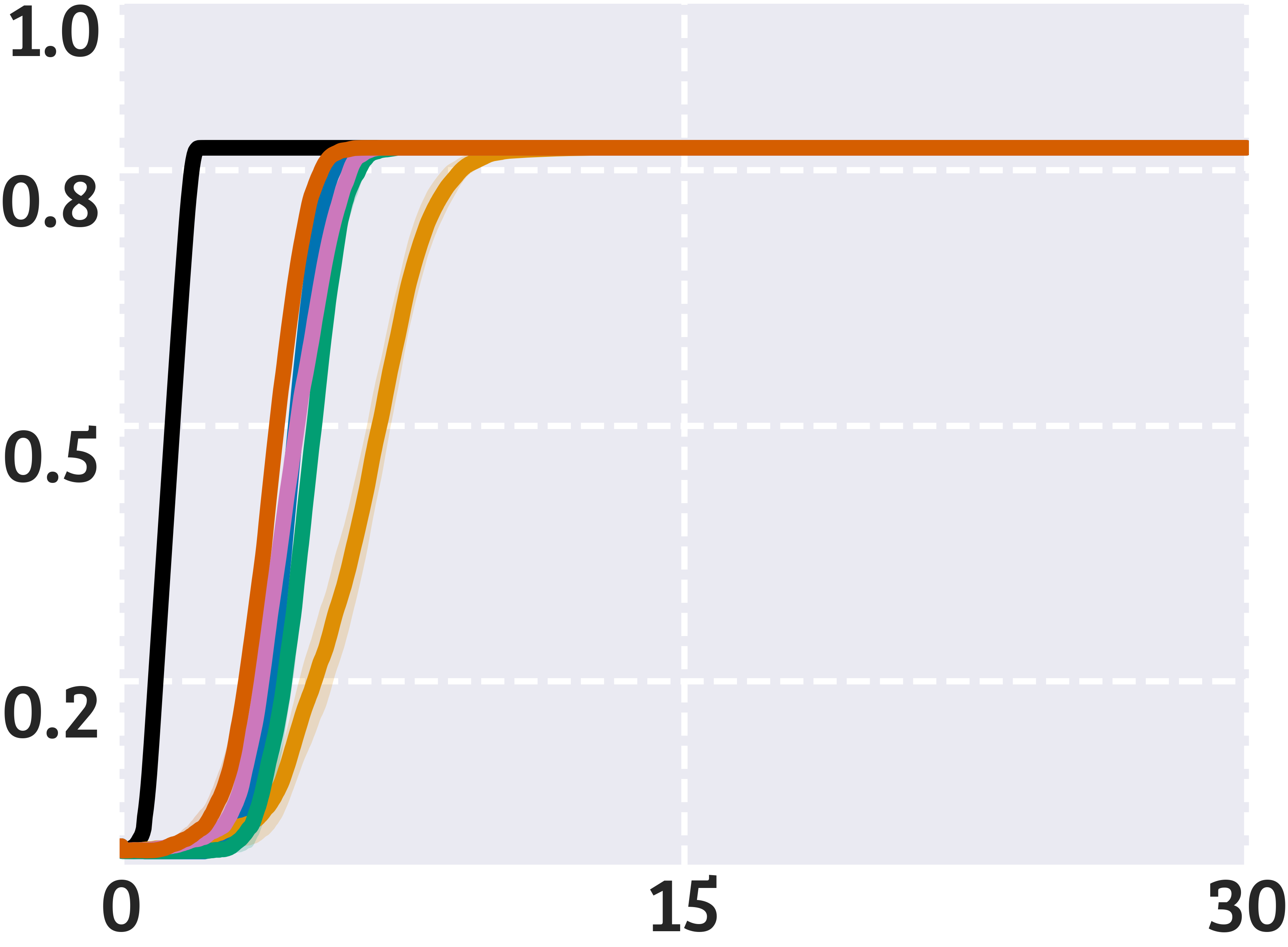}
    \end{subfigure}
    \hfill
    \begin{subfigure}[b]{0.155\linewidth}
        \centering
        \includegraphics[width=\linewidth]{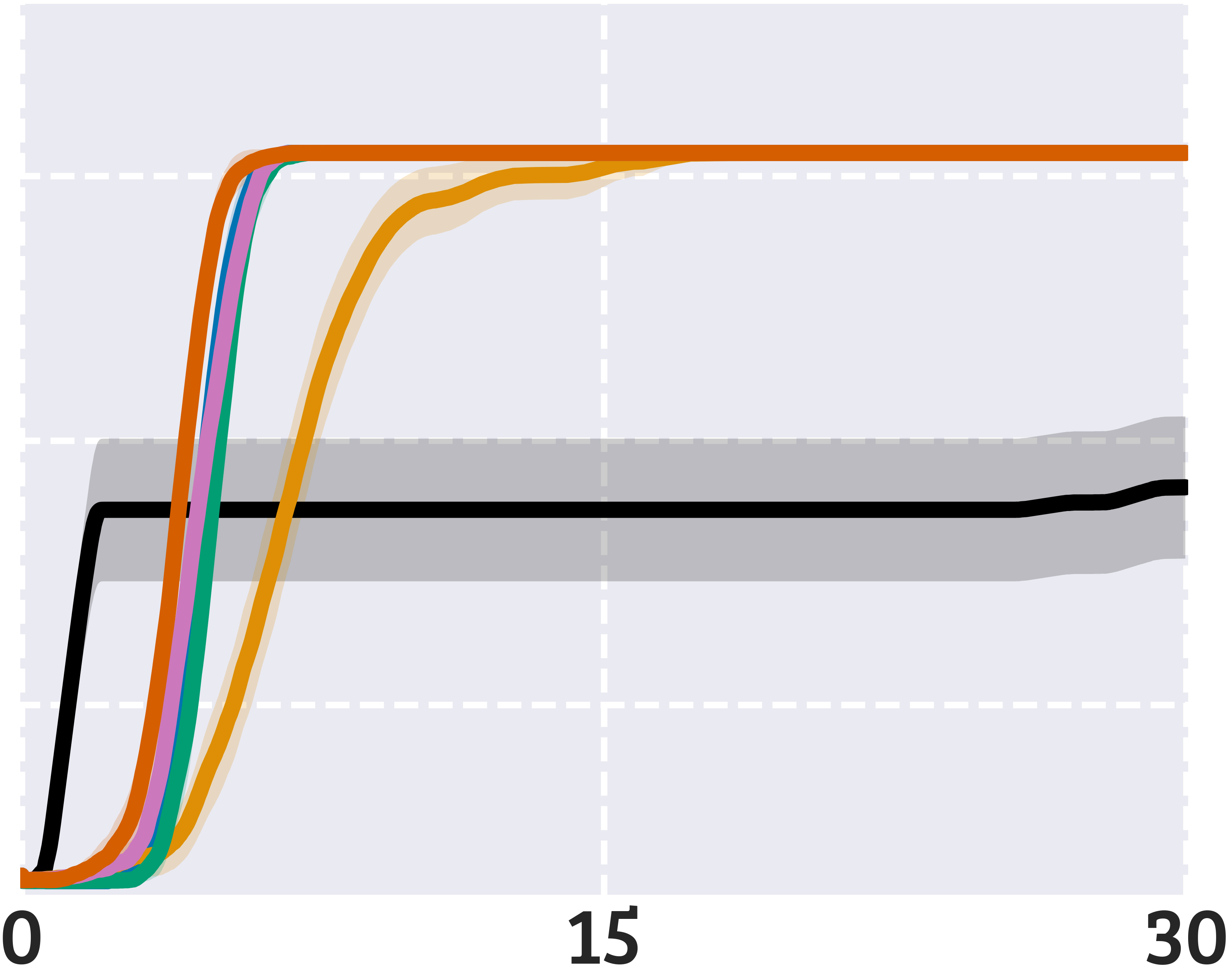}
    \end{subfigure}
    \hfill
    \begin{subfigure}[b]{0.155\linewidth}
        \centering
        \includegraphics[width=\linewidth]{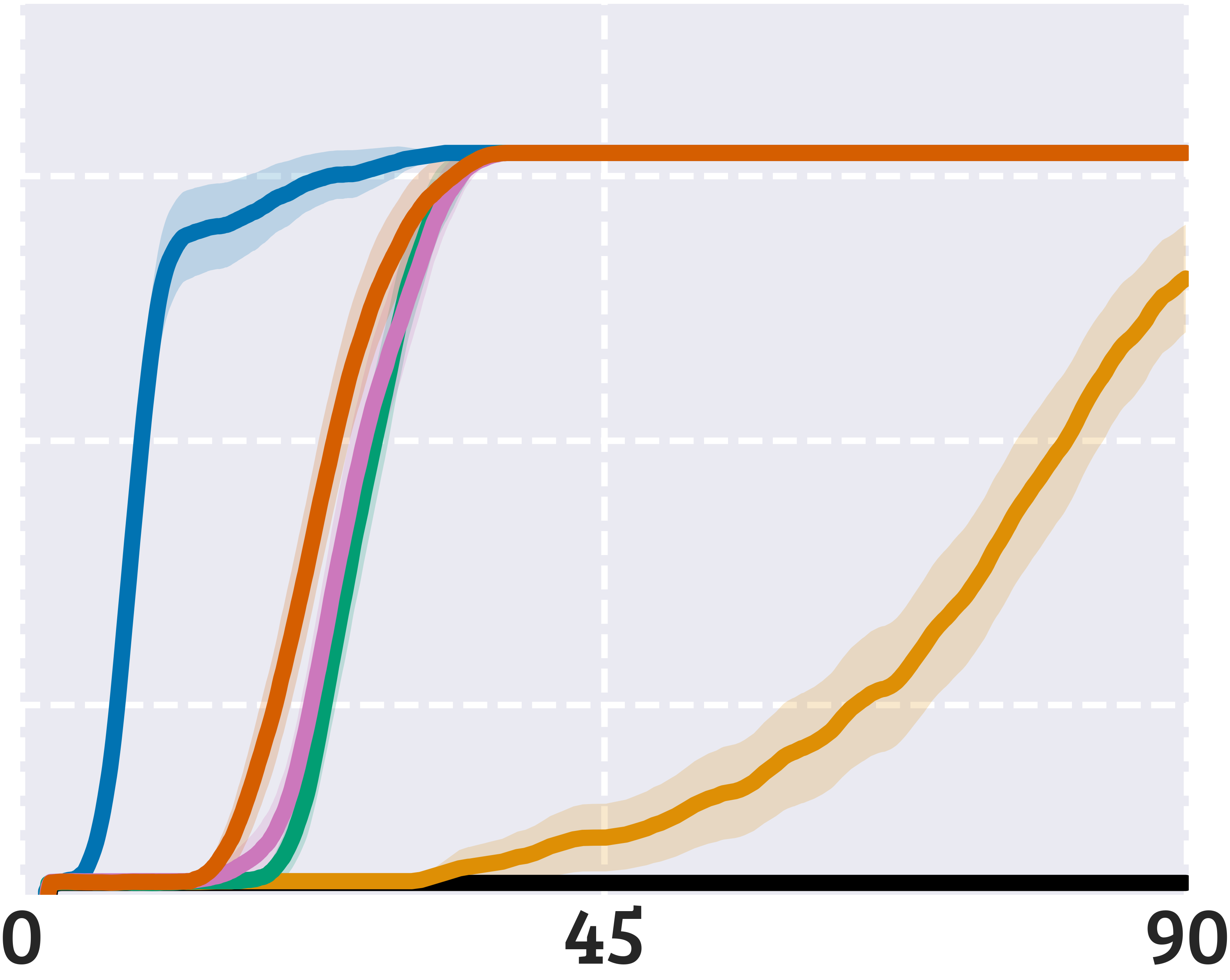}
    \end{subfigure}
    \hfill
    \begin{subfigure}[b]{0.155\linewidth}
        \centering
        \includegraphics[width=\linewidth]{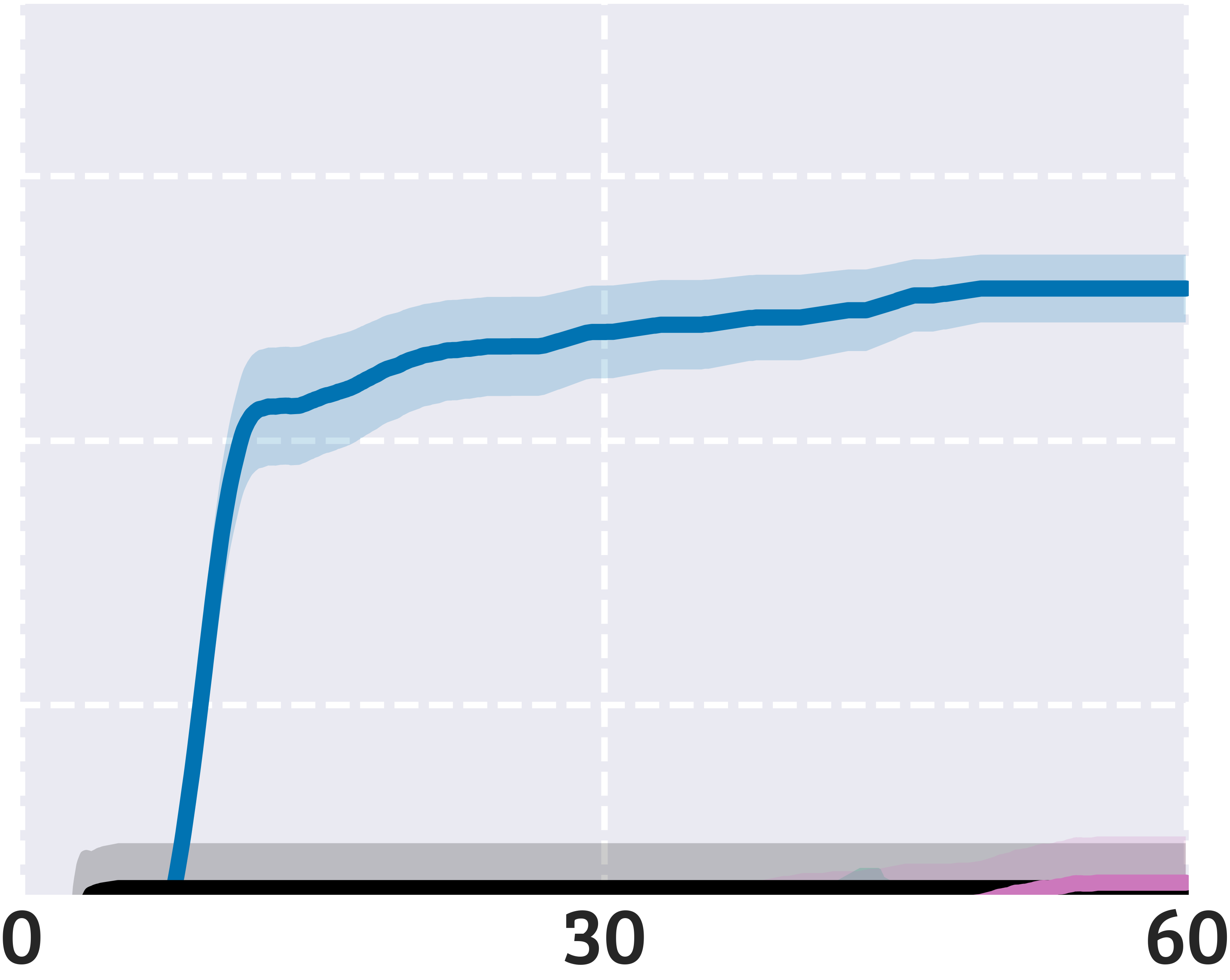}
    \end{subfigure}
    \hfill
    \begin{subfigure}[b]{0.155\linewidth}
        \centering
        \includegraphics[width=\linewidth]{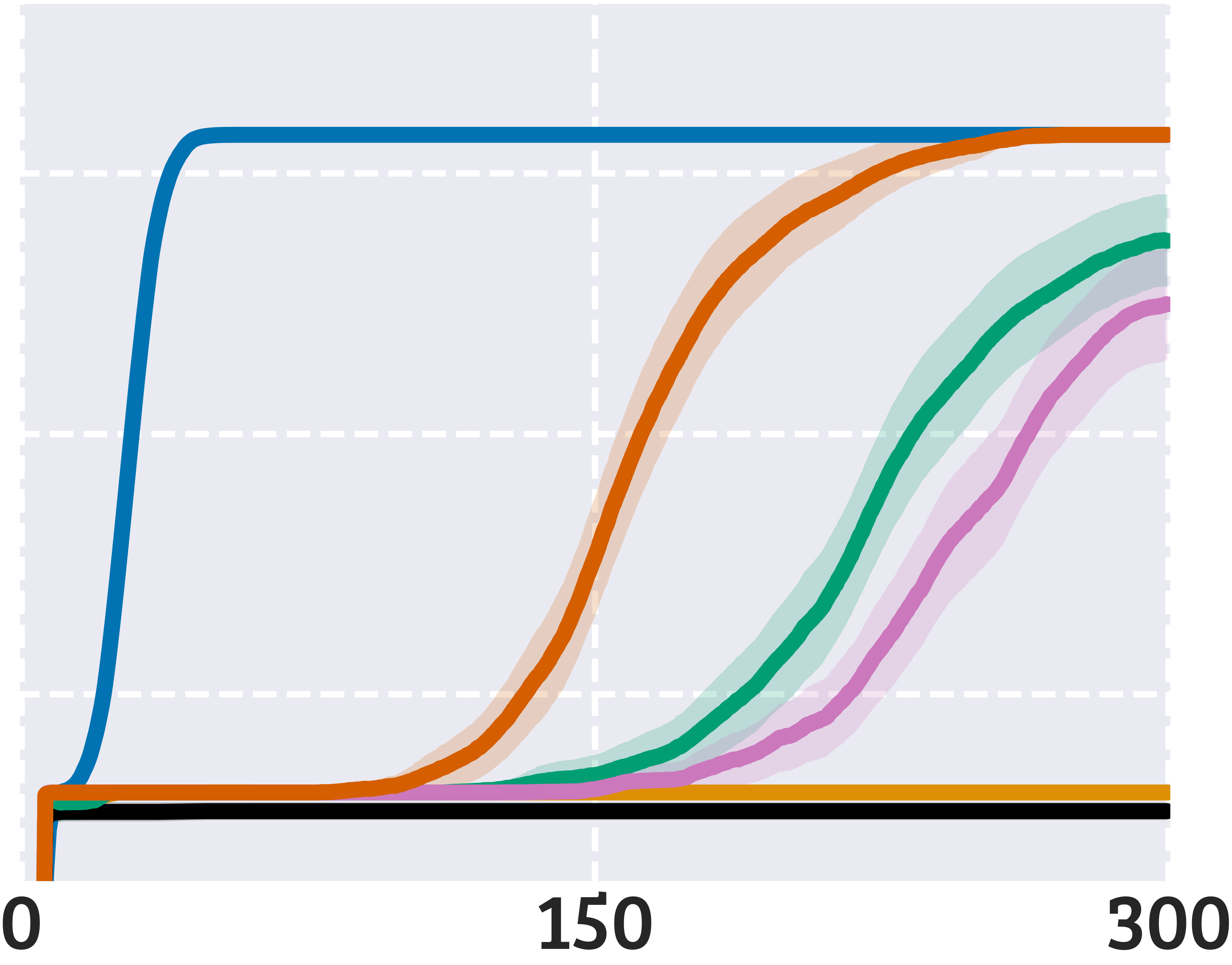}
    \end{subfigure}
    \hfill
    \begin{subfigure}[b]{0.155\linewidth}
        \centering
        \includegraphics[width=\linewidth]{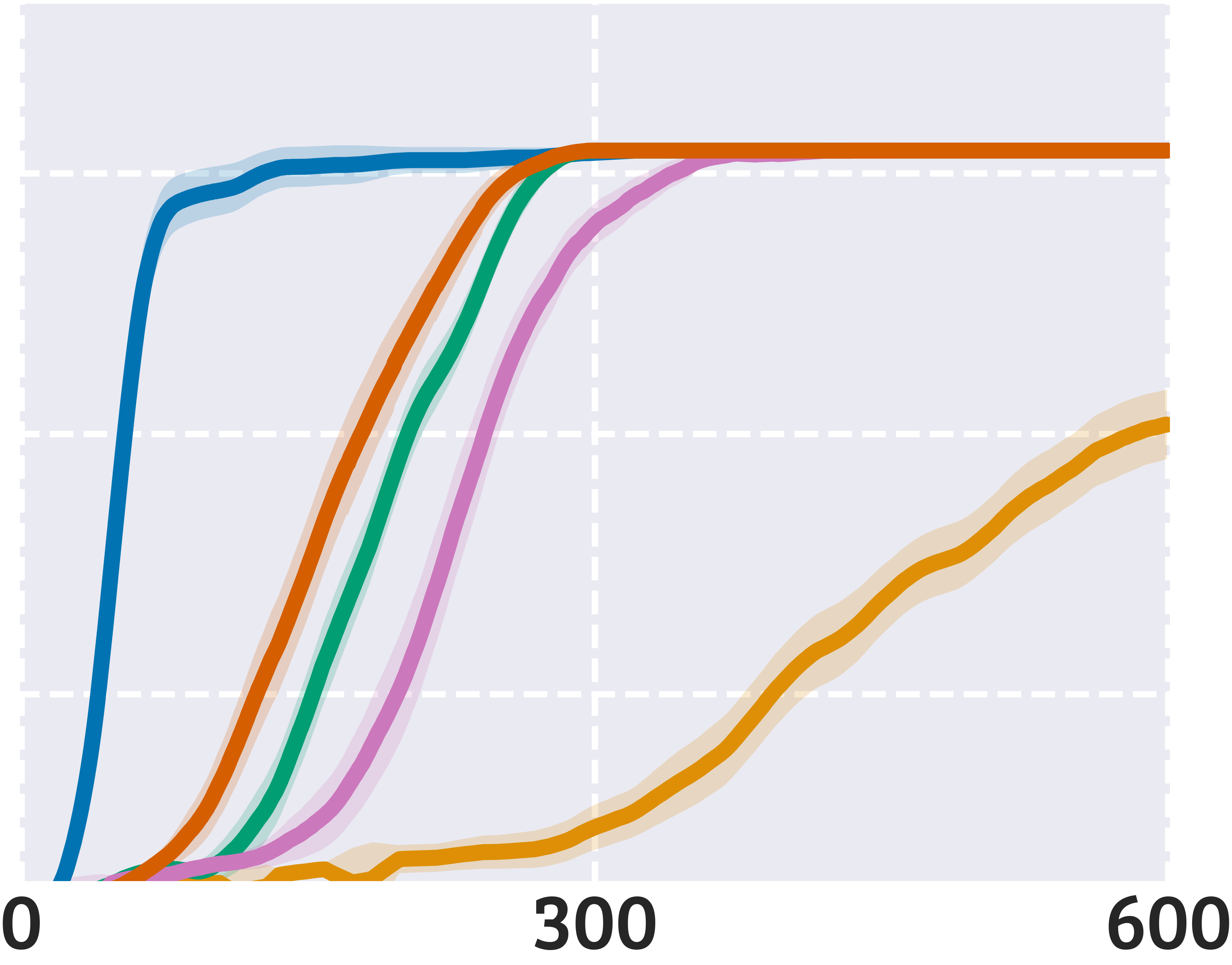}
    \end{subfigure}%
    }
    \\[3pt]
    \makebox[\linewidth][c]{%
    \raisebox{22pt}{\rotatebox[origin=t]{90}{\fontfamily{qbk}\tiny\textbf{Two-Room 2$\times$11}}}
    \hfill
    \begin{subfigure}[b]{0.165\linewidth}
        \centering
        \includegraphics[width=\linewidth]{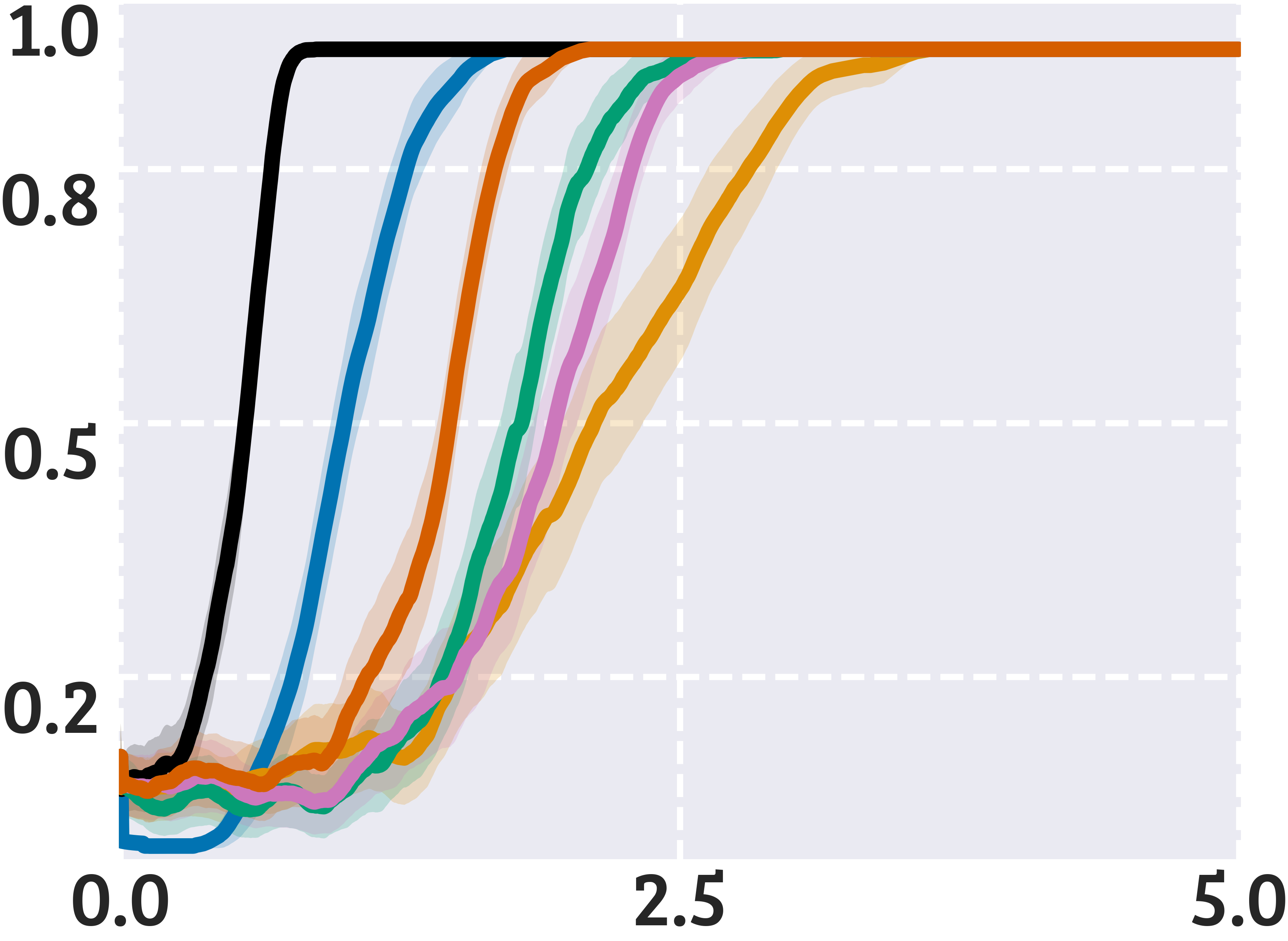}
    \end{subfigure}
    \hfill
    \begin{subfigure}[b]{0.155\linewidth}
        \centering
        \includegraphics[width=\linewidth]{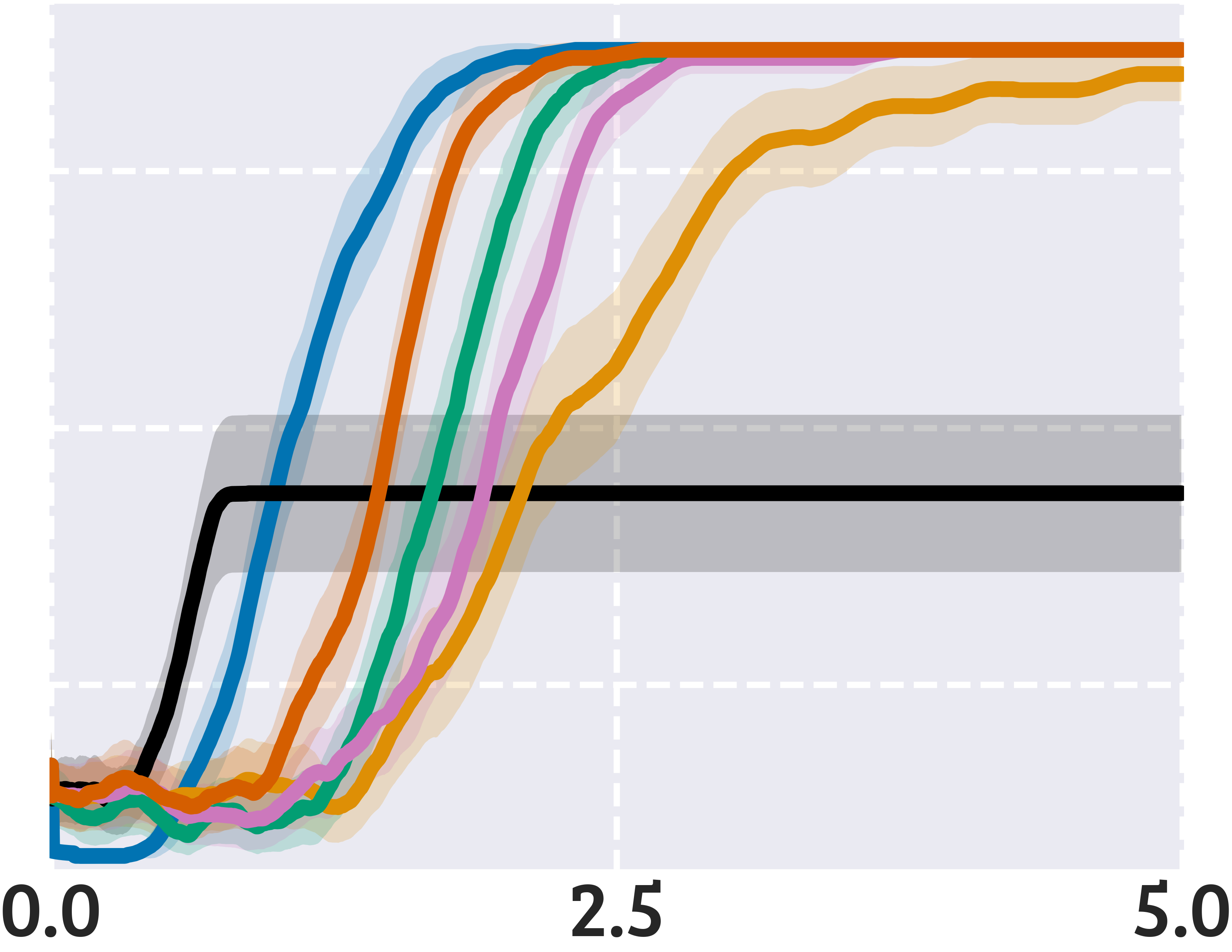}
    \end{subfigure}
    \hfill
    \begin{subfigure}[b]{0.155\linewidth}
        \centering
        \includegraphics[width=\linewidth]{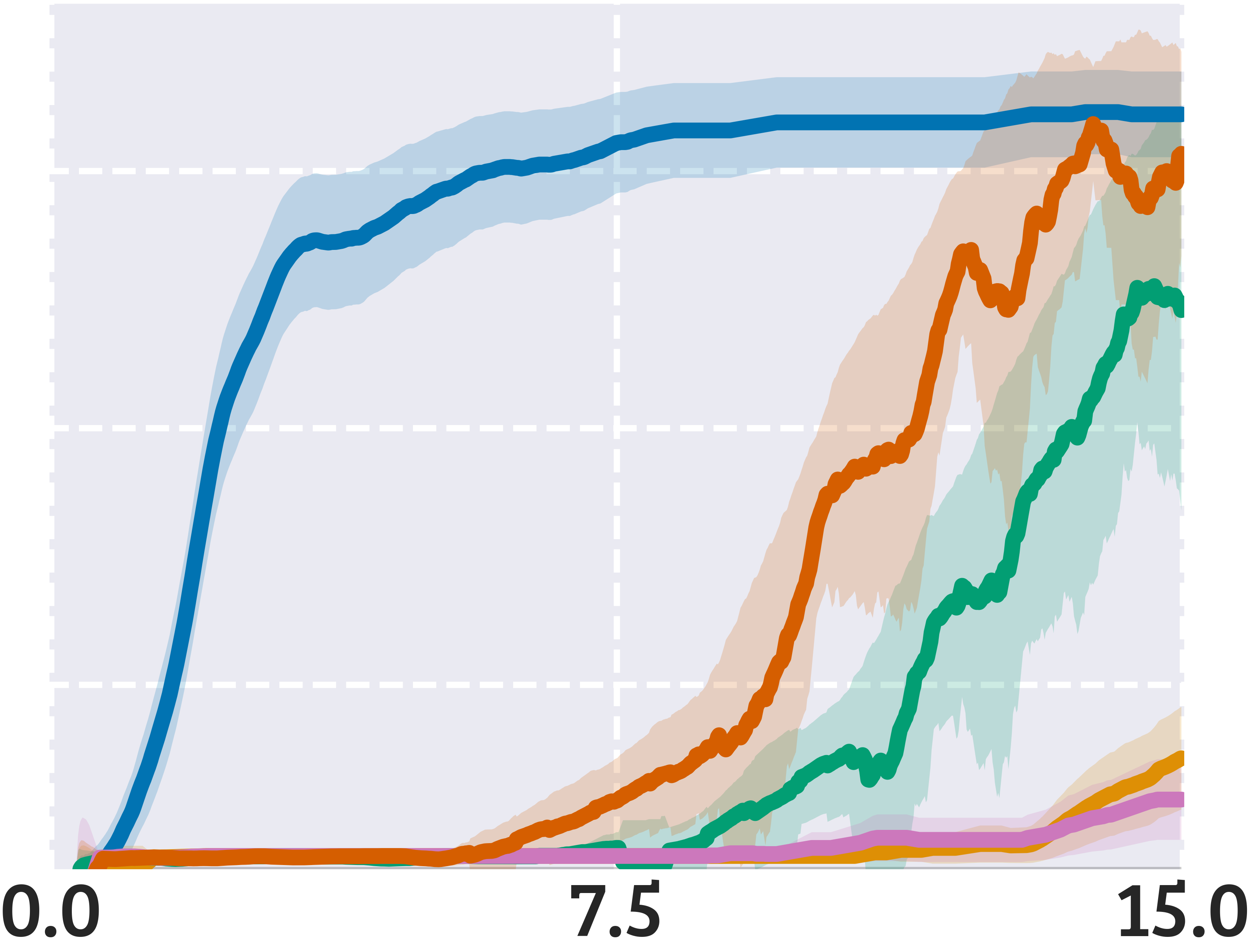}
    \end{subfigure}
    \hfill
    \begin{subfigure}[b]{0.155\linewidth}
        \centering
        \includegraphics[width=\linewidth]{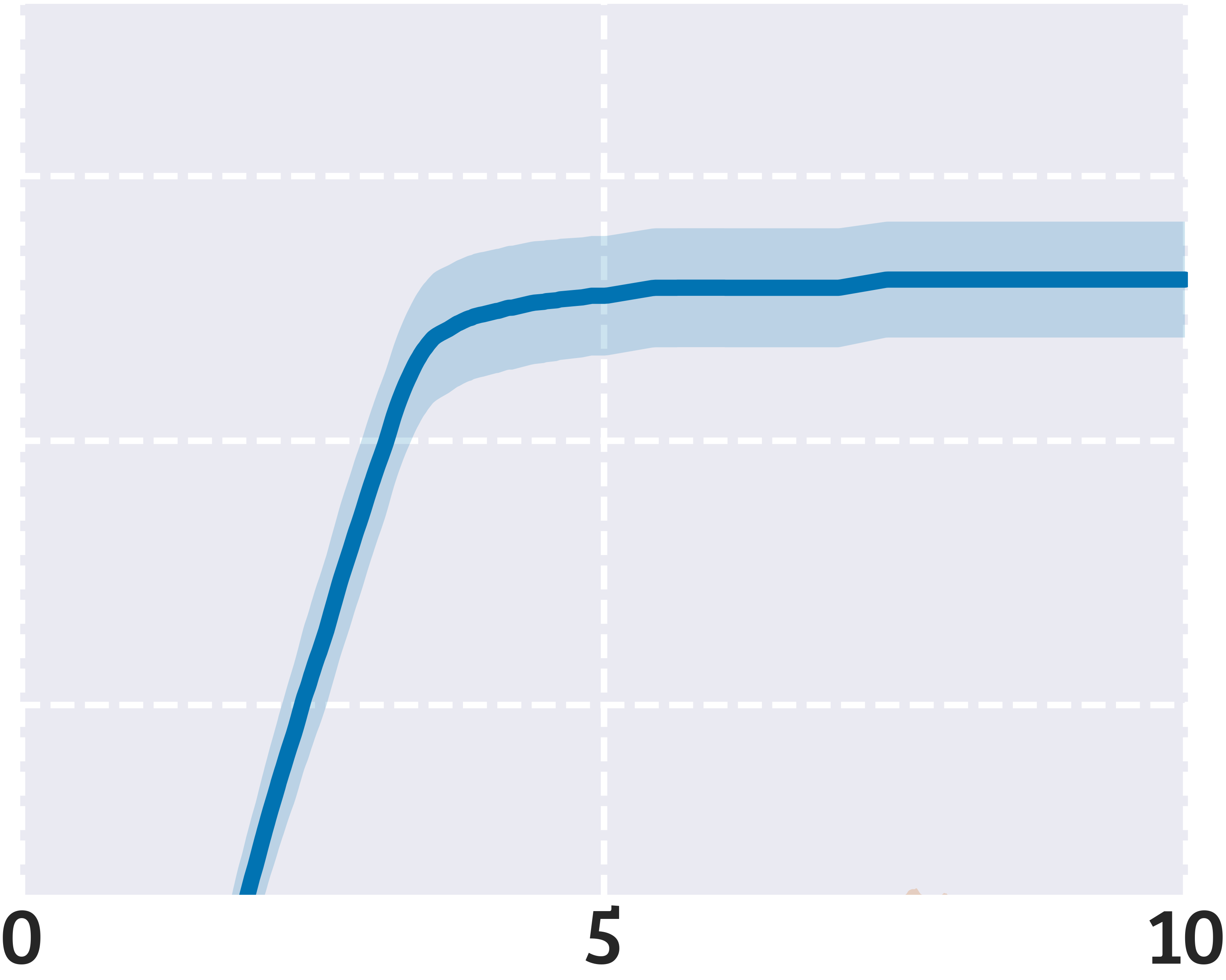}
    \end{subfigure}
    \hfill
    \begin{subfigure}[b]{0.155\linewidth}
        \centering
        \includegraphics[width=\linewidth]{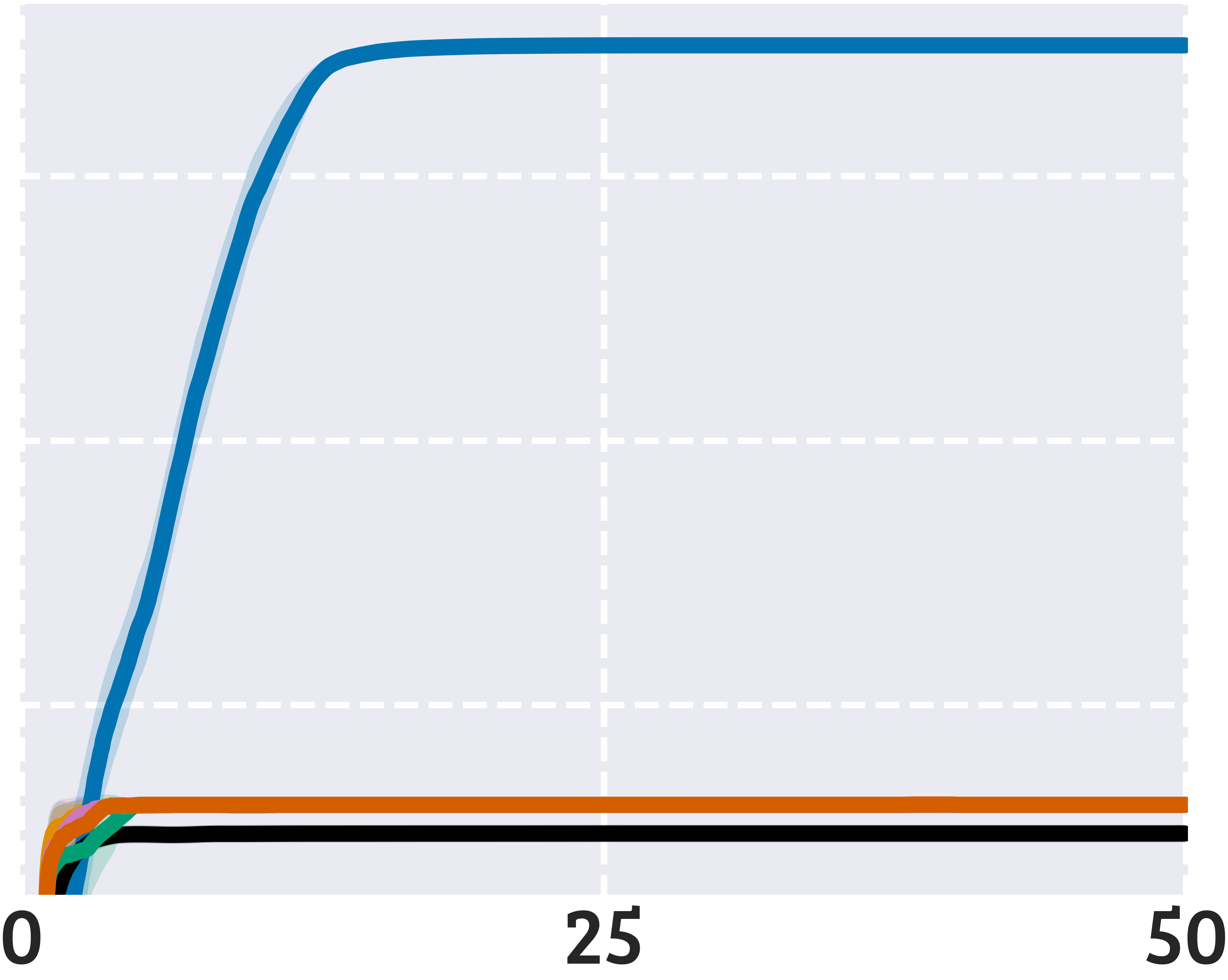}
    \end{subfigure}
    \hfill
    \begin{subfigure}[b]{0.155\linewidth}
        \centering
        \includegraphics[width=\linewidth]{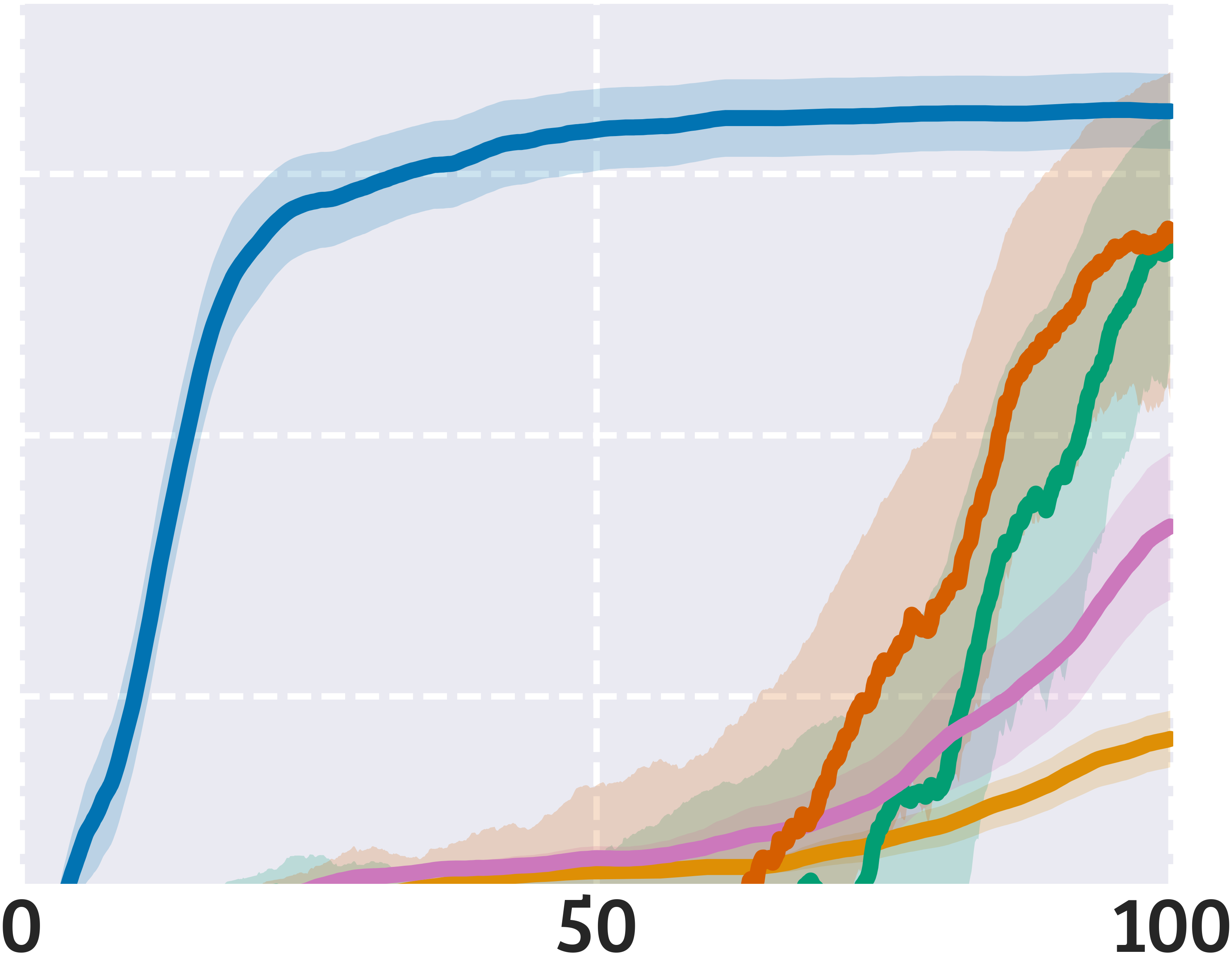}
    \end{subfigure}%
    }
    \\[3pt]
    \makebox[\linewidth][c]{%
    \raisebox{22pt}{\rotatebox[origin=t]{90}{\fontfamily{qbk}\tiny\textbf{Two-Room 3$\times$5}}}
    \hfill
    \begin{subfigure}[b]{0.165\linewidth}
        \centering
        \includegraphics[width=\linewidth]{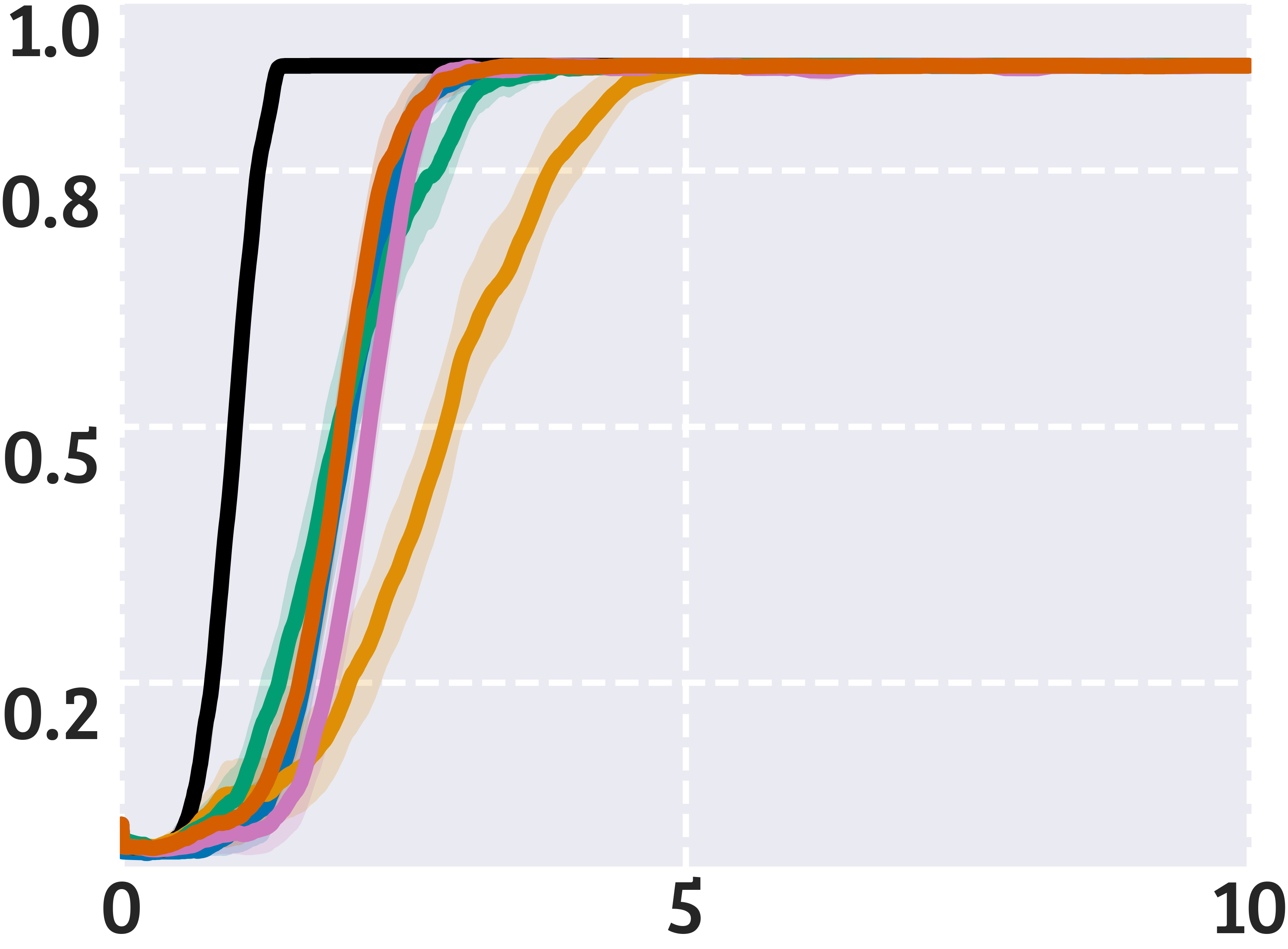}
    \end{subfigure}
    \hfill
    \begin{subfigure}[b]{0.155\linewidth}
        \centering
        \includegraphics[width=\linewidth]{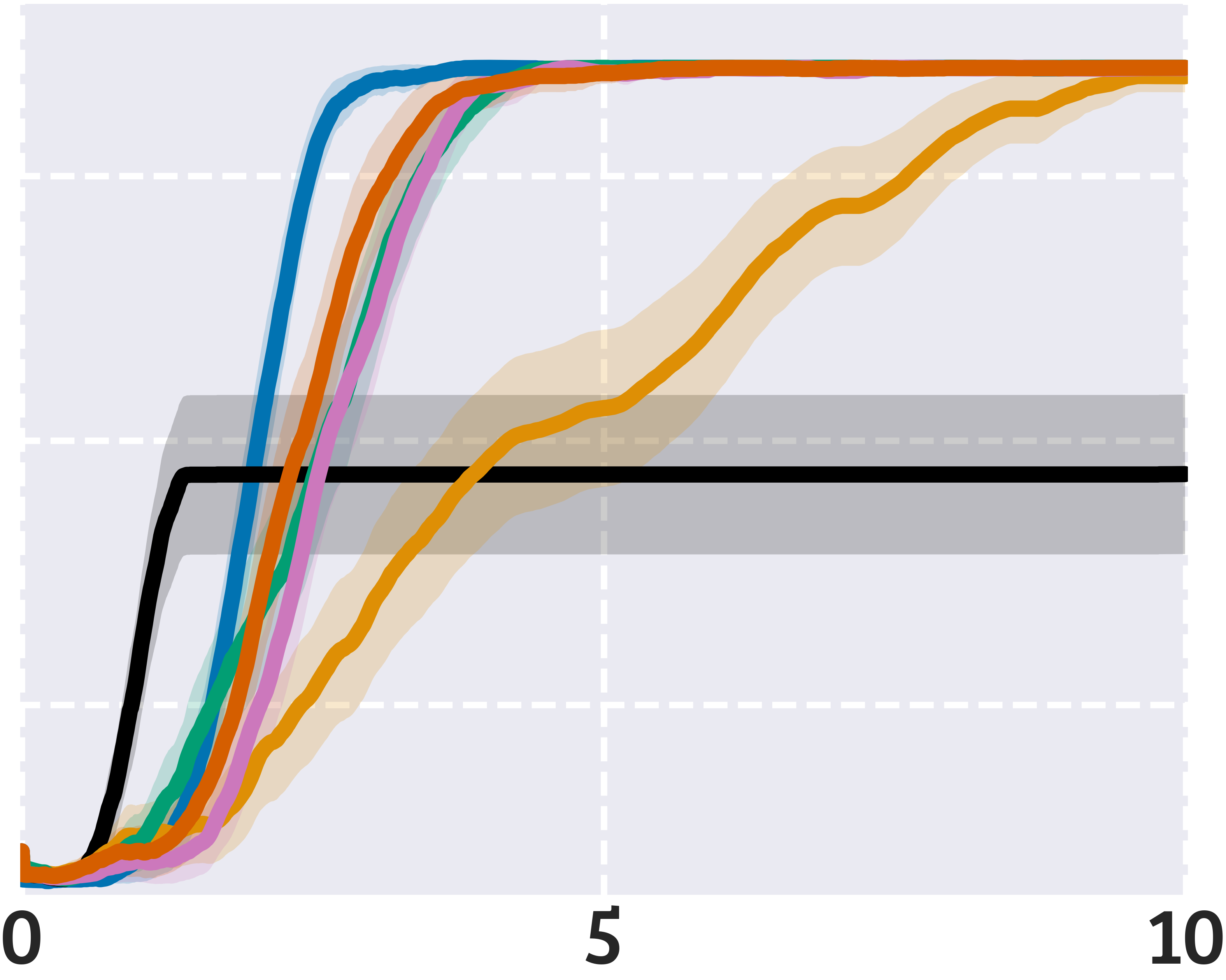}
    \end{subfigure}
    \hfill
    \begin{subfigure}[b]{0.155\linewidth}
        \centering
        \includegraphics[width=\linewidth]{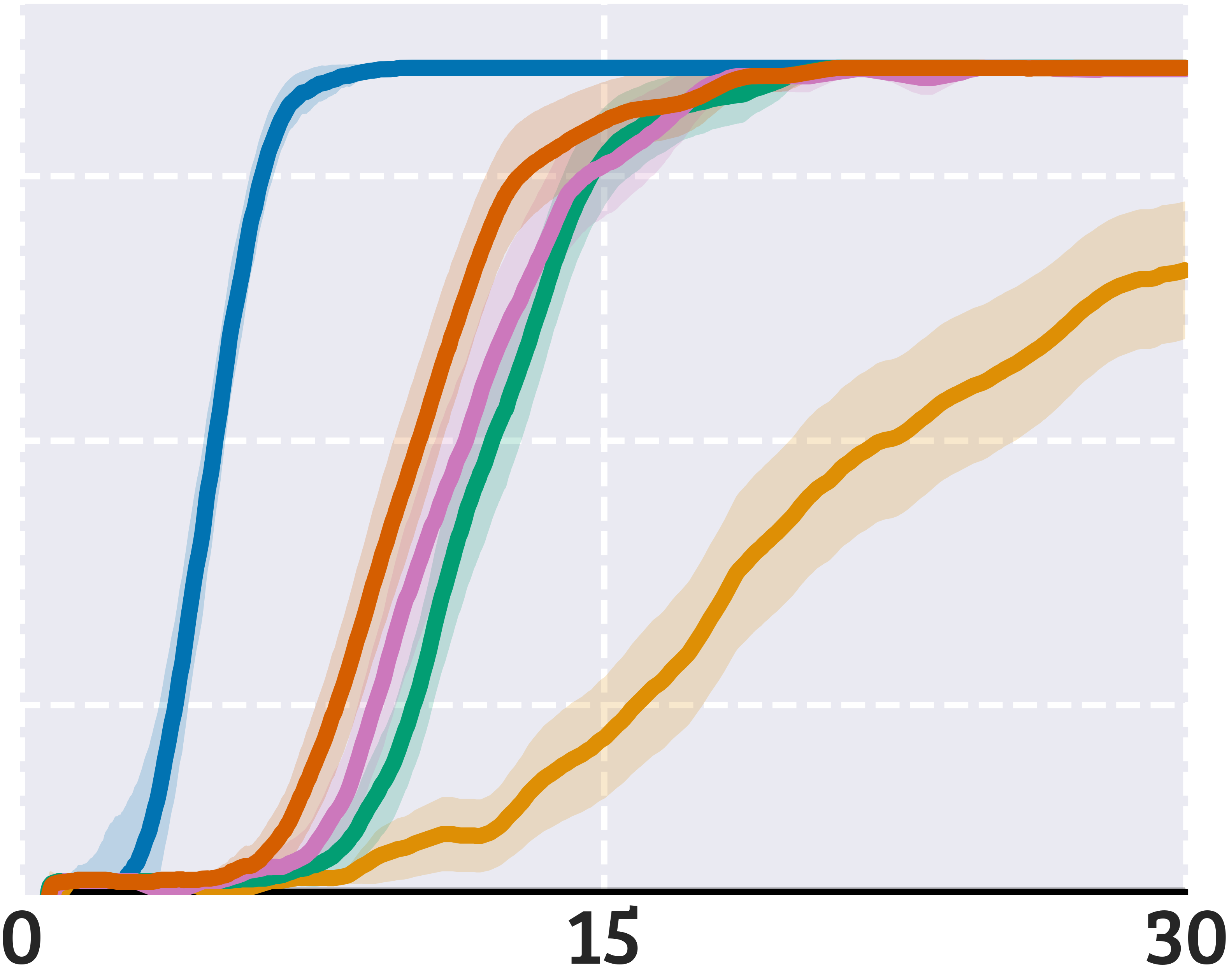}
    \end{subfigure}
    \hfill
    \begin{subfigure}[b]{0.155\linewidth}
        \centering
        \includegraphics[width=\linewidth]{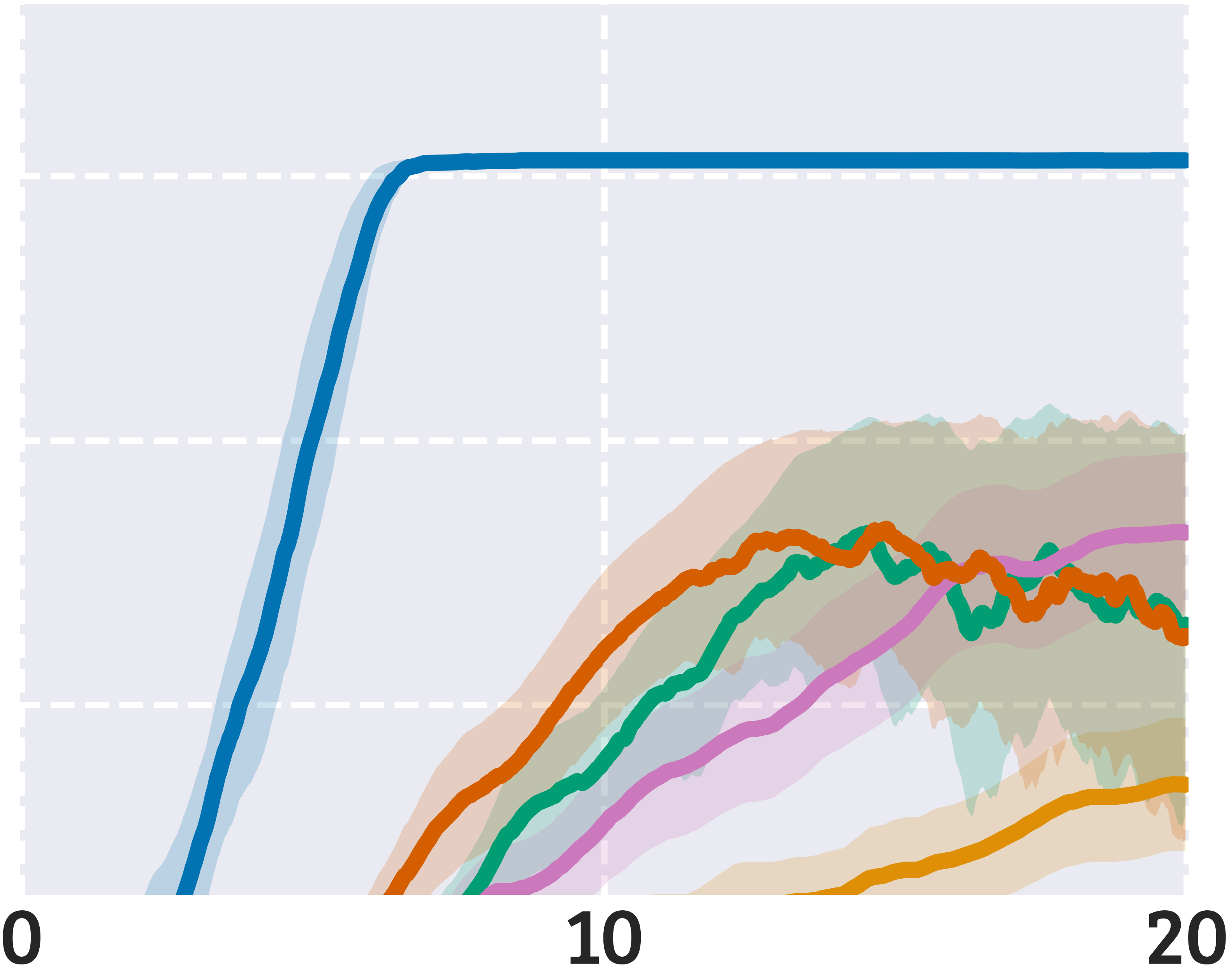}
    \end{subfigure}
    \hfill
    \begin{subfigure}[b]{0.155\linewidth}
        \centering
        \includegraphics[width=\linewidth]{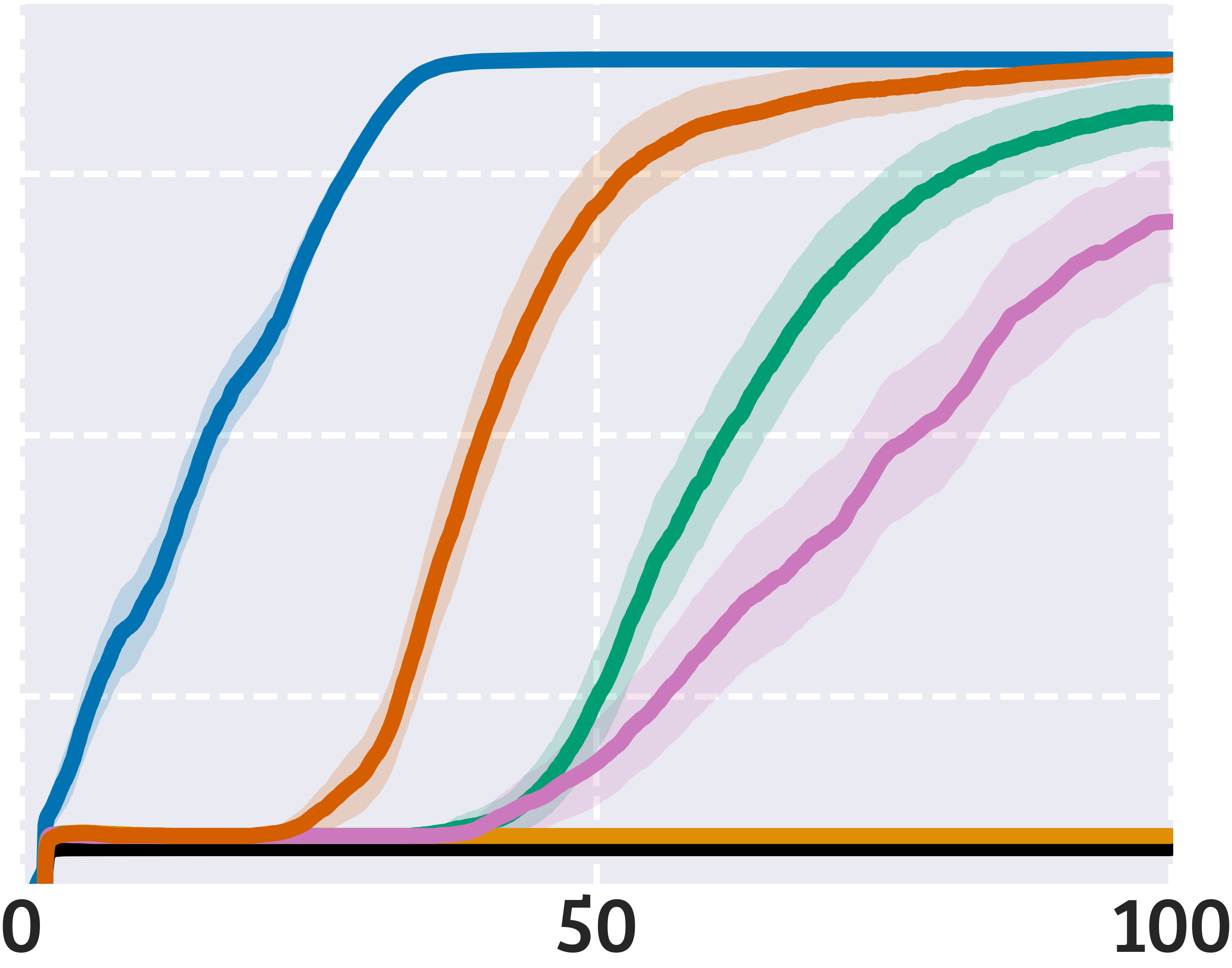}
    \end{subfigure}
    \hfill
    \begin{subfigure}[b]{0.155\linewidth}
        \centering
        \includegraphics[width=\linewidth]{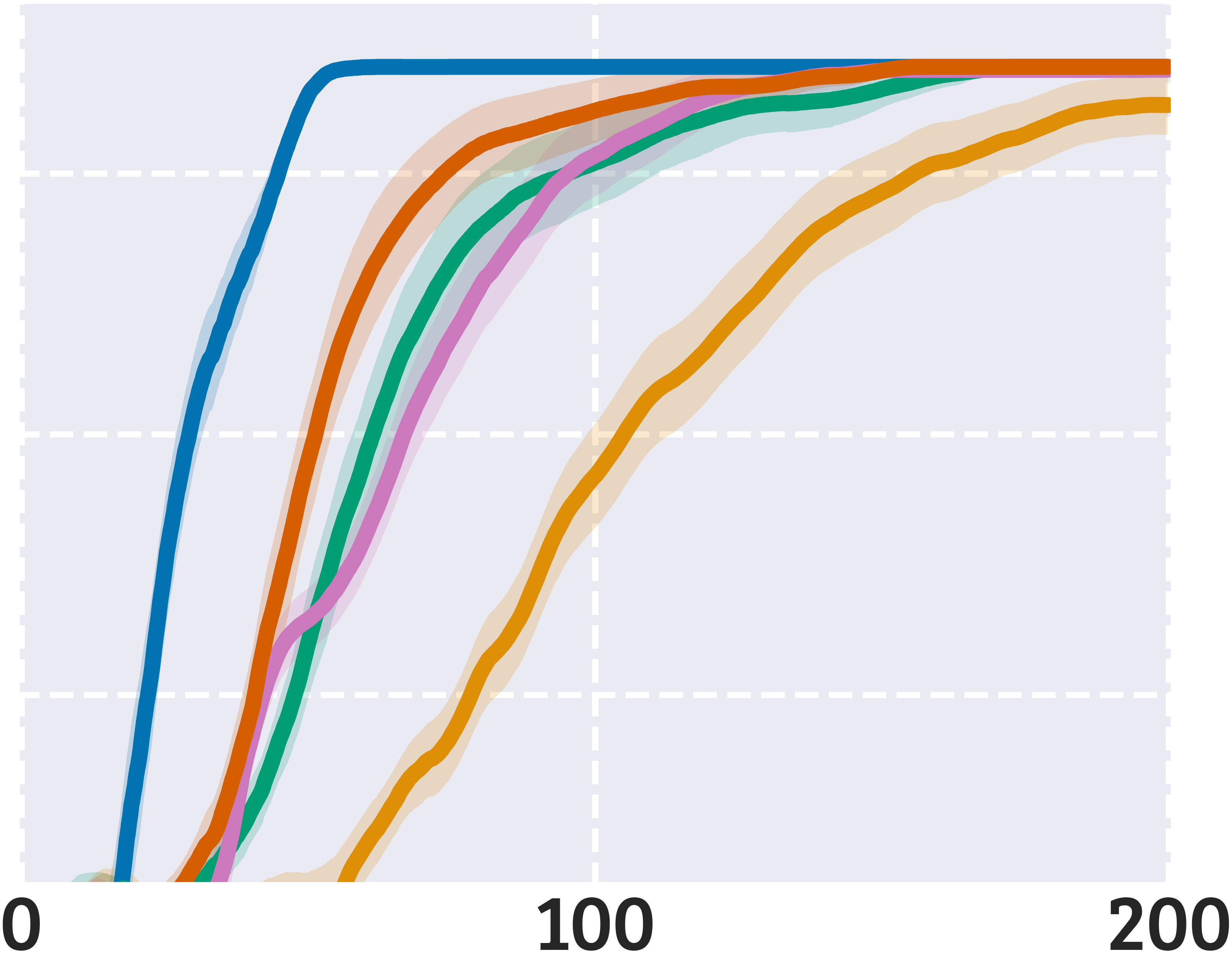}
    \end{subfigure}%
    }
    \\[3pt]
    \makebox[\linewidth][c]{%
    \raisebox{22pt}{\rotatebox[origin=t]{90}{\fontfamily{qbk}\tiny\textbf{River Swim}}}
    \hfill
    \begin{subfigure}[b]{0.165\linewidth}
        \centering
        \includegraphics[width=\linewidth]{fig/plots/return/iGym-Gridworlds_RiverSwim-6-v0_iFullMonitor_.png}
    \end{subfigure}
    \hfill
    \begin{subfigure}[b]{0.155\linewidth}
        \centering
        \includegraphics[width=\linewidth]{fig/plots/return/iGym-Gridworlds_RiverSwim-6-v0_iRandomNonZeroMonitor_.png}
    \end{subfigure}
    \hfill
    \begin{subfigure}[b]{0.155\linewidth}
        \centering
        \includegraphics[width=\linewidth]{fig/plots/return/iGym-Gridworlds_RiverSwim-6-v0_iStatelessBinaryMonitor_.png}
    \end{subfigure}
    \hfill
    \begin{subfigure}[b]{0.155\linewidth}
        \centering
        \includegraphics[width=\linewidth]{fig/plots/return/iGym-Gridworlds_RiverSwim-6-v0_iButtonMonitor_.png}
    \end{subfigure}
    \hfill
    \begin{subfigure}[b]{0.155\linewidth}
        \centering
        \includegraphics[width=\linewidth]{fig/plots/return/iGym-Gridworlds_RiverSwim-6-v0_iNMonitor_nm4_.png}
    \end{subfigure}
    \hfill
    \begin{subfigure}[b]{0.155\linewidth}
        \centering
        \includegraphics[width=\linewidth]{fig/plots/return/iGym-Gridworlds_RiverSwim-6-v0_iLevelMonitor_nl3_.png}
    \end{subfigure}%
    }
    \\[-1pt]
    {\fontfamily{qbk}\tiny\textbf{Training Steps (1e\textsuperscript{3})}}
    \caption{\label{fig:returns_appendix}\textbf{Episode return $\smash{{\footnotesize\sum}(\renv_t + \rmon_t)}$ of greedy policies averaged over 100 seeds (shades denote 95\% confidence interval).}}
\end{figure}

\clearpage

\begin{figure}[ht]
    \setlength{\belowcaptionskip}{0pt}
    \setlength{\abovecaptionskip}{6pt}
    \centering
    \includegraphics[width=0.75\linewidth]{fig/plots/legend.png}
    \\[3pt]
    \makebox[\linewidth][c]{%
    \raisebox{22pt}{\rotatebox[origin=t]{90}{\fontfamily{qbk}\tiny\textbf{Empty}}}
    \hfill
    \begin{subfigure}[b]{0.16\linewidth}
        \centering
        {\fontfamily{qbk}\tiny\textbf{Full Observ.}}\\[1.4pt]
        \includegraphics[width=\linewidth]{fig/plots/visited_r_sum/iGym-Gridworlds_Empty-Distract-6x6-v0_iFullMonitor_.png}
    \end{subfigure}
    \hfill
    \begin{subfigure}[b]{0.16\linewidth}
        \centering
        {\fontfamily{qbk}\tiny\textbf{Random Observ.}}\\[1.4pt]
        \includegraphics[width=\linewidth]{fig/plots/visited_r_sum/iGym-Gridworlds_Empty-Distract-6x6-v0_iRandomNonZeroMonitor_.png}
    \end{subfigure}
    \hfill
    \begin{subfigure}[b]{0.16\linewidth}
        \centering
        {\fontfamily{qbk}\tiny\textbf{Ask}}\\[1.4pt]
        \includegraphics[width=\linewidth]{fig/plots/visited_r_sum/iGym-Gridworlds_Empty-Distract-6x6-v0_iStatelessBinaryMonitor_.png}
    \end{subfigure}
    \hfill
    \begin{subfigure}[b]{0.16\linewidth}
        \centering
        {\fontfamily{qbk}\tiny\textbf{Button}}\\[1.4pt]
        \includegraphics[width=\linewidth]{fig/plots/visited_r_sum/iGym-Gridworlds_Empty-Distract-6x6-v0_iButtonMonitor_.png}
    \end{subfigure}
    \hfill
    \begin{subfigure}[b]{0.16\linewidth}
        \centering
        {\fontfamily{qbk}\tiny\textbf{Random Experts}}\\[1.4pt]
        \includegraphics[width=\linewidth]{fig/plots/visited_r_sum/iGym-Gridworlds_Empty-Distract-6x6-v0_iNMonitor_nm4_.png}
    \end{subfigure}
    \hfill
    \begin{subfigure}[b]{0.16\linewidth}
        \centering
        {\fontfamily{qbk}\tiny\textbf{Level Up}}\\[1.4pt]
        \includegraphics[width=\linewidth]{fig/plots/visited_r_sum/iGym-Gridworlds_Empty-Distract-6x6-v0_iLevelMonitor_nl3_.png}
    \end{subfigure}%
    }
    \\[3pt]
    \makebox[\linewidth][c]{%
    \raisebox{22pt}{\rotatebox[origin=t]{90}{\fontfamily{qbk}\tiny\textbf{Loop}}}
    \hfill
    \begin{subfigure}[b]{0.16\linewidth}
        \centering
        \includegraphics[width=\linewidth]{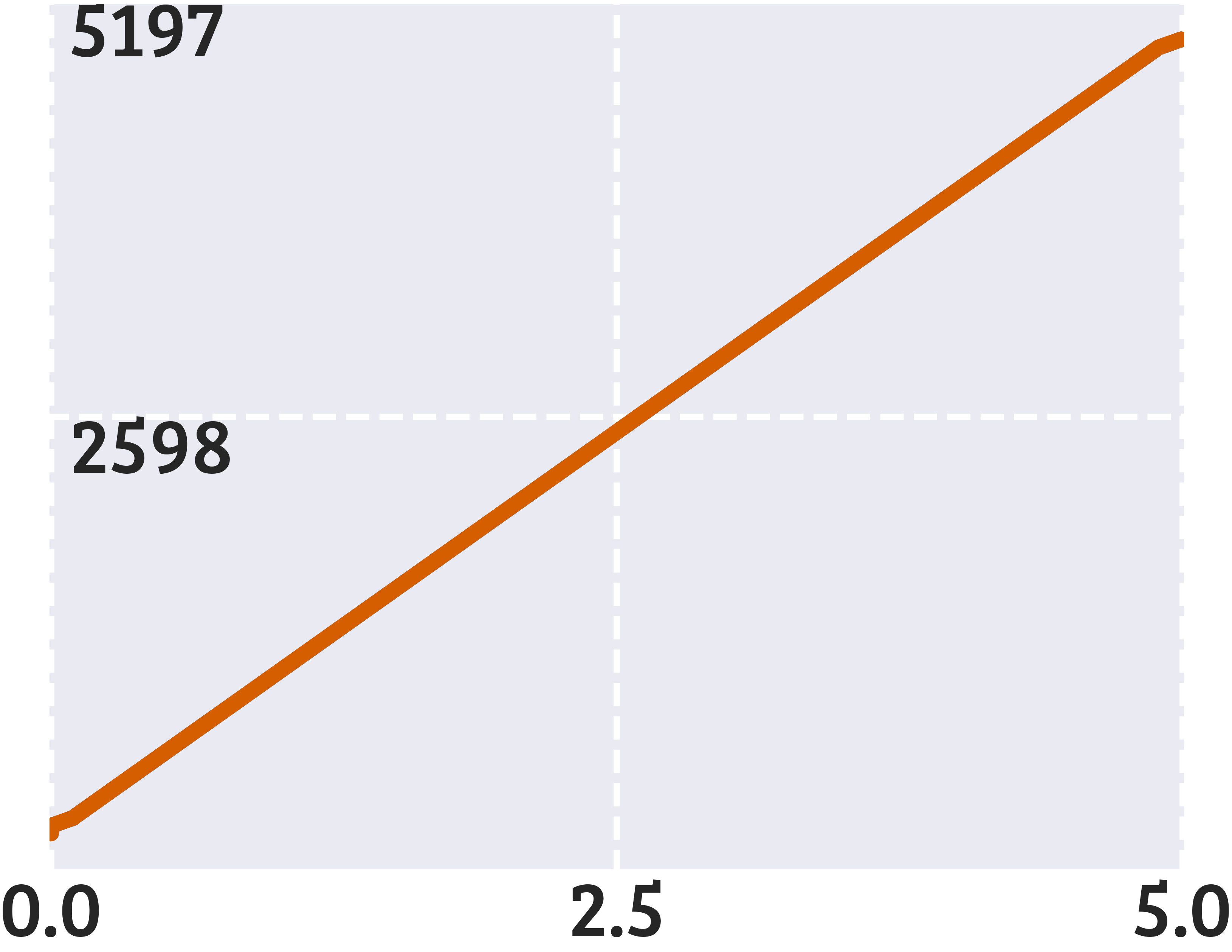}
    \end{subfigure}
    \hfill
    \begin{subfigure}[b]{0.16\linewidth}
        \centering
        \includegraphics[width=\linewidth]{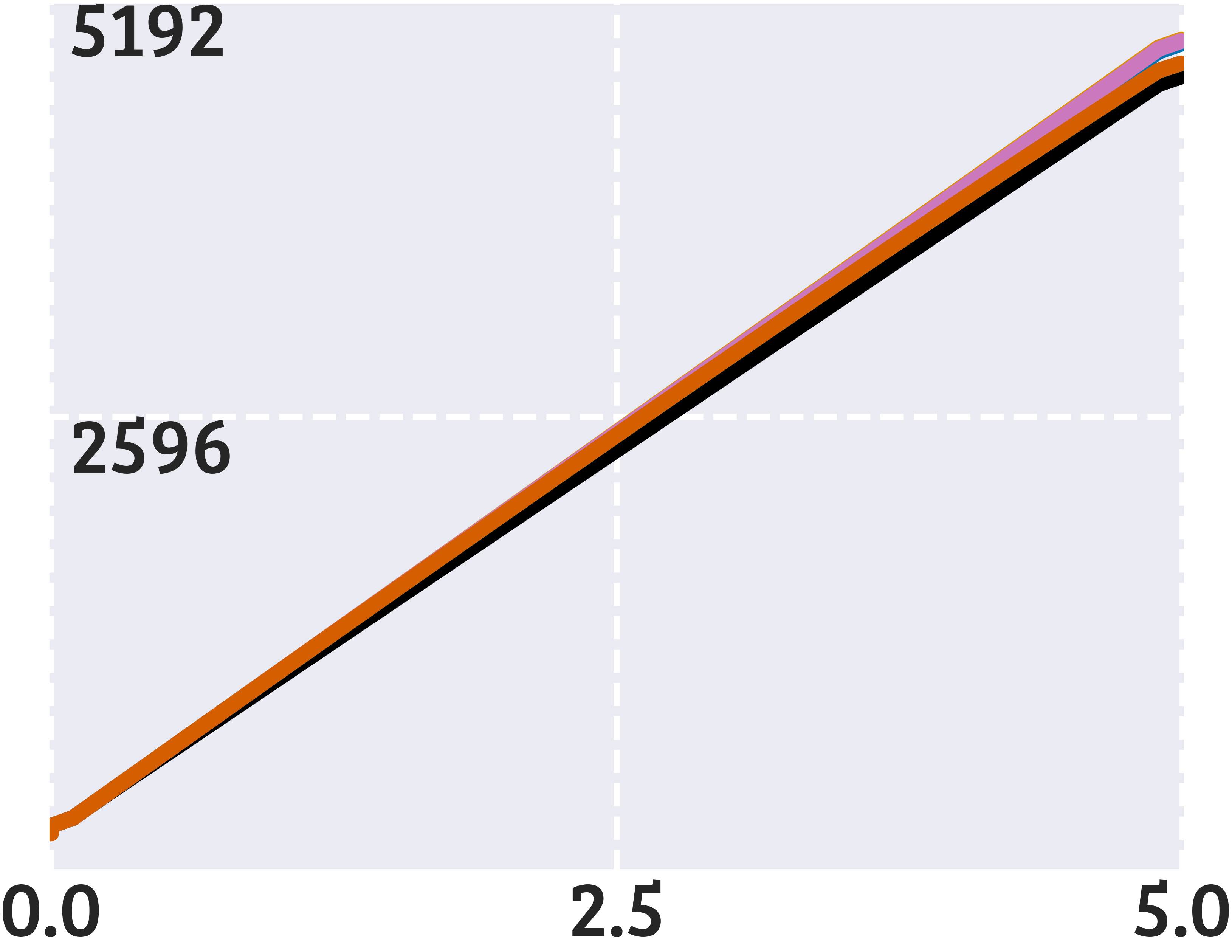}
    \end{subfigure}
    \hfill
    \begin{subfigure}[b]{0.16\linewidth}
        \centering
        \includegraphics[width=\linewidth]{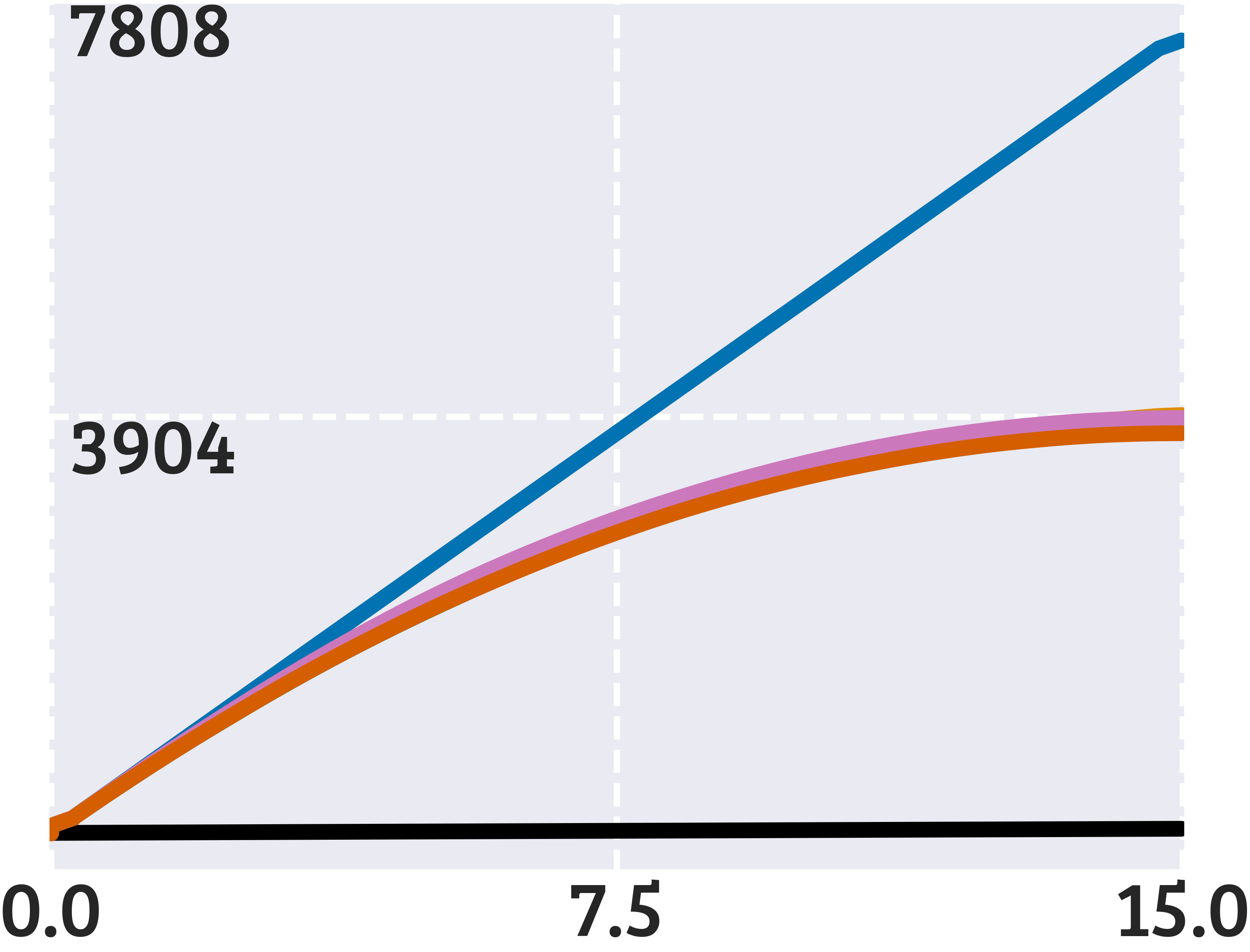}
    \end{subfigure}
    \hfill
    \begin{subfigure}[b]{0.16\linewidth}
        \centering
        \includegraphics[width=\linewidth]{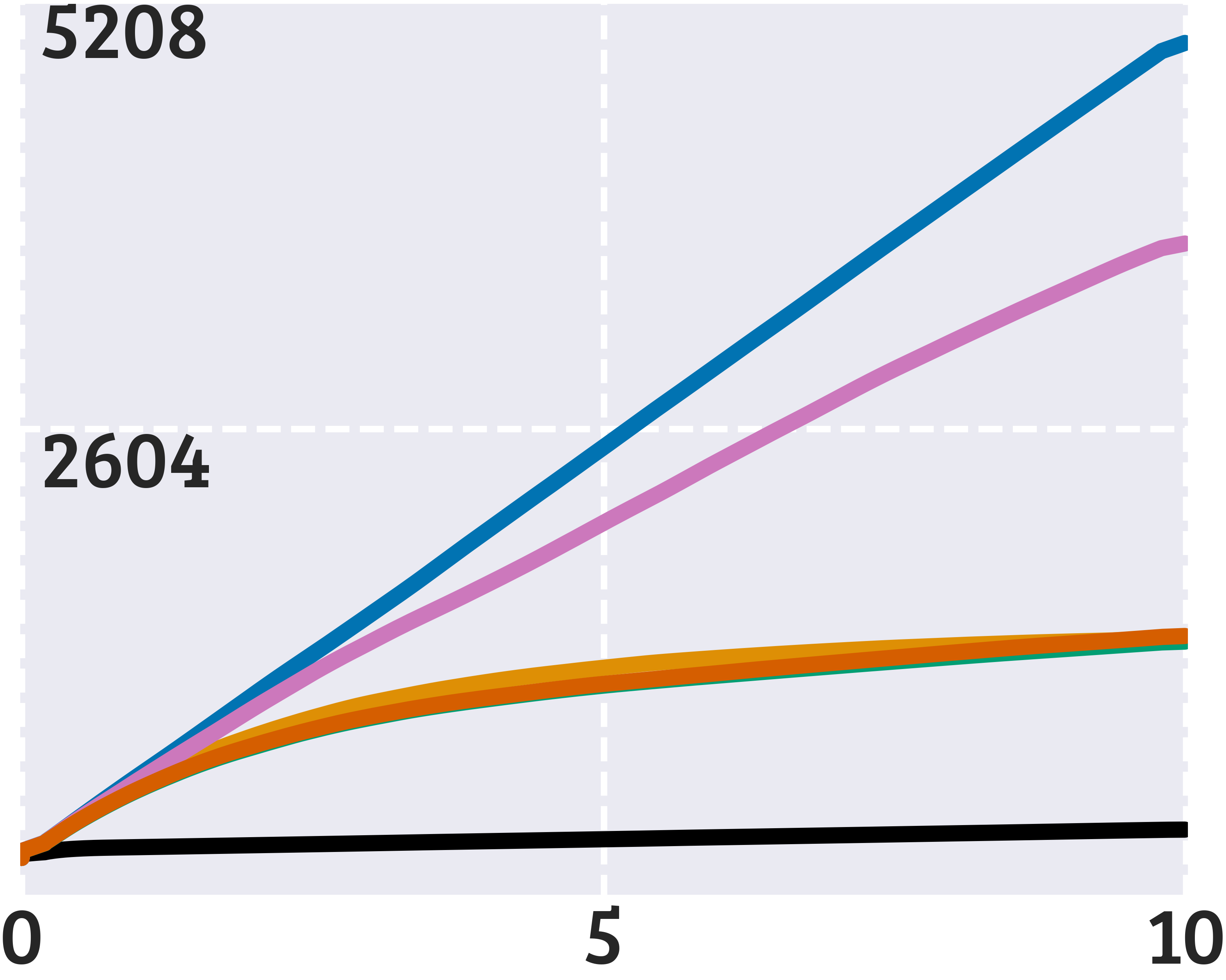}
    \end{subfigure}
    \hfill
    \begin{subfigure}[b]{0.16\linewidth}
        \centering
        \includegraphics[width=\linewidth]{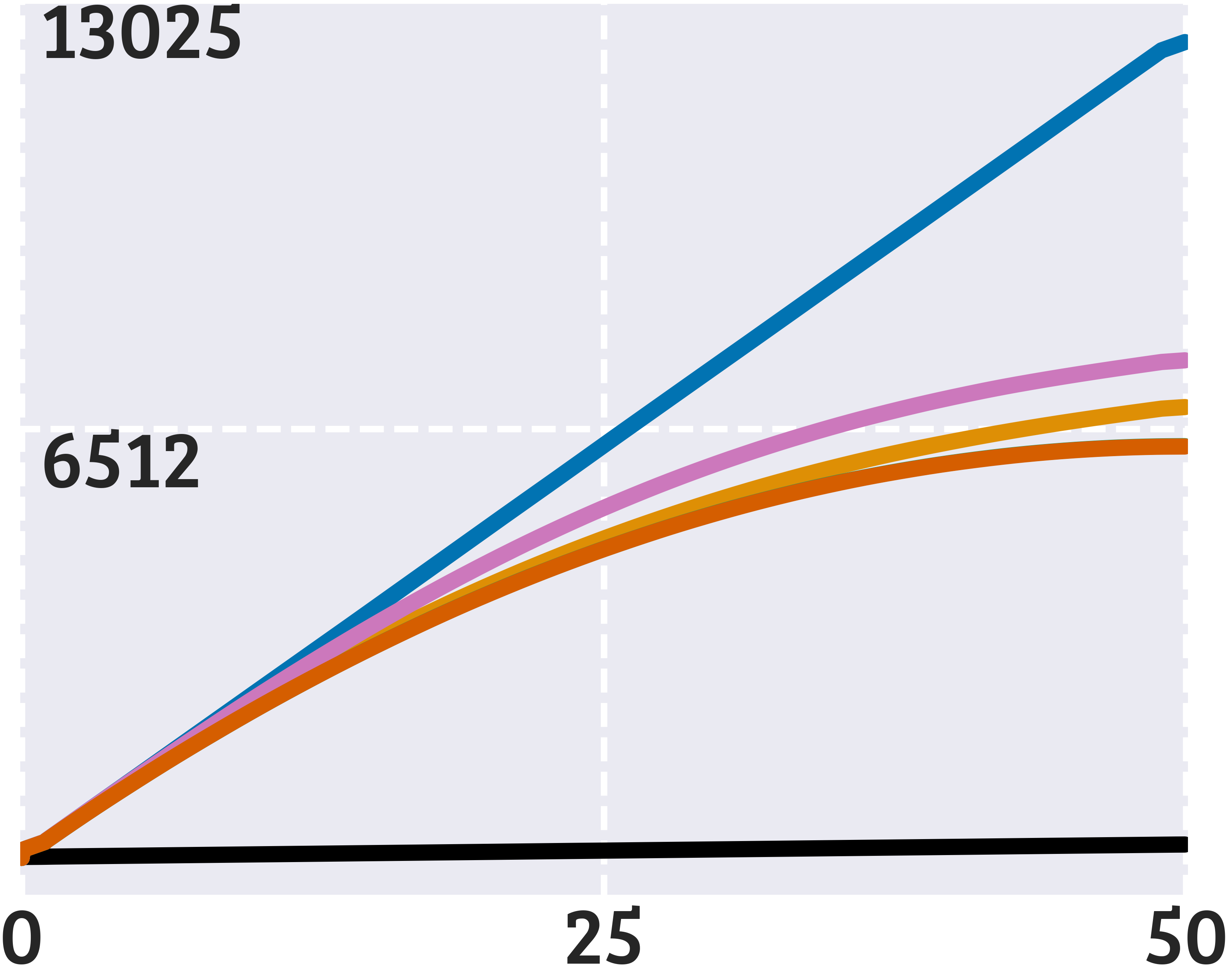}
    \end{subfigure}
    \hfill
    \begin{subfigure}[b]{0.16\linewidth}
        \centering
        \includegraphics[width=\linewidth]{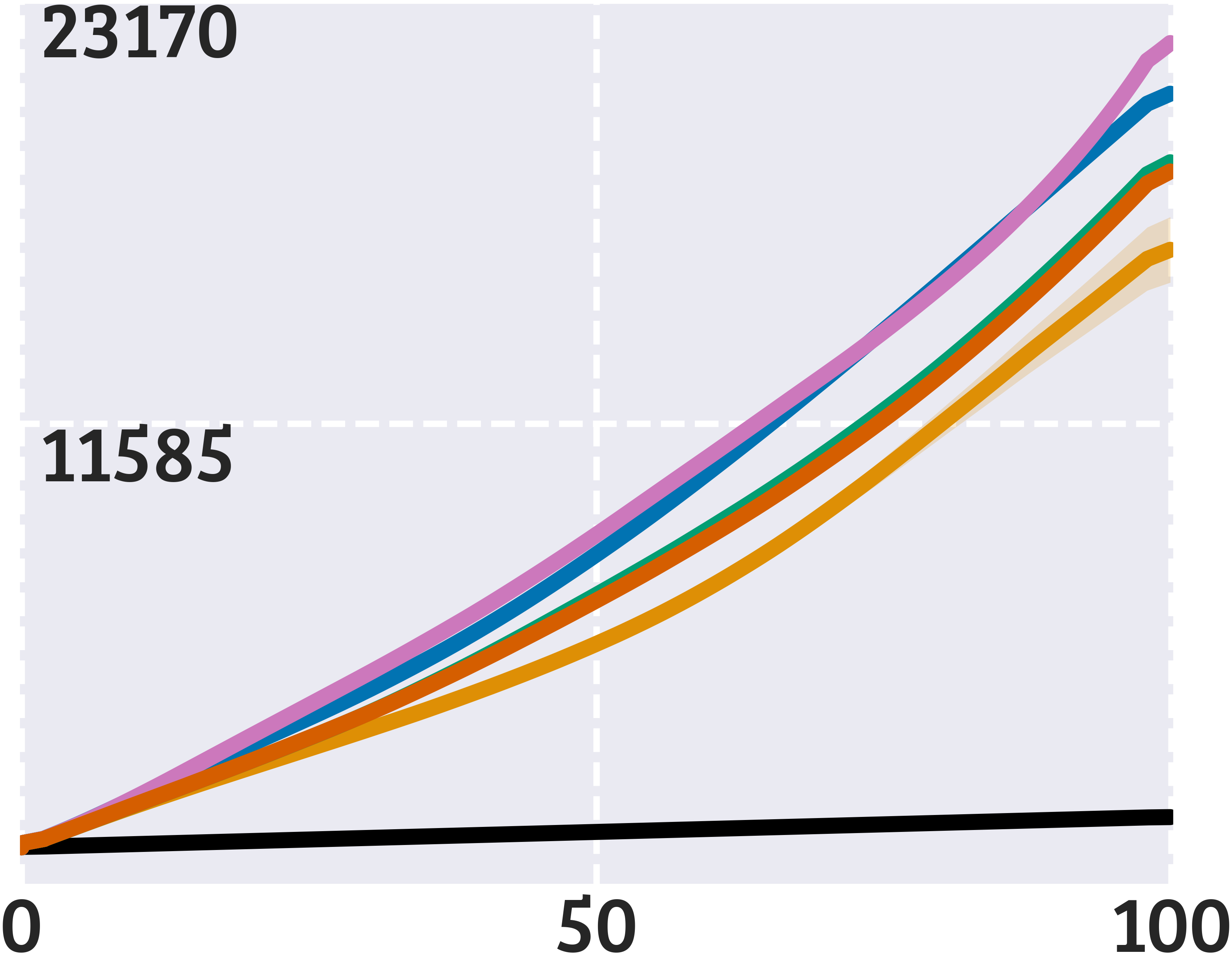}
    \end{subfigure}%
    }
    \\[3pt]
    \makebox[\linewidth][c]{%
    \raisebox{22pt}{\rotatebox[origin=t]{90}{\fontfamily{qbk}\tiny\textbf{Hazard}}}
    \hfill
    \begin{subfigure}[b]{0.16\linewidth}
        \centering
        \includegraphics[width=\linewidth]{fig/plots/visited_r_sum/iGym-Gridworlds_Quicksand-Distract-4x4-v0_iFullMonitor_.png}
    \end{subfigure}
    \hfill
    \begin{subfigure}[b]{0.16\linewidth}
        \centering
        \includegraphics[width=\linewidth]{fig/plots/visited_r_sum/iGym-Gridworlds_Quicksand-Distract-4x4-v0_iRandomNonZeroMonitor_.png}
    \end{subfigure}
    \hfill
    \begin{subfigure}[b]{0.16\linewidth}
        \centering
        \includegraphics[width=\linewidth]{fig/plots/visited_r_sum/iGym-Gridworlds_Quicksand-Distract-4x4-v0_iStatelessBinaryMonitor_.png}
    \end{subfigure}
    \hfill
    \begin{subfigure}[b]{0.16\linewidth}
        \centering
        \includegraphics[width=\linewidth]{fig/plots/visited_r_sum/iGym-Gridworlds_Quicksand-Distract-4x4-v0_iButtonMonitor_.png}
    \end{subfigure}
    \hfill
    \begin{subfigure}[b]{0.16\linewidth}
        \centering
        \includegraphics[width=\linewidth]{fig/plots/visited_r_sum/iGym-Gridworlds_Quicksand-Distract-4x4-v0_iNMonitor_nm4_.png}
    \end{subfigure}
    \hfill
    \begin{subfigure}[b]{0.16\linewidth}
        \centering
        \includegraphics[width=\linewidth]{fig/plots/visited_r_sum/iGym-Gridworlds_Quicksand-Distract-4x4-v0_iLevelMonitor_nl3_.png}
    \end{subfigure}%
    }
    \\[3pt]
    \makebox[\linewidth][c]{%
    \raisebox{22pt}{\rotatebox[origin=t]{90}{\fontfamily{qbk}\tiny\textbf{One-Way}}}
    \hfill
    \begin{subfigure}[b]{0.16\linewidth}
        \centering
        \includegraphics[width=\linewidth]{fig/plots/visited_r_sum/iGym-Gridworlds_Corridor-3x4-v0_iFullMonitor_.png}
    \end{subfigure}
    \hfill
    \begin{subfigure}[b]{0.16\linewidth}
        \centering
        \includegraphics[width=\linewidth]{fig/plots/visited_r_sum/iGym-Gridworlds_Corridor-3x4-v0_iRandomNonZeroMonitor_.png}
    \end{subfigure}
    \hfill
    \begin{subfigure}[b]{0.16\linewidth}
        \centering
        \includegraphics[width=\linewidth]{fig/plots/visited_r_sum/iGym-Gridworlds_Corridor-3x4-v0_iStatelessBinaryMonitor_.png}
    \end{subfigure}
    \hfill
    \begin{subfigure}[b]{0.16\linewidth}
        \centering
        \includegraphics[width=\linewidth]{fig/plots/visited_r_sum/iGym-Gridworlds_Corridor-3x4-v0_iButtonMonitor_.png}
    \end{subfigure}
    \hfill
    \begin{subfigure}[b]{0.16\linewidth}
        \centering
        \includegraphics[width=\linewidth]{fig/plots/visited_r_sum/iGym-Gridworlds_Corridor-3x4-v0_iNMonitor_nm4_.png}
    \end{subfigure}
    \hfill
    \begin{subfigure}[b]{0.16\linewidth}
        \centering
        \includegraphics[width=\linewidth]{fig/plots/visited_r_sum/iGym-Gridworlds_Corridor-3x4-v0_iLevelMonitor_nl3_.png}
    \end{subfigure}%
    }
    \\[3pt]
    \makebox[\linewidth][c]{%
    \raisebox{22pt}{\rotatebox[origin=t]{90}{\fontfamily{qbk}\tiny\textbf{Corridor}}}
    \hfill
    \begin{subfigure}[b]{0.16\linewidth}
        \centering
        \includegraphics[width=\linewidth]{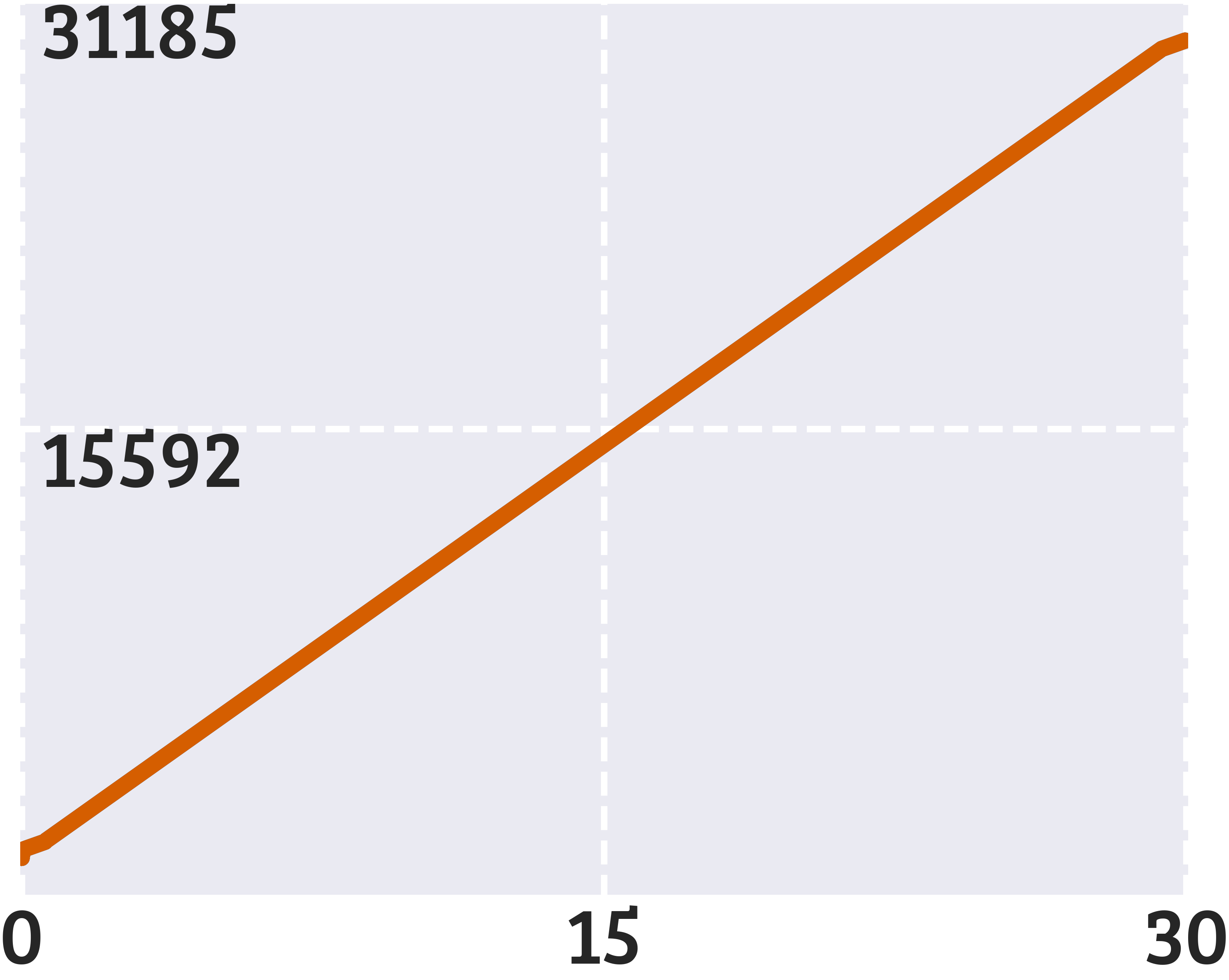}
    \end{subfigure}
    \hfill
    \begin{subfigure}[b]{0.16\linewidth}
        \centering
        \includegraphics[width=\linewidth]{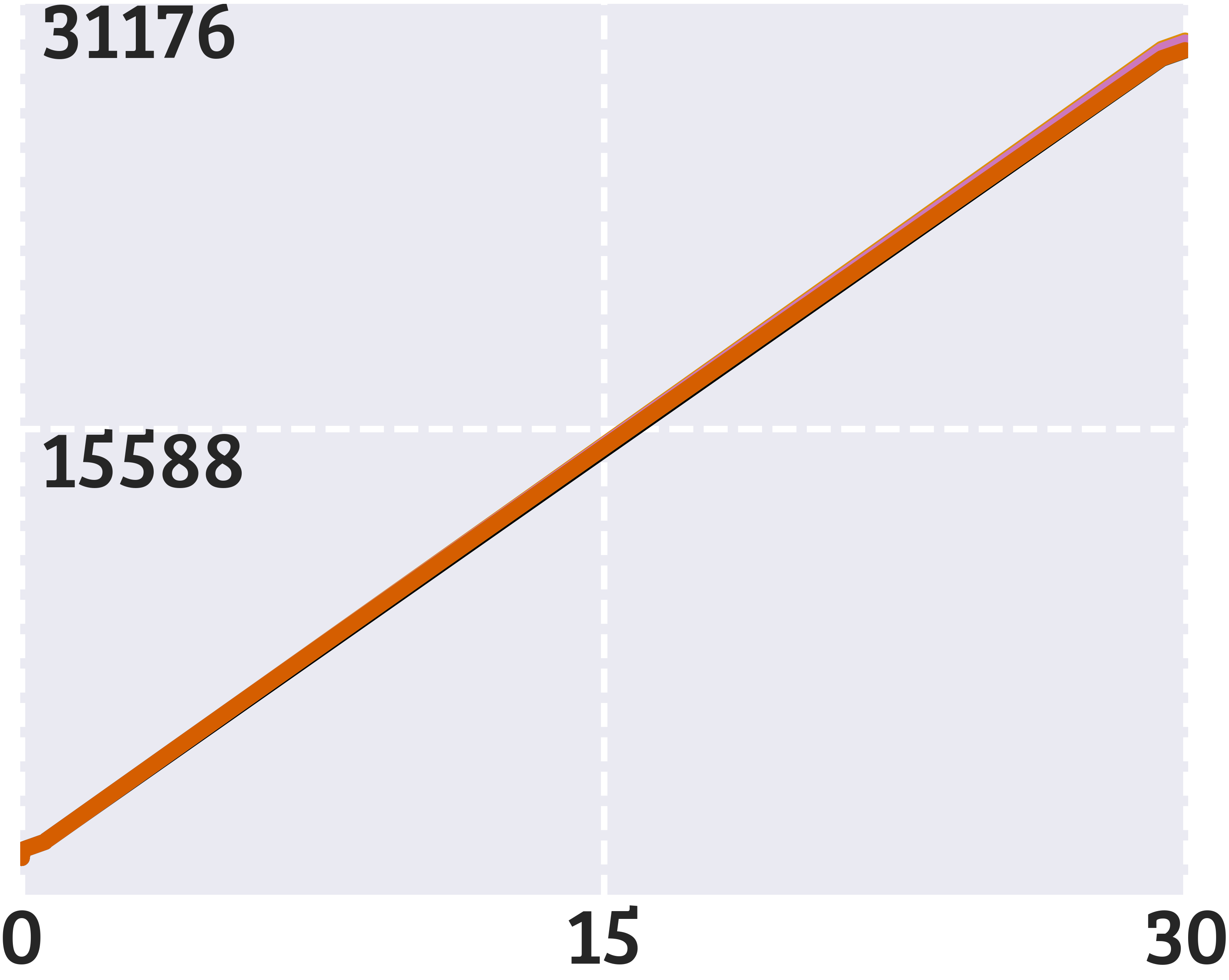}
    \end{subfigure}
    \hfill
    \begin{subfigure}[b]{0.16\linewidth}
        \centering
        \includegraphics[width=\linewidth]{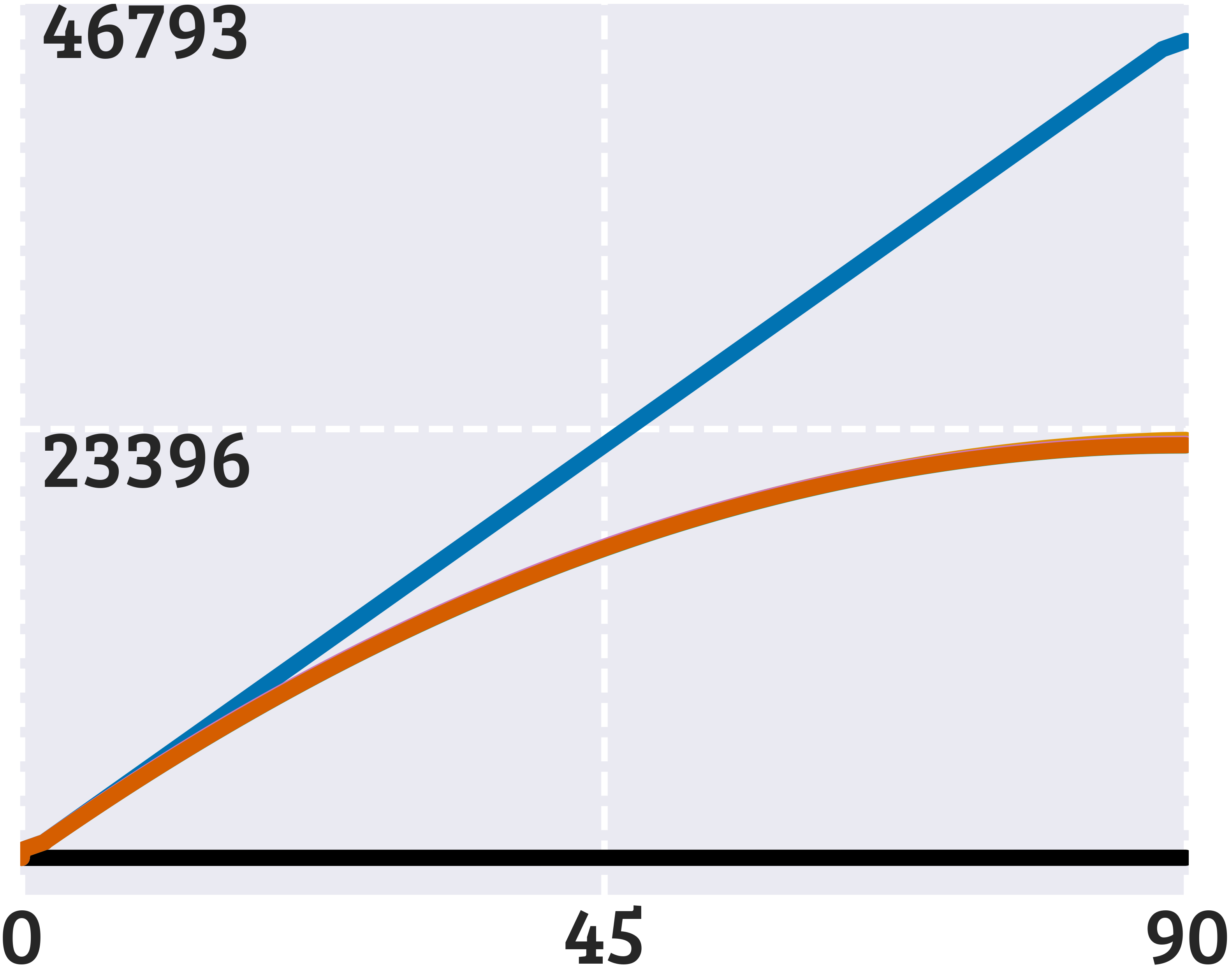}
    \end{subfigure}
    \hfill
    \begin{subfigure}[b]{0.16\linewidth}
        \centering
        \includegraphics[width=\linewidth]{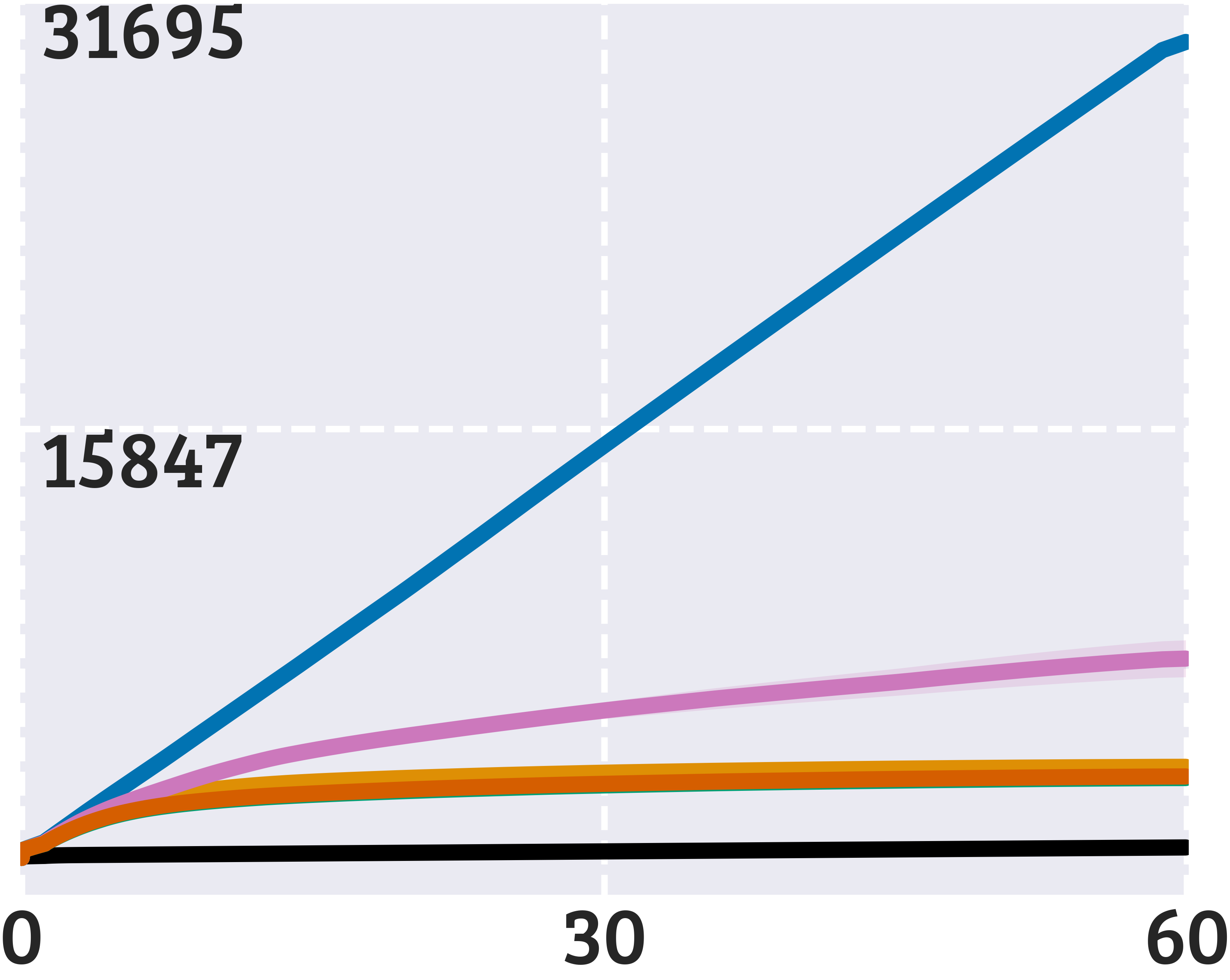}
    \end{subfigure}
    \hfill
    \begin{subfigure}[b]{0.16\linewidth}
        \centering
        \includegraphics[width=\linewidth]{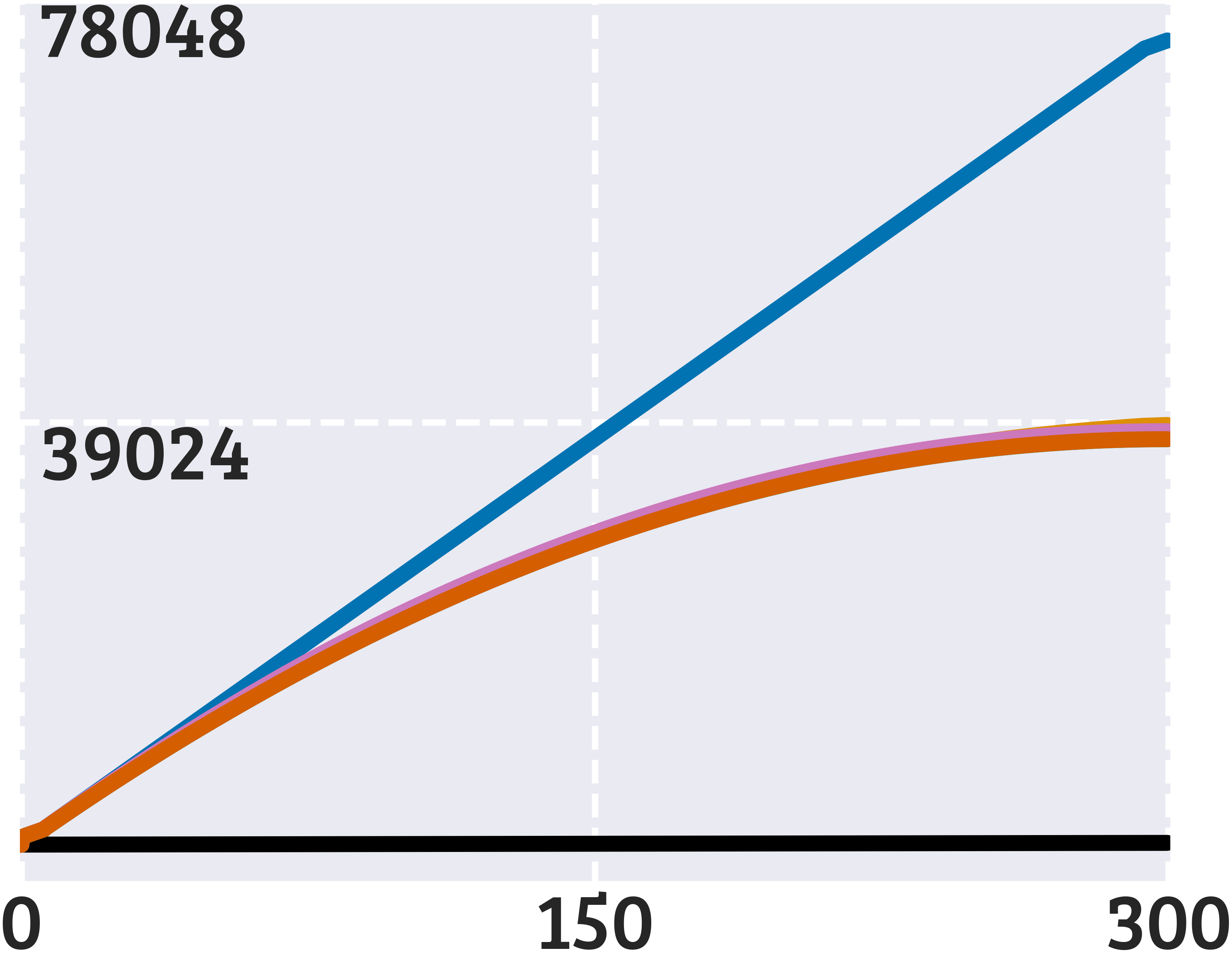}
    \end{subfigure}
    \hfill
    \begin{subfigure}[b]{0.16\linewidth}
        \centering
        \includegraphics[width=\linewidth]{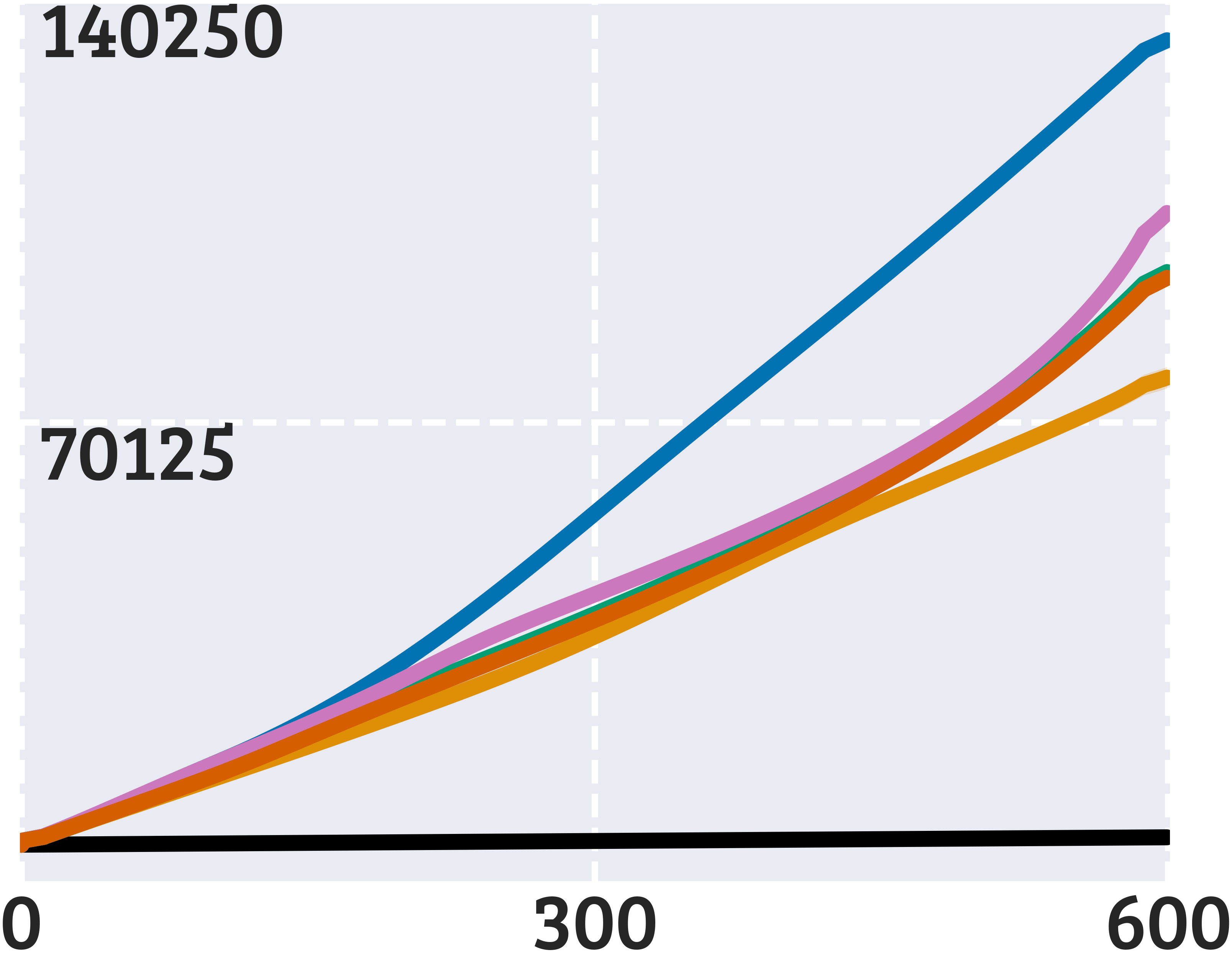}
    \end{subfigure}%
    }
    \\[3pt]
    \makebox[\linewidth][c]{%
    \raisebox{22pt}{\rotatebox[origin=t]{90}{\fontfamily{qbk}\tiny\textbf{Two-Room 2$\times$11}}}
    \hfill
    \begin{subfigure}[b]{0.16\linewidth}
        \centering
        \includegraphics[width=\linewidth]{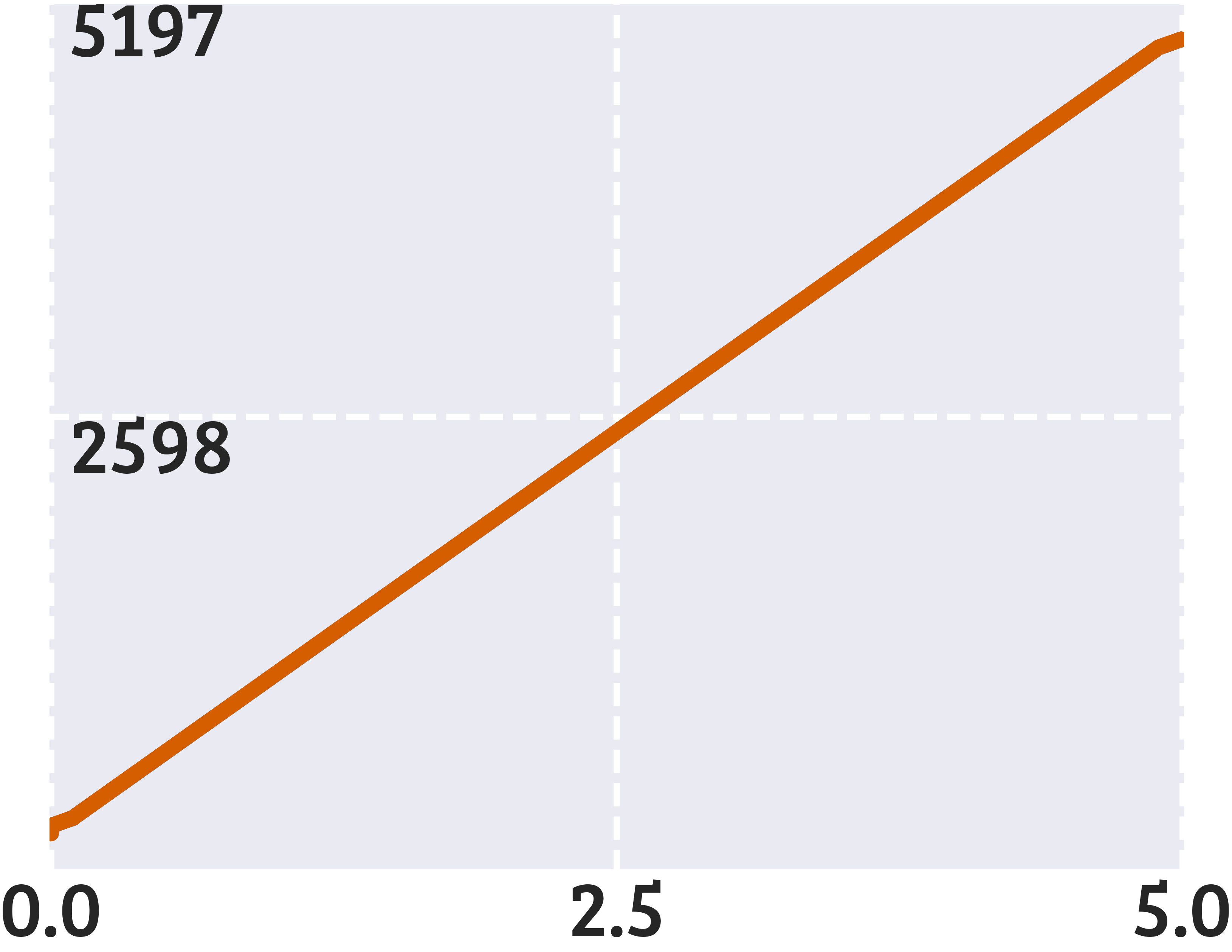}
    \end{subfigure}
    \hfill
    \begin{subfigure}[b]{0.16\linewidth}
        \centering
        \includegraphics[width=\linewidth]{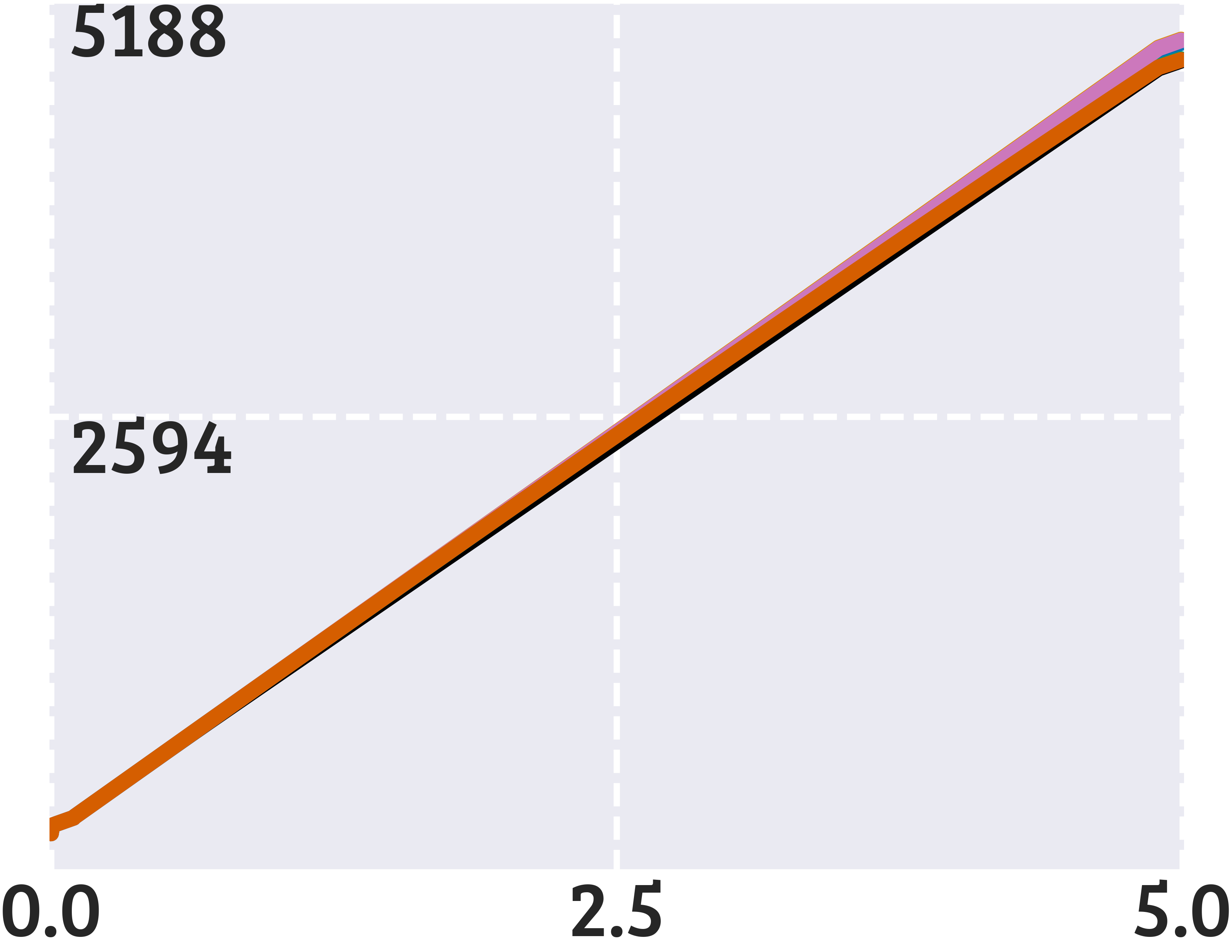}
    \end{subfigure}
    \hfill
    \begin{subfigure}[b]{0.16\linewidth}
        \centering
        \includegraphics[width=\linewidth]{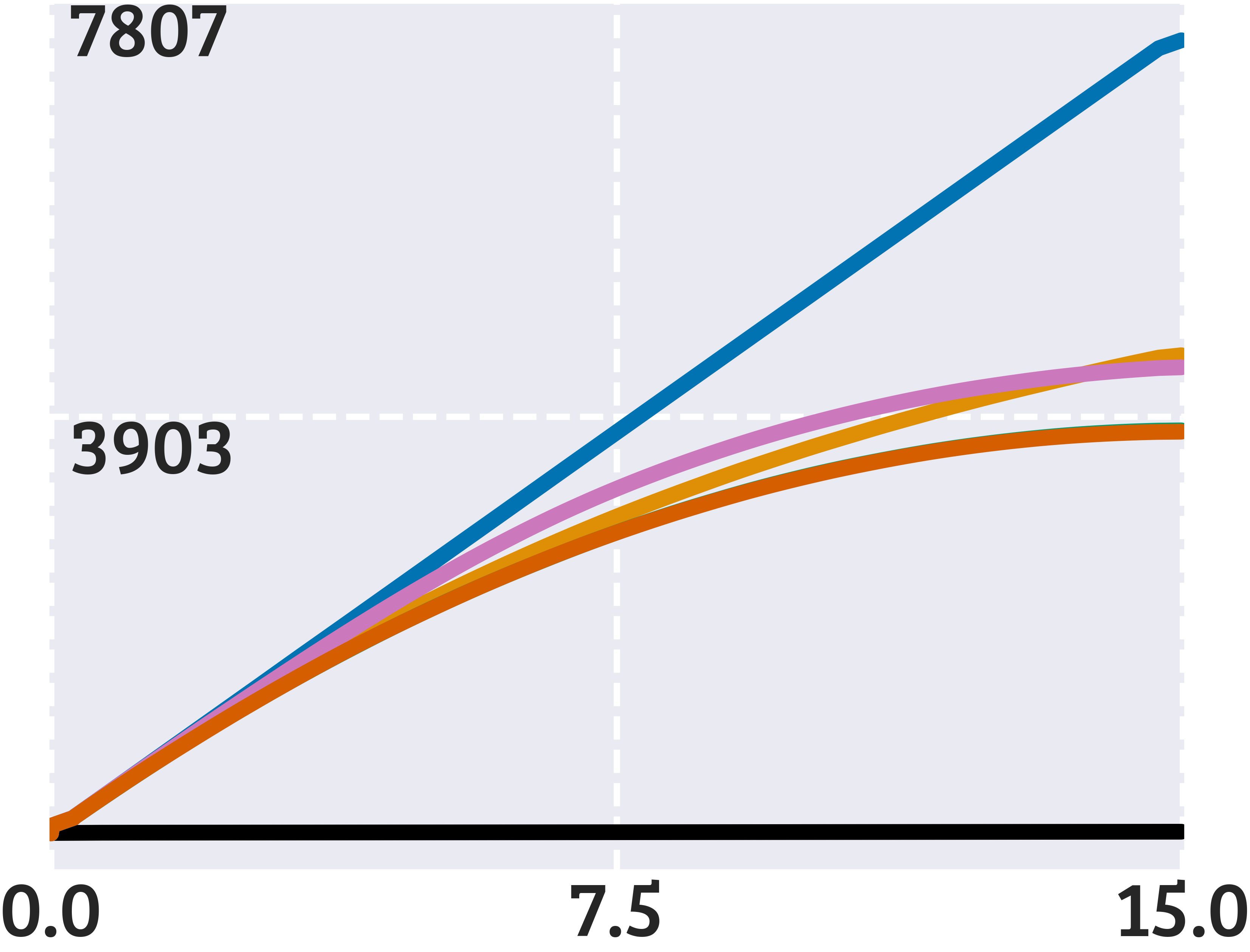}
    \end{subfigure}
    \hfill
    \begin{subfigure}[b]{0.16\linewidth}
        \centering
        \includegraphics[width=\linewidth]{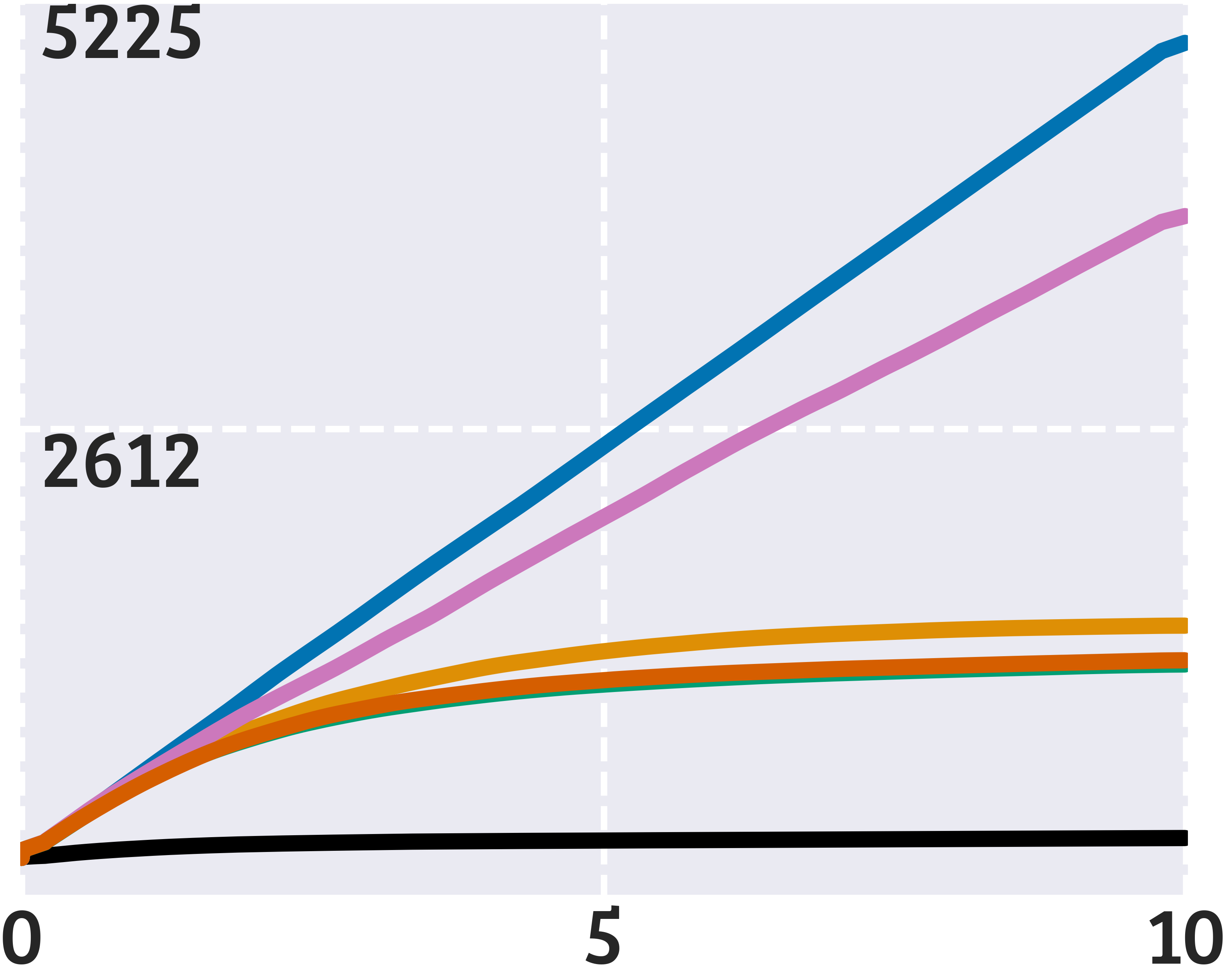}
    \end{subfigure}
    \hfill
    \begin{subfigure}[b]{0.16\linewidth}
        \centering
        \includegraphics[width=\linewidth]{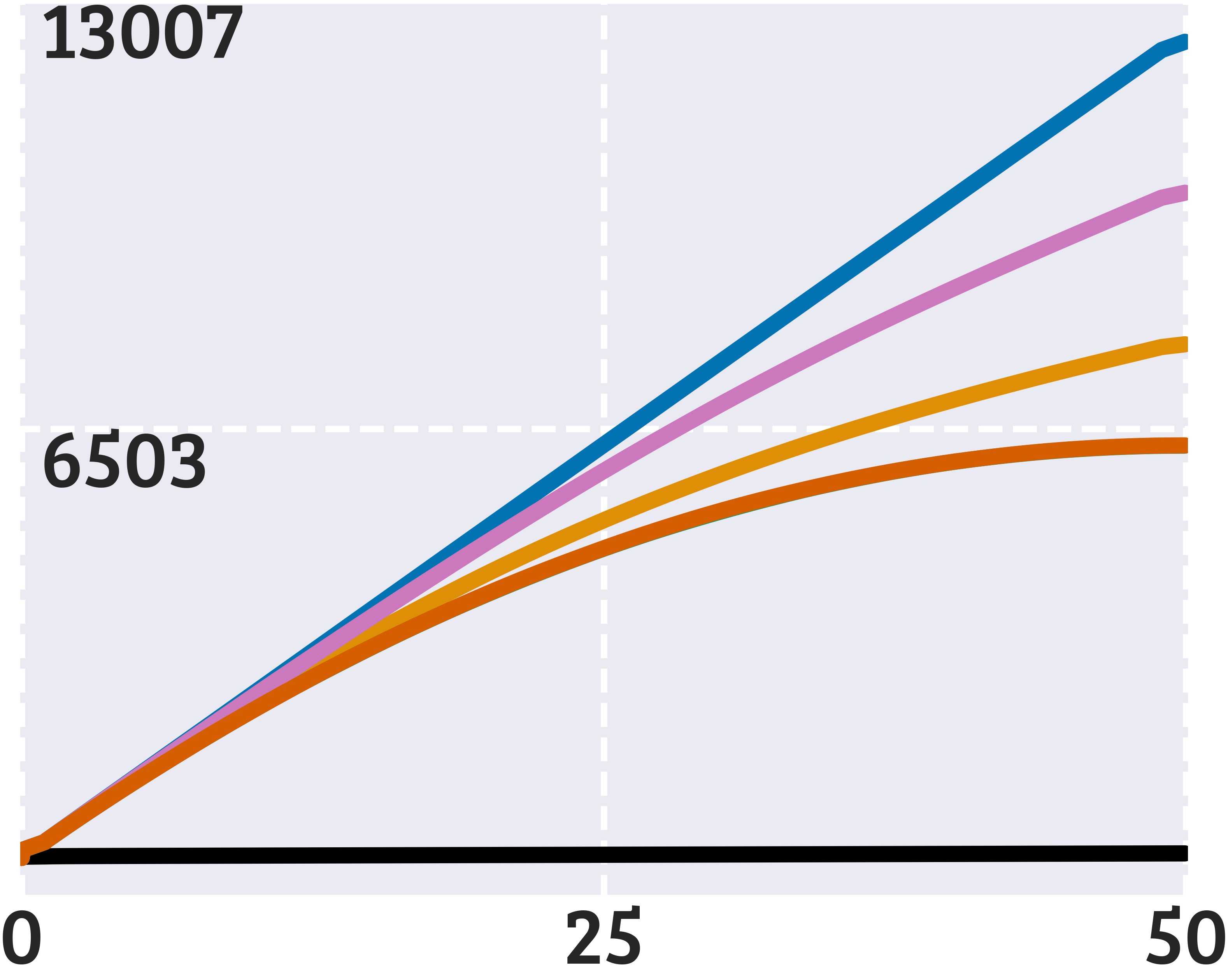}
    \end{subfigure}
    \hfill
    \begin{subfigure}[b]{0.16\linewidth}
        \centering
        \includegraphics[width=\linewidth]{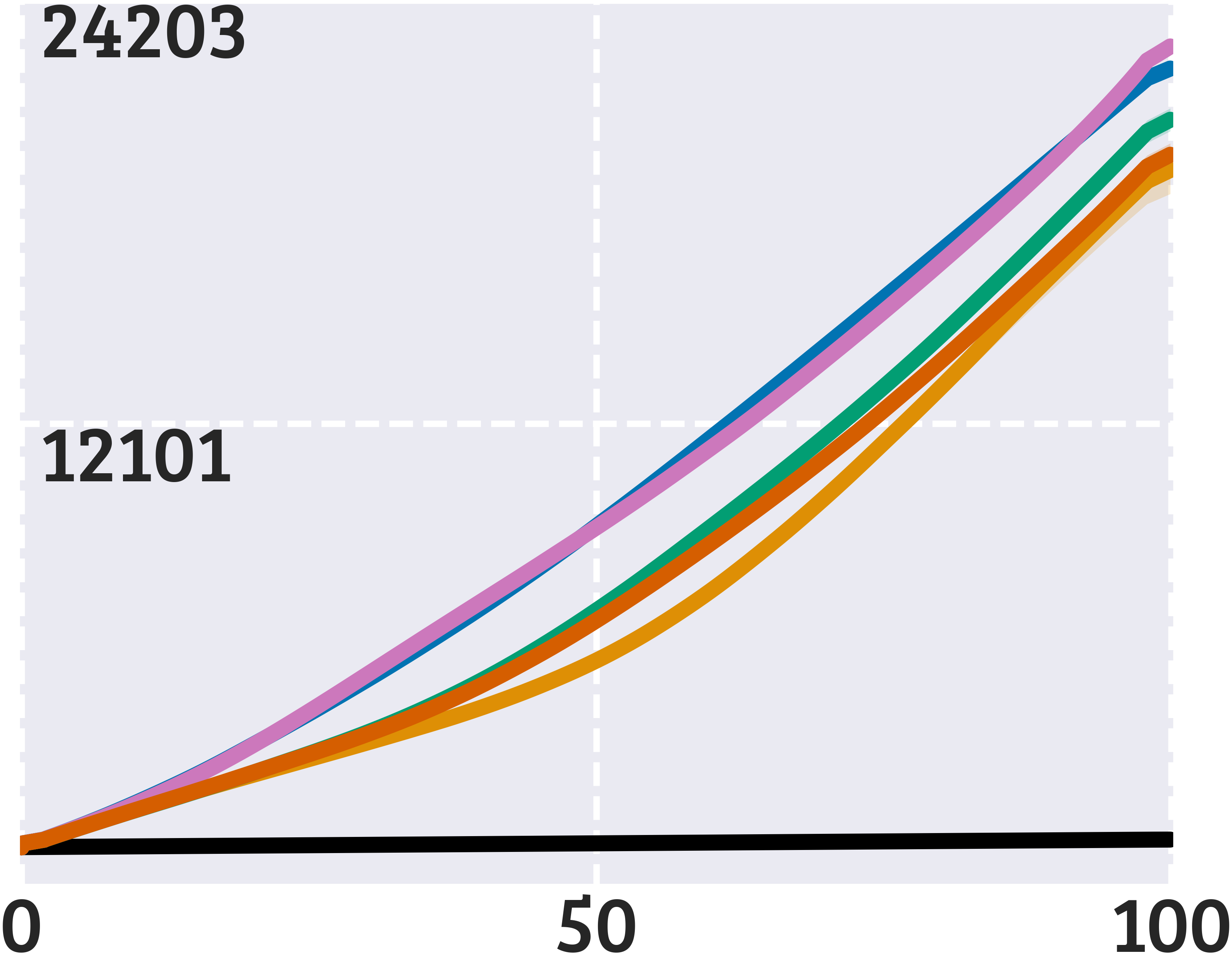}
    \end{subfigure}%
    }
    \\[3pt]
    \makebox[\linewidth][c]{%
    \raisebox{22pt}{\rotatebox[origin=t]{90}{\fontfamily{qbk}\tiny\textbf{Two-Room 3$\times$5}}}
    \hfill
    \begin{subfigure}[b]{0.16\linewidth}
        \centering
        \includegraphics[width=\linewidth]{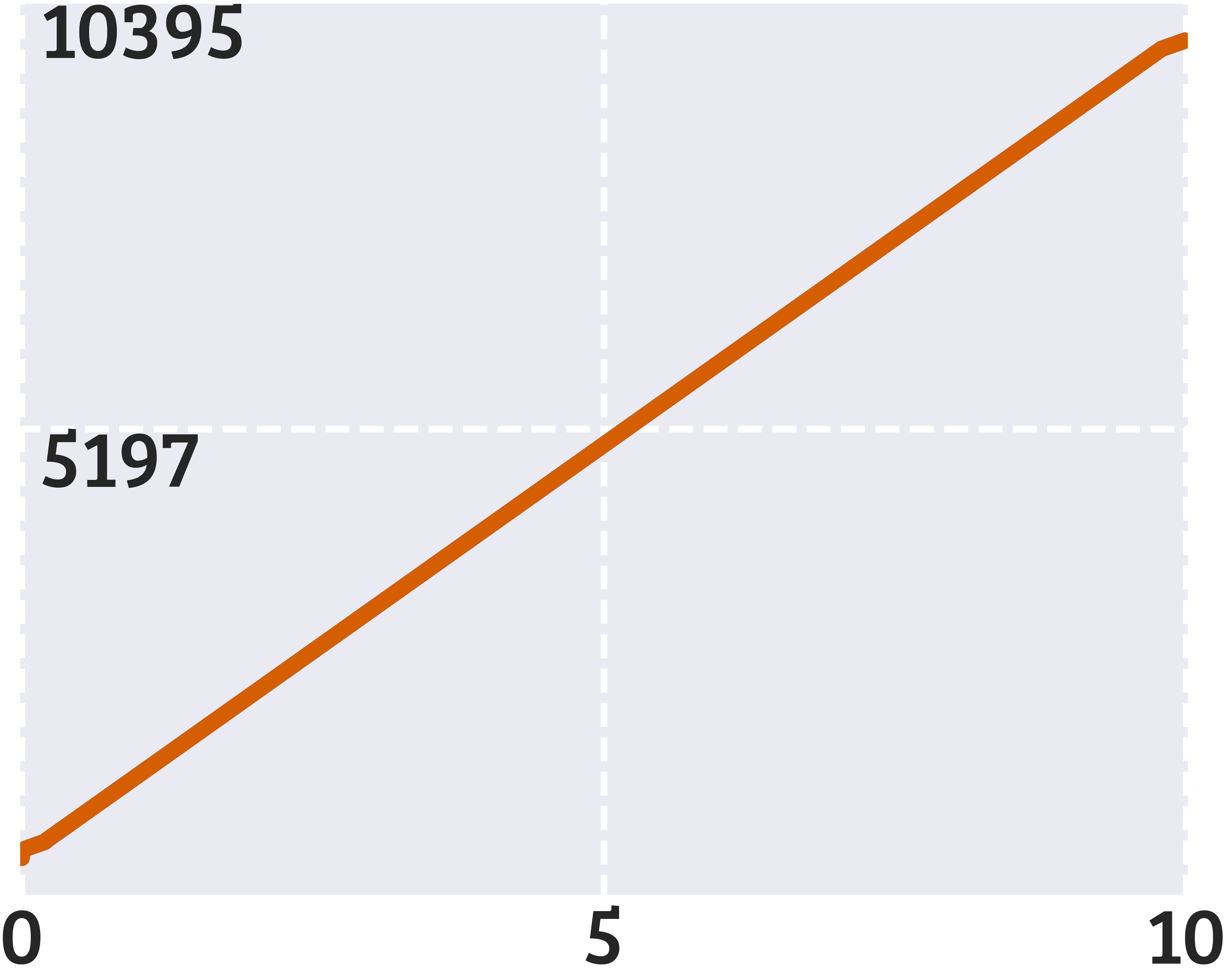}
    \end{subfigure}
    \hfill
    \begin{subfigure}[b]{0.16\linewidth}
        \centering
        \includegraphics[width=\linewidth]{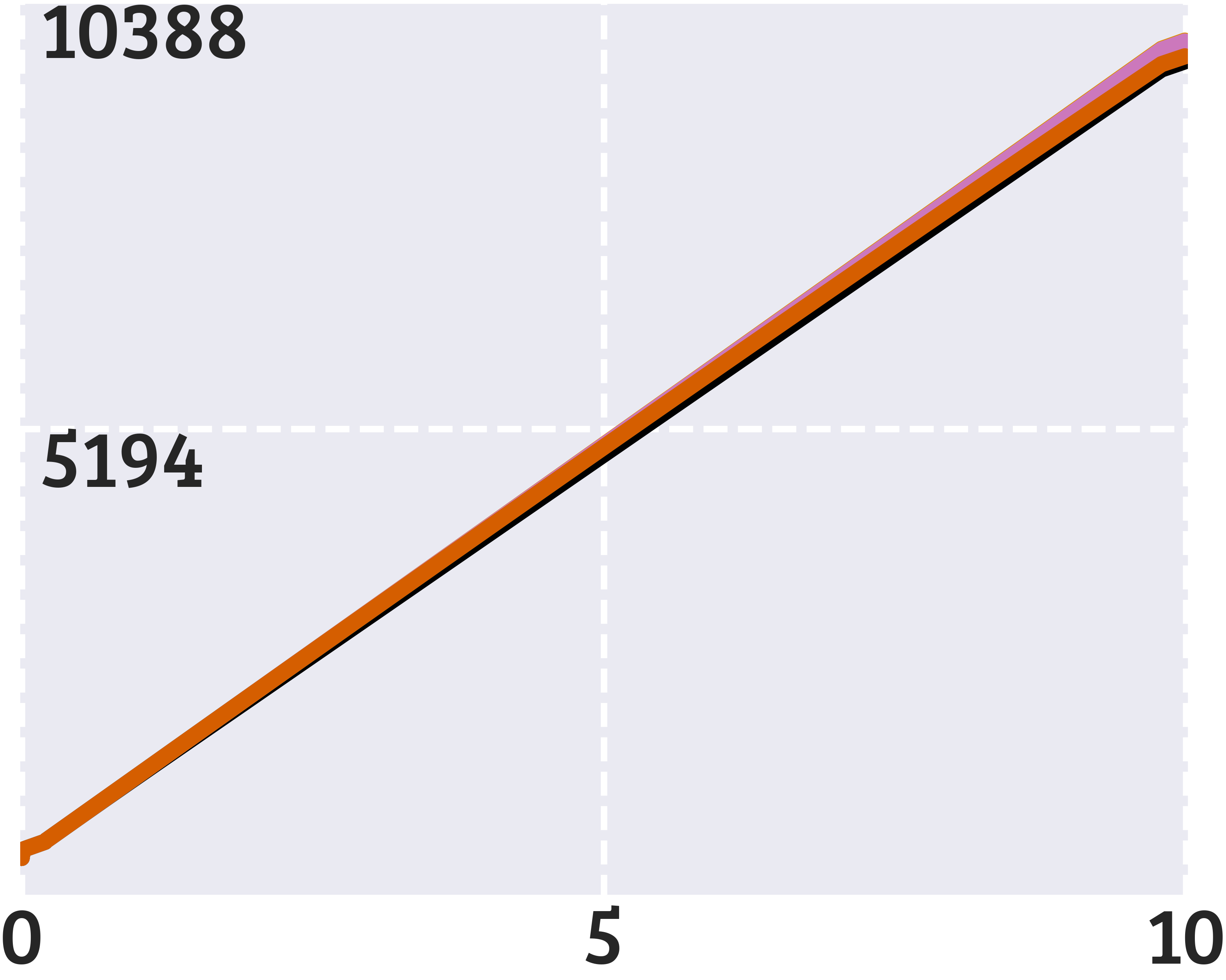}
    \end{subfigure}
    \hfill
    \begin{subfigure}[b]{0.16\linewidth}
        \centering
        \includegraphics[width=\linewidth]{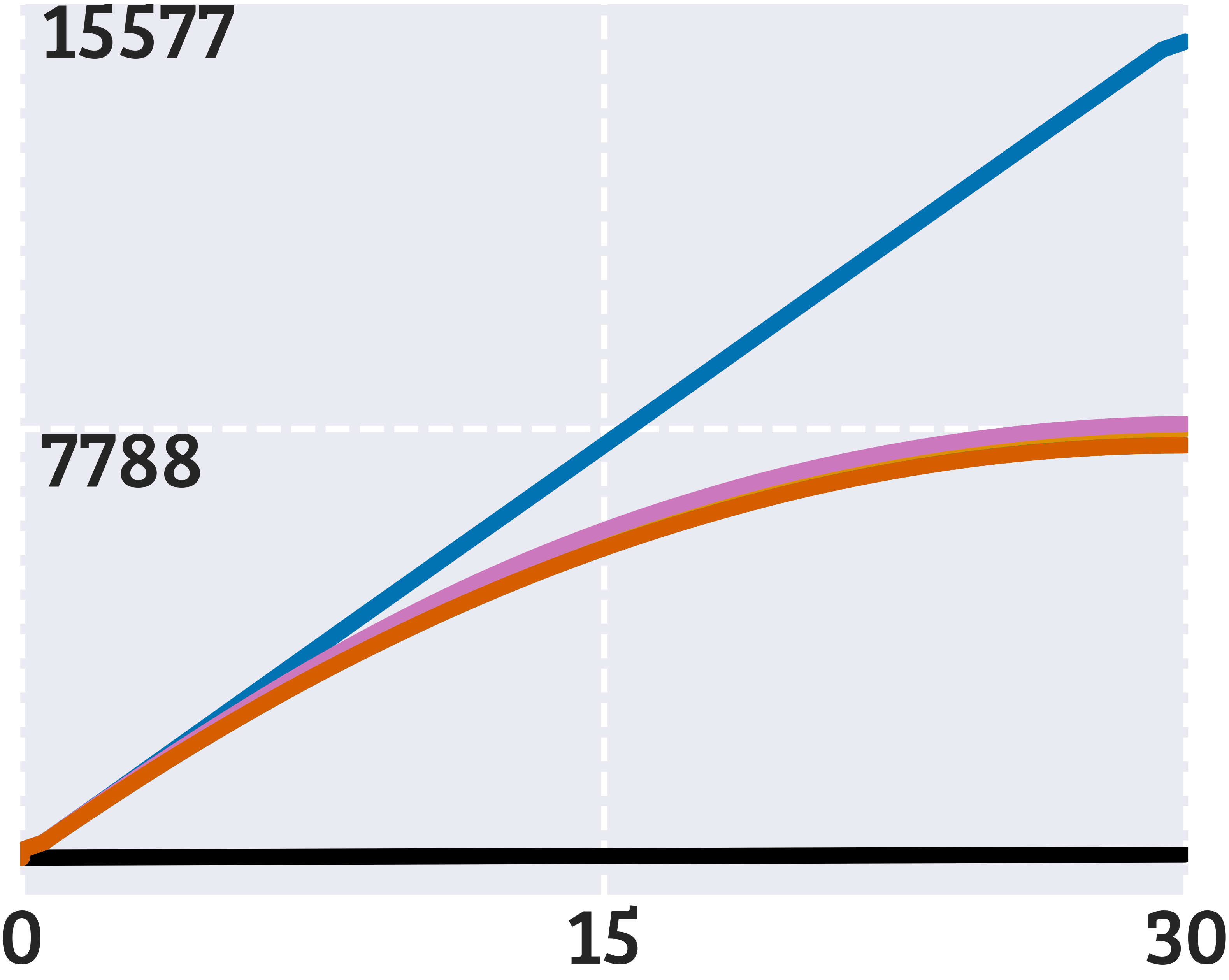}
    \end{subfigure}
    \hfill
    \begin{subfigure}[b]{0.16\linewidth}
        \centering
        \includegraphics[width=\linewidth]{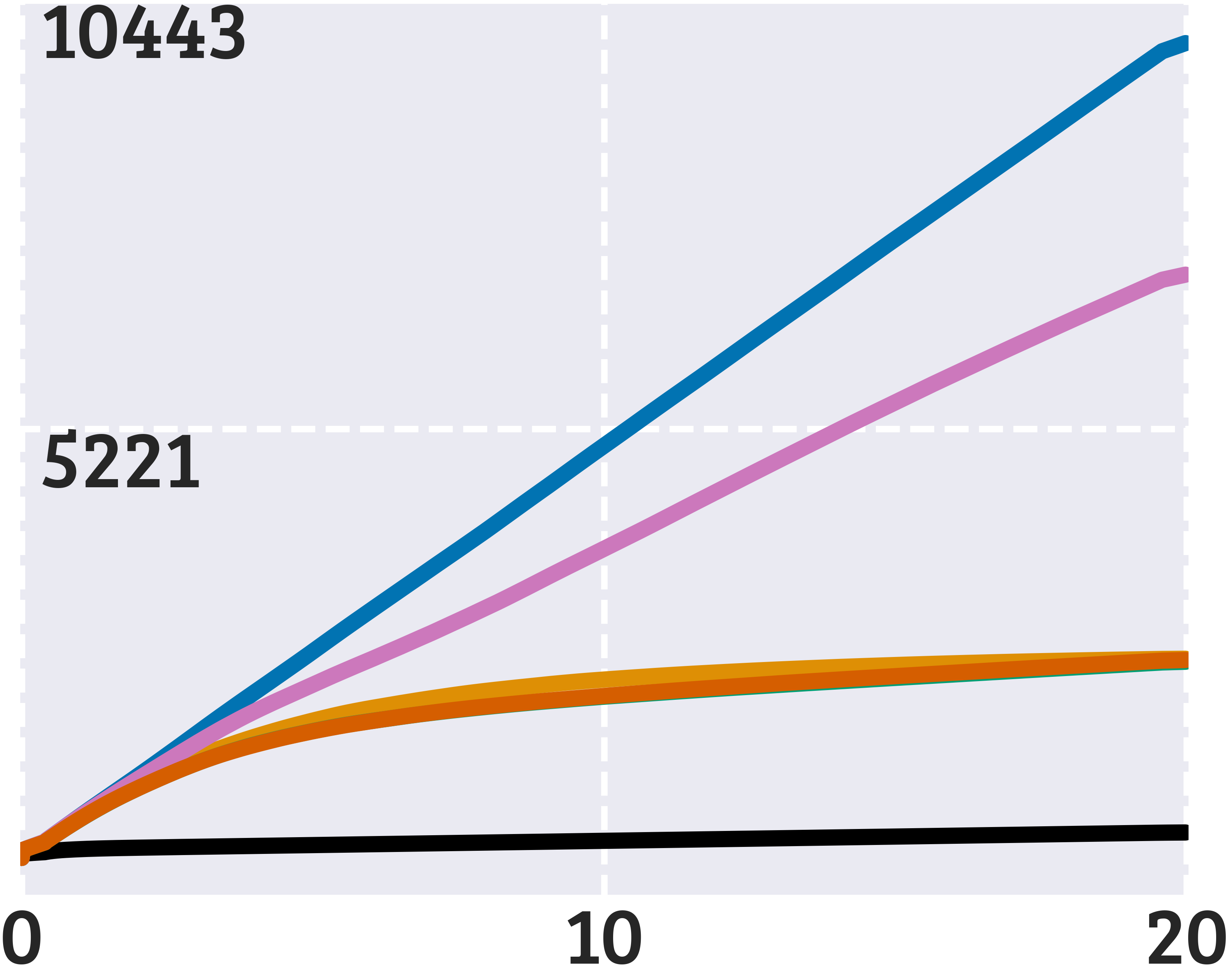}
    \end{subfigure}
    \hfill
    \begin{subfigure}[b]{0.16\linewidth}
        \centering
        \includegraphics[width=\linewidth]{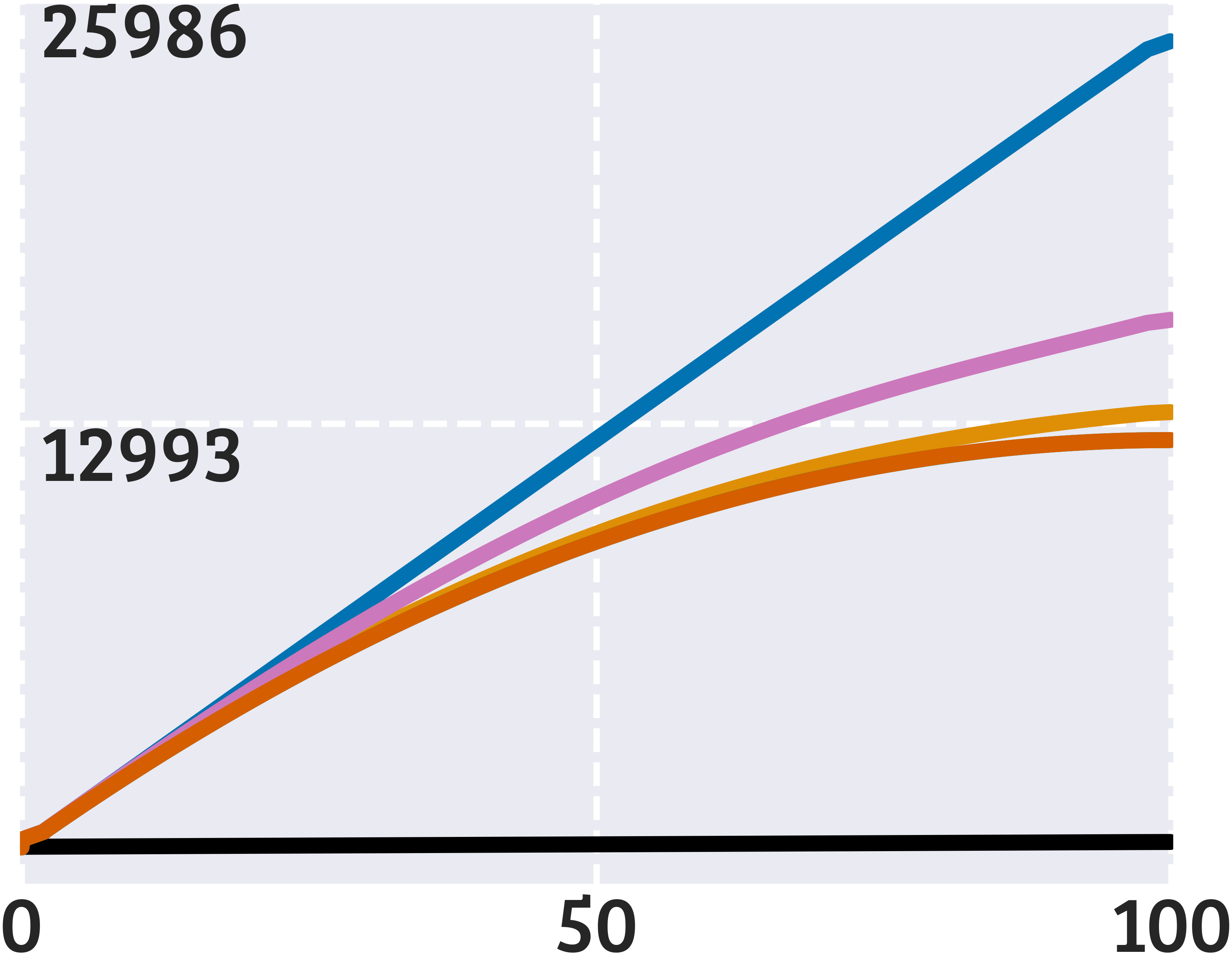}
    \end{subfigure}
    \hfill
    \begin{subfigure}[b]{0.16\linewidth}
        \centering
        \includegraphics[width=\linewidth]{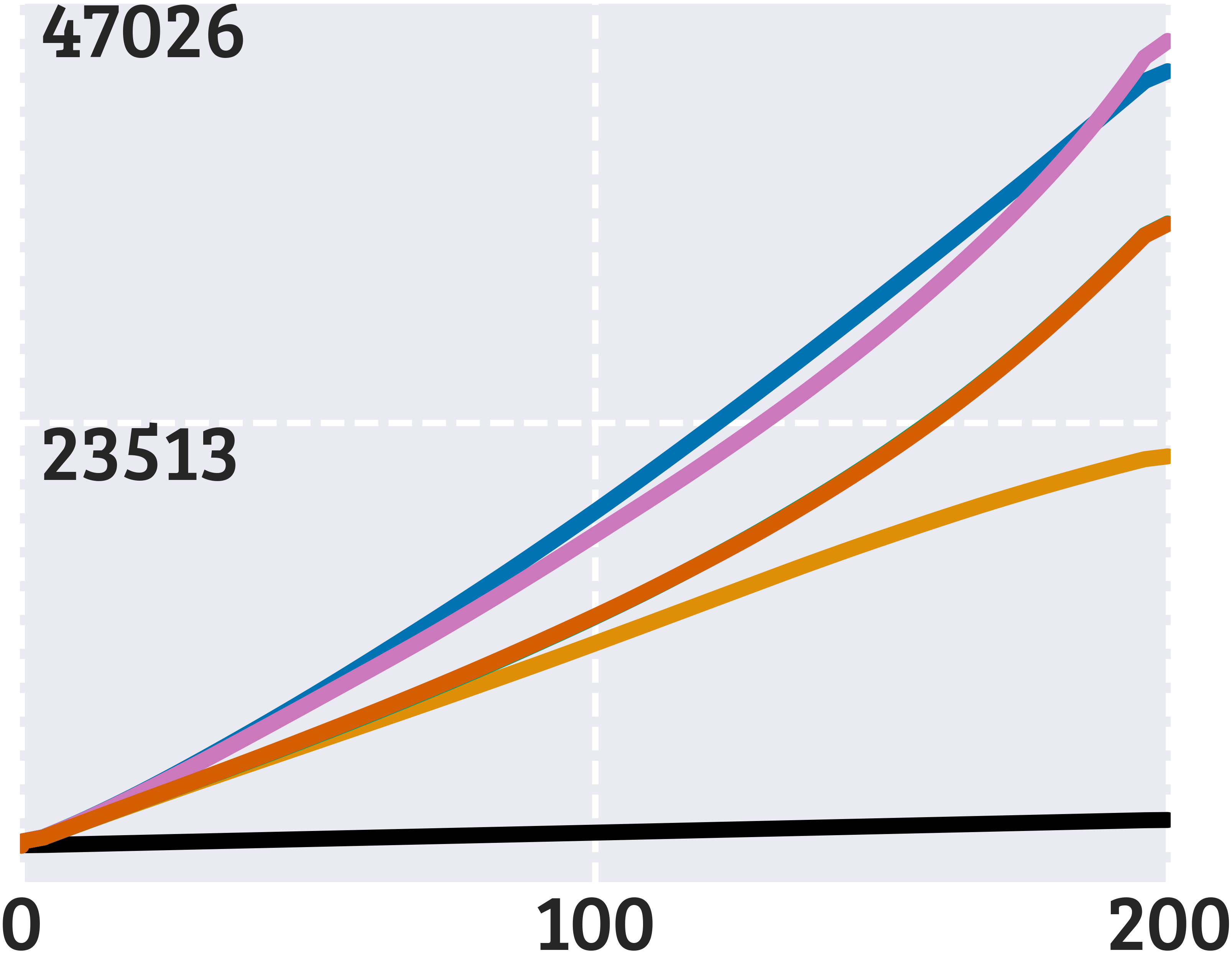}
    \end{subfigure}%
    }
    \\[3pt]
    \makebox[\linewidth][c]{%
    \raisebox{22pt}{\rotatebox[origin=t]{90}{\fontfamily{qbk}\tiny\textbf{River Swim}}}
    \hfill
    \begin{subfigure}[b]{0.16\linewidth}
        \centering
        \includegraphics[width=\linewidth]{fig/plots/visited_r_sum/iGym-Gridworlds_RiverSwim-6-v0_iFullMonitor_.png}
    \end{subfigure}
    \hfill
    \begin{subfigure}[b]{0.16\linewidth}
        \centering
        \includegraphics[width=\linewidth]{fig/plots/visited_r_sum/iGym-Gridworlds_RiverSwim-6-v0_iRandomNonZeroMonitor_.png}
    \end{subfigure}
    \hfill
    \begin{subfigure}[b]{0.16\linewidth}
        \centering
        \includegraphics[width=\linewidth]{fig/plots/visited_r_sum/iGym-Gridworlds_RiverSwim-6-v0_iStatelessBinaryMonitor_.png}
    \end{subfigure}
    \hfill
    \begin{subfigure}[b]{0.16\linewidth}
        \centering
        \includegraphics[width=\linewidth]{fig/plots/visited_r_sum/iGym-Gridworlds_RiverSwim-6-v0_iButtonMonitor_.png}
    \end{subfigure}
    \hfill
    \begin{subfigure}[b]{0.16\linewidth}
        \centering
        \includegraphics[width=\linewidth]{fig/plots/visited_r_sum/iGym-Gridworlds_RiverSwim-6-v0_iNMonitor_nm4_.png}
    \end{subfigure}
    \hfill
    \begin{subfigure}[b]{0.16\linewidth}
        \centering
        \includegraphics[width=\linewidth]{fig/plots/visited_r_sum/iGym-Gridworlds_RiverSwim-6-v0_iLevelMonitor_nl3_.png}
    \end{subfigure}%
    }
    \\[-1pt]
    {\fontfamily{qbk}\tiny\textbf{Training Steps (1e\textsuperscript{3})}}
    \caption{\label{fig:visited_r_sum_appendix}\textbf{Number of rewards observed ($\rprox_t \neq \rewardundefined$) during training by the exploration policies averaged over 100 seeds (shades denote 95\% confidence interval).} In Mon-MDPs, all baselines except \textcolor{snsblue}{\textbf{Ours}} do not observe enough rewards because they do not perform suboptimal actions to explore. Thus, they often converge to suboptimal policies, as also shown in Figure~\ref{fig:returns_appendix}. The ratio between training steps and number of observed rewards also hints that \textcolor{snsblue}{\textbf{Our}} policy truly explores uniformly. For example, in \textbf{Ask} the agent can observe rewards only in half of the Mon-MDP state space (only when $\smon_t = \texttt{ON}$). Indeed, in the plots in the third column, the number of observed rewards is roughly half of the number of timesteps (e.g., in \textbf{Empty (Ask)} the agent is trained for 15,000 steps and \textcolor{snsblue}{\textbf{Our}} agent ends up observing $\sim$7,500 rewards). The same $\sfrac{1}{2}$ ratio is consistent across all \textbf{Ask} plots. Similarly, in \textbf{Random Experts} the agent can observe rewards only in $\sfrac{1}{4}$ of the state space (there are four monitor states and only by matching the current state the agent can observe the reward), and indeed the agent ends up observing rewards $\sim$$\sfrac{1}{4}$ of the time (e.g., $\sim$12,500 out of 50,000 steps in \textbf{Empty (Random Experts)}). In Figure~\ref{fig:heat_n}, we confirm exploration uniformity with heatmaps. 
    }
\end{figure}

\clearpage

\subsection{Reward Discovery}
\label{app:rwd_discovery}
Figure~\ref{fig:visited_r_sum_appendix} shows how many rewards the agent observes ($\rprox_t \neq \rewardundefined$) during training. When rewards are fully observable (first column) this is equal to the number of training steps. If reward observability is random and the agent cannot control it (second column), all the baselines still perform relatively well. However, when the agent has to \textit{act} to observe rewards --- and these actions are suboptimal, e.g. costly requests or pushing a button --- results clearly show that \textcolor{lightblack}{\textbf{Optimism}}, \textcolor{snsorange}{\textbf{UCB}}, \textcolor{snspink}{\textbf{Q-Counts}}, \textcolor{snsgreen}{\textbf{Intrinsic}}, and \textcolor{snsred}{\textbf{Naive}} do not observe many rewards. Most of the time, in fact, they explore the environment only when rewards are observable, and therefore converge to suboptimal policies.
As discussed throughout the paper, in fact, if the agent must perform suboptimal actions to gain information, classic approaches are likely to fail. 
\textcolor{snsblue}{\textbf{Our}} exploration strategy, instead, actively explores all state-action pairs uniformly, including the ones where actions are suboptimal yet allow to observe rewards. 

\begin{figure}[ht]
    \setlength{\belowcaptionskip}{0pt}
    \setlength{\abovecaptionskip}{8pt}
    \centering
    \makebox[\linewidth][c]{%
    \raisebox{22pt}{\rotatebox[origin=t]{90}{\fontfamily{qbk}\tiny\textbf{Empty}}}
    \hfill
    \begin{subfigure}[b]{0.16\linewidth}
        \centering
        {\fontfamily{qbk}\tiny\textbf{Full Observ.}}\\[1.4pt]
        \includegraphics[width=\linewidth]{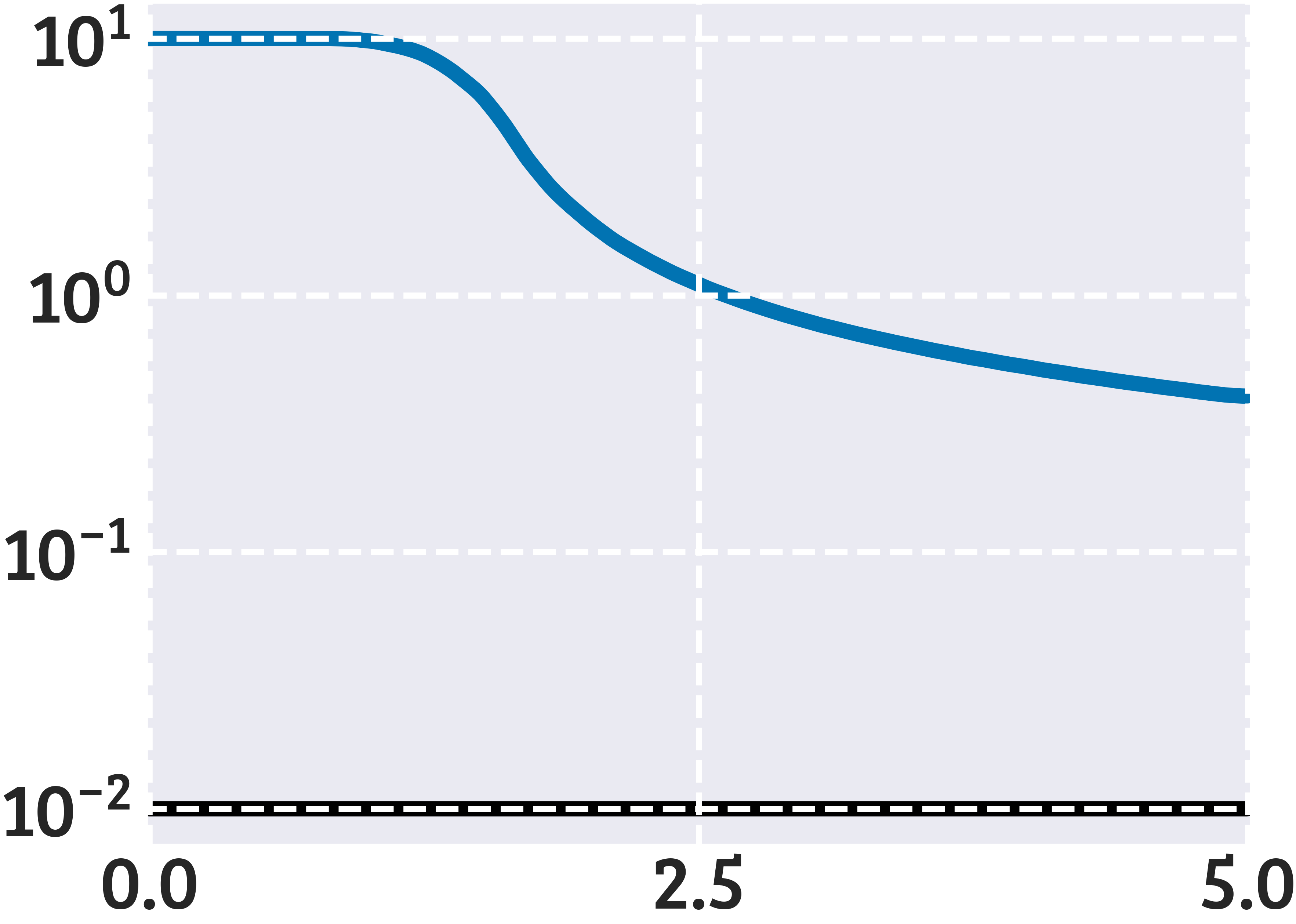}
    \end{subfigure}
    \hfill
    \begin{subfigure}[b]{0.16\linewidth}
        \centering
        {\fontfamily{qbk}\tiny\textbf{Random Observ.}}\\[1.4pt]
        \includegraphics[width=\linewidth]{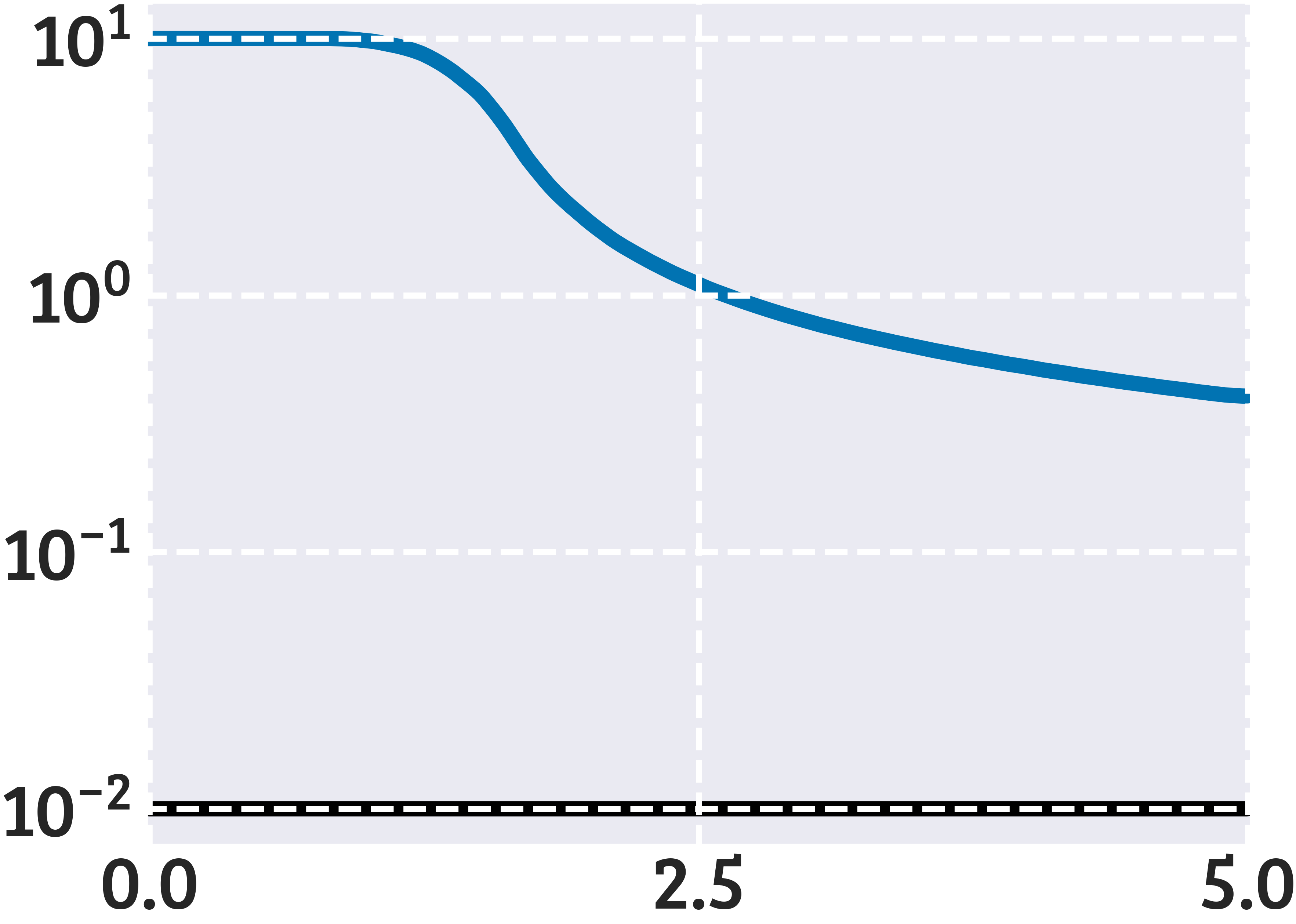}
    \end{subfigure}
    \hfill
    \begin{subfigure}[b]{0.16\linewidth}
        \centering
        {\fontfamily{qbk}\tiny\textbf{Ask}}\\[1.4pt]
        \includegraphics[width=\linewidth]{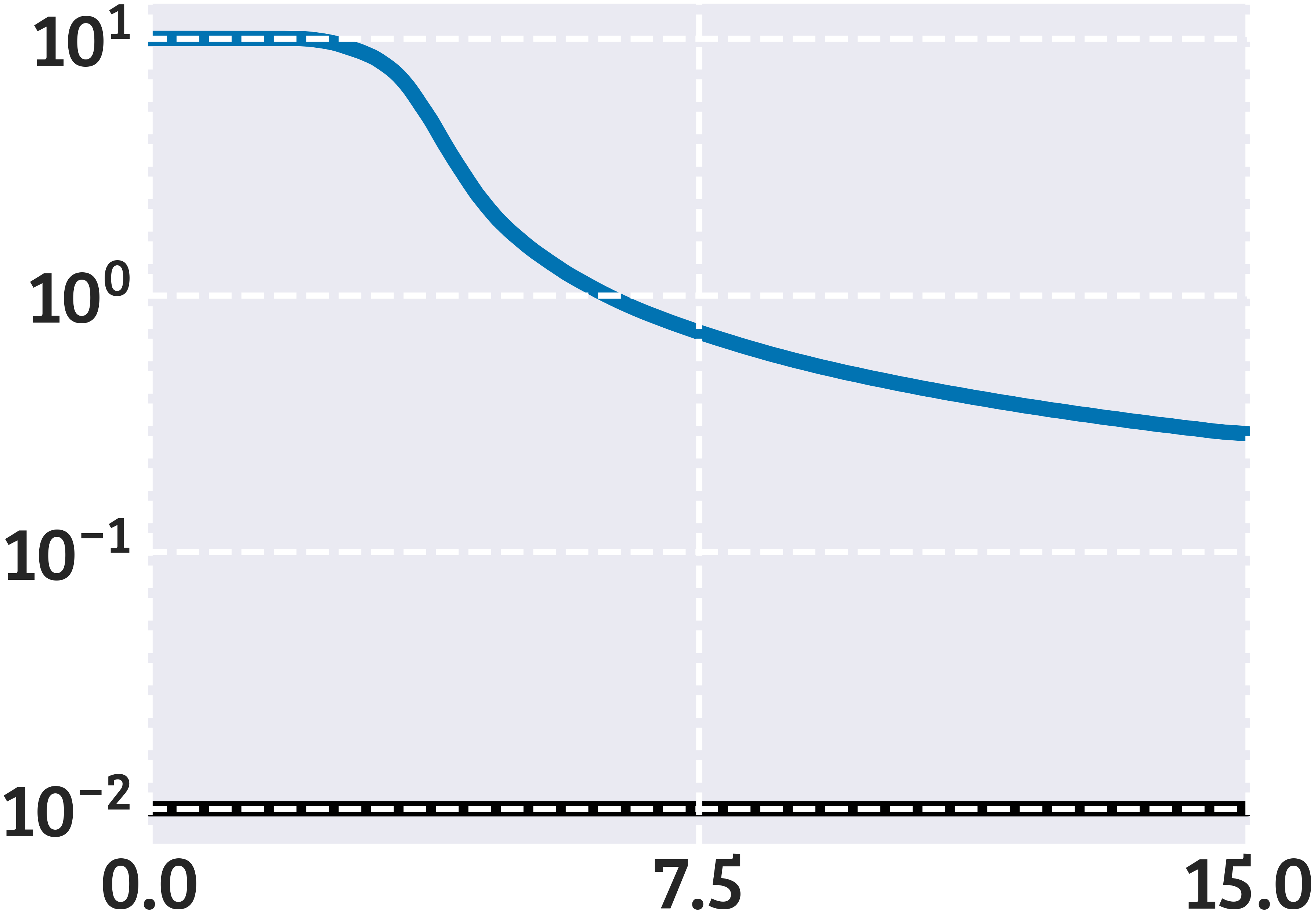}
    \end{subfigure}
    \hfill
    \begin{subfigure}[b]{0.16\linewidth}
        \centering
        {\fontfamily{qbk}\tiny\textbf{Button}}\\[1.4pt]
        \includegraphics[width=\linewidth]{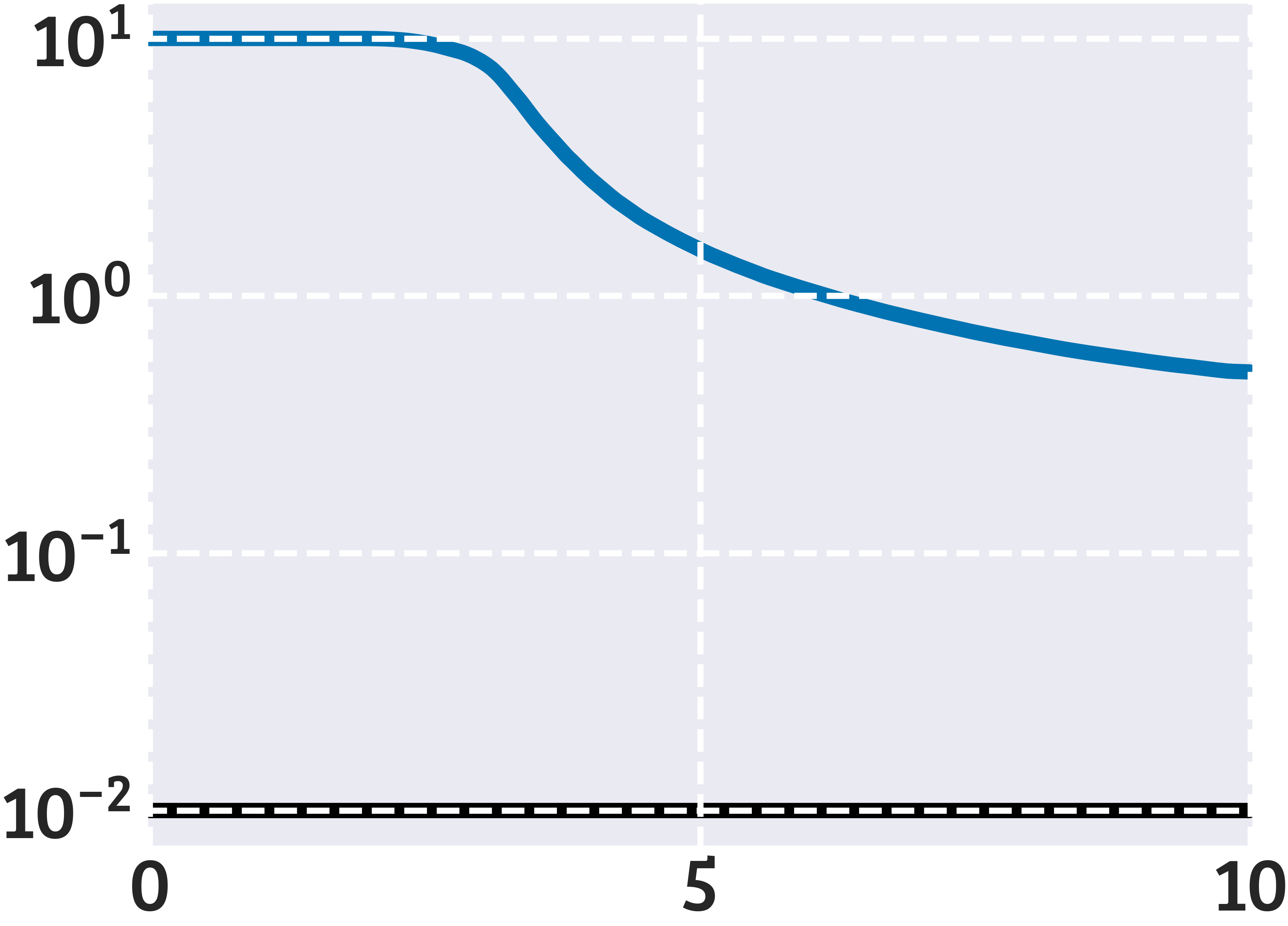}
    \end{subfigure}
    \hfill
    \begin{subfigure}[b]{0.16\linewidth}
        \centering
        {\fontfamily{qbk}\tiny\textbf{Random Experts}}\\[1.4pt]
        \includegraphics[width=\linewidth]{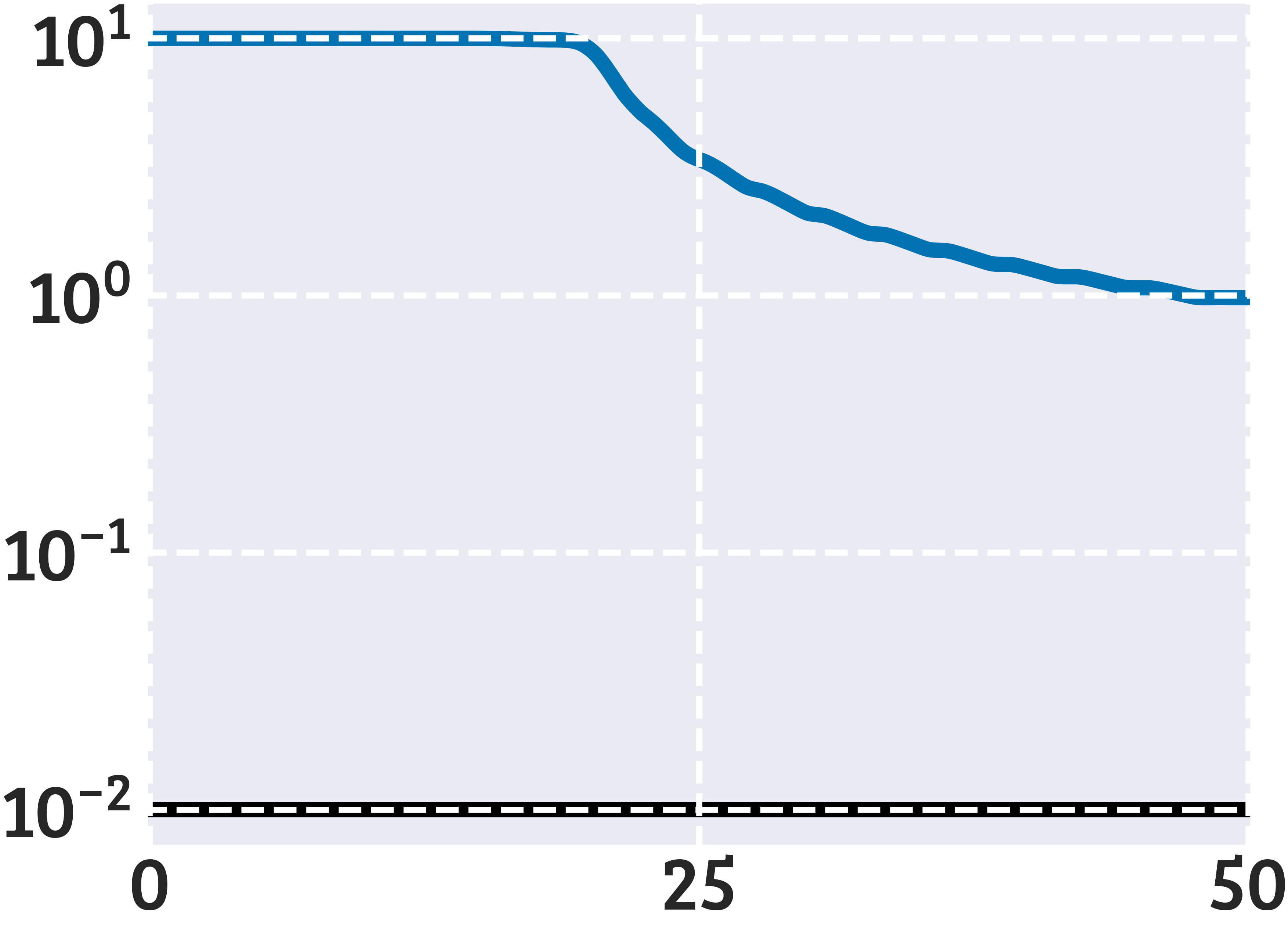}
    \end{subfigure}
    \hfill
    \begin{subfigure}[b]{0.16\linewidth}
        \centering
        {\fontfamily{qbk}\tiny\textbf{Level Up}}\\[1.4pt]
        \includegraphics[width=\linewidth]{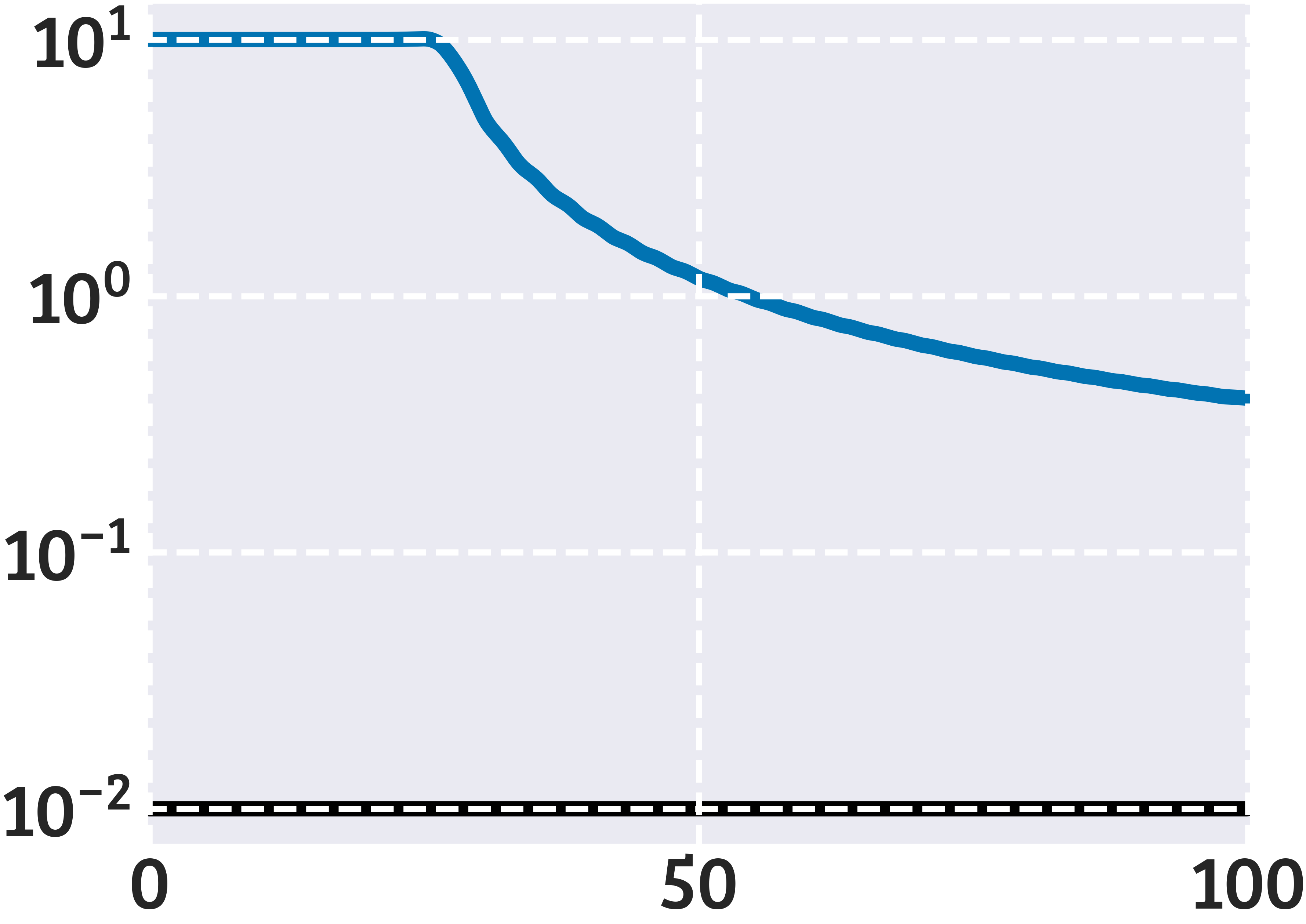}
    \end{subfigure}%
    }
    \\[3pt]
    \makebox[\linewidth][c]{%
    \raisebox{22pt}{\rotatebox[origin=t]{90}{\fontfamily{qbk}\tiny\textbf{Loop}}}
    \hfill
    \begin{subfigure}[b]{0.16\linewidth}
        \centering
        \includegraphics[width=\linewidth]{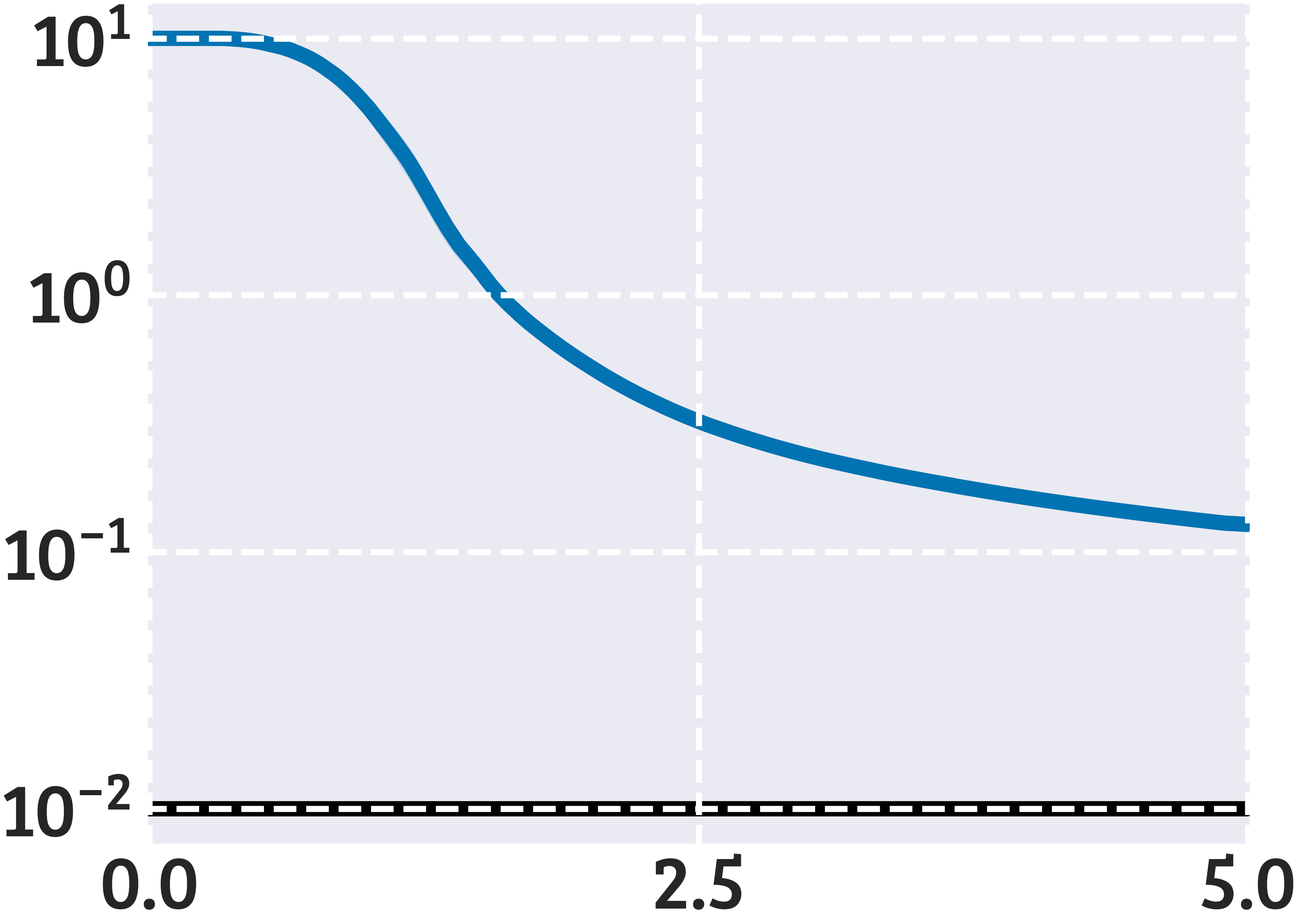}
    \end{subfigure}
    \hfill
    \begin{subfigure}[b]{0.16\linewidth}
        \centering
        \includegraphics[width=\linewidth]{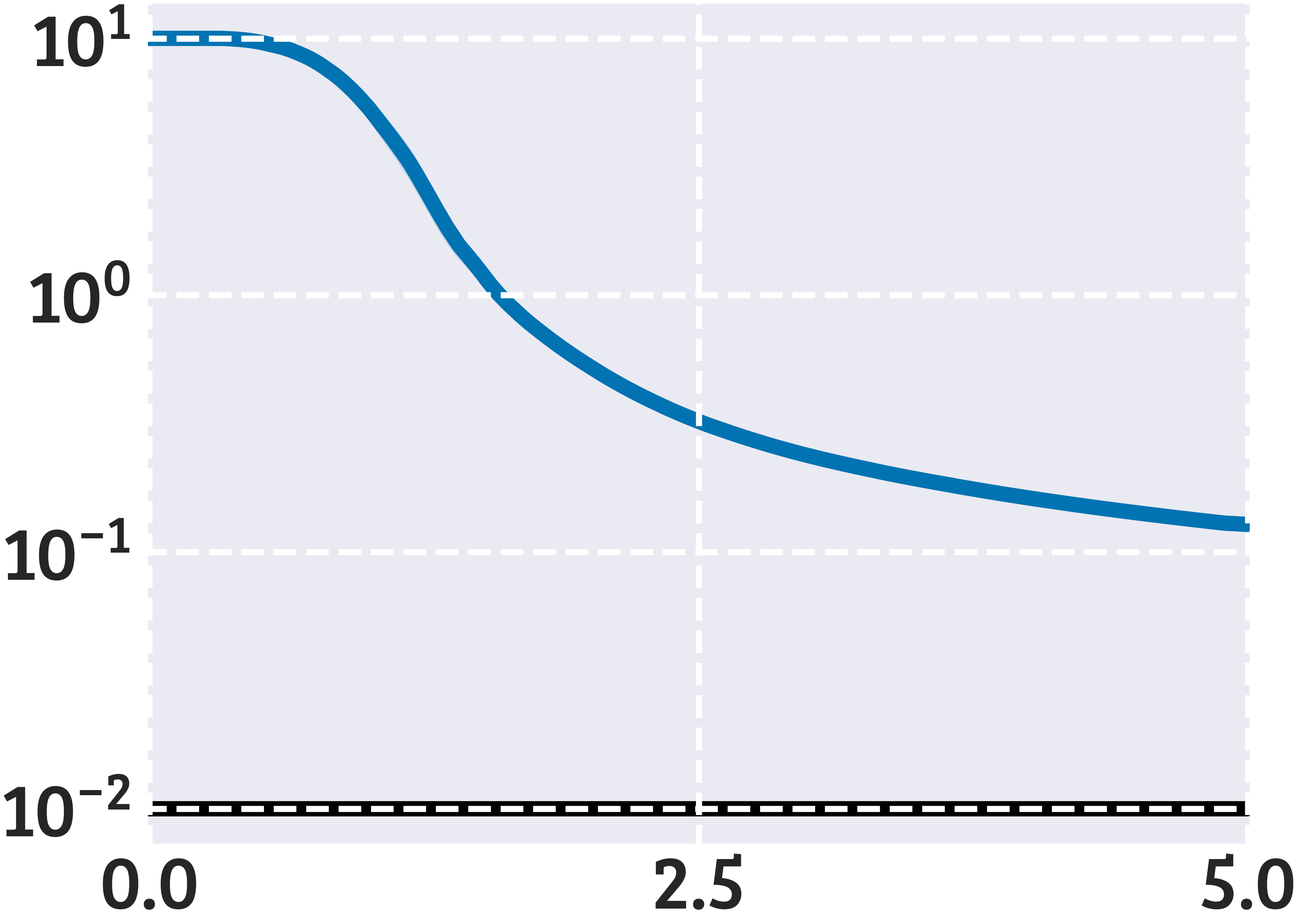}
    \end{subfigure}
    \hfill
    \begin{subfigure}[b]{0.16\linewidth}
        \centering
        \includegraphics[width=\linewidth]{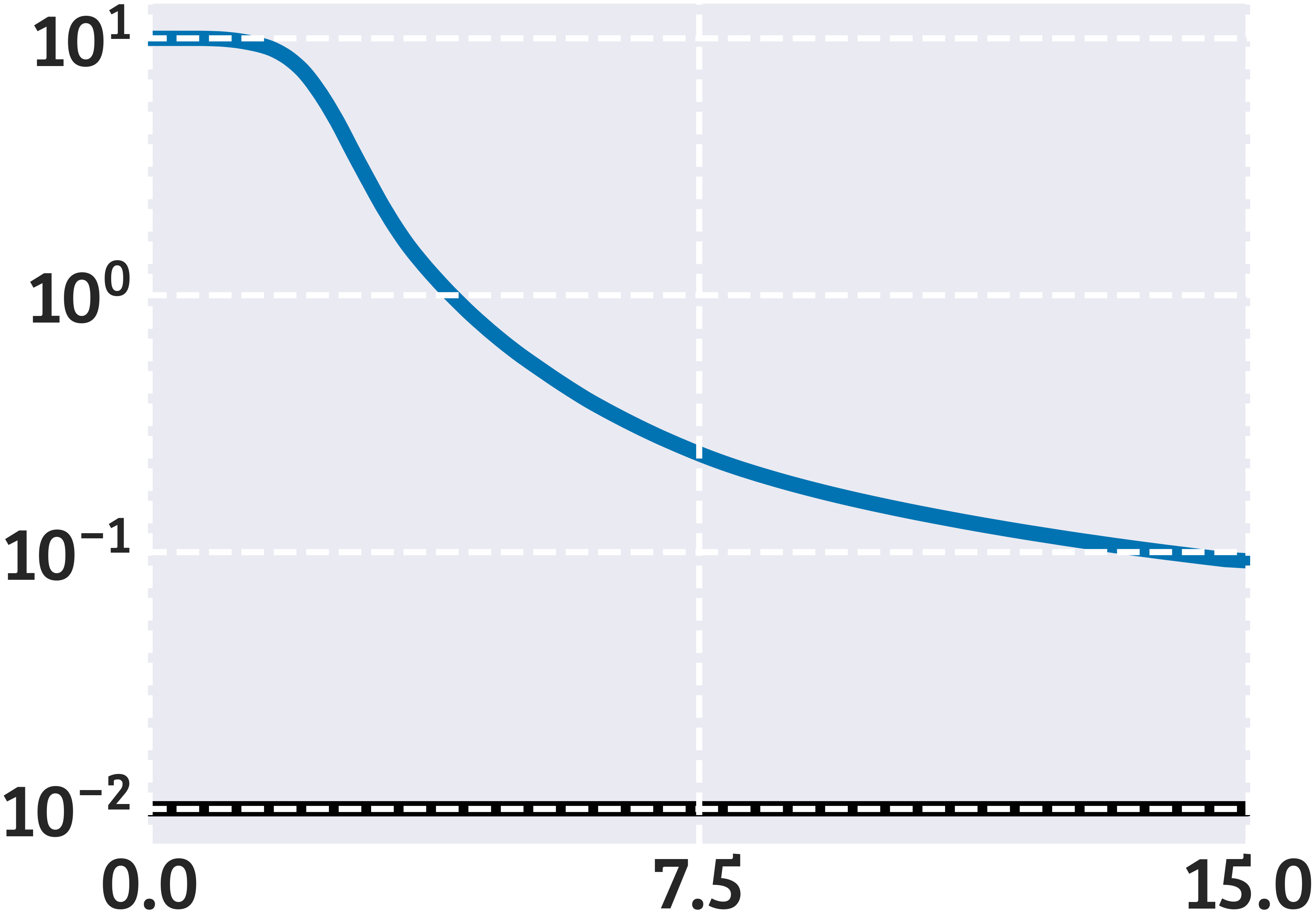}
    \end{subfigure}
    \hfill
    \begin{subfigure}[b]{0.16\linewidth}
        \centering
        \includegraphics[width=\linewidth]{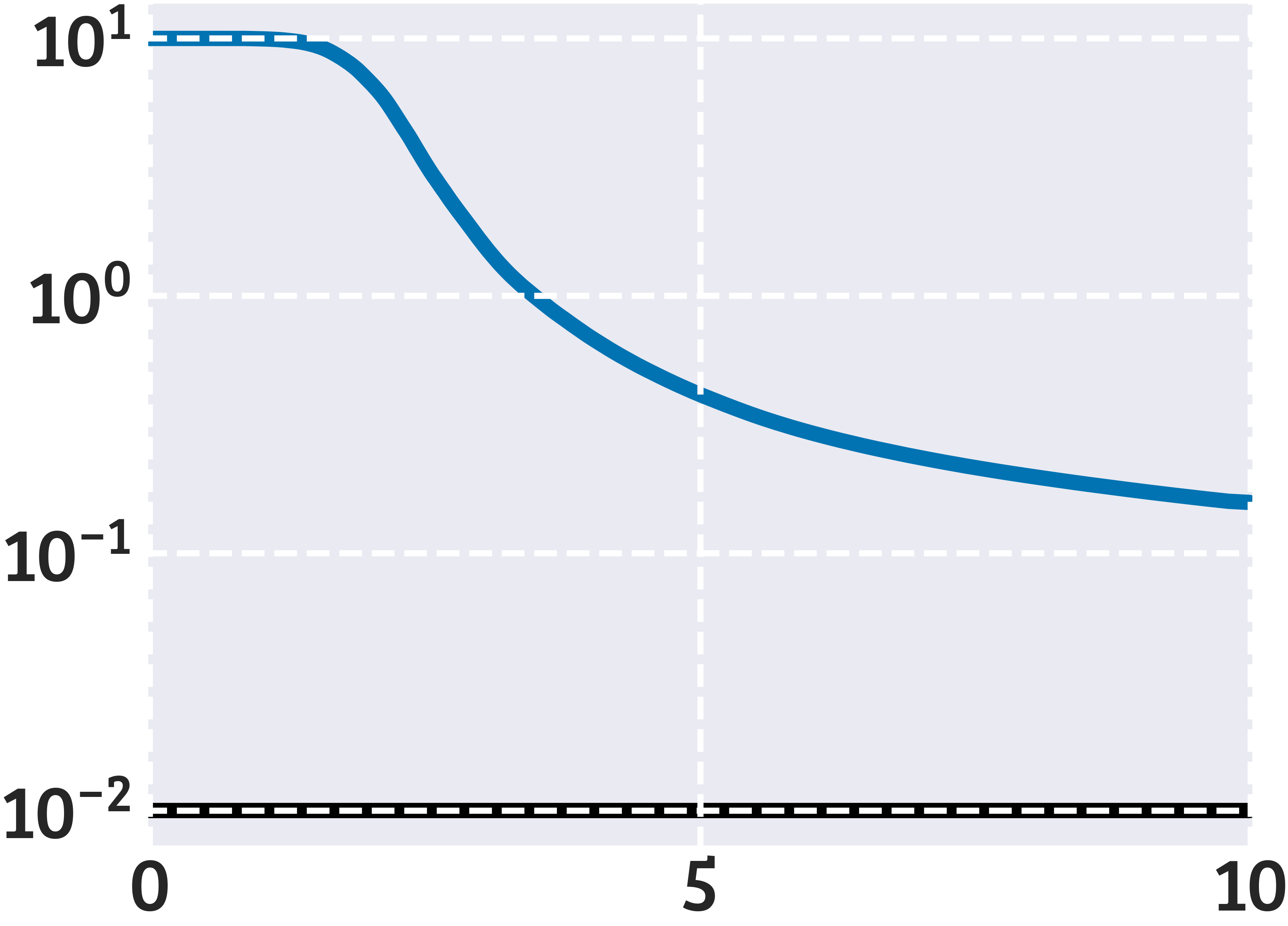}
    \end{subfigure}
    \hfill
    \begin{subfigure}[b]{0.16\linewidth}
        \centering
        \includegraphics[width=\linewidth]{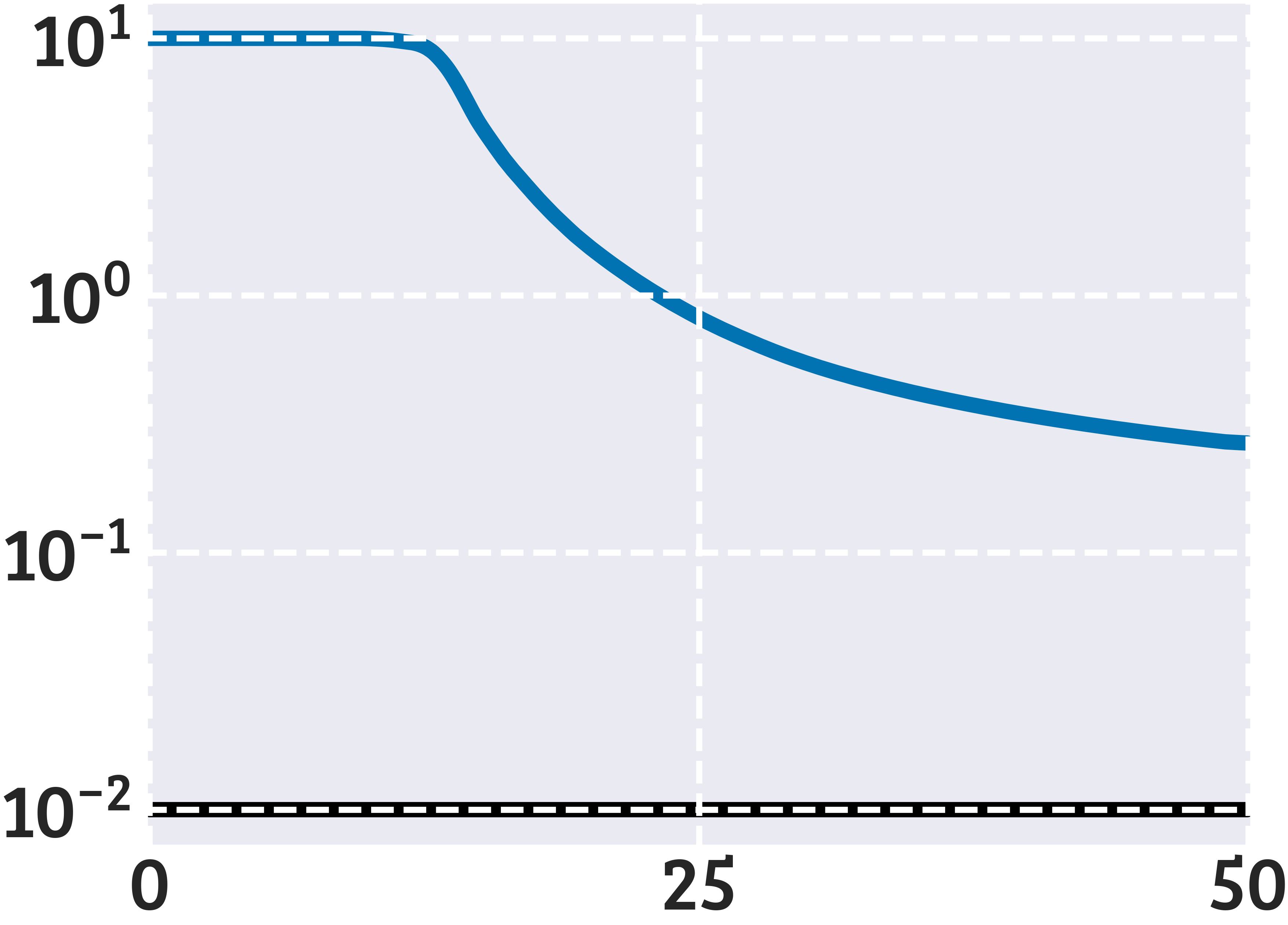}
    \end{subfigure}
    \hfill
    \begin{subfigure}[b]{0.16\linewidth}
        \centering
        \includegraphics[width=\linewidth]{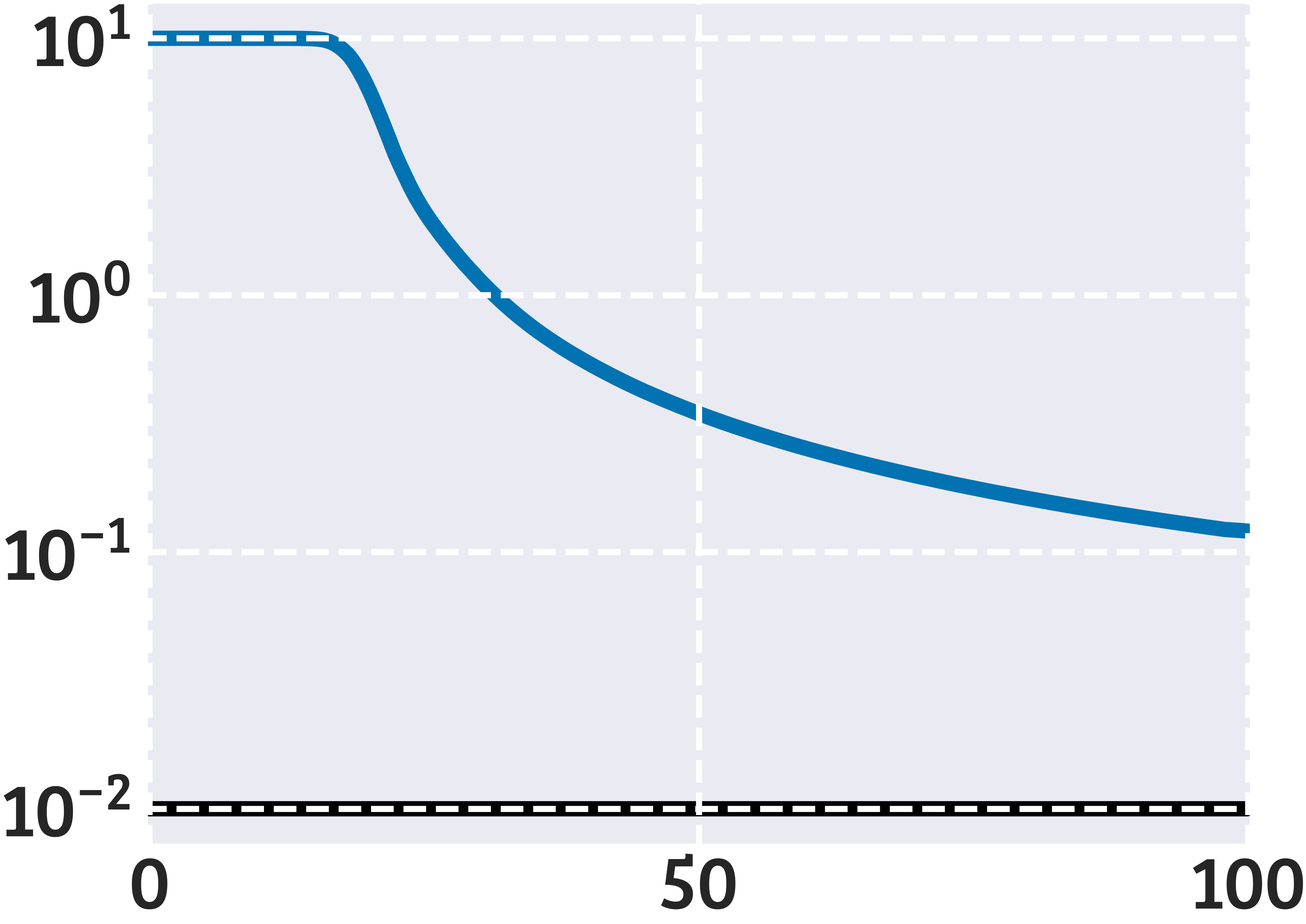}
    \end{subfigure}%
    }
    \\[3pt]
    \makebox[\linewidth][c]{%
    \raisebox{22pt}{\rotatebox[origin=t]{90}{\fontfamily{qbk}\tiny\textbf{Hazard}}}
    \hfill
    \begin{subfigure}[b]{0.16\linewidth}
        \centering
        \includegraphics[width=\linewidth]{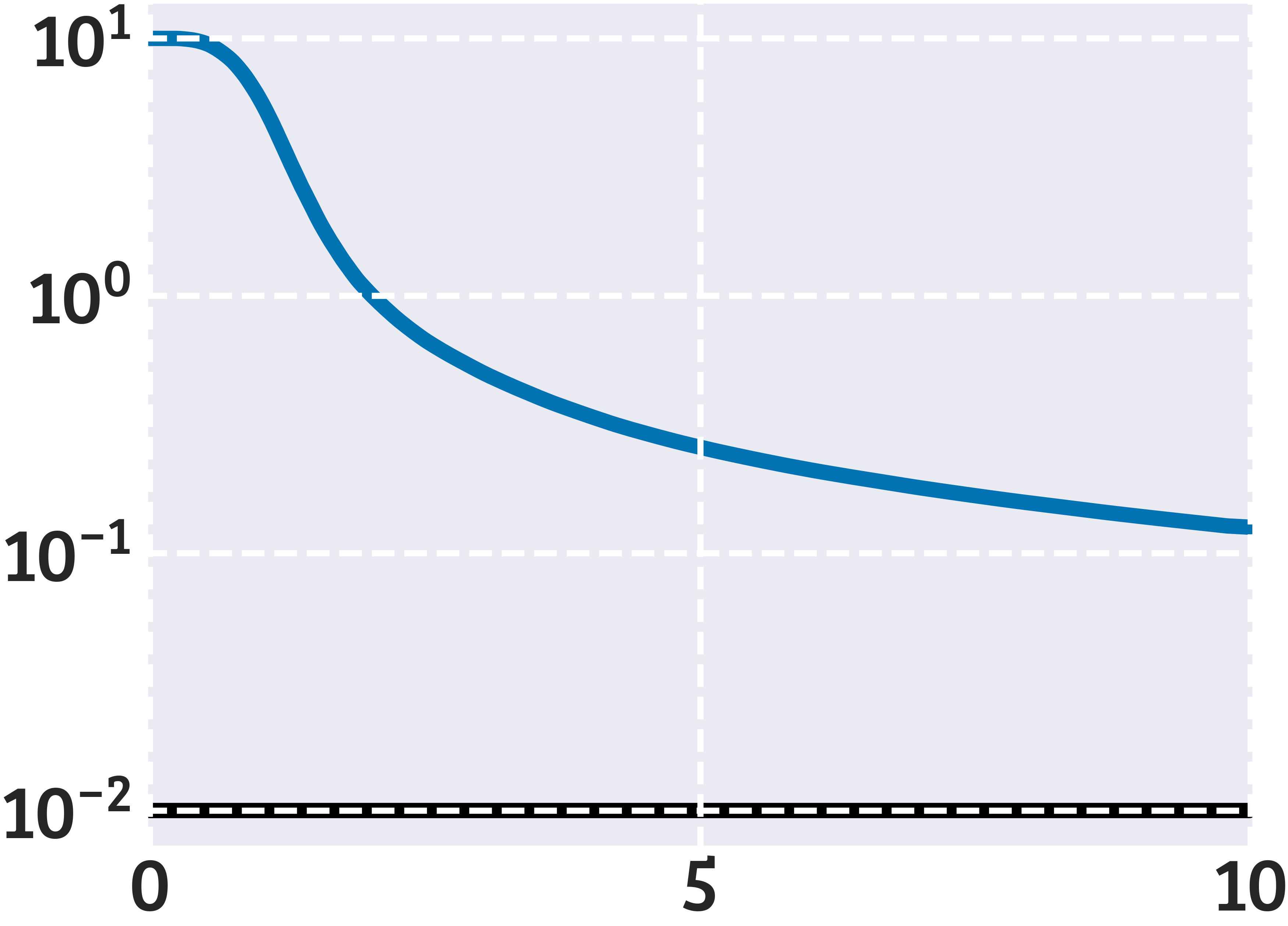}
    \end{subfigure}
    \hfill
    \begin{subfigure}[b]{0.16\linewidth}
        \centering
        \includegraphics[width=\linewidth]{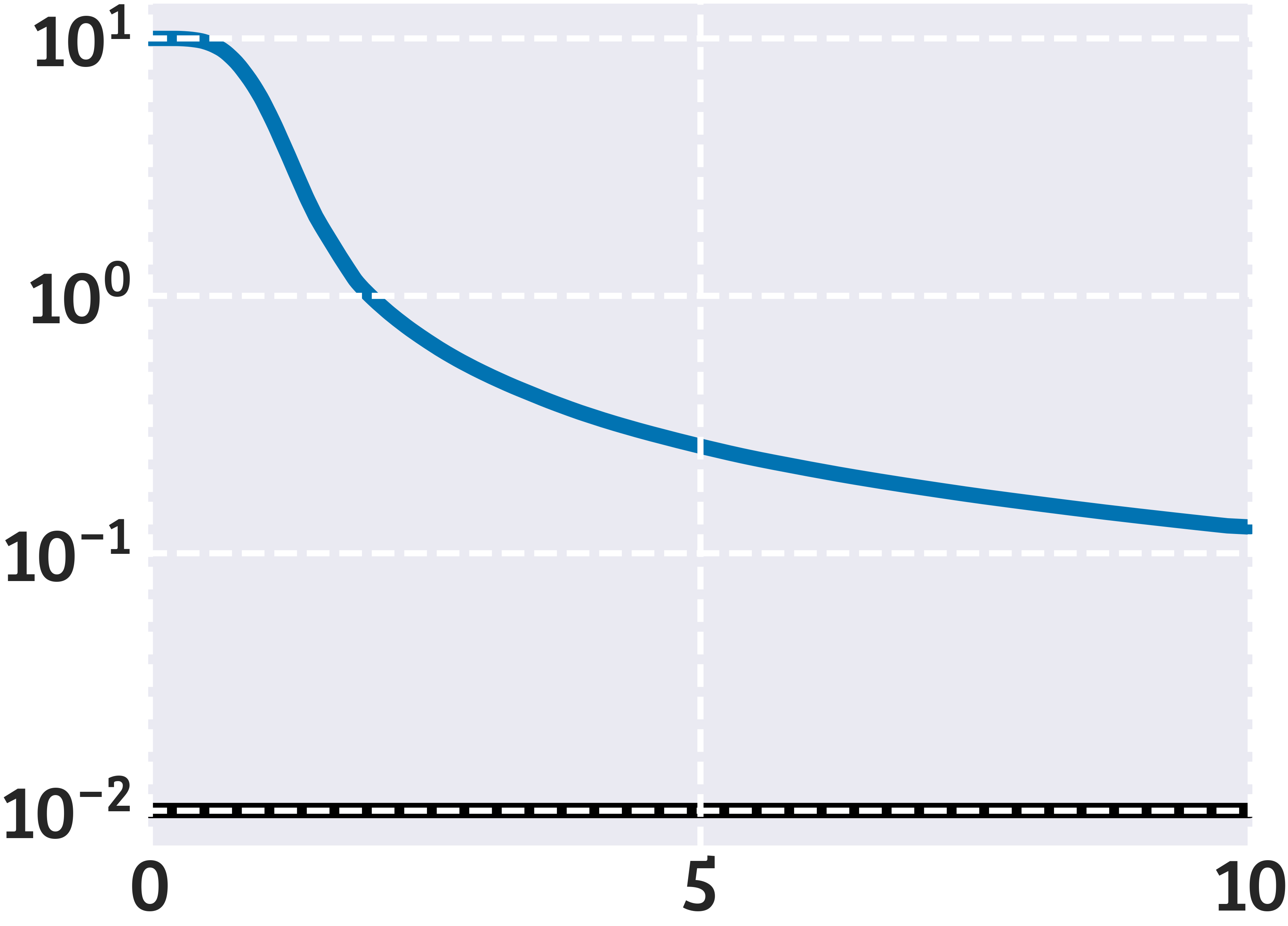}
    \end{subfigure}
    \hfill
    \begin{subfigure}[b]{0.16\linewidth}
        \centering
        \includegraphics[width=\linewidth]{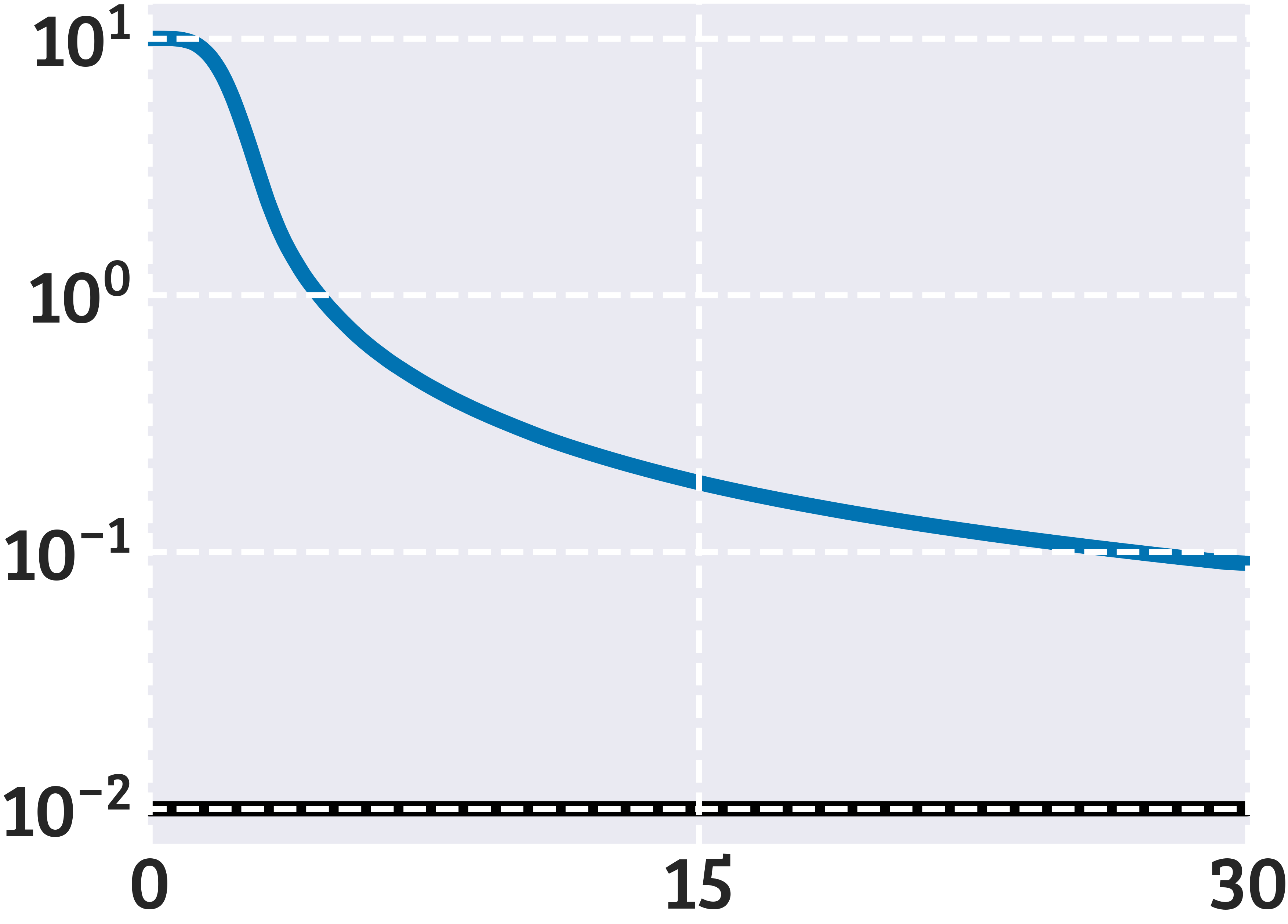}
    \end{subfigure}
    \hfill
    \begin{subfigure}[b]{0.16\linewidth}
        \centering
        \includegraphics[width=\linewidth]{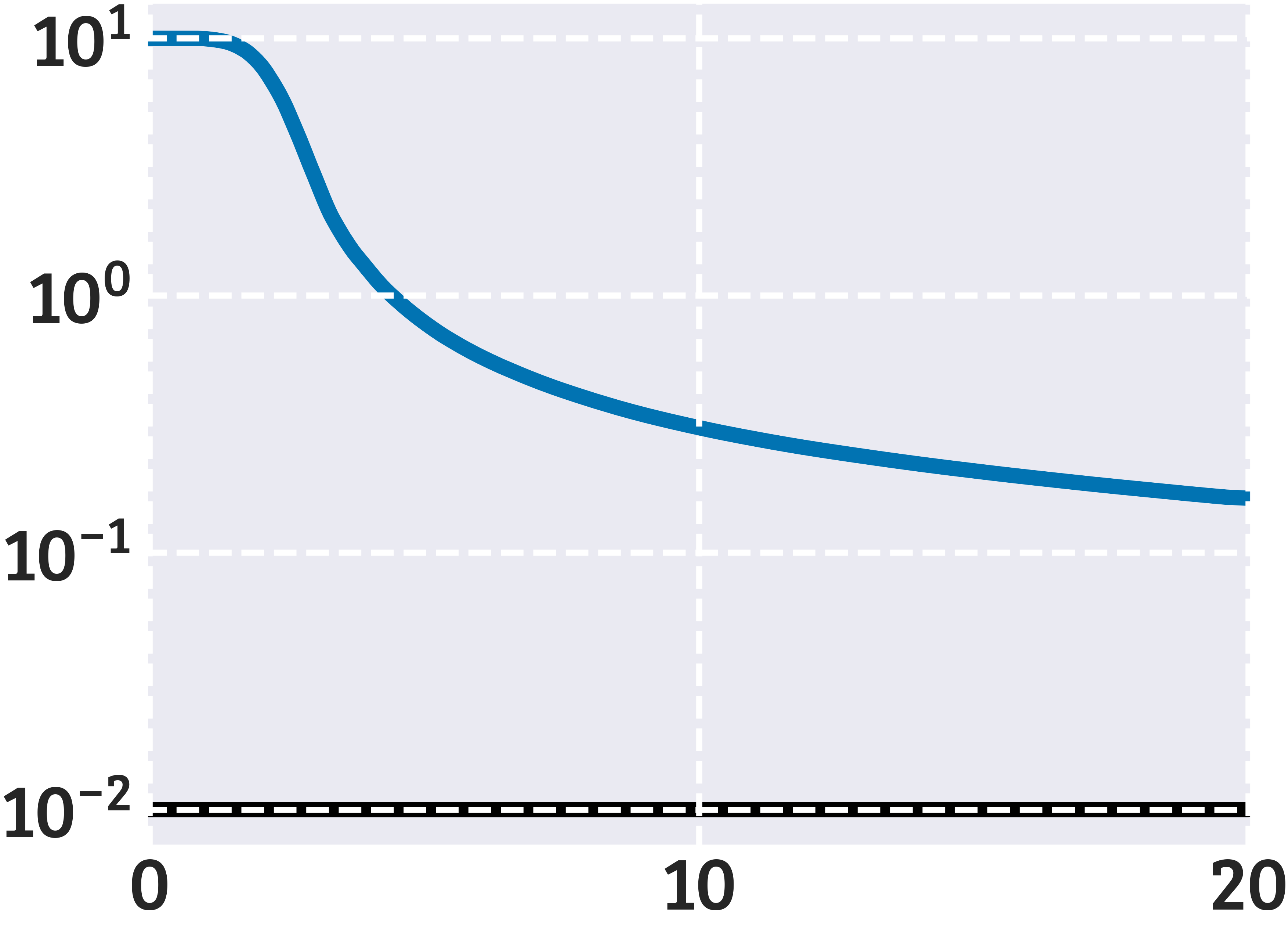}
    \end{subfigure}
    \hfill
    \begin{subfigure}[b]{0.16\linewidth}
        \centering
        \includegraphics[width=\linewidth]{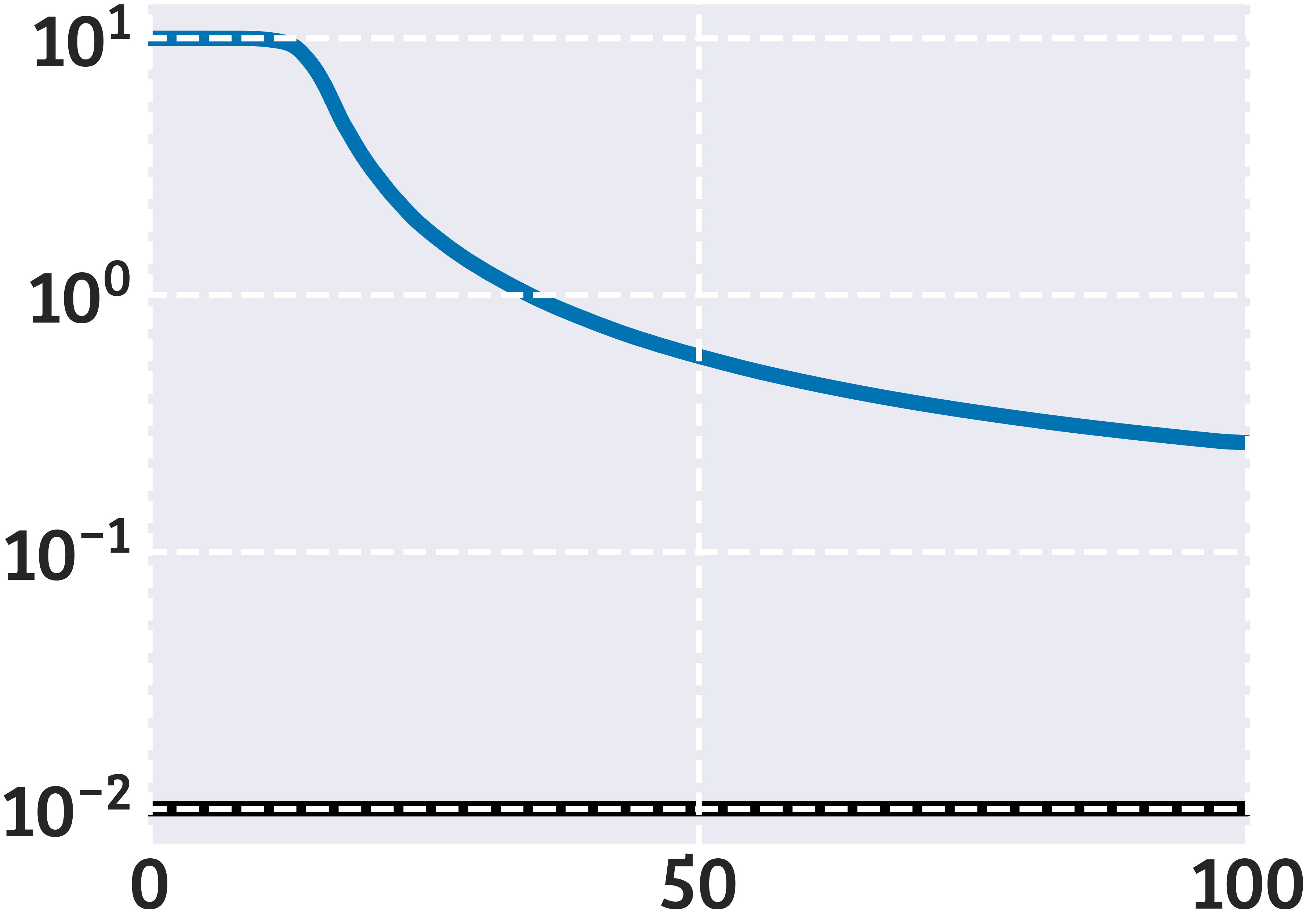}
    \end{subfigure}
    \hfill
    \begin{subfigure}[b]{0.16\linewidth}
        \centering
        \includegraphics[width=\linewidth]{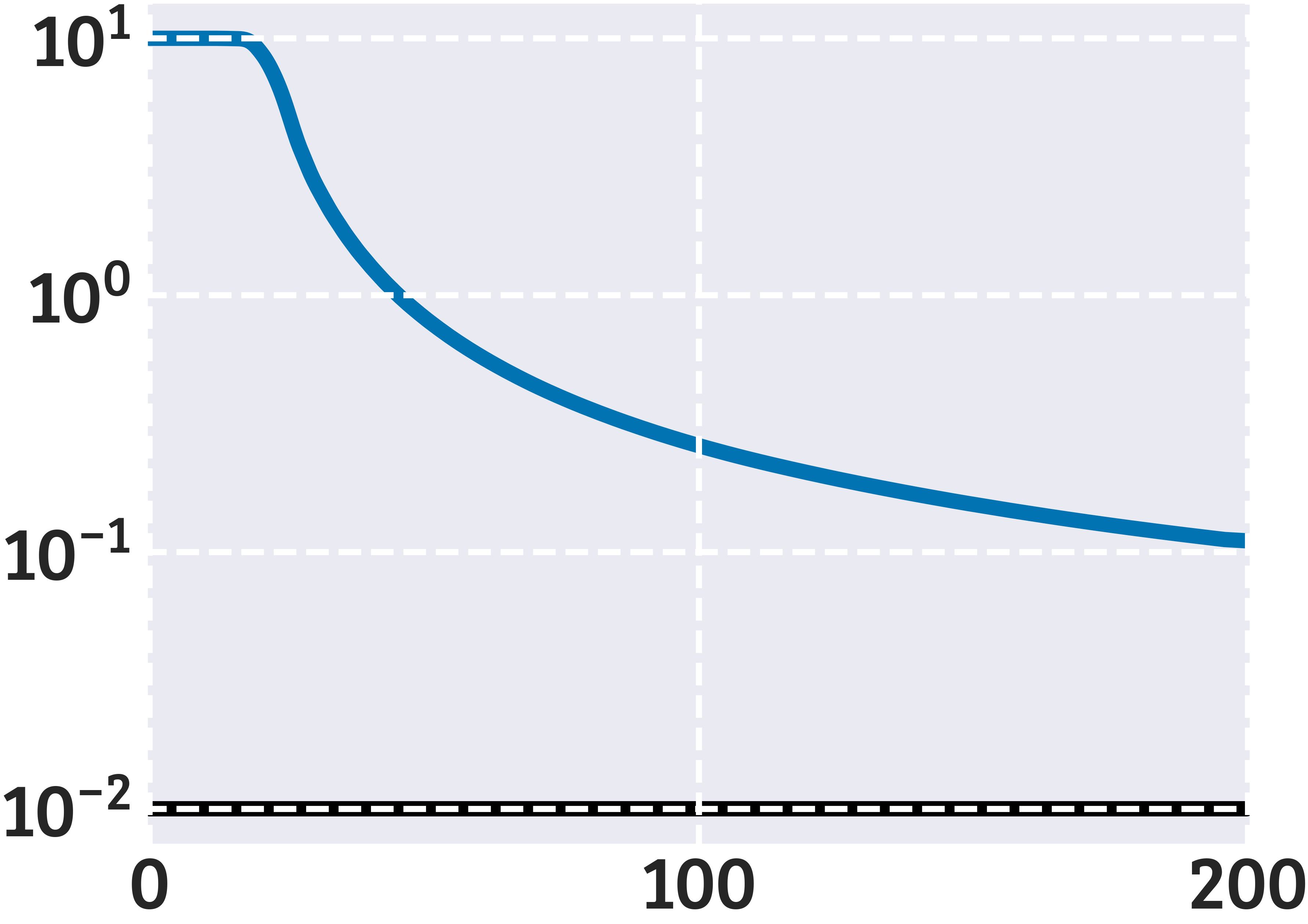}
    \end{subfigure}%
    }
    \\[3pt]
    \makebox[\linewidth][c]{%
    \raisebox{22pt}{\rotatebox[origin=t]{90}{\fontfamily{qbk}\tiny\textbf{One-Way}}}
    \hfill
    \begin{subfigure}[b]{0.16\linewidth}
        \centering
        \includegraphics[width=\linewidth]{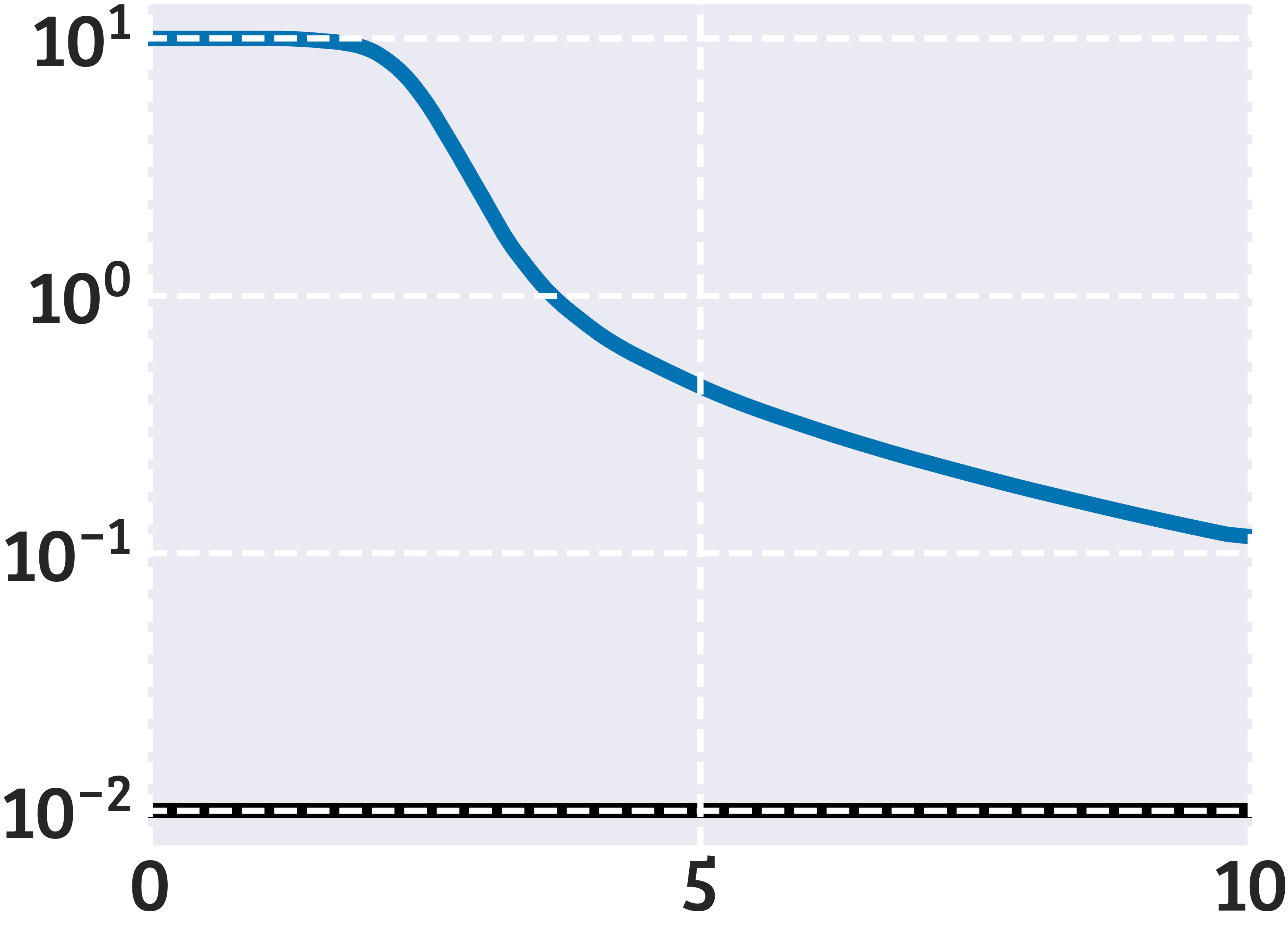}
    \end{subfigure}
    \hfill
    \begin{subfigure}[b]{0.16\linewidth}
        \centering
        \includegraphics[width=\linewidth]{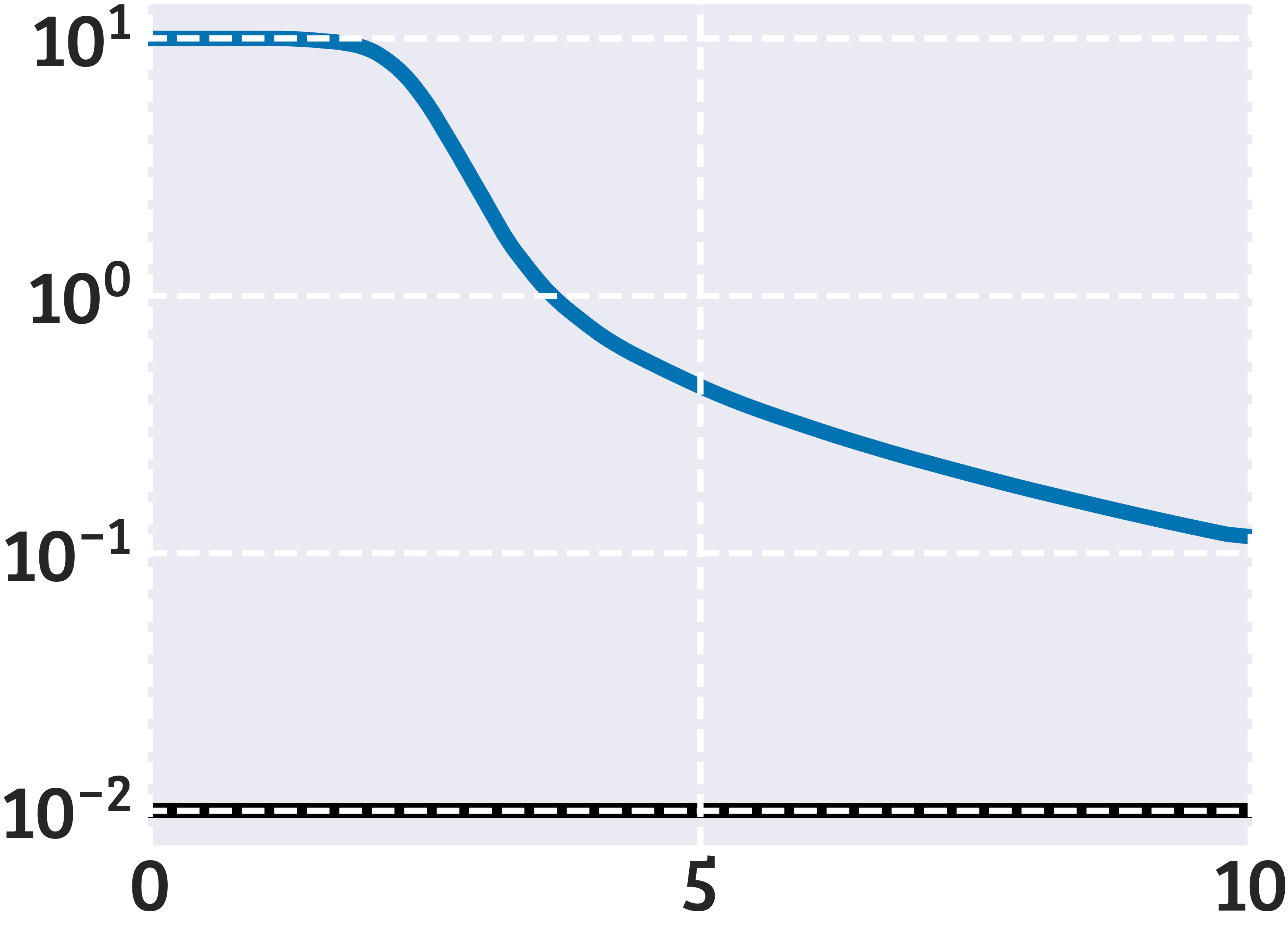}
    \end{subfigure}
    \hfill
    \begin{subfigure}[b]{0.16\linewidth}
        \centering
        \includegraphics[width=\linewidth]{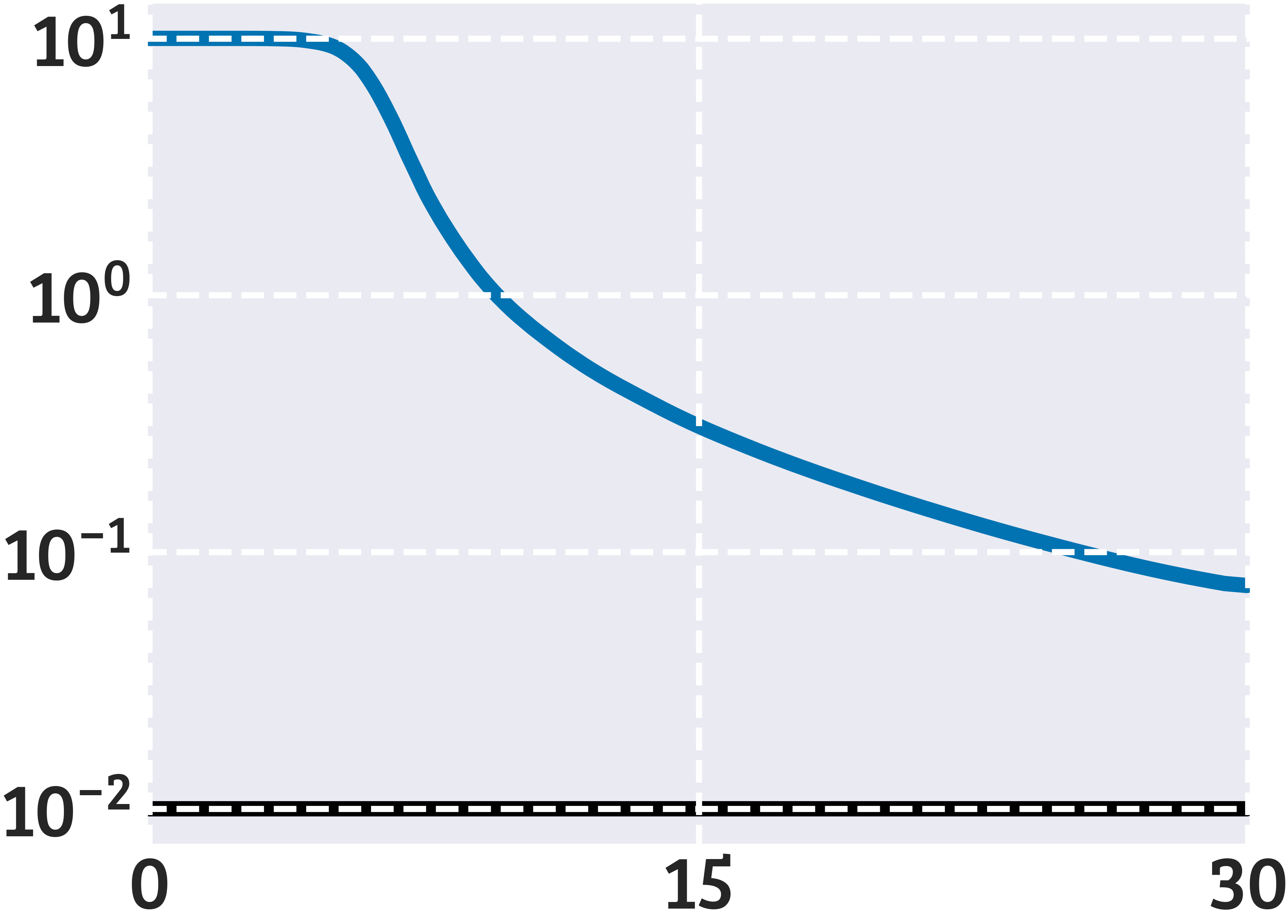}
    \end{subfigure}
    \hfill
    \begin{subfigure}[b]{0.16\linewidth}
        \centering
        \includegraphics[width=\linewidth]{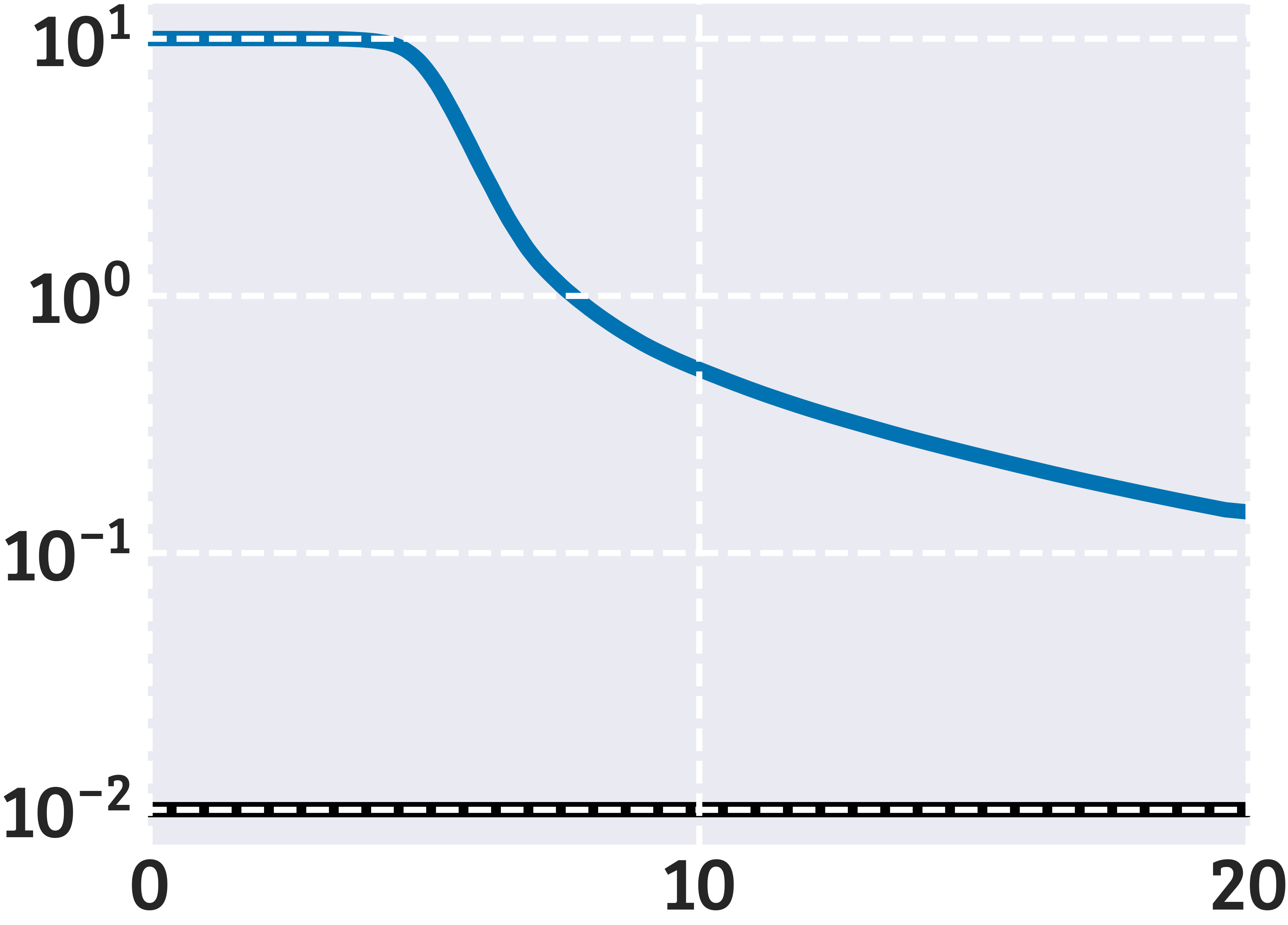}
    \end{subfigure}
    \hfill
    \begin{subfigure}[b]{0.16\linewidth}
        \centering
        \includegraphics[width=\linewidth]{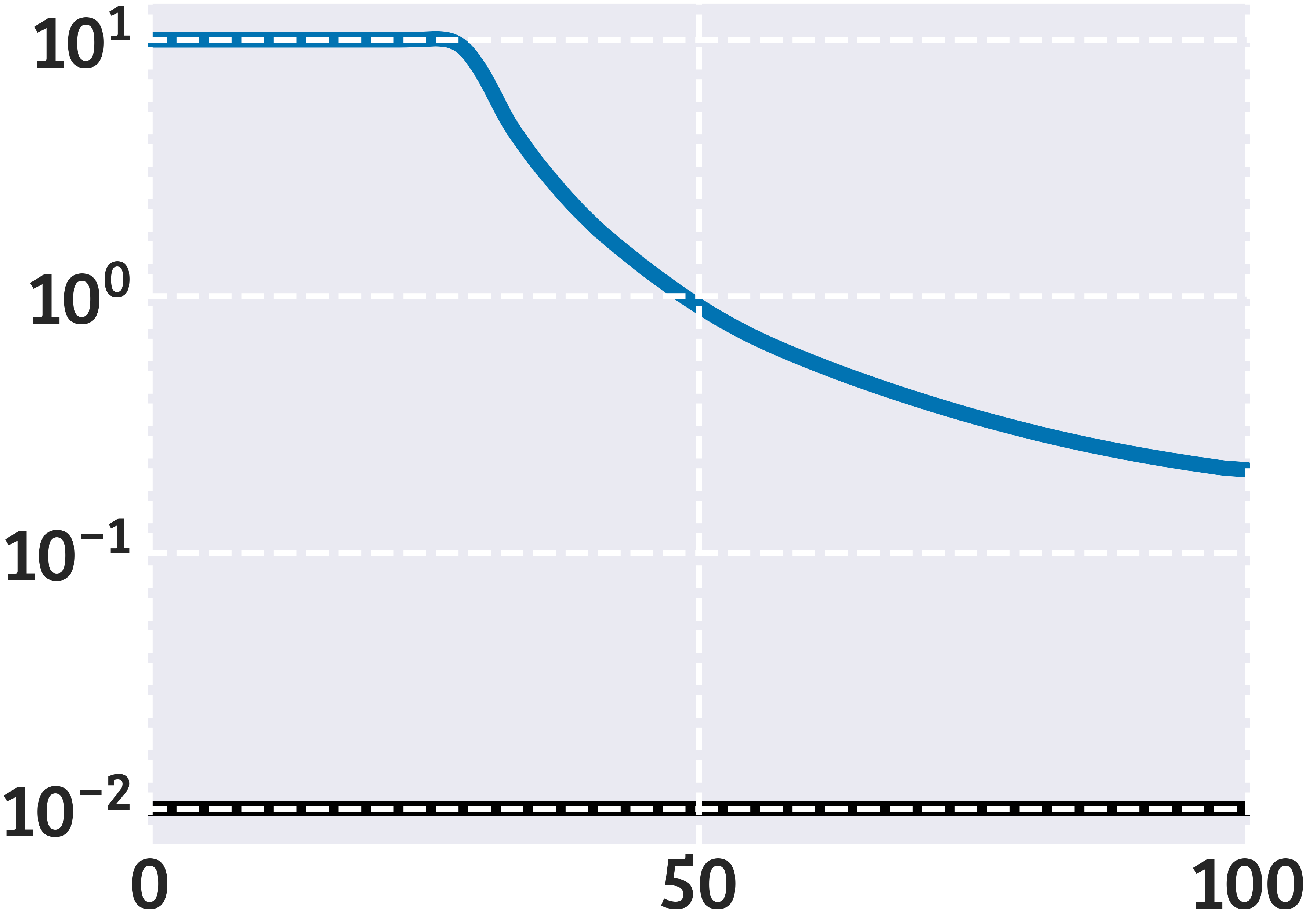}
    \end{subfigure}
    \hfill
    \begin{subfigure}[b]{0.16\linewidth}
        \centering
        \includegraphics[width=\linewidth]{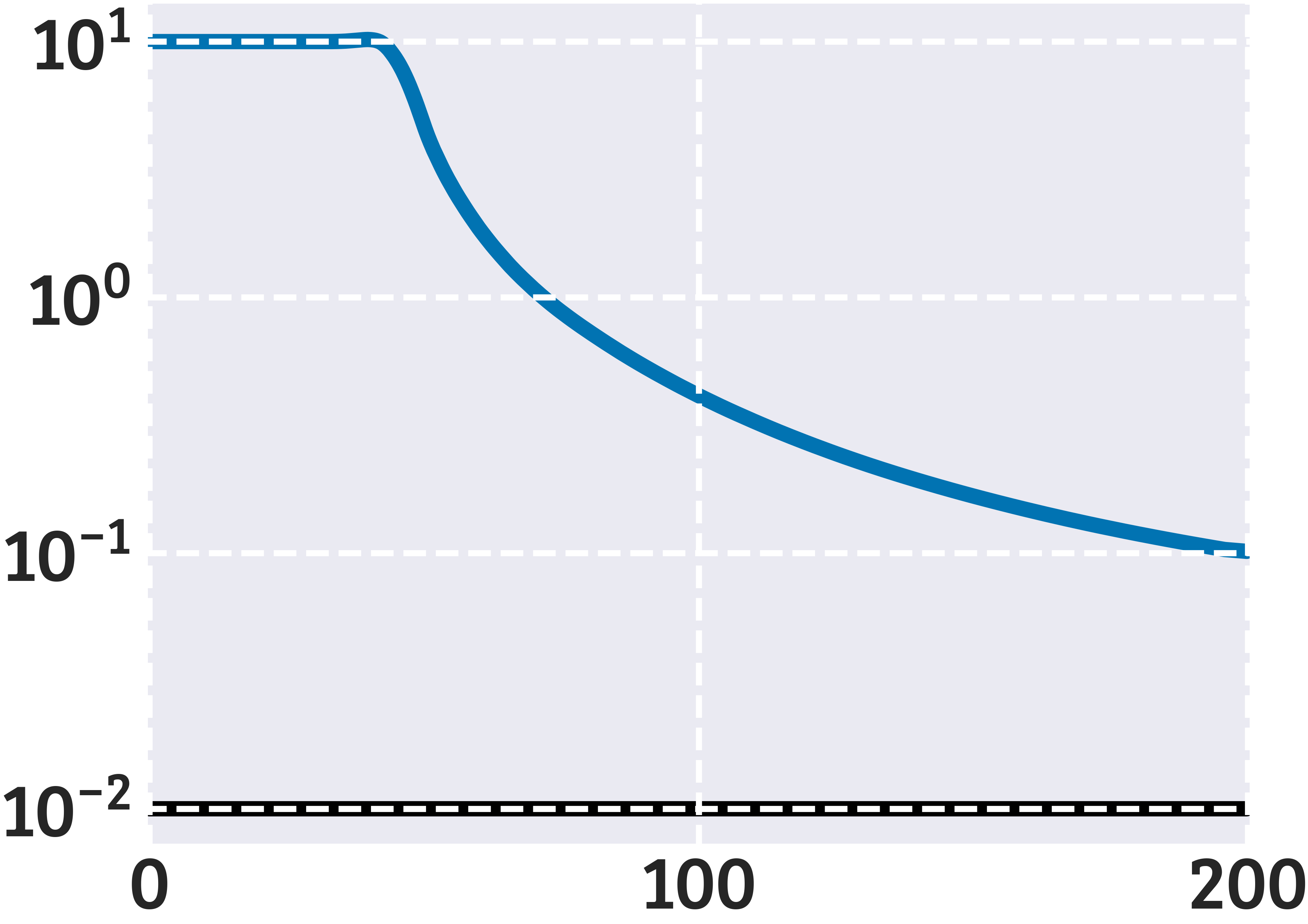}
    \end{subfigure}%
    }
    \\[3pt]
    \makebox[\linewidth][c]{%
    \raisebox{22pt}{\rotatebox[origin=t]{90}{\fontfamily{qbk}\tiny\textbf{Corridor}}}
    \hfill
    \begin{subfigure}[b]{0.16\linewidth}
        \centering
        \includegraphics[width=\linewidth]{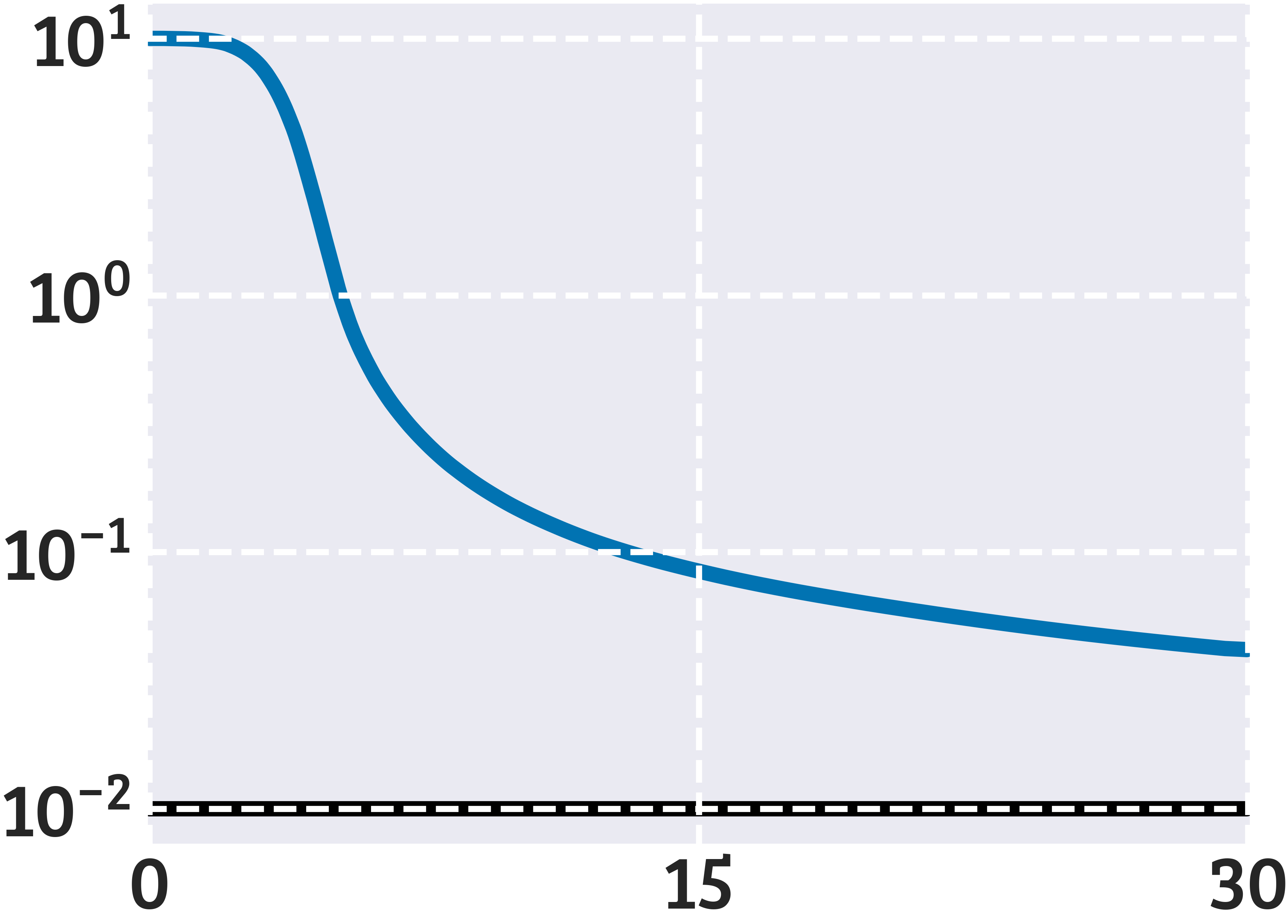}
    \end{subfigure}
    \hfill
    \begin{subfigure}[b]{0.16\linewidth}
        \centering
        \includegraphics[width=\linewidth]{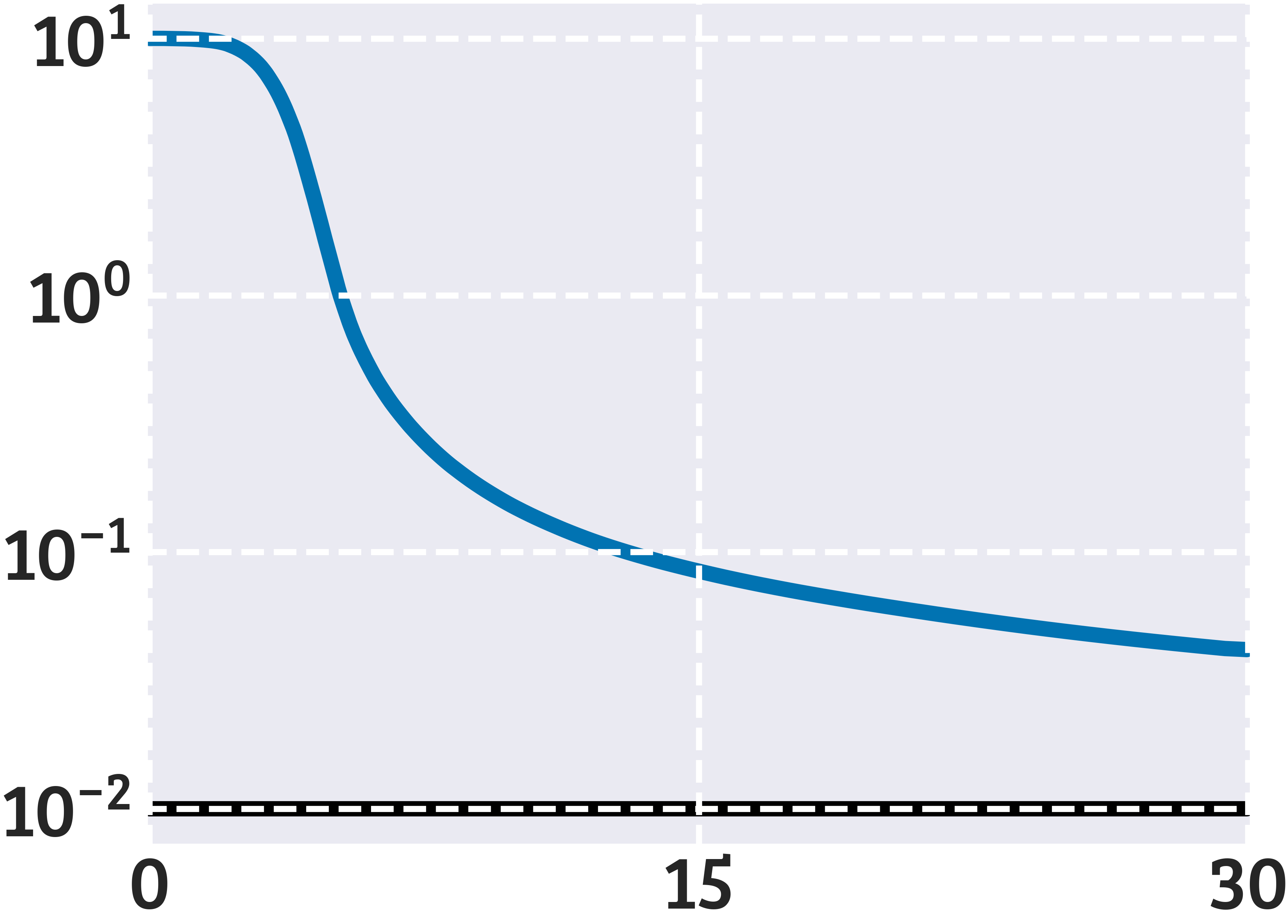}
    \end{subfigure}
    \hfill
    \begin{subfigure}[b]{0.16\linewidth}
        \centering
        \includegraphics[width=\linewidth]{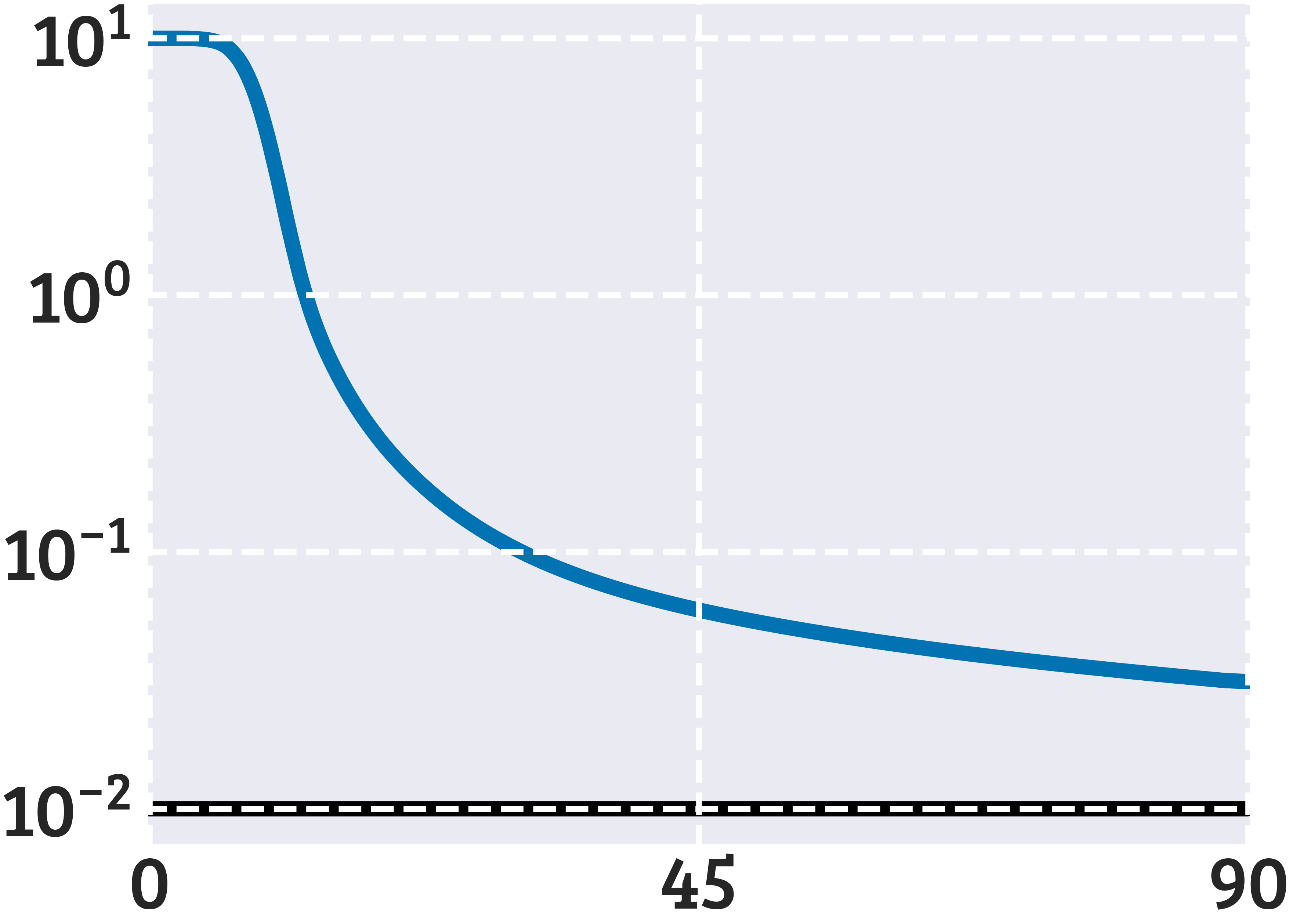}
    \end{subfigure}
    \hfill
    \begin{subfigure}[b]{0.16\linewidth}
        \centering
        \includegraphics[width=\linewidth]{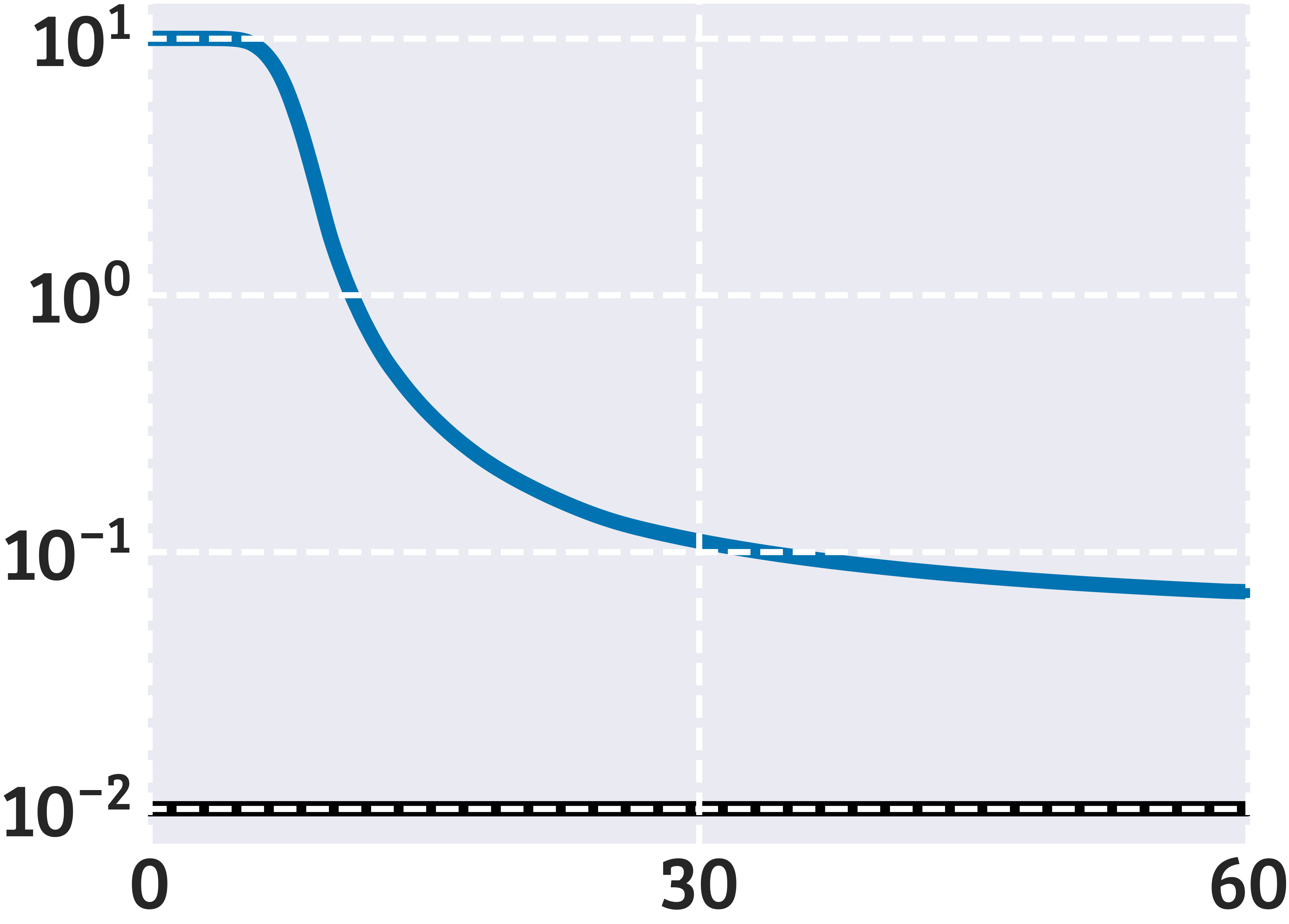}
    \end{subfigure}
    \hfill
    \begin{subfigure}[b]{0.16\linewidth}
        \centering
        \includegraphics[width=\linewidth]{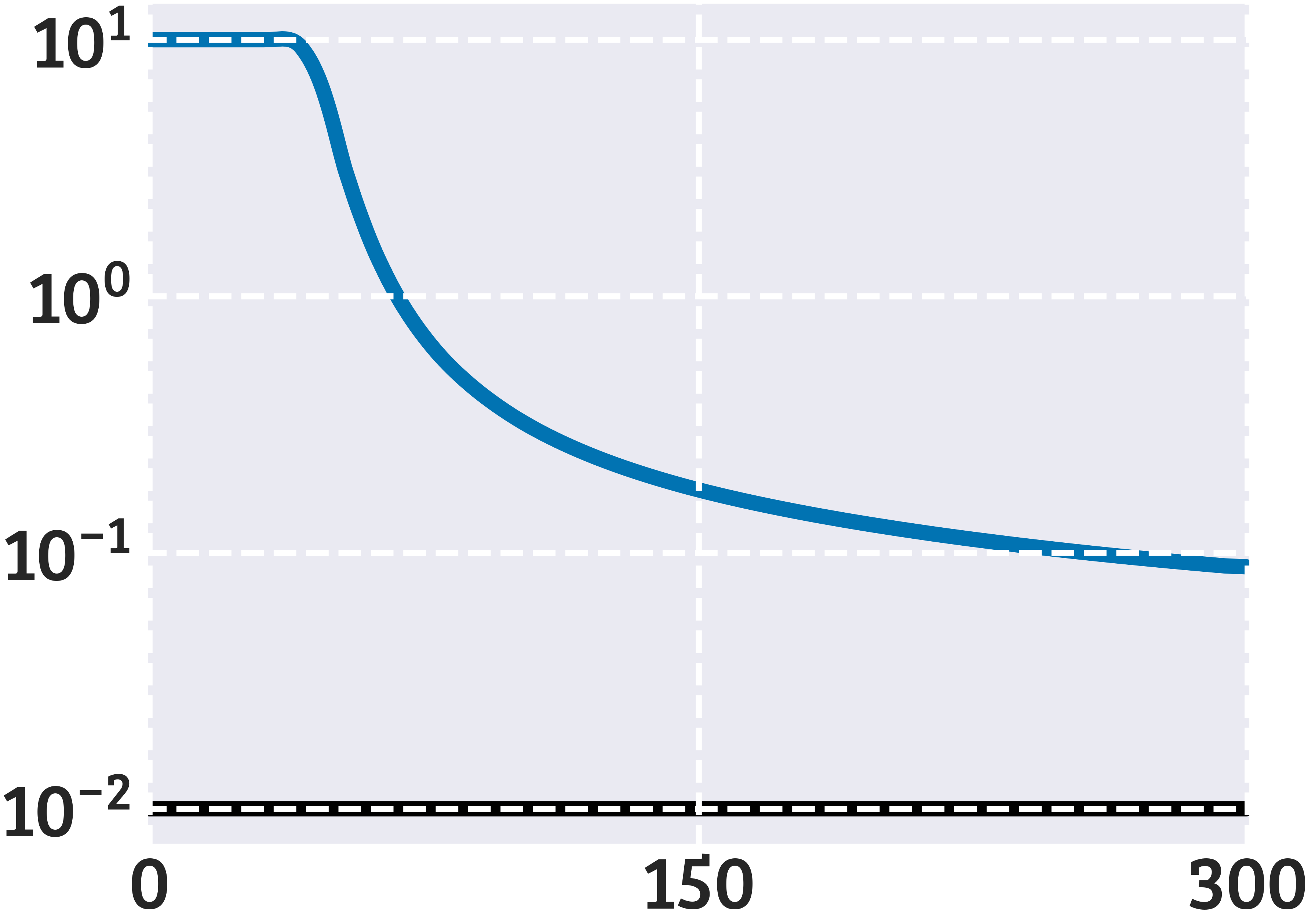}
    \end{subfigure}
    \hfill
    \begin{subfigure}[b]{0.16\linewidth}
        \centering
        \includegraphics[width=\linewidth]{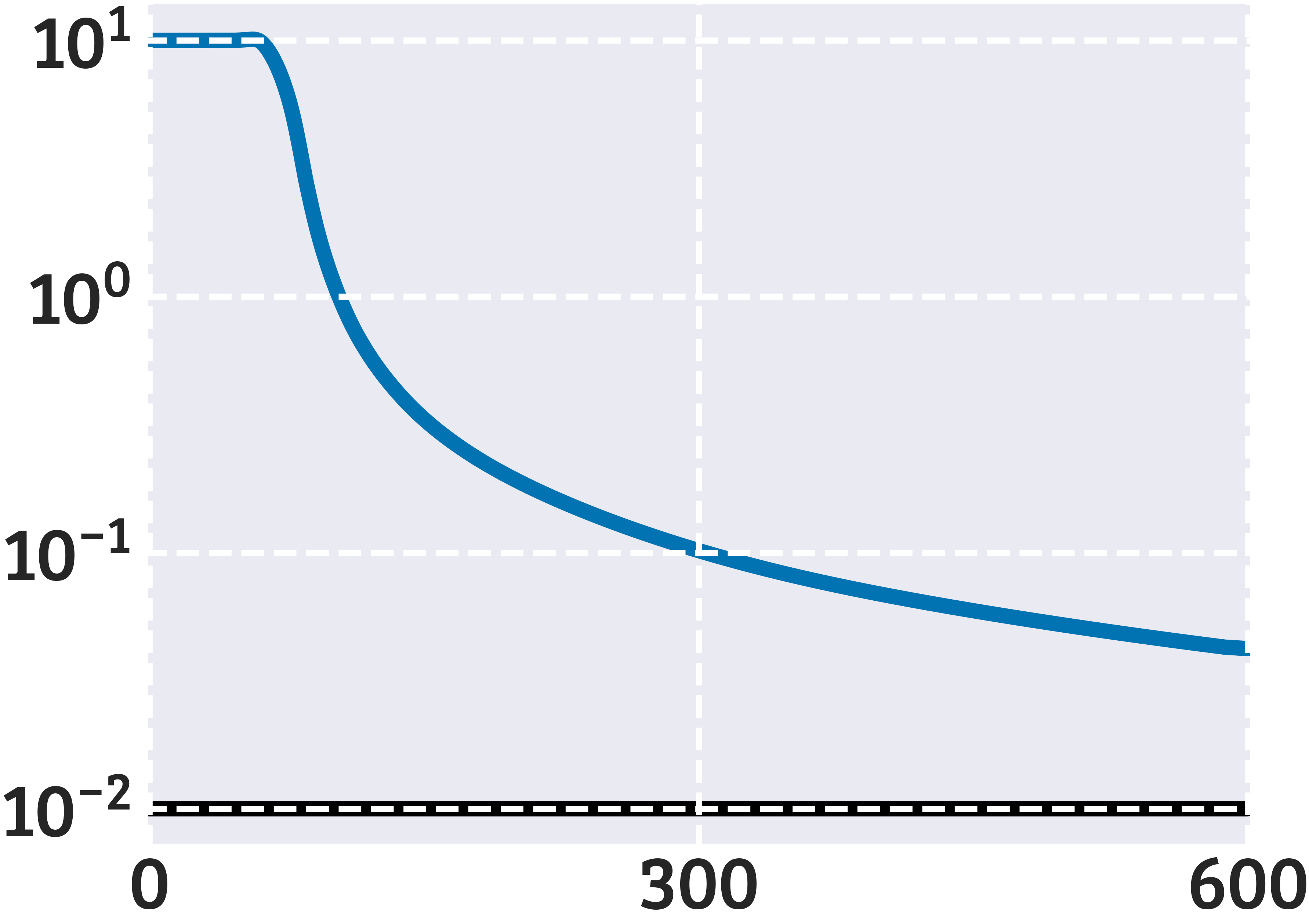}
    \end{subfigure}%
    }
    \\[3pt]
    \makebox[\linewidth][c]{%
    \raisebox{22pt}{\rotatebox[origin=t]{90}{\fontfamily{qbk}\tiny\textbf{Two-Rm 2$\times$11}}}
    \hfill
    \begin{subfigure}[b]{0.16\linewidth}
        \centering
        \includegraphics[width=\linewidth]{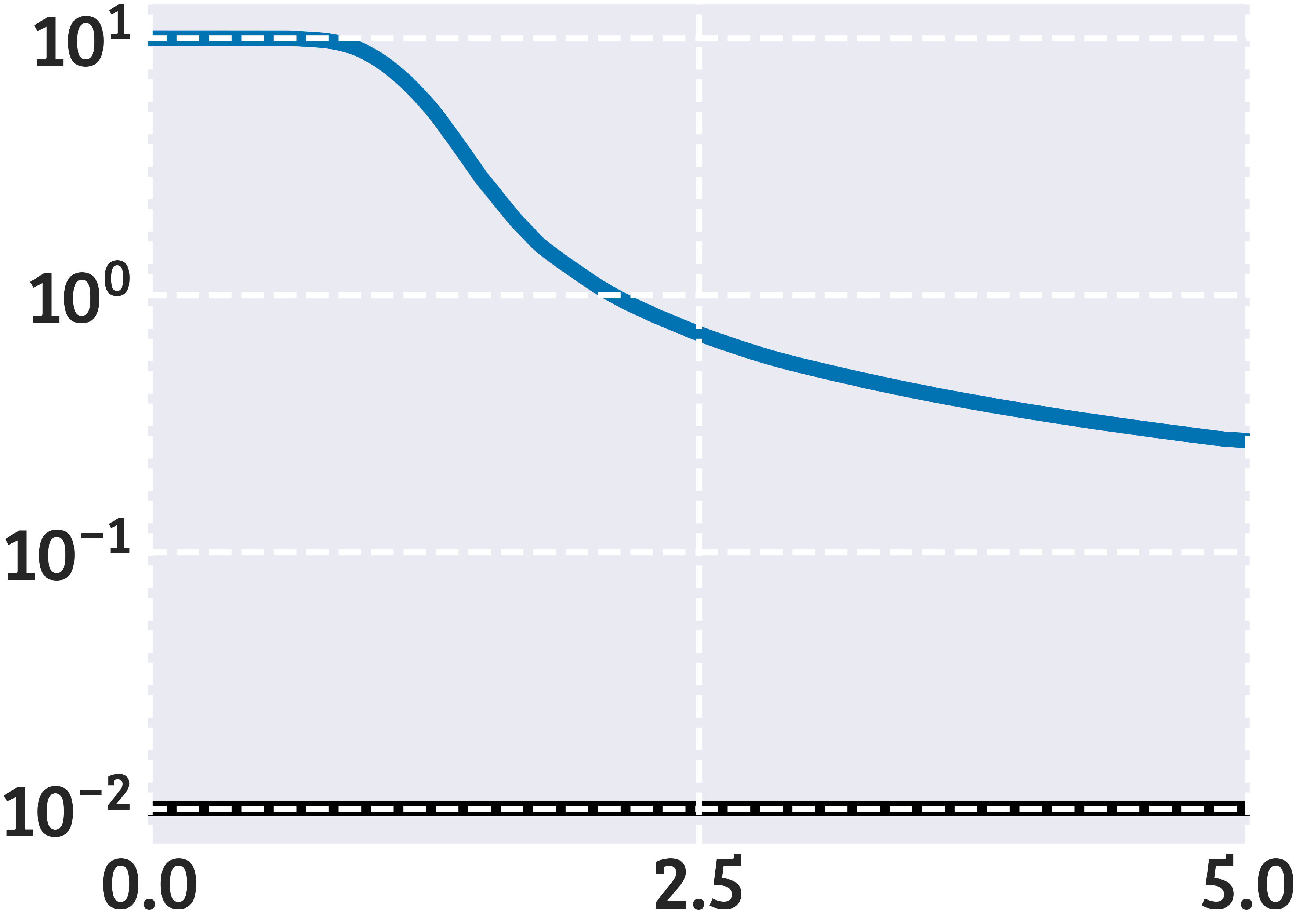}
    \end{subfigure}
    \hfill
    \begin{subfigure}[b]{0.16\linewidth}
        \centering
        \includegraphics[width=\linewidth]{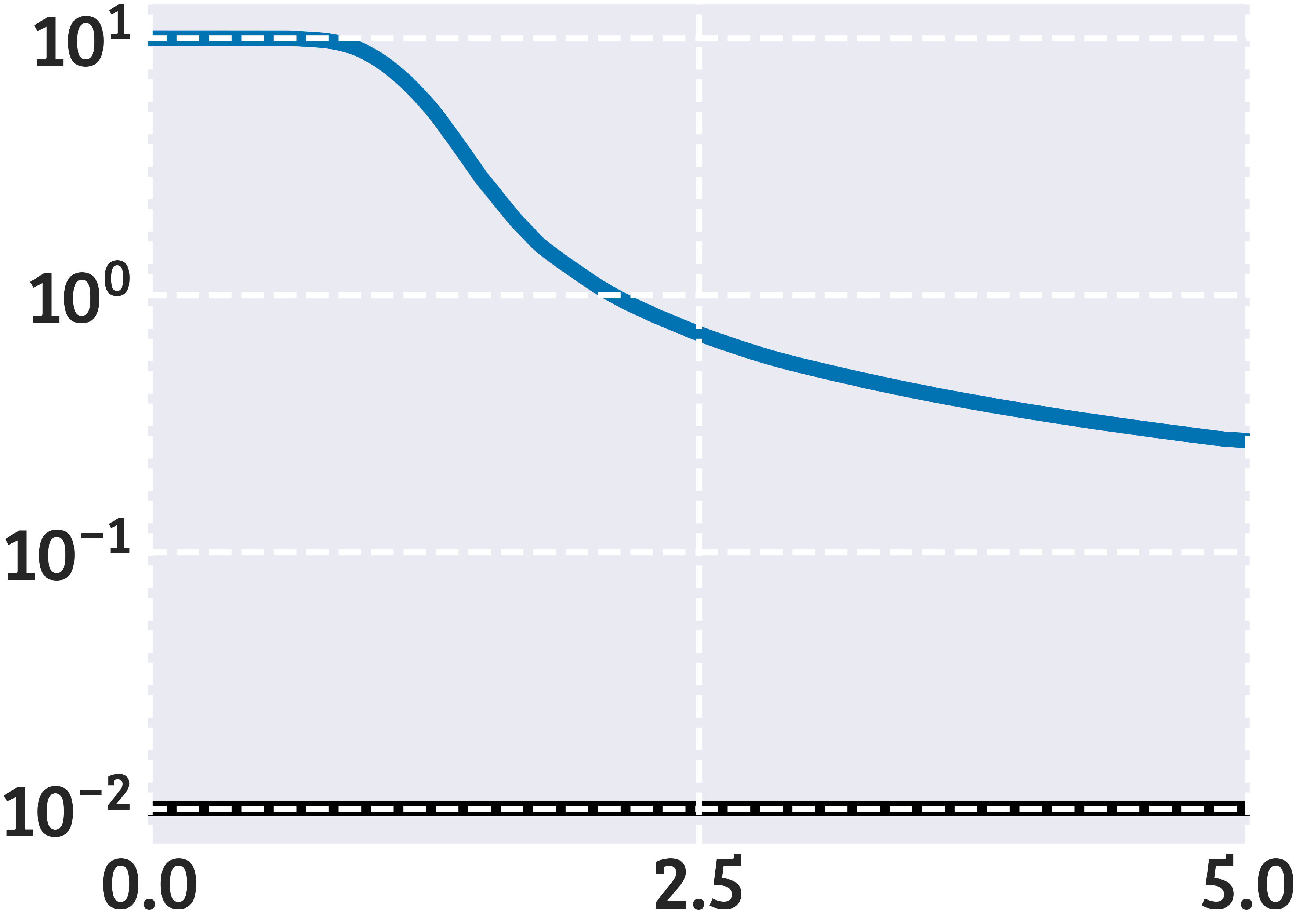}
    \end{subfigure}
    \hfill
    \begin{subfigure}[b]{0.16\linewidth}
        \centering
        \includegraphics[width=\linewidth]{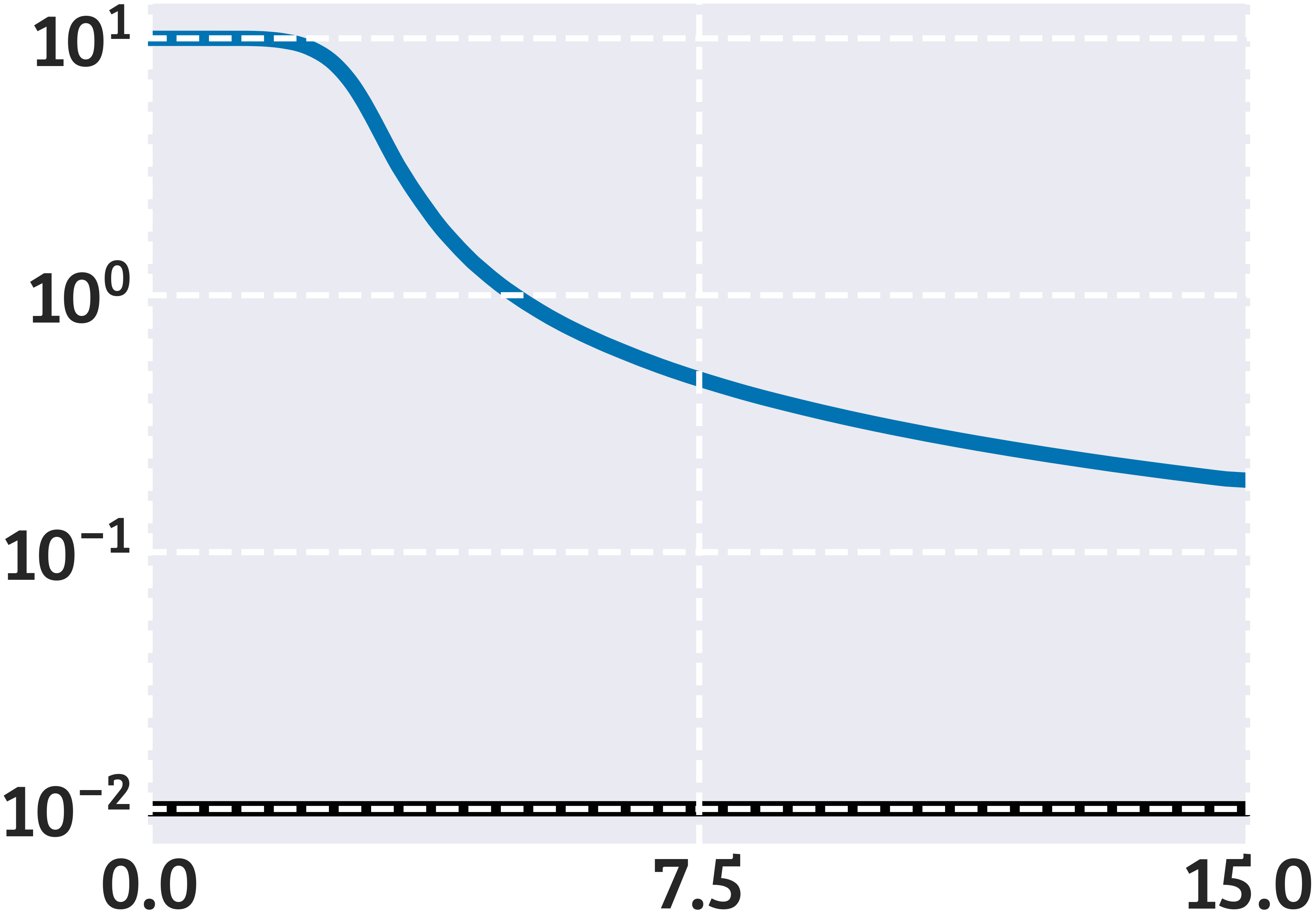}
    \end{subfigure}
    \hfill
    \begin{subfigure}[b]{0.16\linewidth}
        \centering
        \includegraphics[width=\linewidth]{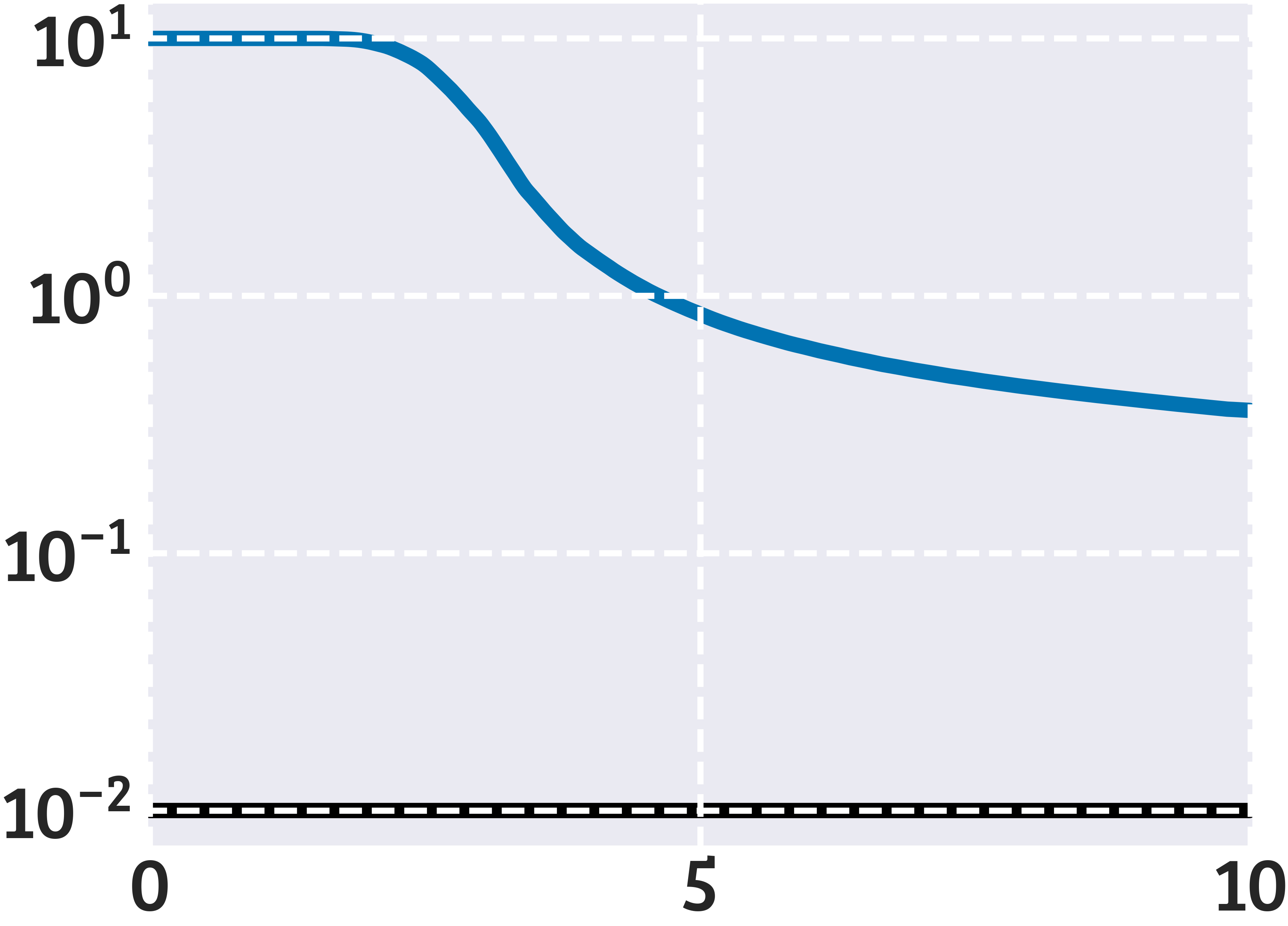}
    \end{subfigure}
    \hfill
    \begin{subfigure}[b]{0.16\linewidth}
        \centering
        \includegraphics[width=\linewidth]{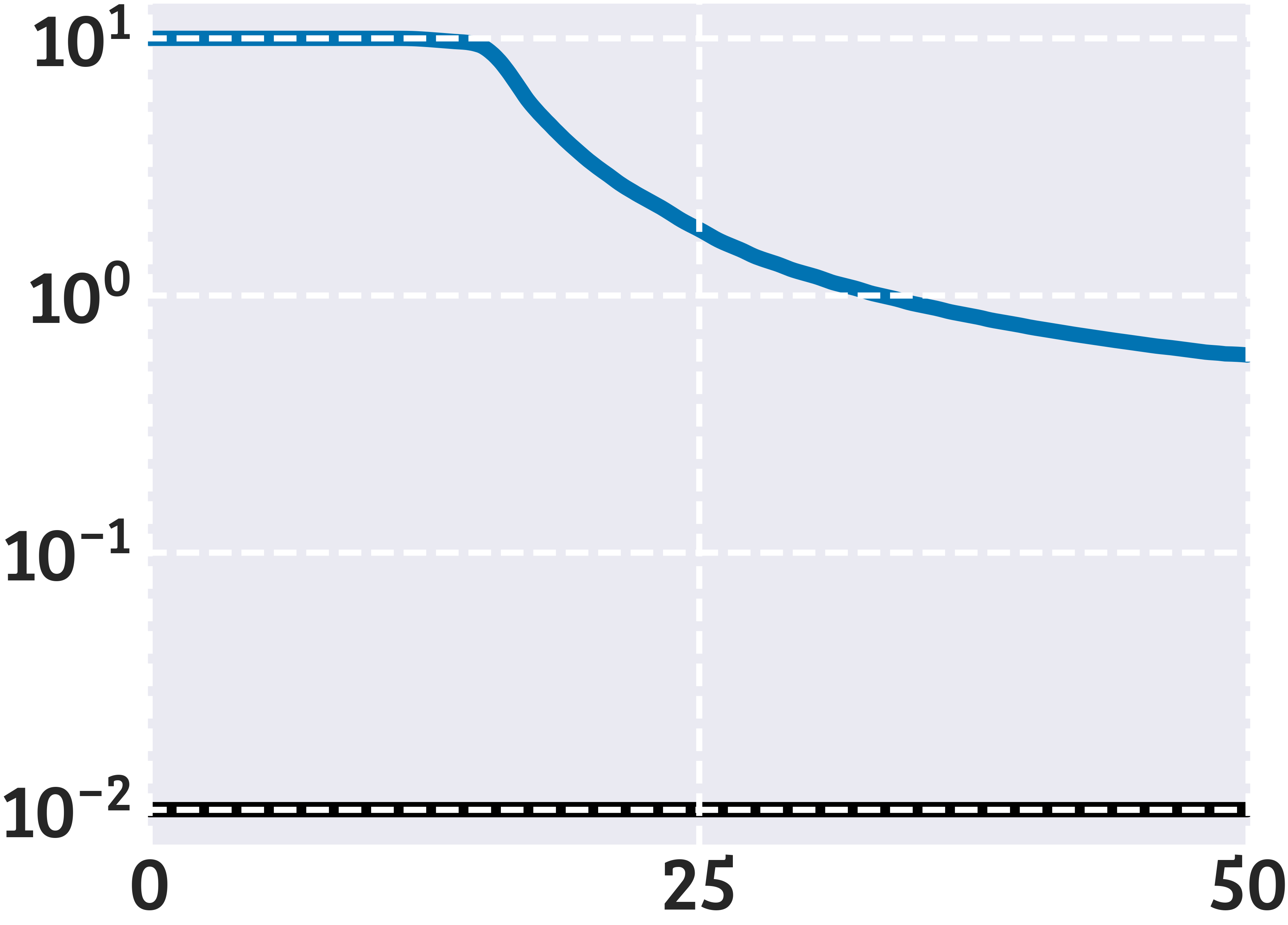}
    \end{subfigure}
    \hfill
    \begin{subfigure}[b]{0.16\linewidth}
        \centering
        \includegraphics[width=\linewidth]{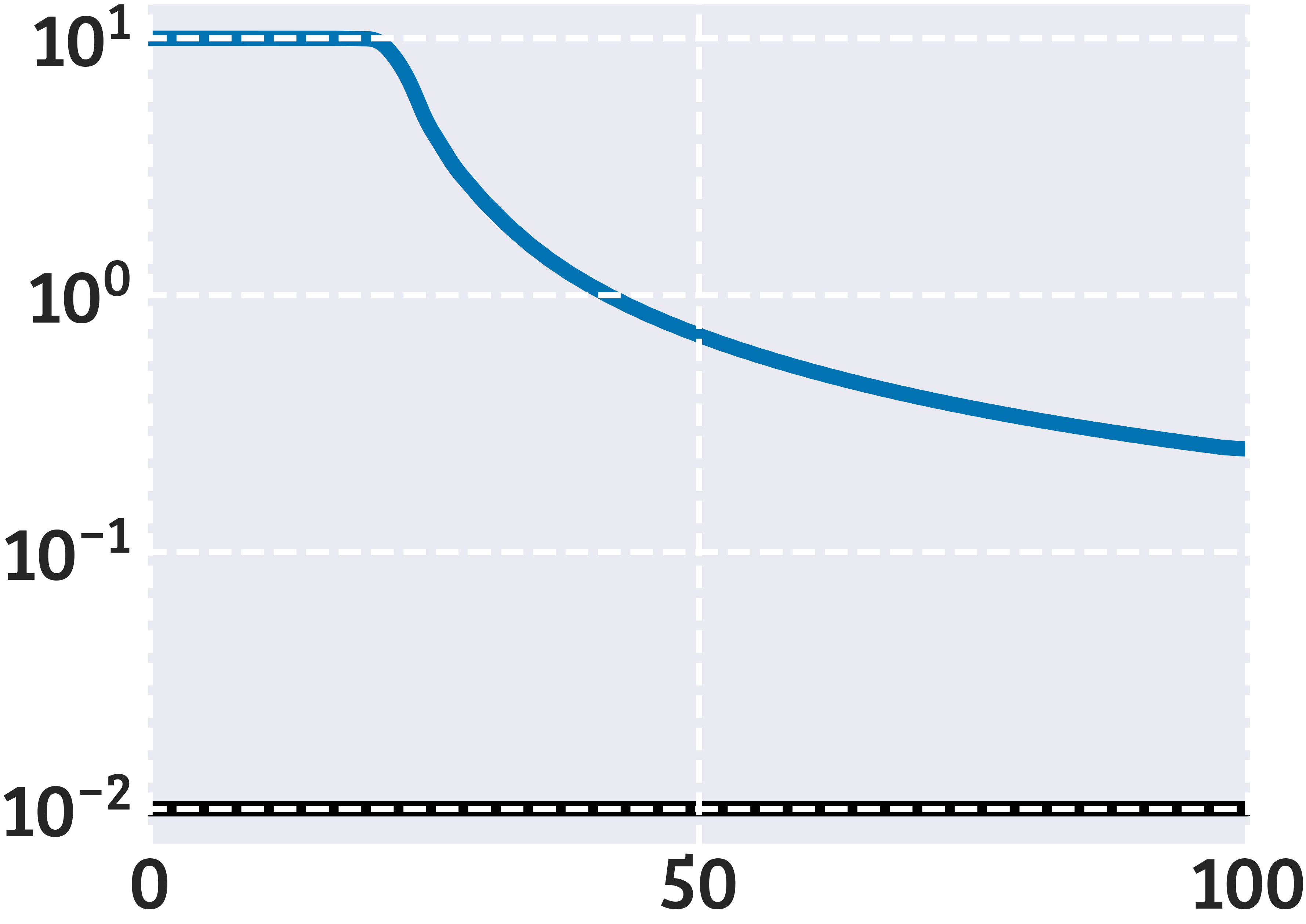}
    \end{subfigure}%
    }
    \\[3pt]
    \makebox[\linewidth][c]{%
    \raisebox{22pt}{\rotatebox[origin=t]{90}{\fontfamily{qbk}\tiny\textbf{Two-Rm 3$\times$5}}}
    \hfill
    \begin{subfigure}[b]{0.16\linewidth}
        \centering
        \includegraphics[width=\linewidth]{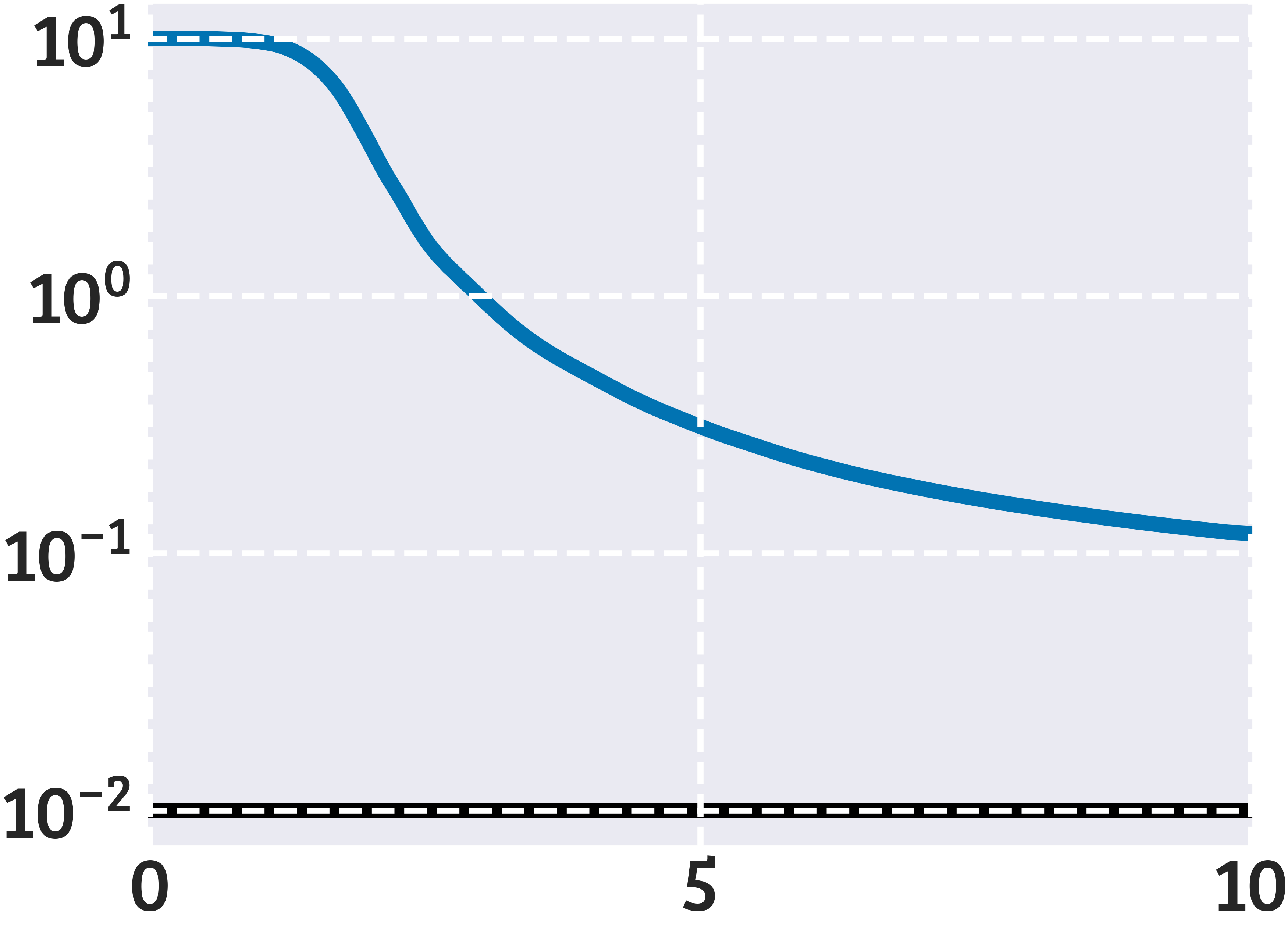}
    \end{subfigure}
    \hfill
    \begin{subfigure}[b]{0.16\linewidth}
        \centering
        \includegraphics[width=\linewidth]{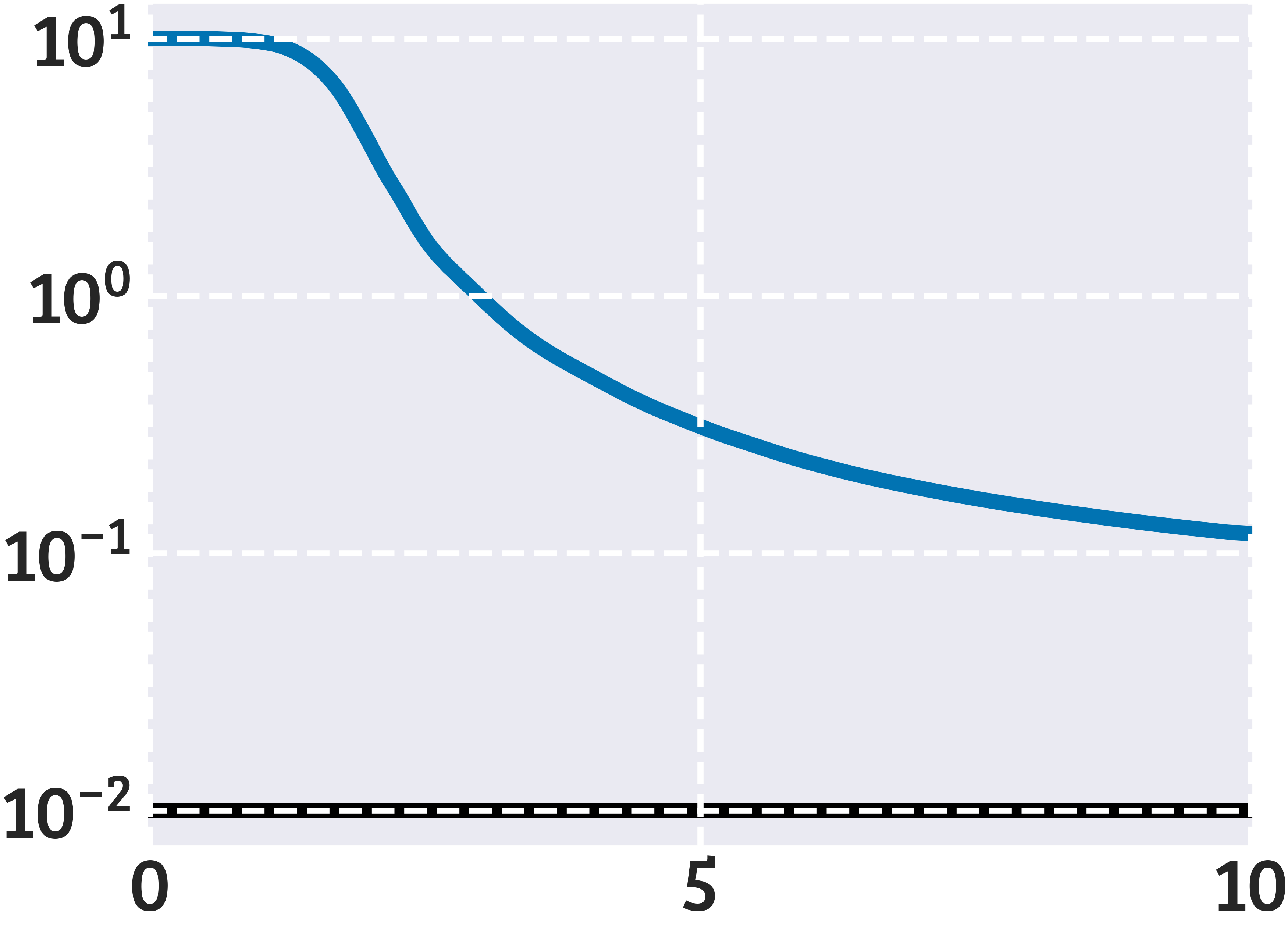}
    \end{subfigure}
    \hfill
    \begin{subfigure}[b]{0.16\linewidth}
        \centering
        \includegraphics[width=\linewidth]{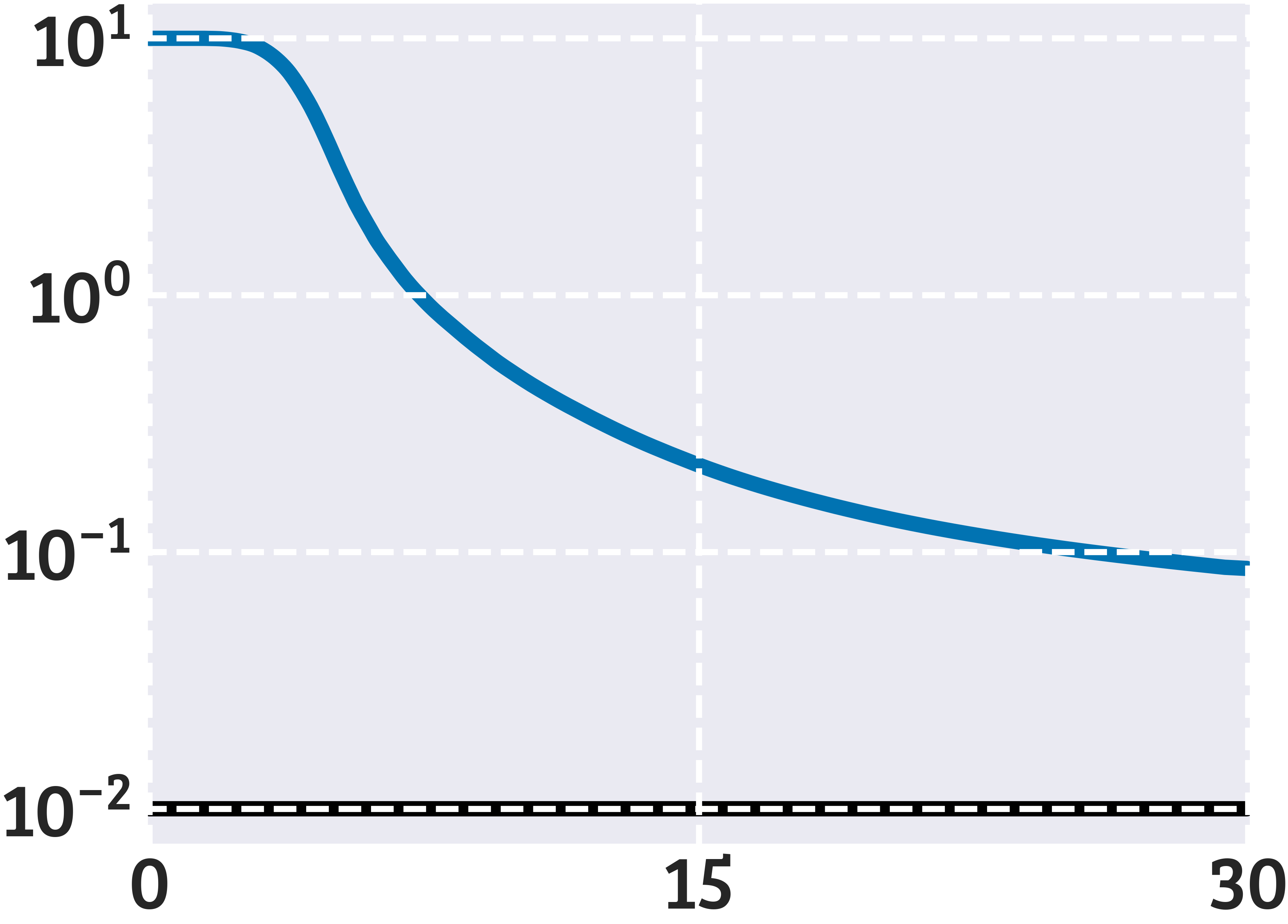}
    \end{subfigure}
    \hfill
    \begin{subfigure}[b]{0.16\linewidth}
        \centering
        \includegraphics[width=\linewidth]{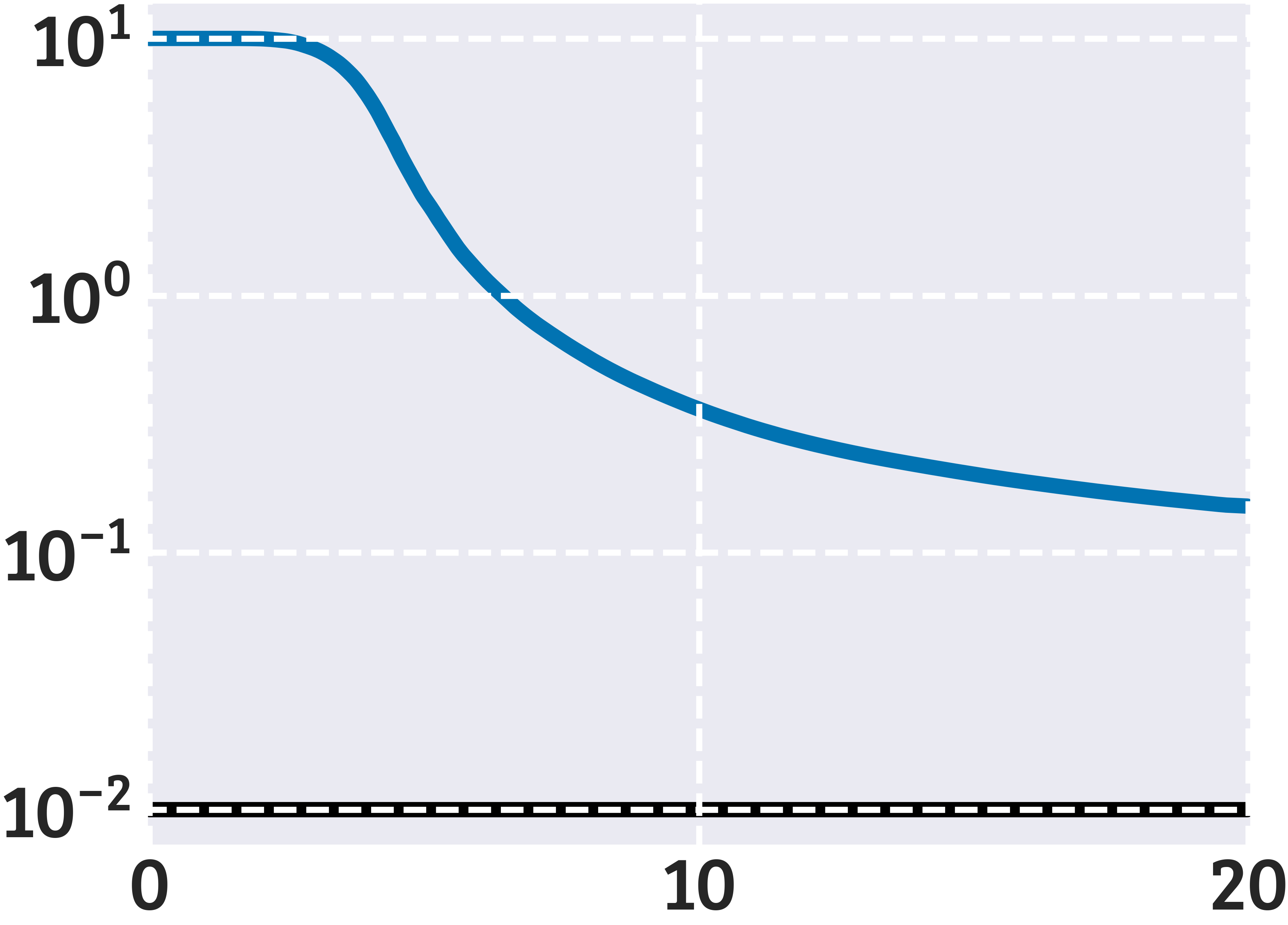}
    \end{subfigure}
    \hfill
    \begin{subfigure}[b]{0.16\linewidth}
        \centering
        \includegraphics[width=\linewidth]{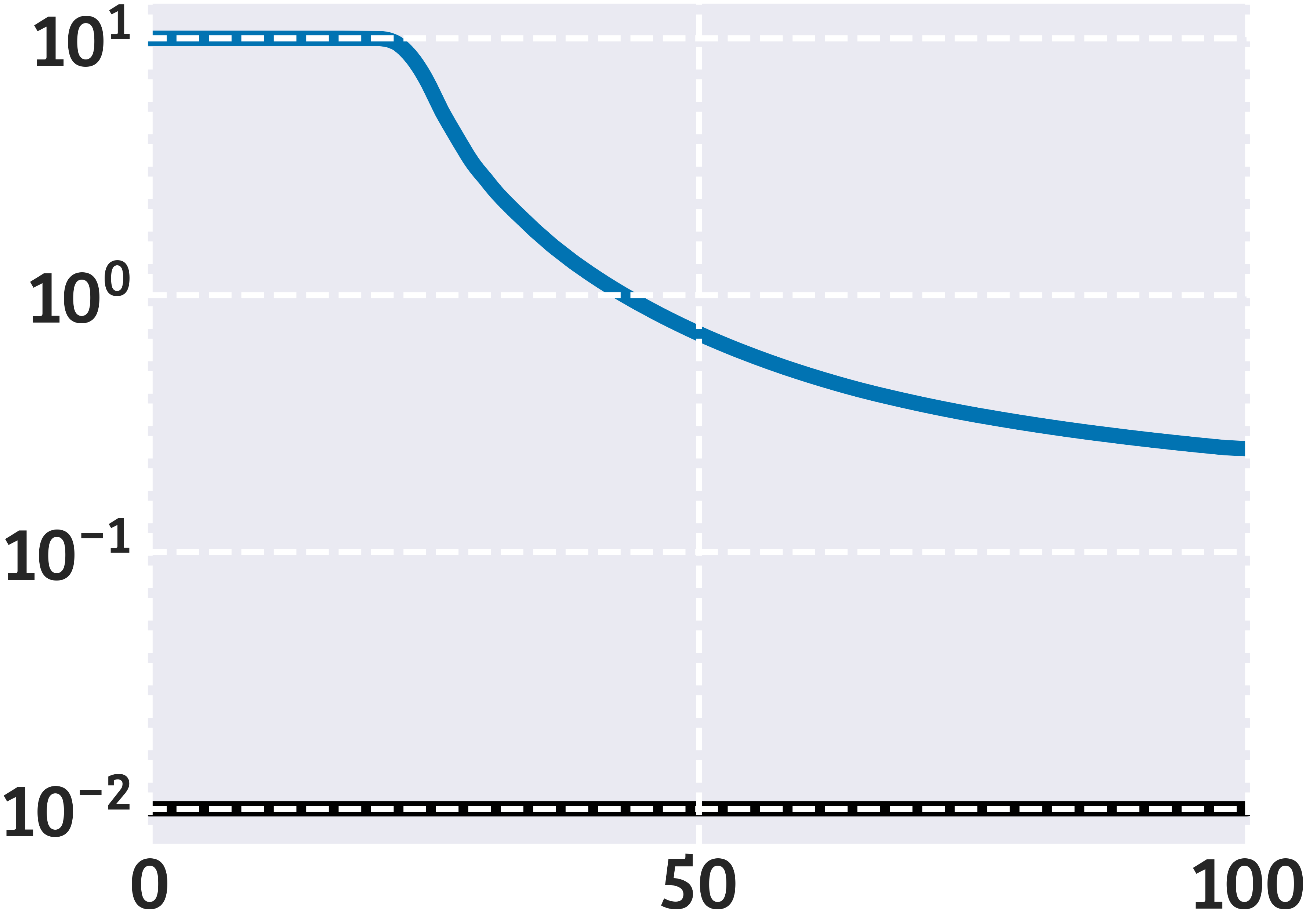}
    \end{subfigure}
    \hfill
    \begin{subfigure}[b]{0.16\linewidth}
        \centering
        \includegraphics[width=\linewidth]{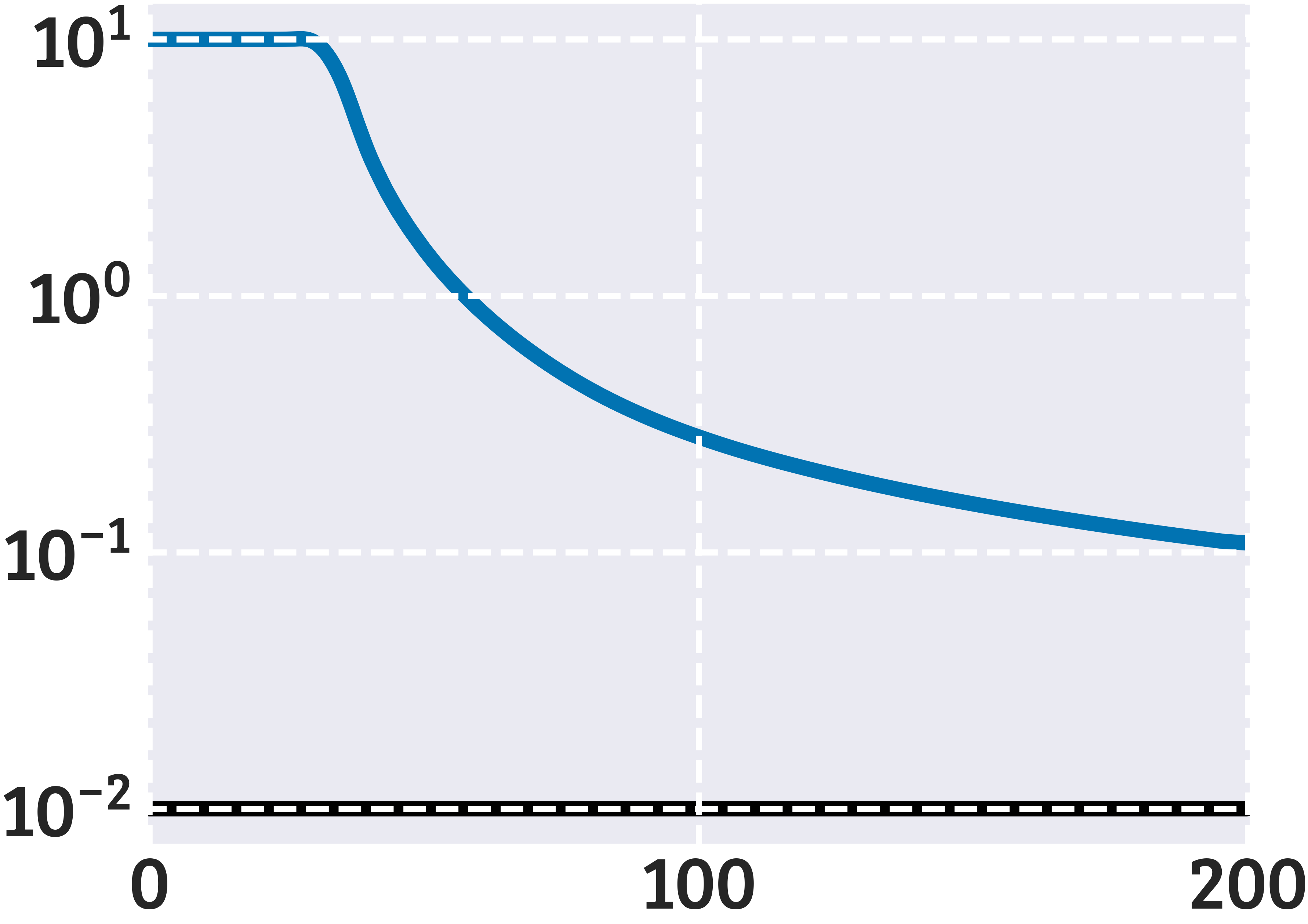}
    \end{subfigure}%
    }
    \\[3pt]
    \makebox[\linewidth][c]{%
    \raisebox{22pt}{\rotatebox[origin=t]{90}{\fontfamily{qbk}\tiny\textbf{River Swim}}}
    \hfill
    \begin{subfigure}[b]{0.16\linewidth}
        \centering
        \includegraphics[width=\linewidth]{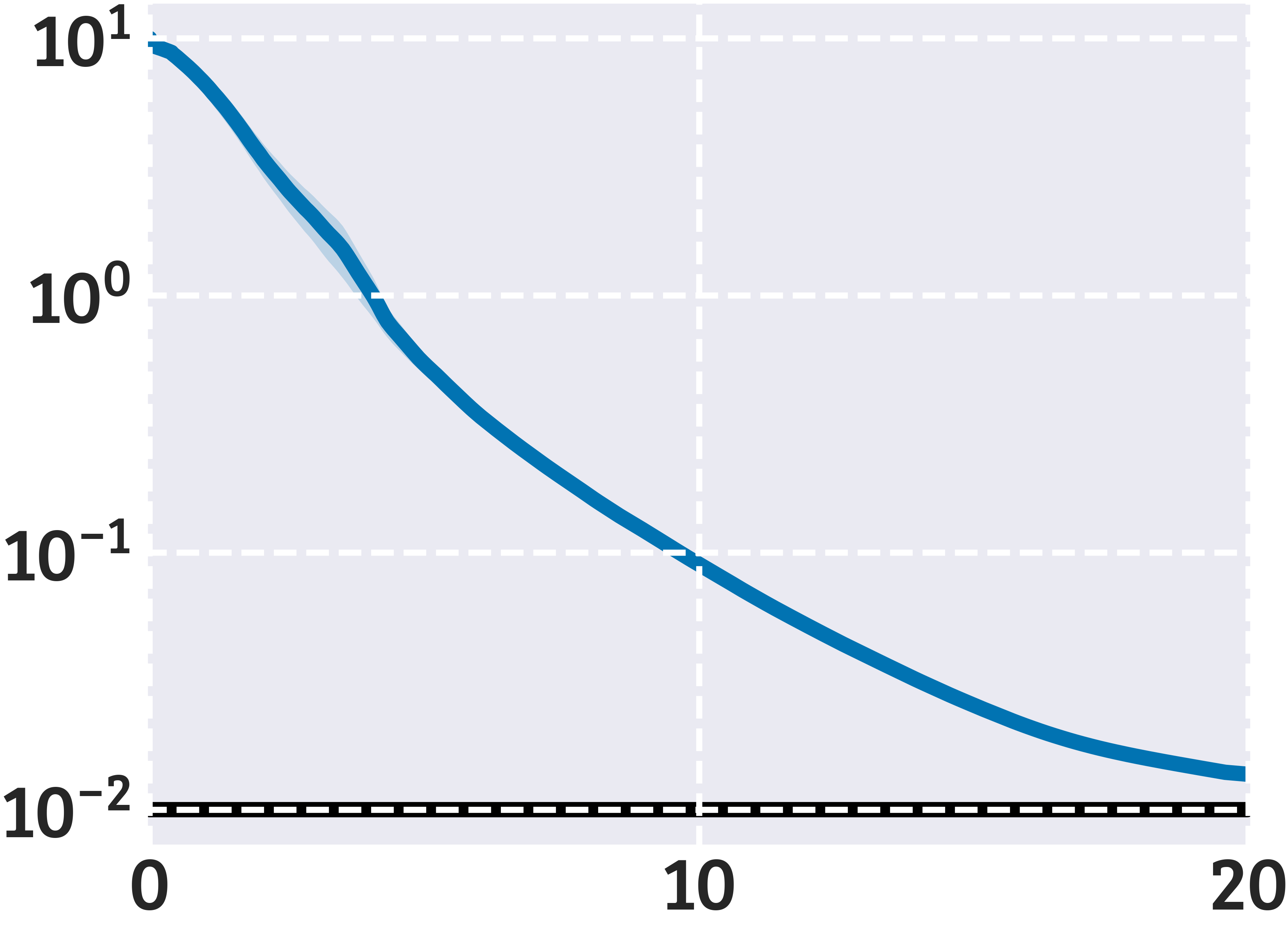}
    \end{subfigure}
    \hfill
    \begin{subfigure}[b]{0.16\linewidth}
        \centering
        \includegraphics[width=\linewidth]{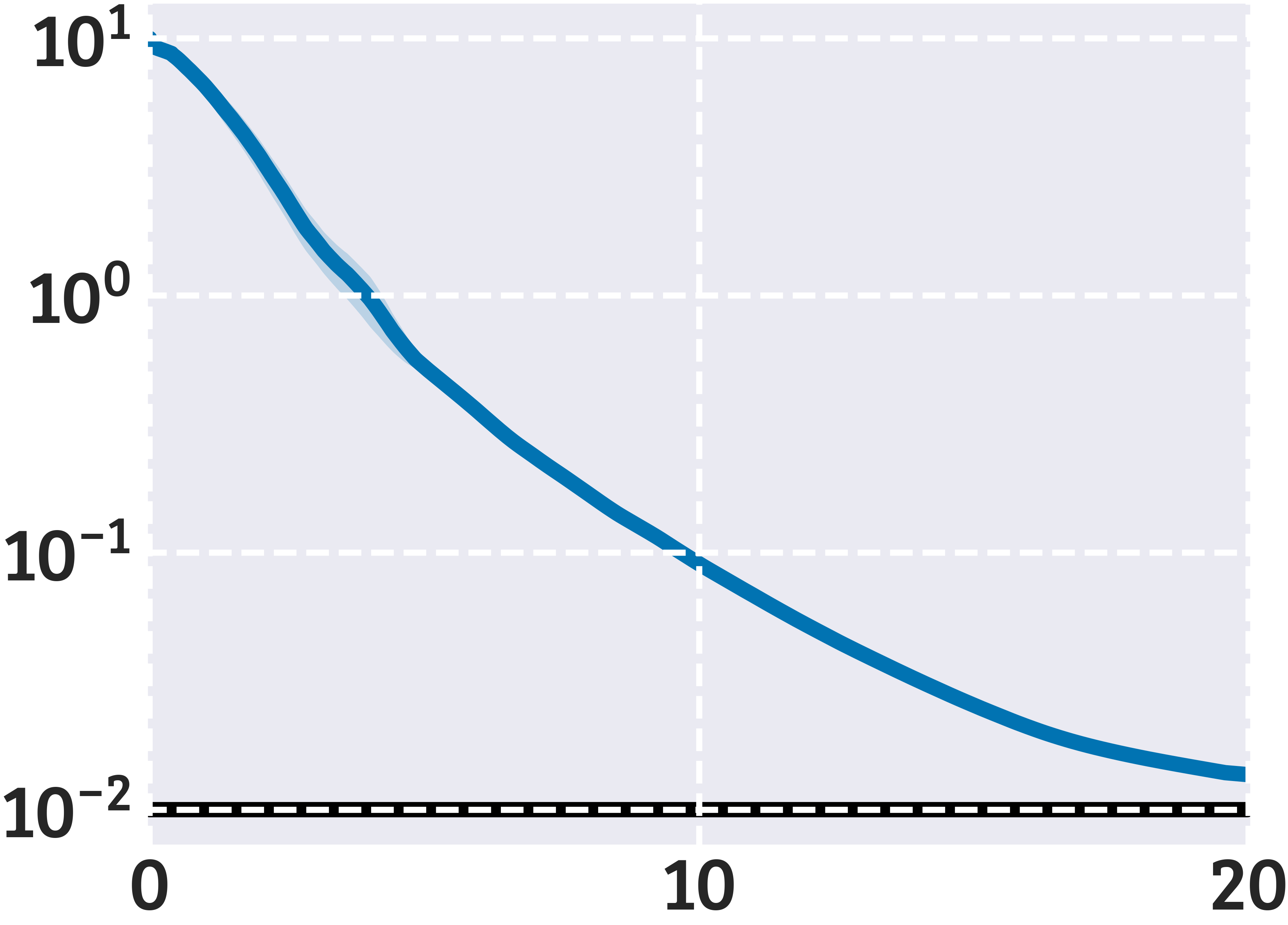}
    \end{subfigure}
    \hfill
    \begin{subfigure}[b]{0.16\linewidth}
        \centering
        \includegraphics[width=\linewidth]{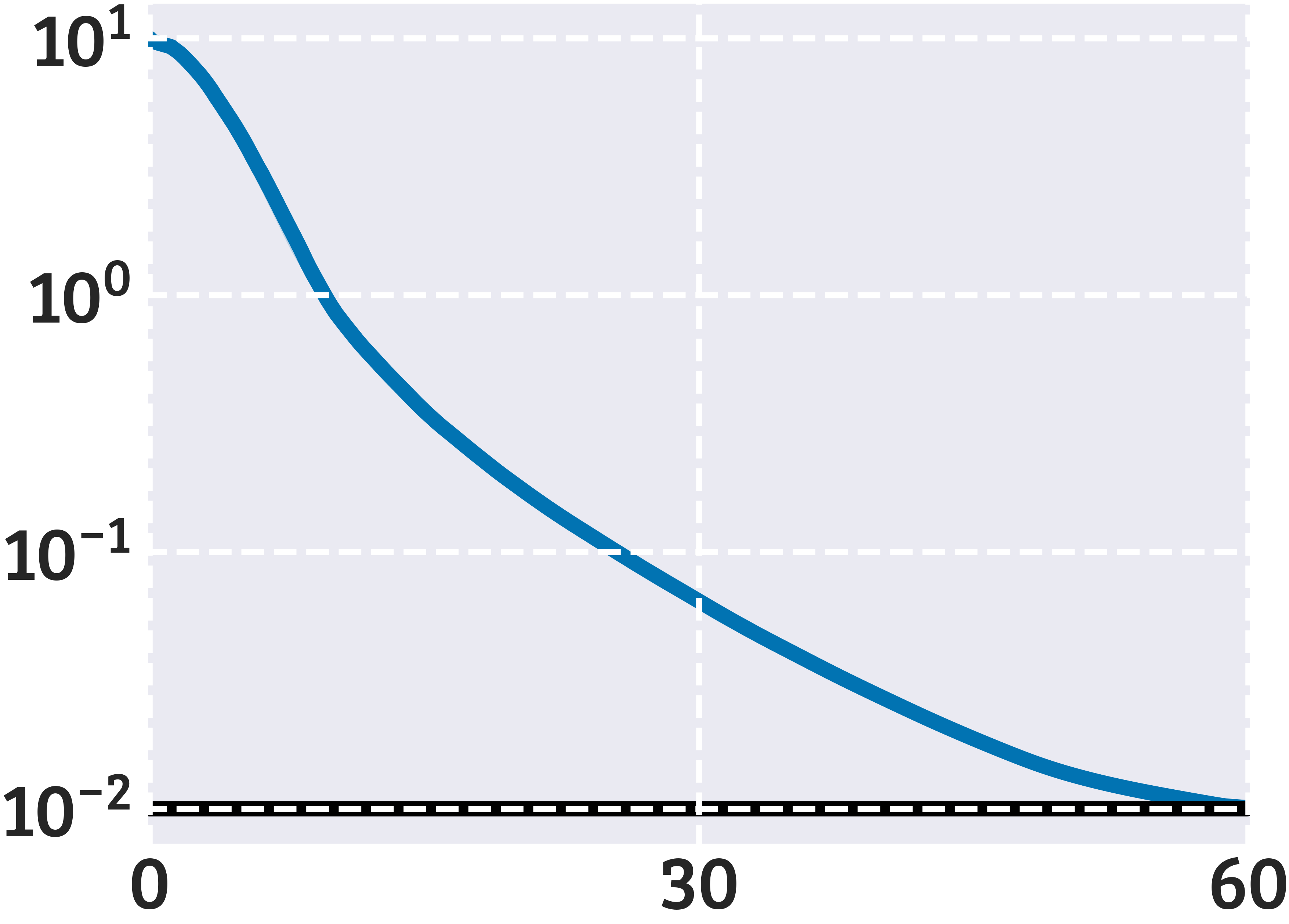}
    \end{subfigure}
    \hfill
    \begin{subfigure}[b]{0.16\linewidth}
        \centering
        \includegraphics[width=\linewidth]{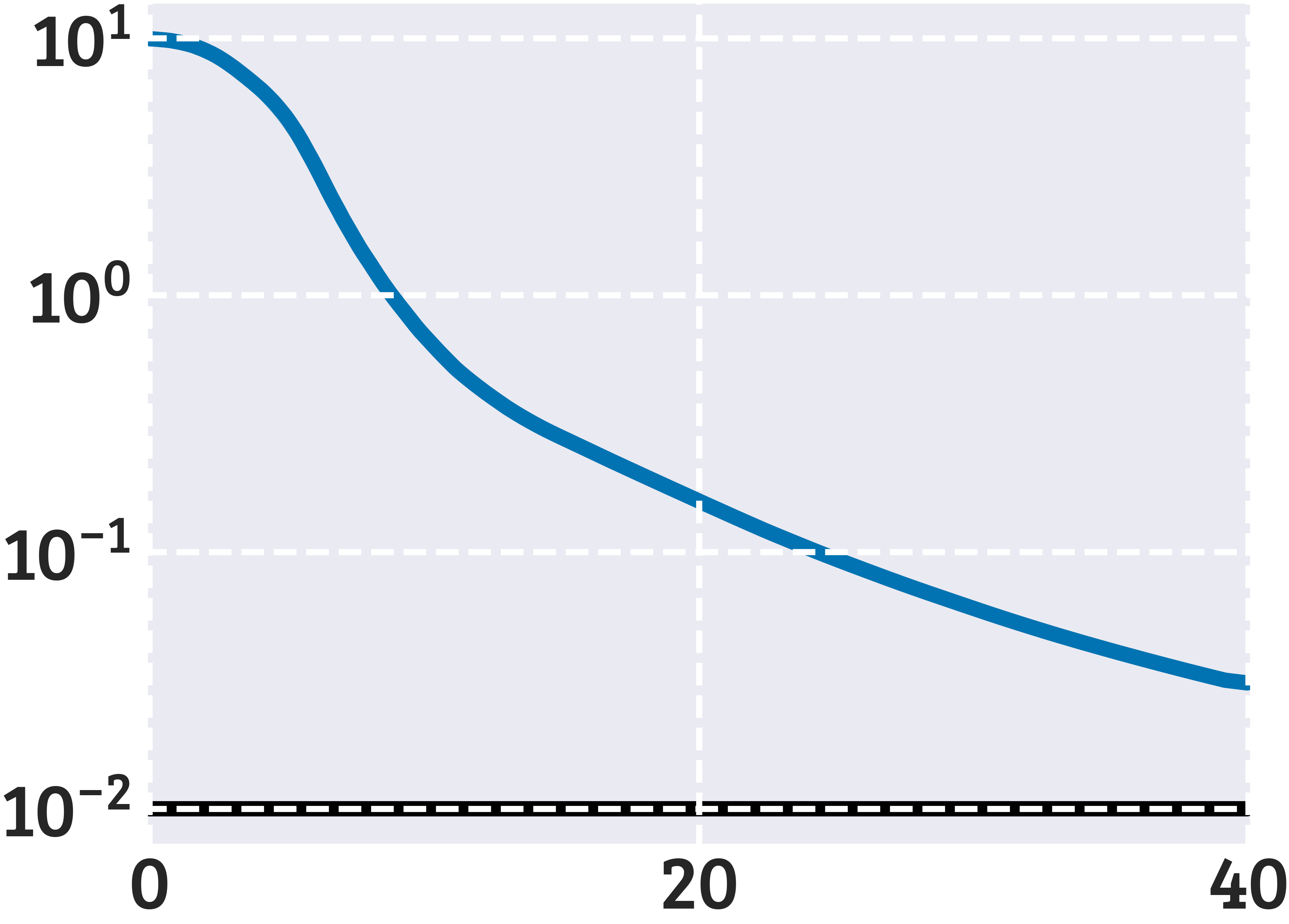}
    \end{subfigure}
    \hfill
    \begin{subfigure}[b]{0.16\linewidth}
        \centering
        \includegraphics[width=\linewidth]{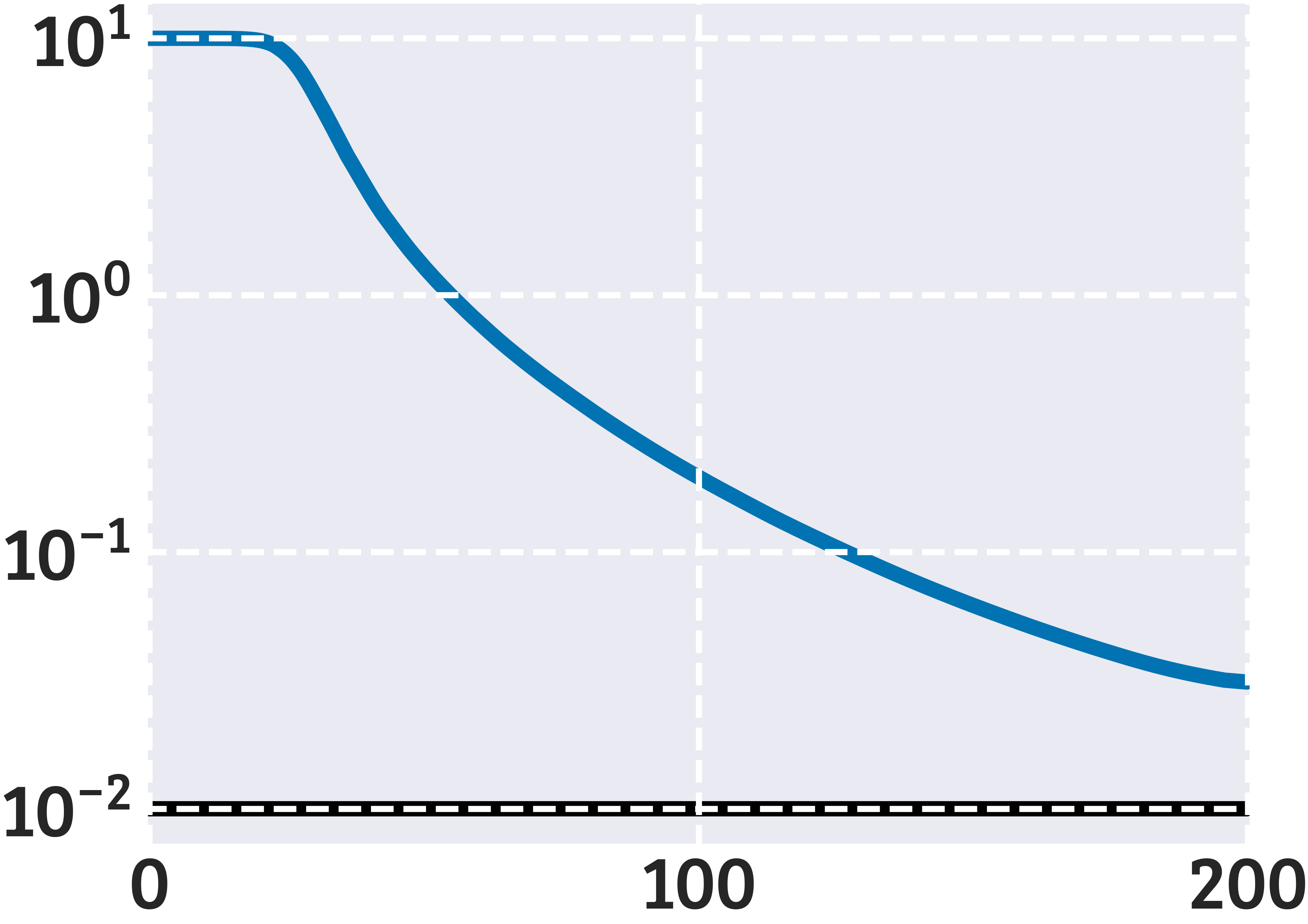}
    \end{subfigure}
    \hfill
    \begin{subfigure}[b]{0.16\linewidth}
        \centering
        \includegraphics[width=\linewidth]{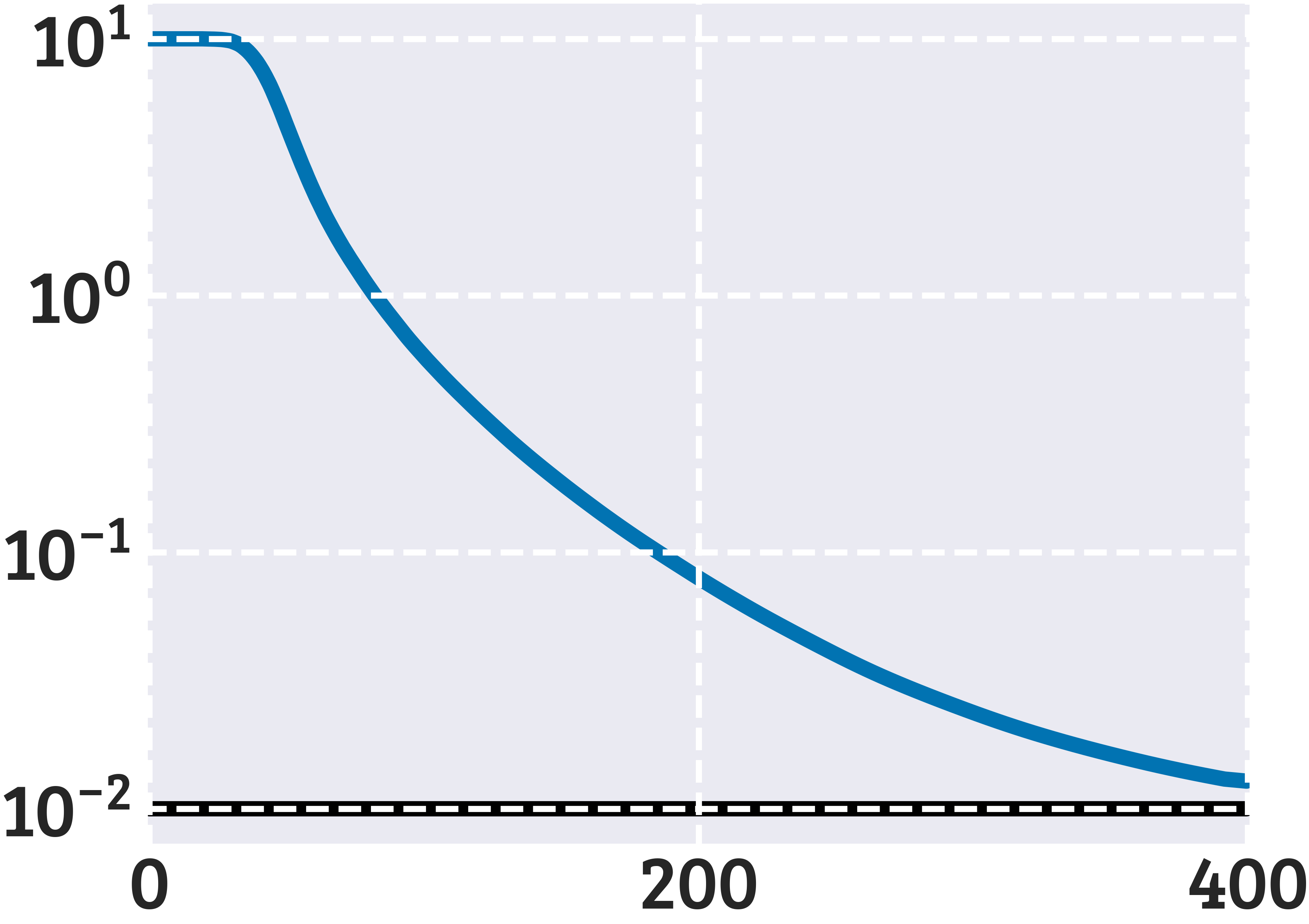}
    \end{subfigure}%
    }
    \\[-1pt]
    {\fontfamily{qbk}\tiny\textbf{Training Steps (1e\textsuperscript{3})}}
    \caption{\label{fig:beta_appendix}\textbf{Trend of the exploration-exploitation ratio $\mathbf{\beta_t}$ (in log-scale) during training by \textcolor{snsblue}{Our} exploration policy averaged over 100 seeds (shades denoting 95\% confidence are too small to be seen).} Until the agent has visited every state-action pair once, $\min N(s,a) = 0$, thus $\beta_t = \infty$. For the sake of clarity, we report it as $\beta_t = 10$ in this plot. 
    Even if it did not reach the threshold $\bar\beta = 0.01$ (bottom line in black), the ratio clearly decreases over time, a further sign that the agent explores all state-action pairs uniformly (i.e., maximizing the occurrence of the least-visited pair). Note that in small environments (e.g., River Swim) $\beta_t$ decreases faster than in large environment (e.g., Empty). 
    }
\end{figure}

\clearpage

\begin{figure}[t]
  \begin{minipage}[c]{0.36\textwidth}
    \centering
    \includegraphics[width=0.96\linewidth]{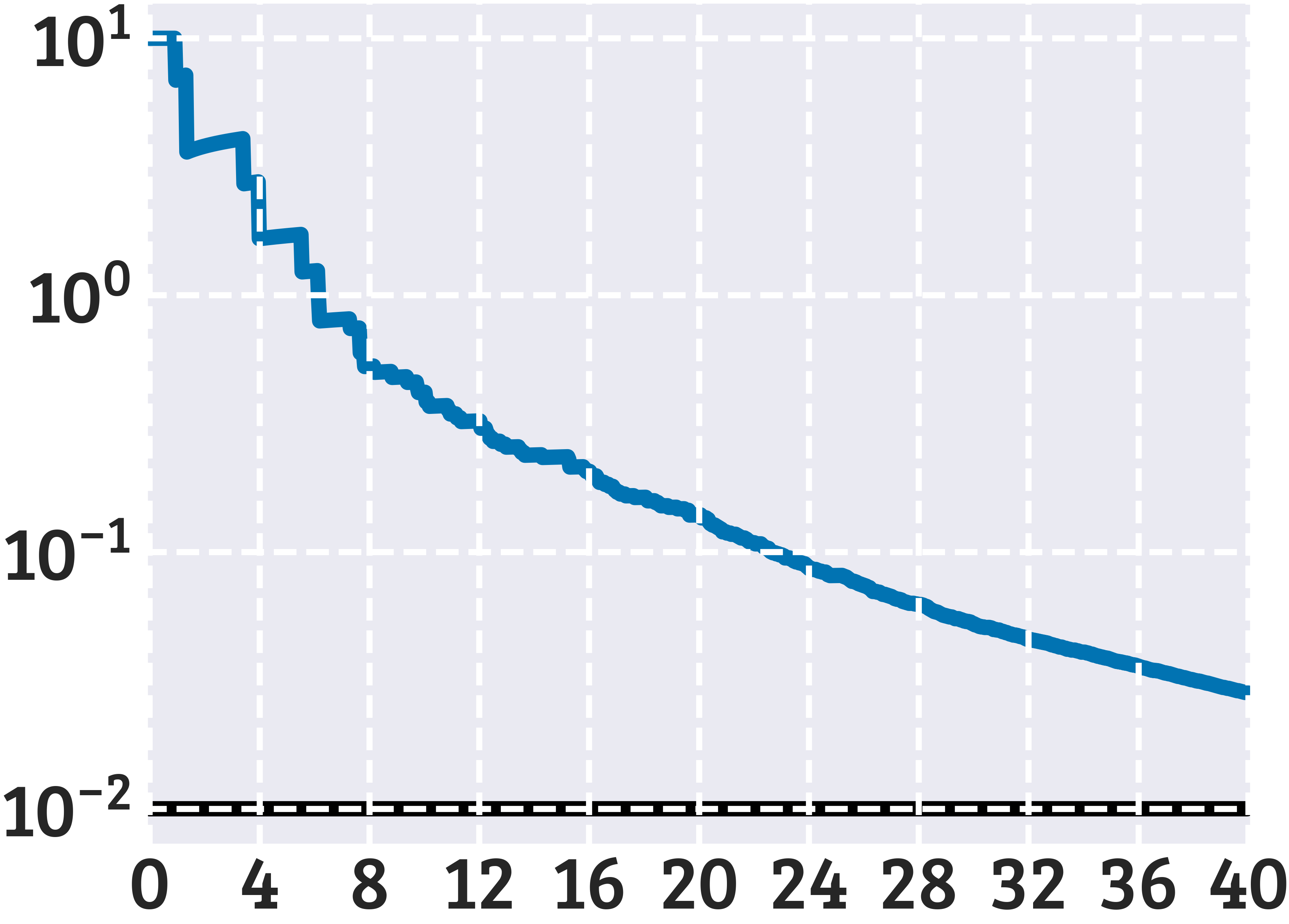}
    {\small \fontfamily{qbk}\footnotesize\textbf{Training Steps (1e\textsuperscript{3})}}
  \end{minipage}
  \hfill
  \begin{minipage}[c]{0.62\textwidth}
    \caption{\label{fig:beta_bar_river}\textbf{Explore-exploit ratio $\mathbf{\beta_t} = \sfrac{\log t}{\min N(s,a)}$} (in log-scale) during a run on RiverSwim with Button Monitor. For the sake of clarity, we replaced $\beta_t = \infty$ (when $\min N(s,a) = 0$) with $\beta_t = 10$. For roughly the first 8,000 steps, $\beta_t$ trend is ``irregular'' --- it goes down when the current state-action goal is visited, and then up until the new goal is visited. Later, these ``up/down steps'' become smaller and smaller and $\beta_t$ consistently decreases. This denotes that $\smash{\widehat S}$ become more and more accurate, thus the policy $\rho$ can bring the agent to the goal (i.e., the least-visited state-action pair) faster.}
  \end{minipage}
\end{figure}

\subsection{Goal-Conditioned Threshold $\mathbf{\beta_t}$}
\label{app:beta}
Figure~\ref{fig:beta_appendix} shows the trend of the exploration-exploitation ratio $\beta_t$ (in log-scale) of \textbf{\textcolor{snsblue}{Our}} exploration policy. 
The decay denotes that, indeed, the policy explores uniformly and efficiently, maximizing the occurrence of the least-visited state-action pair. 
The fact that $\beta_t$ did not reach the threshold $\bar\beta$ explains why in Figure~\ref{fig:visited_r_sum_appendix} the agent keeps exploring and observing rewards, while the greedy policy in Figure~\ref{fig:returns_appendix} is already optimal. 
In Figure~\ref{fig:beta_bar_river} we further show the trend of $\beta_t$ on one run on River Swim with the Button Monitor. 
After some time spent learning the S-functions $\smash{\widehat S}$, the trend consistently decreases, denoting that the agent has learned to quickly visit any state-action pair.

\begin{figure}[ht]
    \setlength{\belowcaptionskip}{0pt}
    \setlength{\abovecaptionskip}{6pt}
    \centering
    \makebox[\linewidth][c]{%
    \begin{minipage}{0.15\textwidth}
        {\fontfamily{qbk}\textcolor{snsblue}{\textbf{Ours}}}
    \end{minipage}
    \hfill
    \begin{subfigure}[c]{0.35\linewidth}
        \centering
        \texttt{Monitor ON}\\[2pt]
        \includegraphics[width=0.84\linewidth]{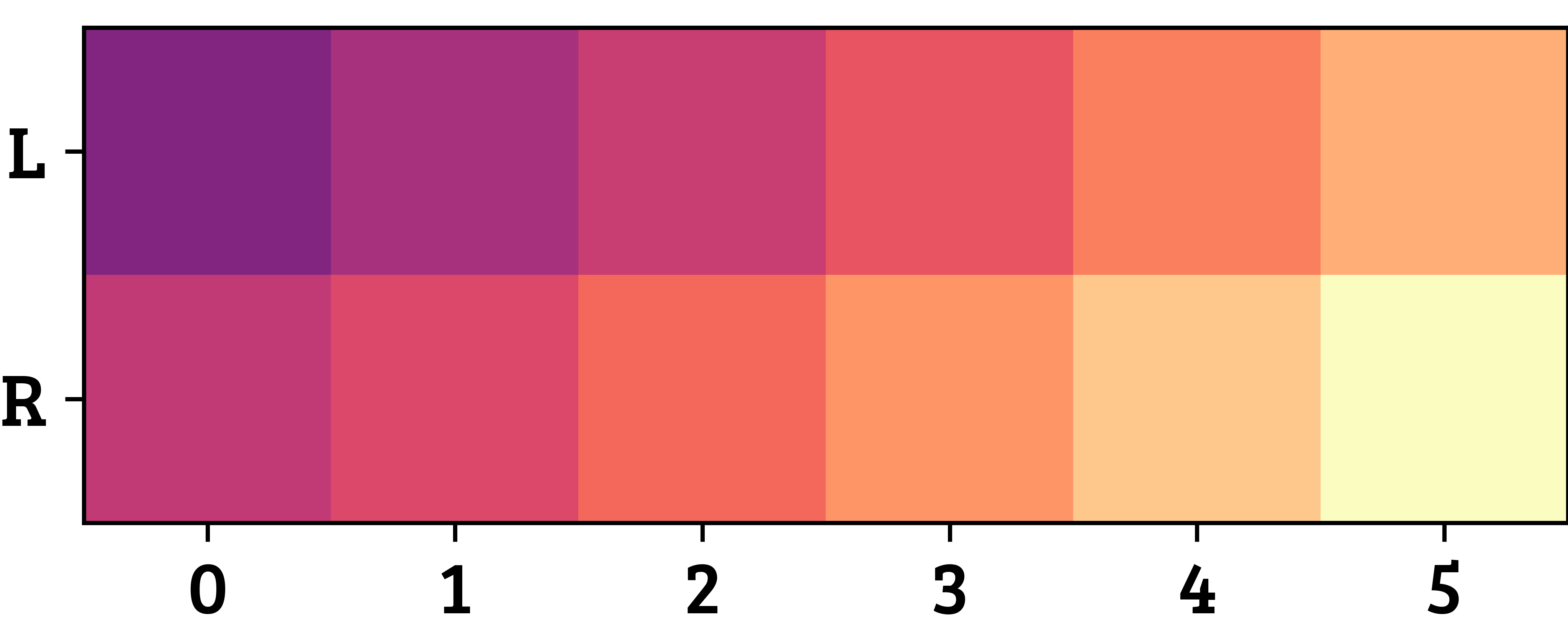}
    \end{subfigure}
    \hspace{1pt}
    \begin{subfigure}[c]{0.44\linewidth}
        \centering
        \hspace*{-18pt}\texttt{Monitor OFF}\\[2pt]
        \includegraphics[width=0.84\linewidth]{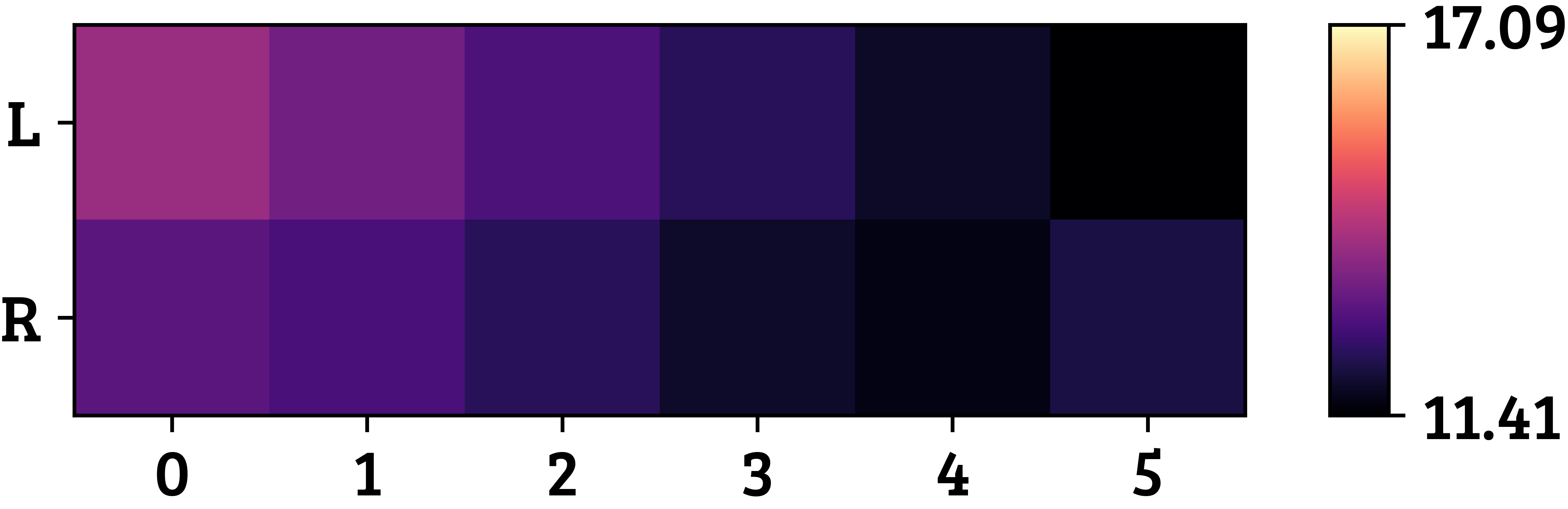}
    \end{subfigure}
    }
    \\[0pt]
    \makebox[\linewidth][c]{%
    \begin{minipage}{0.15\textwidth}
        {\fontfamily{qbk}\textcolor{lightblack}{\textbf{Optimism}}}
    \end{minipage}
    \hfill
    \begin{subfigure}[c]{0.35\linewidth}
        \centering
        \includegraphics[width=0.84\linewidth]{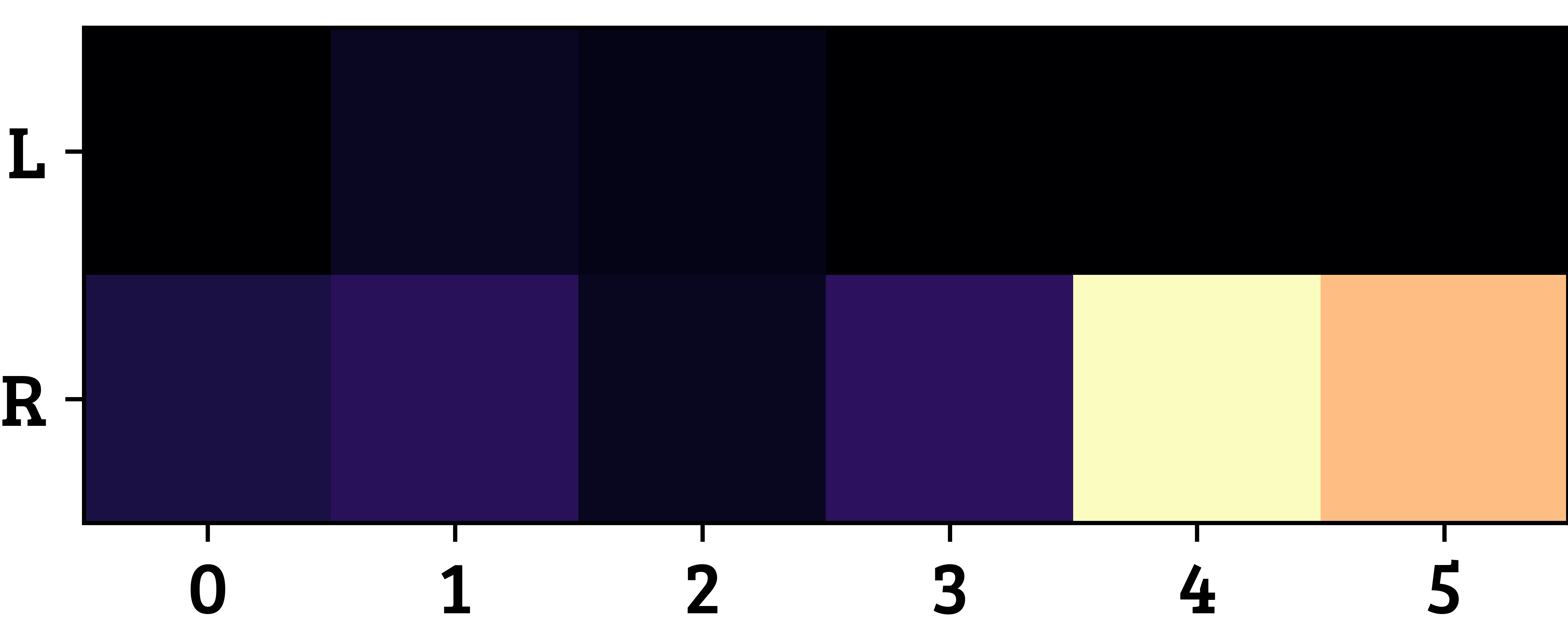}
    \end{subfigure}
    \hspace{1pt}
    \begin{subfigure}[c]{0.44\linewidth}
        \centering
        \includegraphics[width=0.84\linewidth]{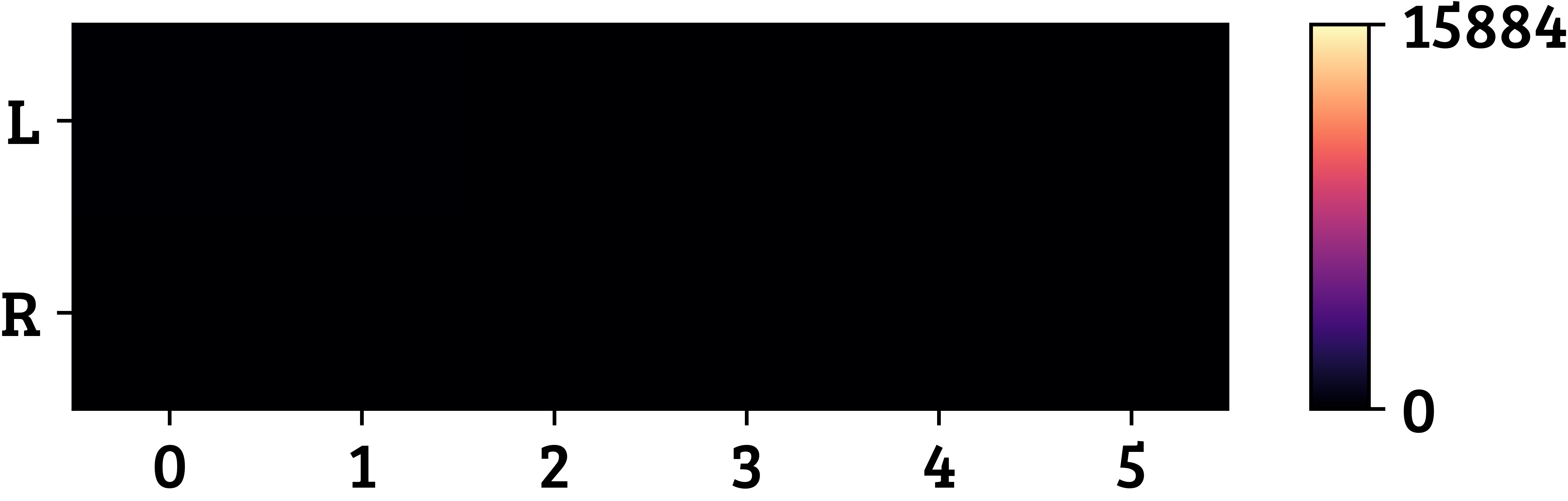}
    \end{subfigure}
    }
    \\[0pt]
    \makebox[\linewidth][c]{%
    \begin{minipage}{0.15\textwidth}
        {\fontfamily{qbk}\textcolor{snsorange}{\textbf{UCB}}}
    \end{minipage}
    \hfill
    \begin{subfigure}[c]{0.35\linewidth}
        \centering
        \includegraphics[width=0.84\linewidth]{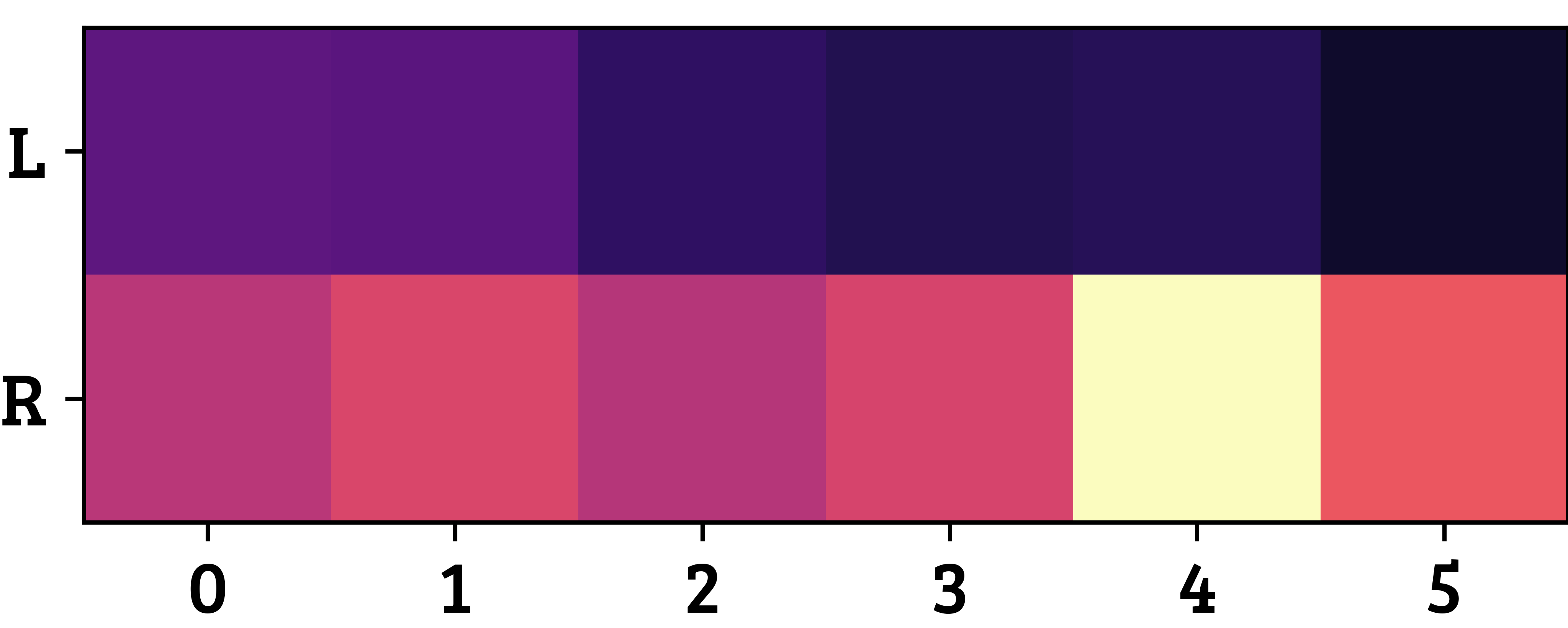}
    \end{subfigure}
    \hspace{1pt}
    \begin{subfigure}[c]{0.44\linewidth}
        \centering
        \includegraphics[width=0.84\linewidth]{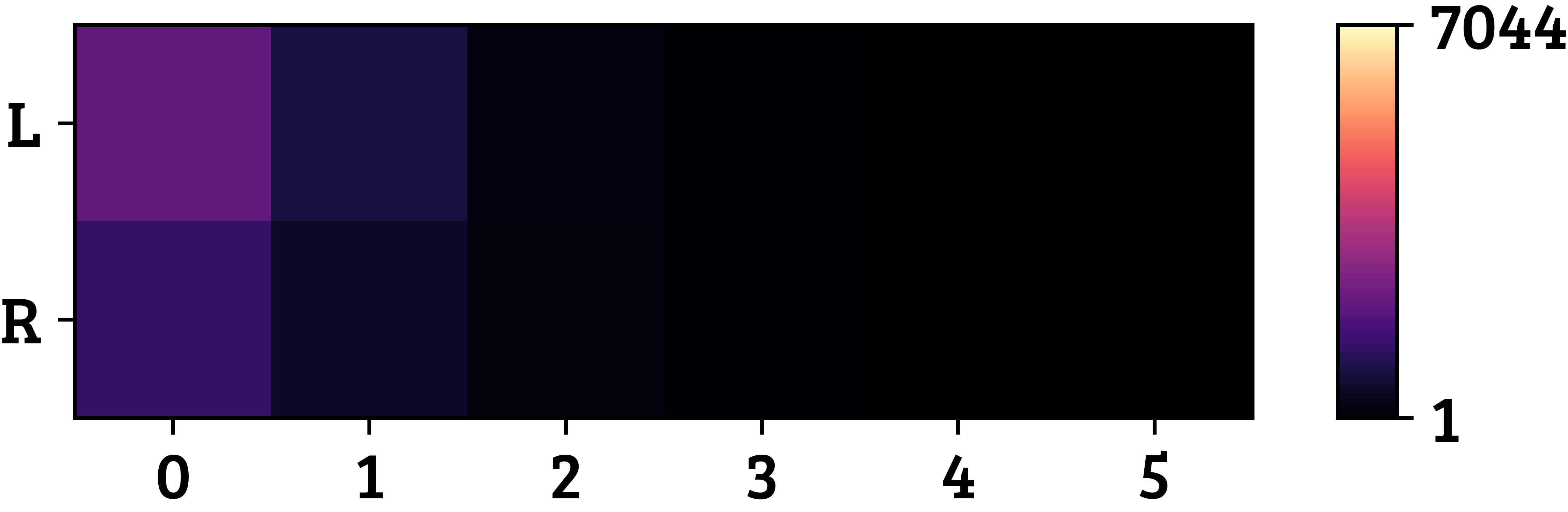}
    \end{subfigure}
    }
    \\[0pt]
    \makebox[\linewidth][c]{%
    \begin{minipage}{0.15\textwidth}
        {\fontfamily{qbk}\textcolor{snspink}{\textbf{$\mathbf{Q}$-Counts}}}
    \end{minipage}
    \hfill
    \begin{subfigure}[c]{0.35\linewidth}
        \centering
        \includegraphics[width=0.84\linewidth]{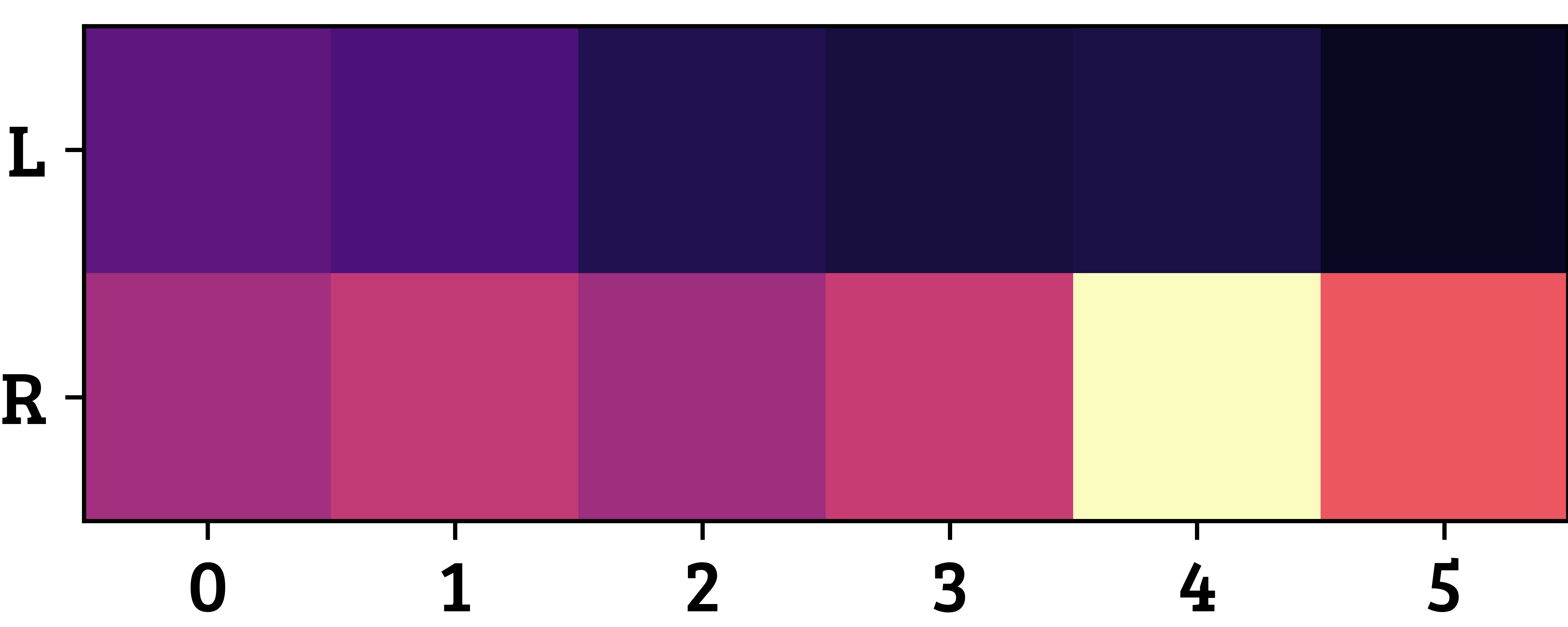}
    \end{subfigure}
    \hspace{1pt}
    \begin{subfigure}[c]{0.44\linewidth}
        \centering
        \includegraphics[width=0.84\linewidth]{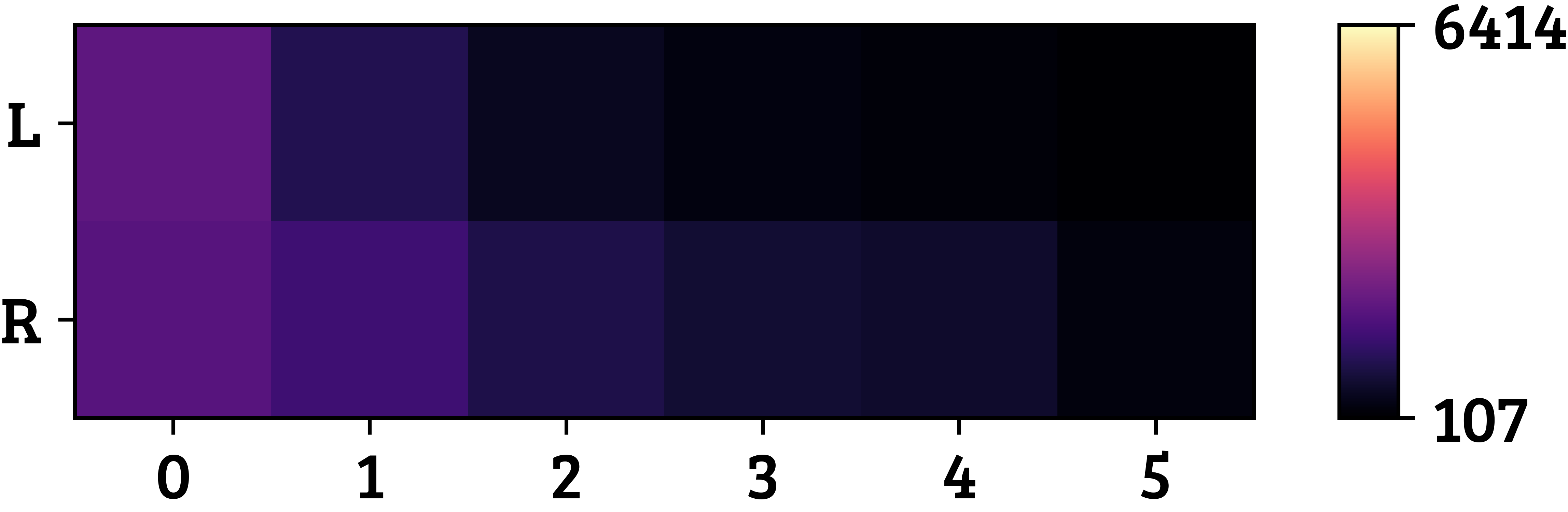}
    \end{subfigure}
    }
    \\[0pt]
    \makebox[\linewidth][c]{%
    \begin{minipage}{0.15\textwidth}
        {\fontfamily{qbk}\textcolor{snsgreen}{\textbf{Intrinsic\\Reward}}}
    \end{minipage}
    \hfill
    \begin{subfigure}[c]{0.35\linewidth}
        \centering
        \includegraphics[width=0.84\linewidth]{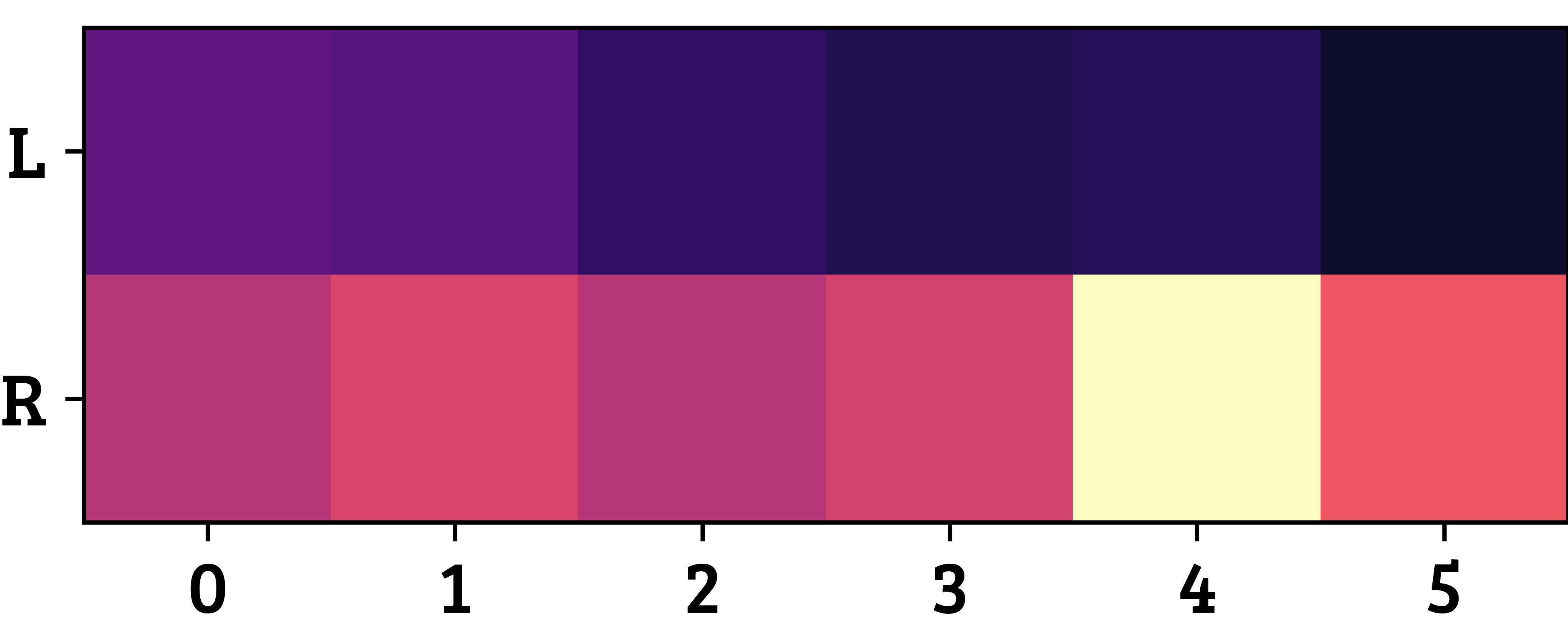}
    \end{subfigure}
    \hspace{1pt}
    \begin{subfigure}[c]{0.44\linewidth}
        \centering
        \includegraphics[width=0.84\linewidth]{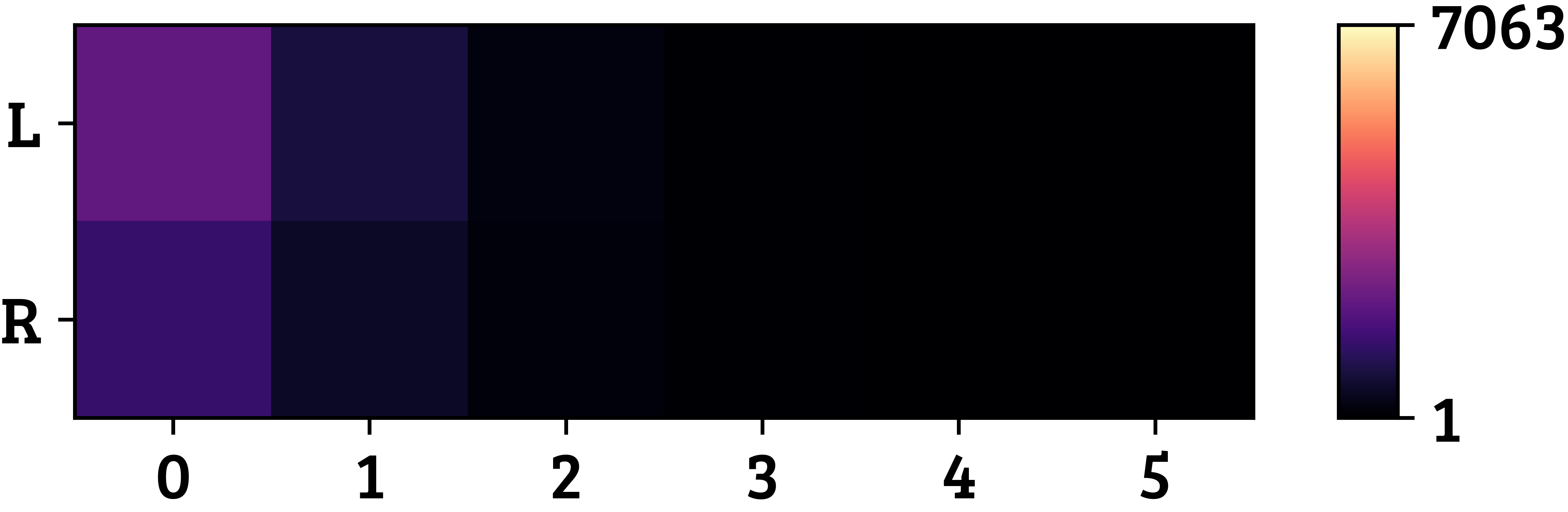}
    \end{subfigure}
    }
    \\[0pt]
    \makebox[\linewidth][c]{%
    \begin{minipage}{0.15\textwidth}
        {\fontfamily{qbk}\textcolor{snsred}{\textbf{Naive}}}
    \end{minipage}
    \hfill
    \begin{subfigure}[c]{0.35\linewidth}
        \centering
        \includegraphics[width=0.84\linewidth]{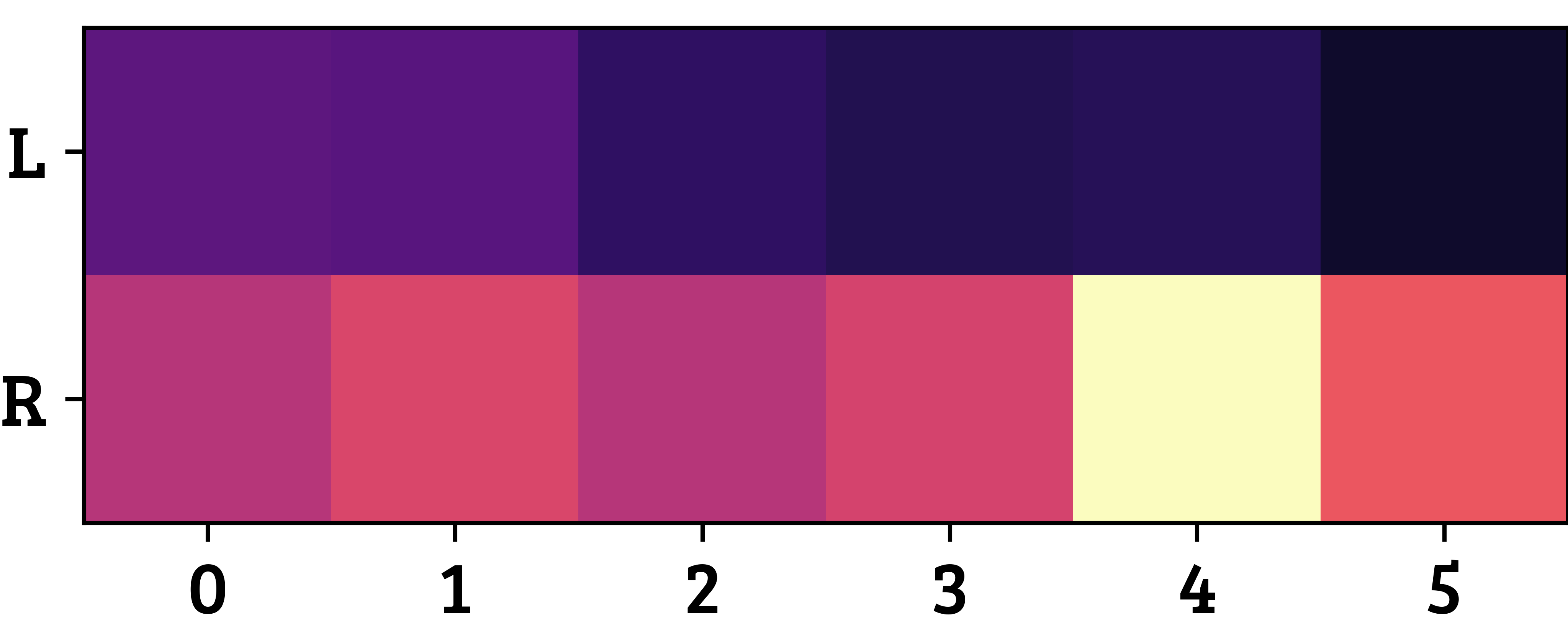}
    \end{subfigure}
    \hspace{1pt}
    \begin{subfigure}[c]{0.44\linewidth}
        \centering
        \includegraphics[width=0.84\linewidth]{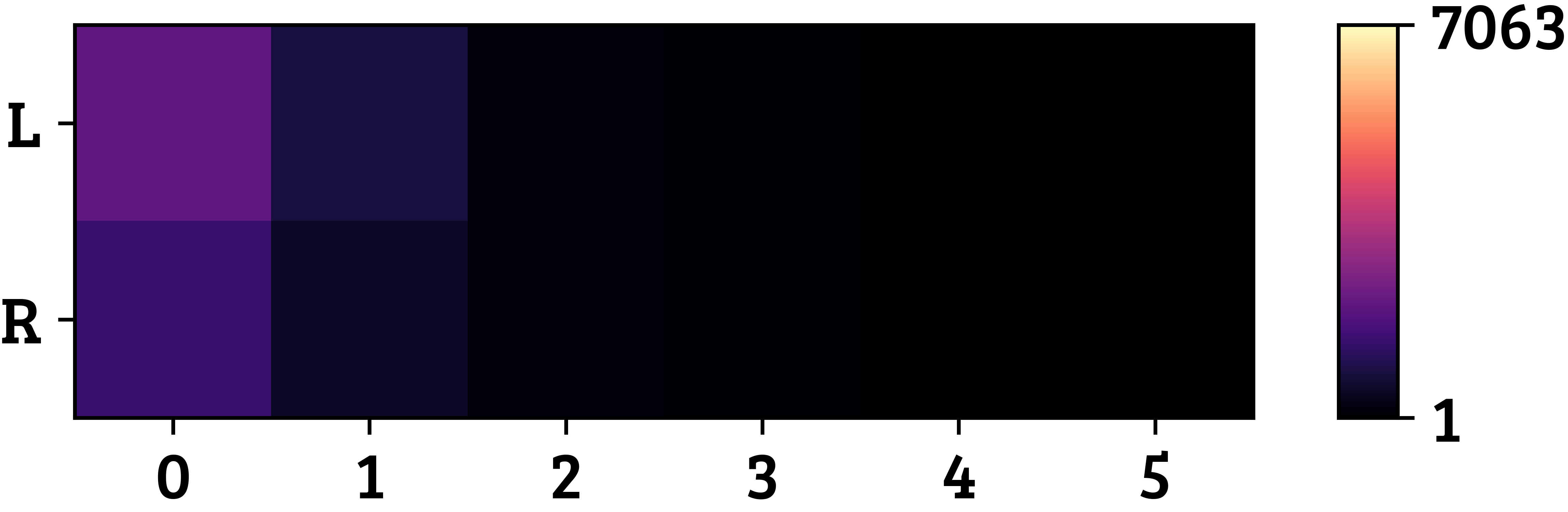}
    \end{subfigure}
    }
    \caption{\label{fig:heat_n}\textbf{State-action visitation counts $\mathbf{N(s,a)}$} after one run (40,000 steps) on River Swim with the Button Monitor. 
    Only \textcolor{snsblue}{\textbf{Our}} agent visits the state-action space as uniformly as possible, as shown by the smooth heatmap and its range (374$-$3,425).
    On the contrary, all the other baselines explore poorly (less smooth heatmaps and much larger range). Indeed, they do not visit most environment state-action pairs when the monitor is \texttt{ON} (some have never been visited and have count 0). Having the monitor \texttt{ON}, in fact, is suboptimal ($\rmon_t = -0.2$), yet needed to observe rewards. This further explains why they observe much fewer rewards in Figure~\ref{fig:visited_r_sum_appendix}.}
\end{figure}

\clearpage

\subsection{River Swim With Button Monitor: A Deeper Analysis}
\label{app:heatmaps}
Here, we investigate more in detail the results for River Swim with Button Monitor, a benchmark similar to the example in Figure~\ref{fig:walk} discussed throughout the paper. 
States are enumerated from \texttt{0} to \texttt{5} as in Figure~\ref{fig:river-dynamics}, the agent starts either in state \texttt{1} or \texttt{2}, the button is located in state \texttt{0} and can be turned $\texttt{ON/OFF}$ by doing \texttt{LEFT} in state \texttt{0}. 
%
Figure~\ref{fig:heat_n} shows the visitation count and Figure~\ref{fig:heat_q} the Q-function for all state-action pairs. 
\textcolor{lightblack}{\textbf{Optimism}} barely explores with monitoring \texttt{ON}. This is expected, because observing rewards is costly (negative monitor reward), and pure optimistic exploration does not select suboptimal actions (but this is the only way to discover higher rewards). 
\textcolor{snsorange}{\textbf{UCB}}, \textcolor{snspink}{\textbf{Q-counts}}, \textcolor{snsgreen}{\textbf{Intrinsic}}, and \textcolor{snsred}{\textbf{Naive}} mitigate this problem thanks to $\epsilon$-randomness, but still do not explore properly and, ultimately, do not learn accurate Q-functions. 
On the contrary, \textcolor{snsblue}{\textbf{Our}} policy explores the whole state-action space as uniformly as possible\footnote{Note that the agent starts in state \texttt{1} or \texttt{2} and the transition is likely to push the agent to the left, thus states on the left are naturally visited more often.}, observes rewards frequently (because it visits states with monitoring \texttt{ON}), and learns a close approximation of $Q^*$ that results in the optimal greedy policy.

\begin{figure}[ht]
    \setlength{\belowcaptionskip}{0pt}
    \setlength{\abovecaptionskip}{6pt}
    \centering
    \makebox[\linewidth][c]{%
    \begin{minipage}{0.15\textwidth}
        {\fontfamily{qbk}\textcolor{black}{\textbf{\underline{Optimal $\mathbf{Q^*}$}}}}
    \end{minipage}
    \hfill
    \begin{subfigure}[c]{0.35\linewidth}
        \centering
        \texttt{Monitor ON}\\[2pt]
        \includegraphics[width=0.84\linewidth]{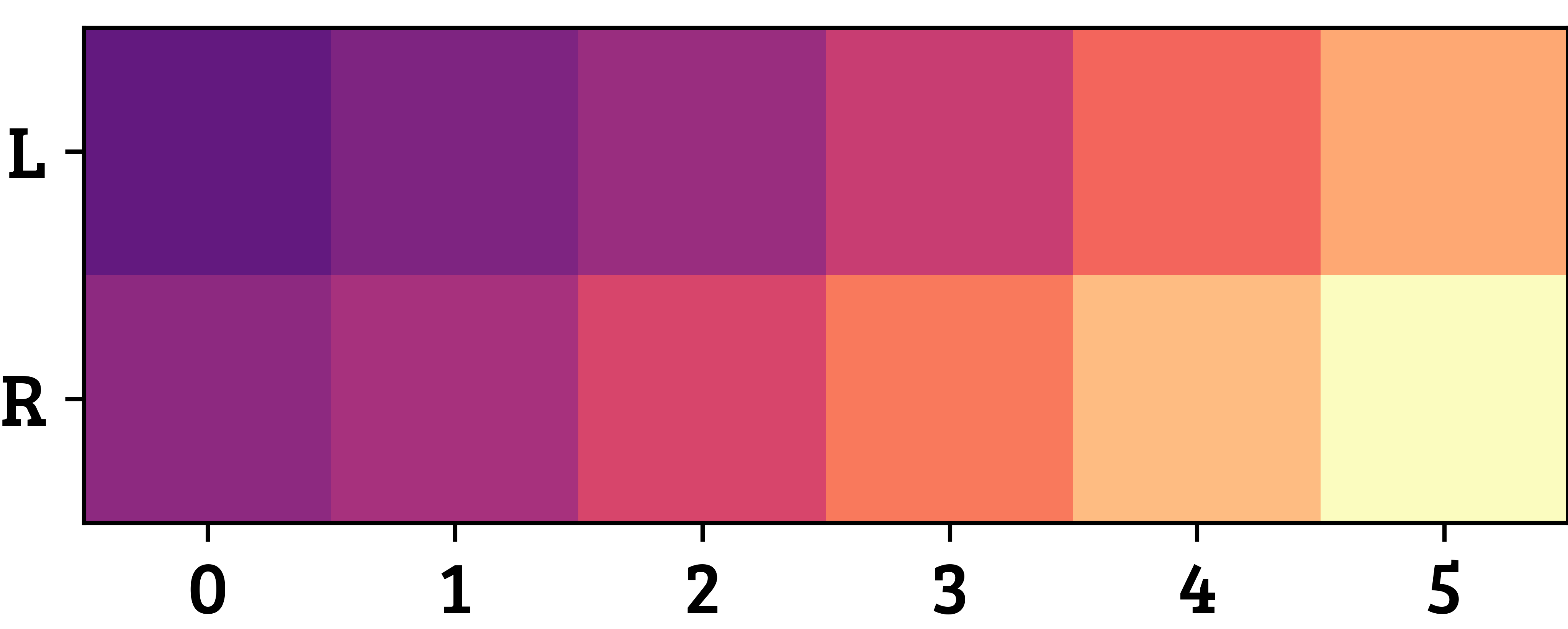}
    \end{subfigure}
    \hspace{1pt}
    \begin{subfigure}[c]{0.44\linewidth}
        \centering
        \hspace*{-18pt}\texttt{Monitor OFF}\\[2pt]        \includegraphics[width=0.84\linewidth]{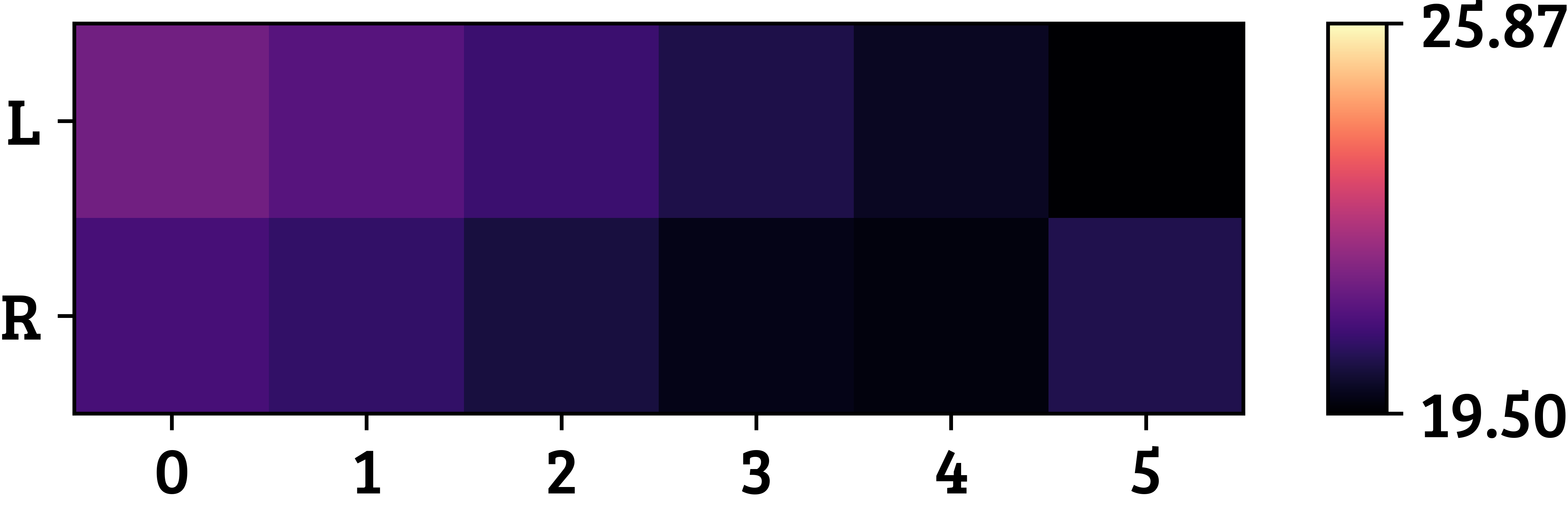}
    \end{subfigure}
    }
    \\[0pt]
    \makebox[\linewidth][c]{%
    \begin{minipage}{0.15\textwidth}
        {\fontfamily{qbk}\textcolor{snsblue}{\textbf{Ours}}}
    \end{minipage}
    \hfill
    \begin{subfigure}[c]{0.35\linewidth}
        \centering
        \includegraphics[width=0.84\linewidth]{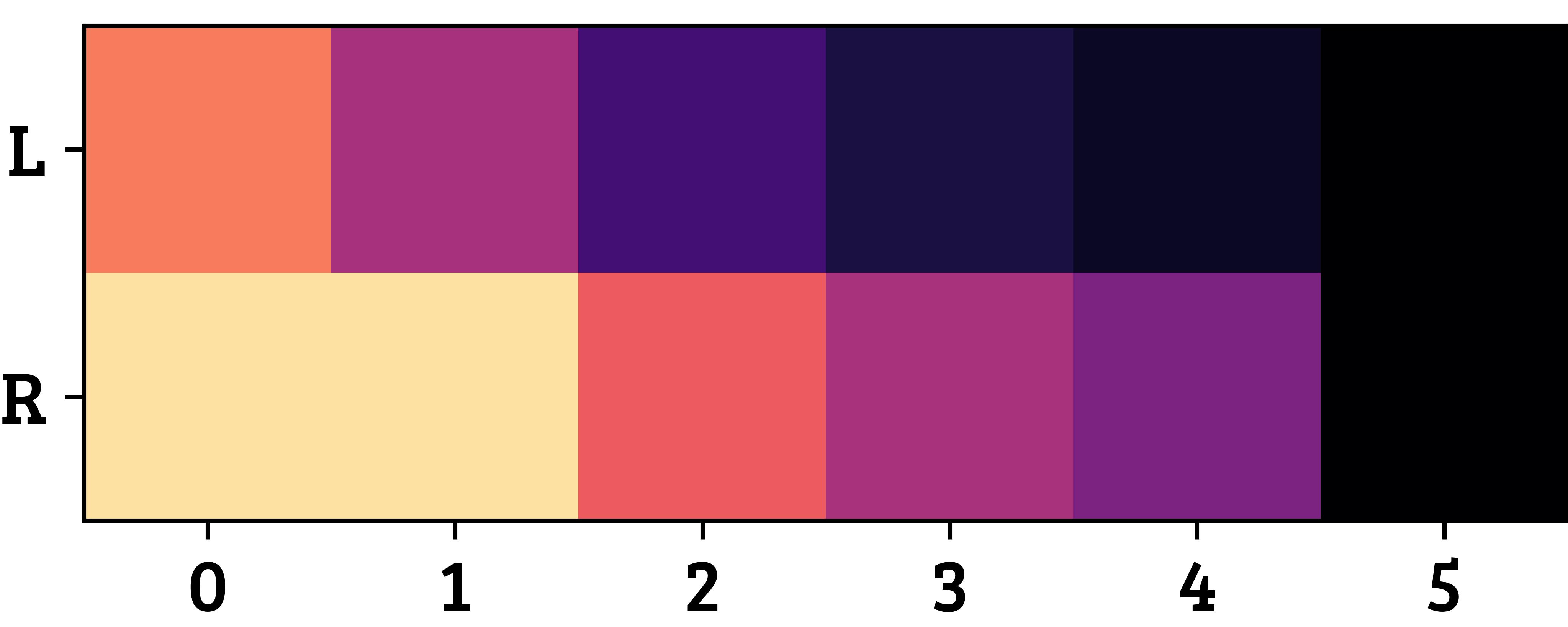}
    \end{subfigure}
    \hspace{1pt}
    \begin{subfigure}[c]{0.44\linewidth}
        \centering
        \includegraphics[width=0.84\linewidth]{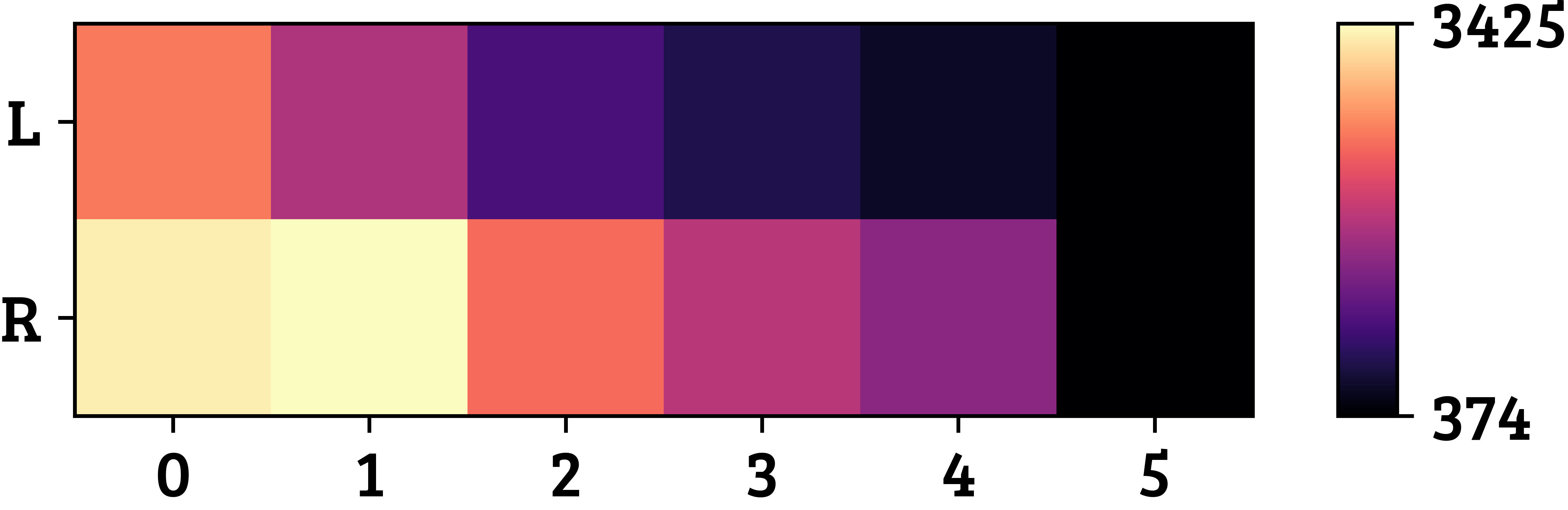}
    \end{subfigure}
    }
    \\[0pt]
    \makebox[\linewidth][c]{%
    \begin{minipage}{0.15\textwidth}
        {\fontfamily{qbk}\textcolor{lightblack}{\textbf{Optimism}}}
    \end{minipage}
    \hfill
    \begin{subfigure}[c]{0.35\linewidth}
        \centering
        \includegraphics[width=0.84\linewidth]{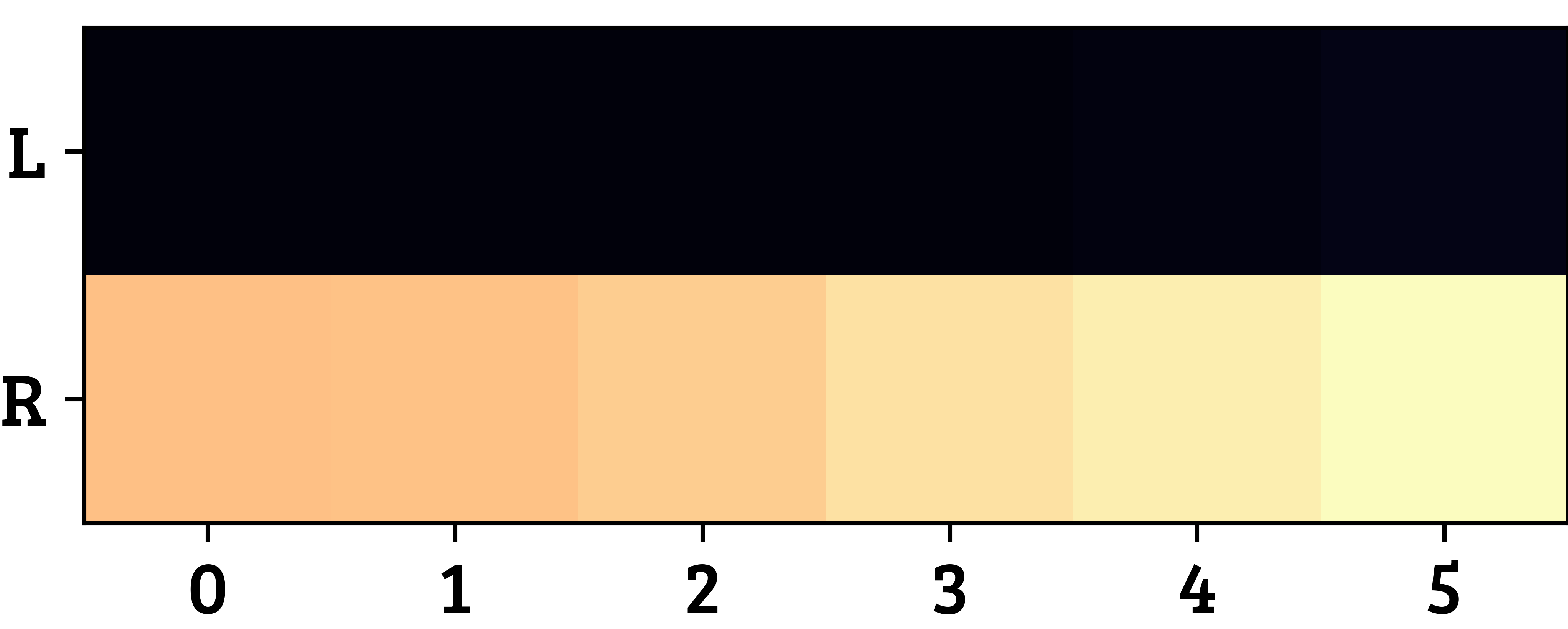}
    \end{subfigure}
    \hspace{1pt}
    \begin{subfigure}[c]{0.44\linewidth}
        \centering
        \includegraphics[width=0.84\linewidth]{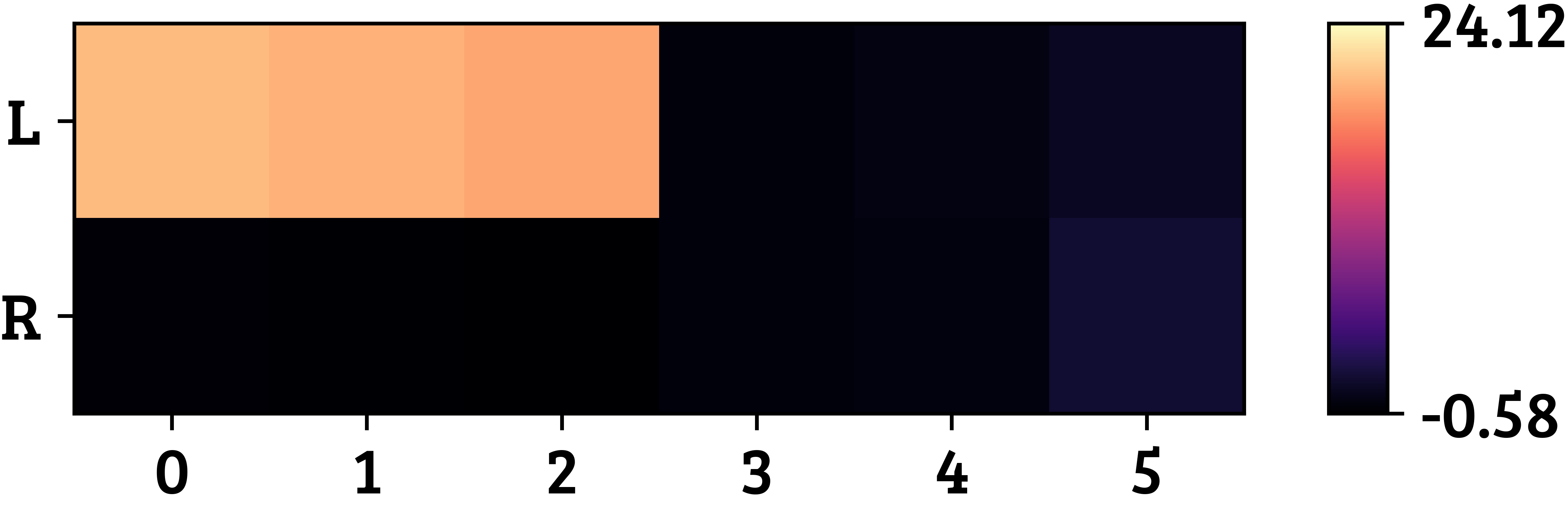}
    \end{subfigure}
    }
    \\[0pt]
    \makebox[\linewidth][c]{%
    \begin{minipage}{0.15\textwidth}
        {\fontfamily{qbk}\textcolor{snsorange}{\textbf{UCB}}}
    \end{minipage}
    \hfill
    \begin{subfigure}[c]{0.35\linewidth}
        \centering
        \includegraphics[width=0.84\linewidth]{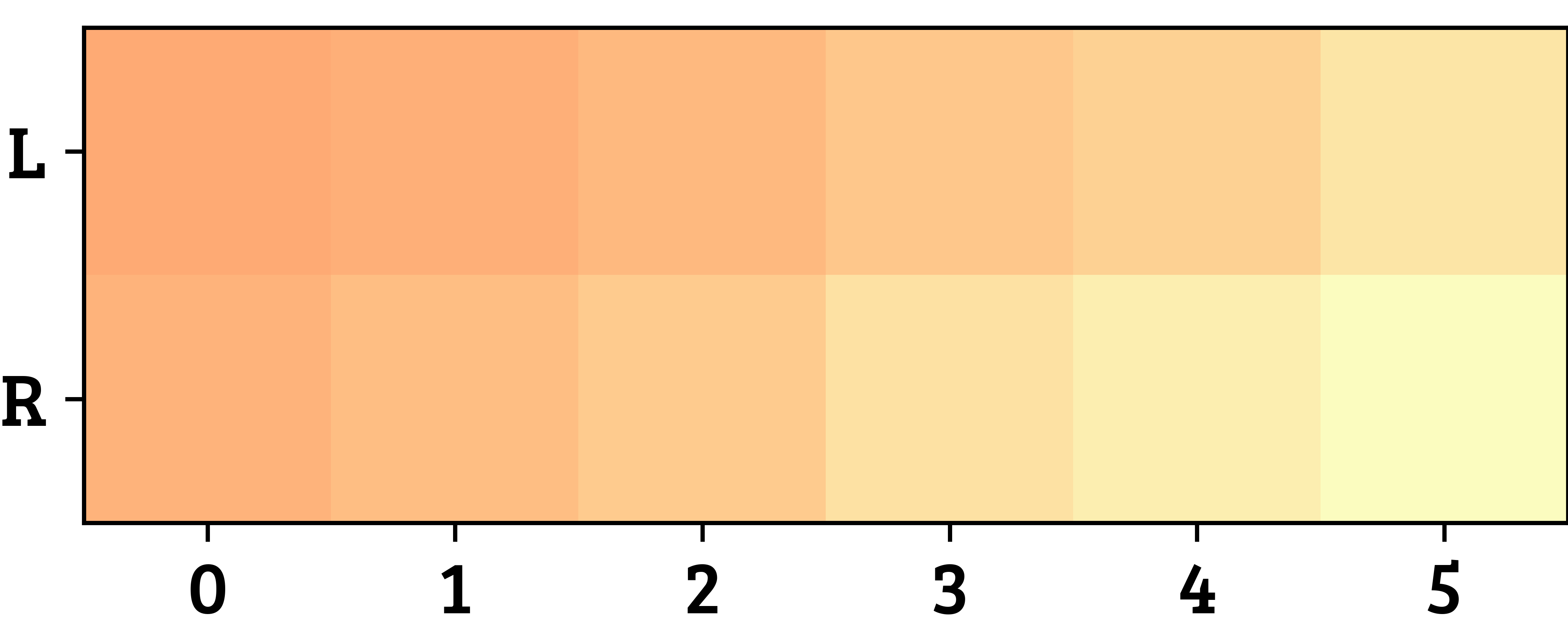}
    \end{subfigure}
    \hspace{1pt}
    \begin{subfigure}[c]{0.44\linewidth}
        \centering
        \includegraphics[width=0.84\linewidth]{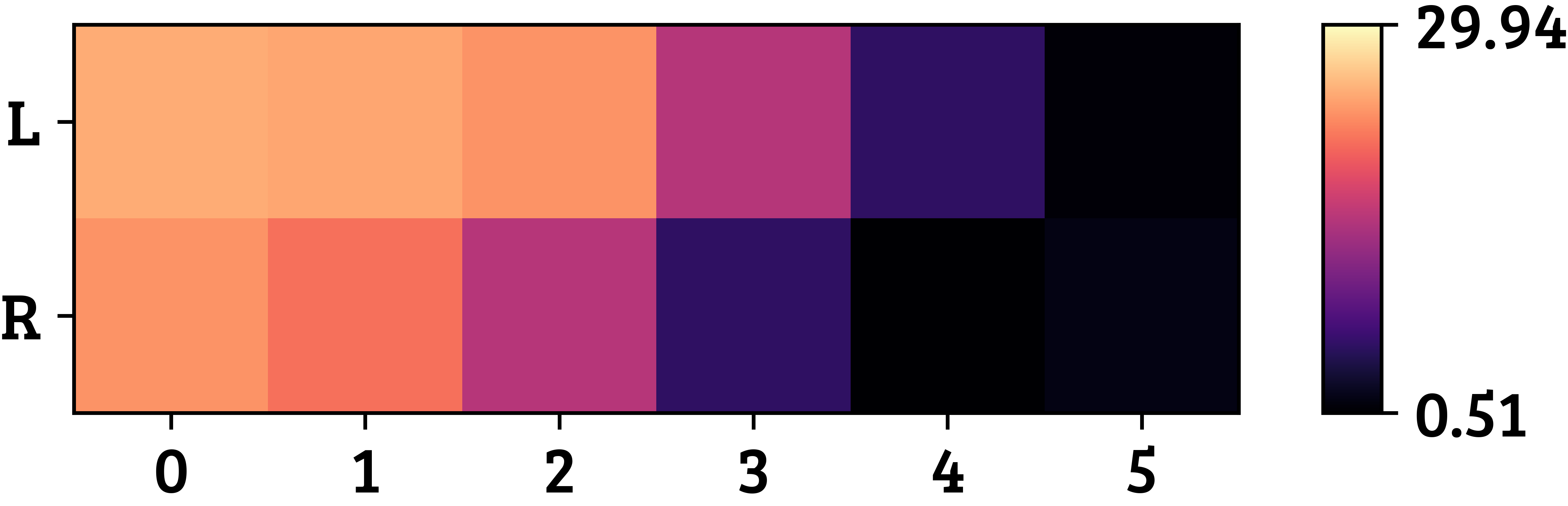}
    \end{subfigure}
    }
    \\[0pt]
    \makebox[\linewidth][c]{%
    \begin{minipage}{0.15\textwidth}
        {\fontfamily{qbk}\textcolor{snspink}{\textbf{$\mathbf{Q}$-Counts}}}
    \end{minipage}
    \hfill
    \begin{subfigure}[c]{0.35\linewidth}
        \centering
        \includegraphics[width=0.84\linewidth]{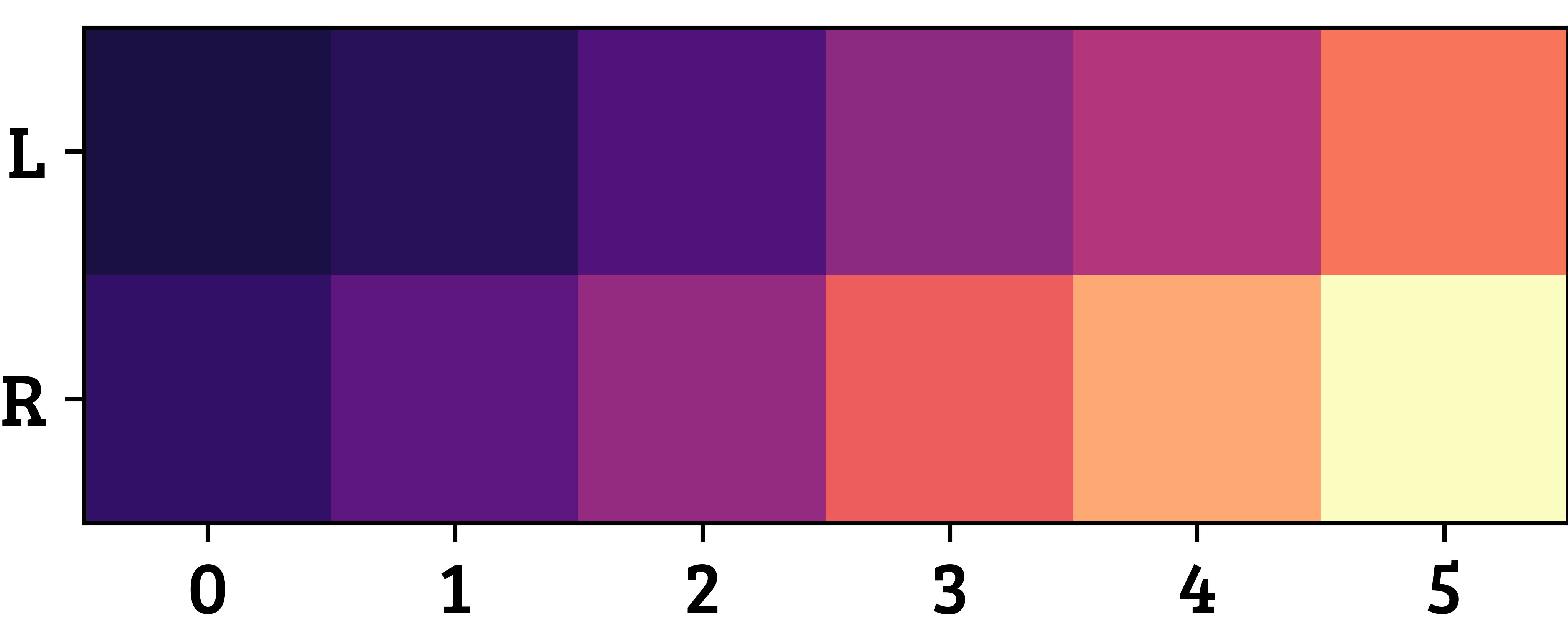}
    \end{subfigure}
    \hspace{1pt}
    \begin{subfigure}[c]{0.44\linewidth}
        \centering
        \includegraphics[width=0.84\linewidth]{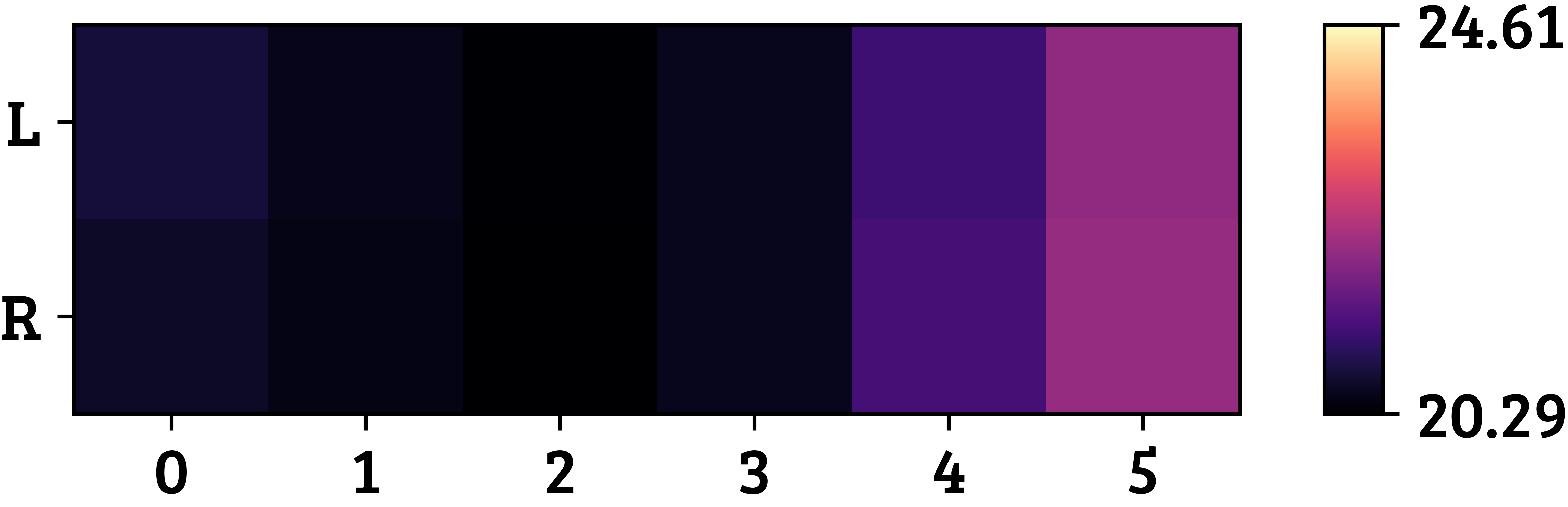}
    \end{subfigure}
    }
    \\[0pt]
    \makebox[\linewidth][c]{%
    \begin{minipage}{0.15\textwidth}
        {\fontfamily{qbk}\textcolor{snsgreen}{\textbf{Intrinsic\\Reward}}}
    \end{minipage}
    \hfill
    \begin{subfigure}[c]{0.35\linewidth}
        \centering
        \includegraphics[width=0.84\linewidth]{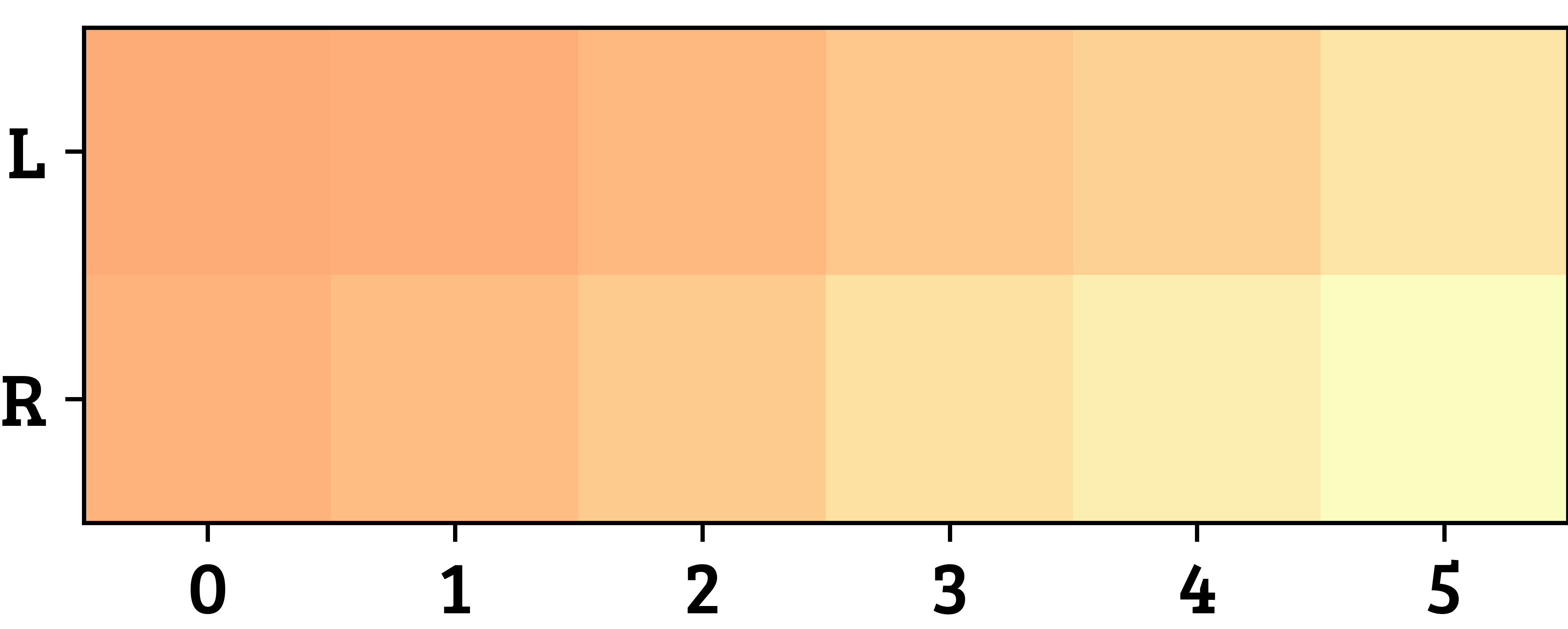}
    \end{subfigure}
    \hspace{1pt}
    \begin{subfigure}[c]{0.44\linewidth}
        \centering
        \includegraphics[width=0.84\linewidth]{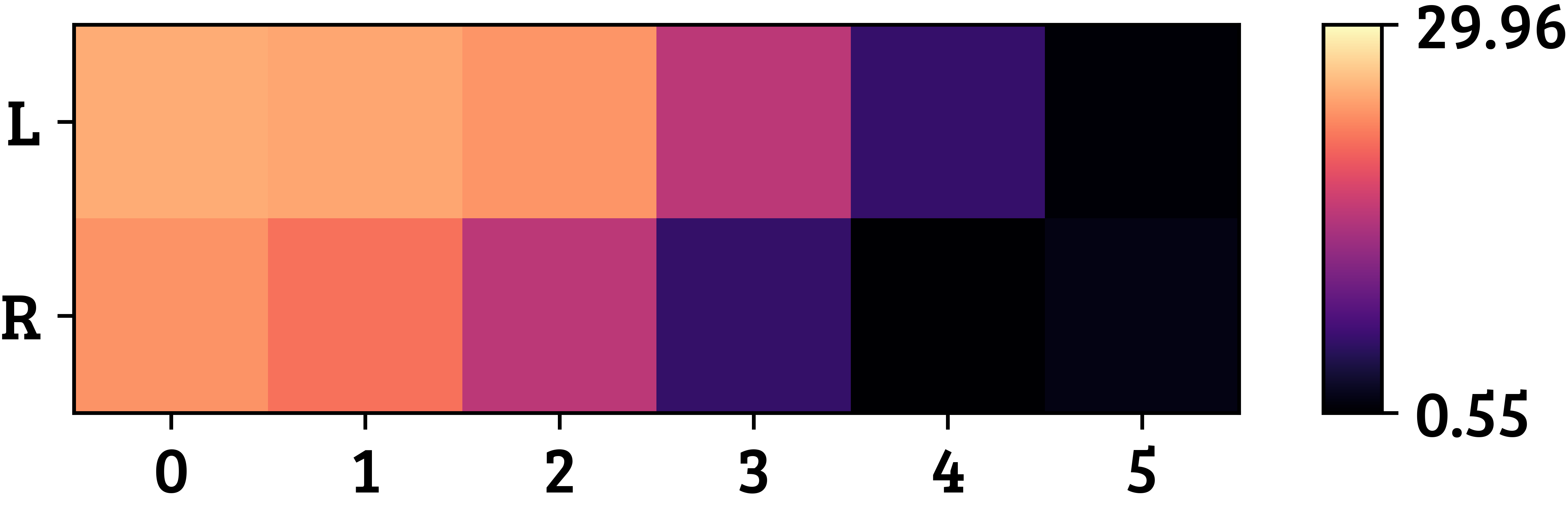}
    \end{subfigure}
    }
    \\[0pt]
    \makebox[\linewidth][c]{%
    \begin{minipage}{0.15\textwidth}
        {\fontfamily{qbk}\textcolor{snsred}{\textbf{Naive}}}
    \end{minipage}
    \hfill
    \begin{subfigure}[c]{0.35\linewidth}
        \centering
        \includegraphics[width=0.84\linewidth]{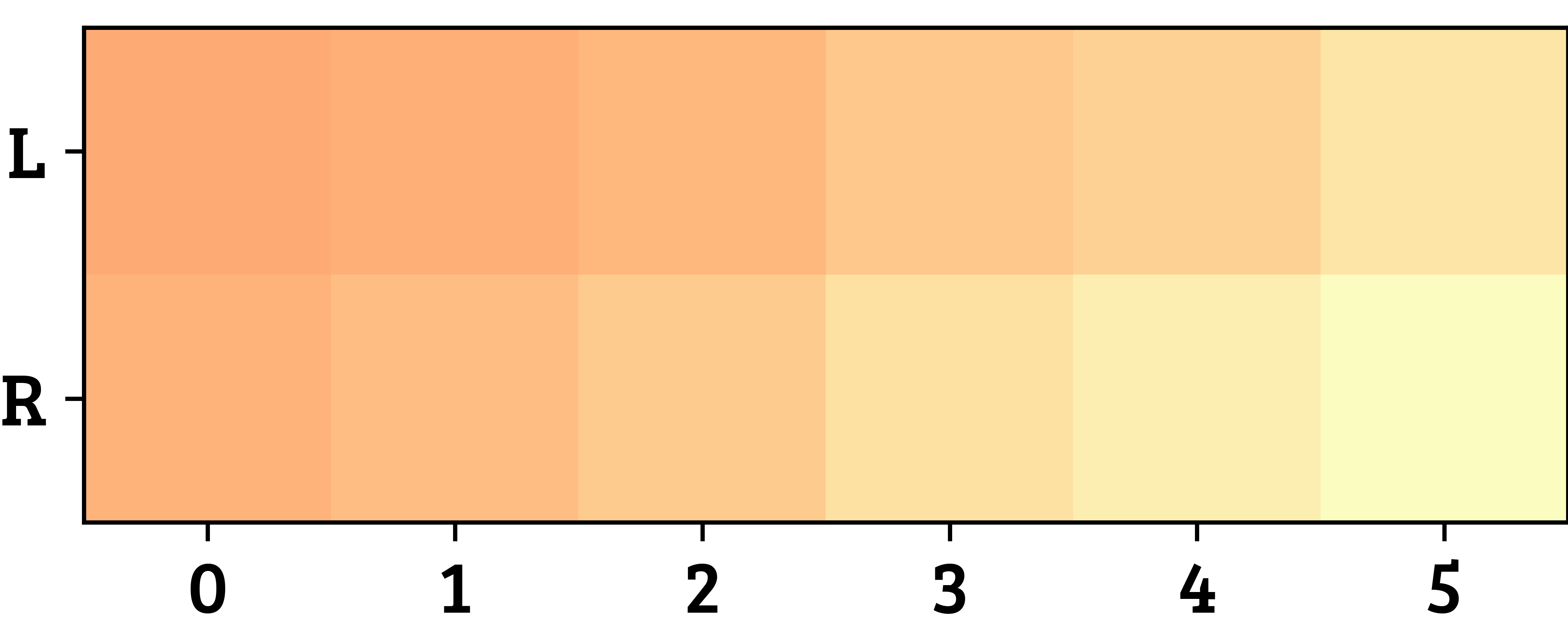}
    \end{subfigure}
    \hspace{1pt}
    \begin{subfigure}[c]{0.44\linewidth}
        \centering
        \includegraphics[width=0.84\linewidth]{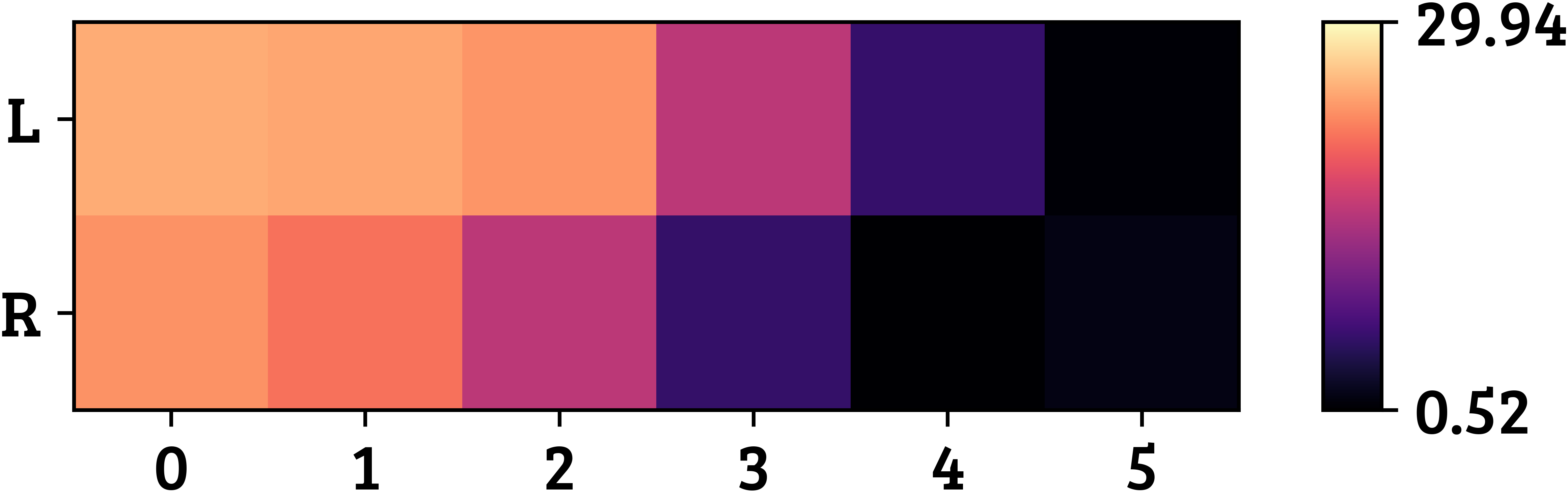}
    \end{subfigure}
    }
    \caption{\label{fig:heat_q}\textbf{Q-function approximators $\smash{\mathbf{\widehat{Q}(s,a)}}$} after one run (40,000 steps) on River Swim with the Button Monitor. 
    Only \textcolor{snsblue}{\textbf{Our}} agent learns a close approximation of the true optimal Q-function (on the top) that produces the optimal behavior --- $\smash{\widehat Q(\texttt{5}, \texttt{OFF}, \texttt{RIGHT})}$ and $\smash{\widehat Q(\texttt{0}, \texttt{ON}, \texttt{LEFT})}$ have the highest Q-value for monitoring \texttt{OFF/ON}, respectively, and the value of other state-action pairs smoothly decreases as the agent is further away from those states.
    Doing \texttt{RIGHT} in state \texttt{5}, in fact, gives $\renv_t = 1$, but the optimal policy turns monitoring \texttt{OFF} first by doing \texttt{LEFT} in state \texttt{0} (to stop $\rmon_t = -0.2$). 
    On the contrary, \textcolor{lightblack}{\textbf{Optimism}} learns a completely wrong Q-function that cannot turn monitoring \texttt{OFF} . This is not surprising given the poor visitation count of Figure~\ref{fig:heat_n}. 
    \textcolor{snsorange}{\textbf{UCB}}, \textcolor{snsgreen}{\textbf{Intrinsic}}, and \textcolor{snsred}{\textbf{Naive}} learn smoother Q-function, but end up overestimating the Q-values due to the optimistic initialization and overestimation bias (see the range of their Q-functions). 
    }
\end{figure}

\end{appendix}

\end{document}